\definecolor{LightGray}{gray}{0.9} 
\theoremstyle{plain}
\newmdenv[
  topline=true,
  bottomline=true,
  rightline=true,
  leftline=true,
  linecolor=black,
  linewidth=0.8pt,
  backgroundcolor=white,
  skipabove=10pt,
  skipbelow=10pt
]{boxedtheorem}
\newcommand{\lnconv}{\mathbin{\overset{\mathrm{logcvx}}{\sim}}}
\newtheorem{reptheorem}{Theorem}
\newtheorem{theorem}{\bf{Theorem}}
\newtheorem{definition}{\bf{Definition}}
\newtheorem{lemma}{\bf{Lemma}}
\newtheorem{corollary}{\bf Corollary}
\newtheorem{reproposition}{\bf{Proposition}}
\newtheorem{proposition}{\bf{Proposition}}
\newcommand{\DEL}[1]{\iffalse #1 \fi}
\newcommand{\rd}{\color{BrickRed}}
\newcommand{\bl}{\color{RoyalBlue}}
\begin{document}
%
\title{Interpolation-Based Optimization for Enforcing 
$\ell_p$-Norm Metric Differential Privacy in Continuous and Fine-Grained Domains}

\author{
{\rm Chenxi Qiu}\\
University of North Texas
} 


%


\maketitle

\begin{abstract}
\emph{Metric Differential Privacy (mDP)} generalizes Local Differential Privacy (LDP) by adapting privacy guarantees based on pairwise distances, enabling context-aware protection and improved utility. While existing optimization-based methods reduce utility loss effectively in coarse-grained domains, optimizing mDP in fine-grained or continuous settings remains challenging due to the computational cost of constructing dense perterubation matrices and satisfying pointwise constraints. 

In this paper, we propose an \emph{interpolation-based} framework for optimizing $\ell_p$-norm mDP in such domains. Our approach optimizes perturbation distributions at a sparse set of anchor points and interpolates distributions at non-anchor locations via log-convex combinations, which provably preserve mDP. To address privacy violations caused by naive interpolation in high-dimensional spaces, we decompose the interpolation process into a sequence of one-dimensional steps and derive a corrected formulation that enforces $\ell_p$-norm mDP by design. We further explore joint optimization over perturbation distributions and privacy budget allocation across dimensions. Experiments on real-world location datasets demonstrate that our method offers rigorous privacy guarantees and competitive utility in fine-grained domains, outperforming baseline mechanisms. \looseness = -1
\end{abstract}


\section{Introduction}

\label{sec:intro}
Privacy-preserving data sharing is increasingly important in applications such as location-based services (LBSs), mobility prediction, and user modeling. These applications highly depend on fine-grained data representations, which are also required to comply with stringent privacy constraints. Standard mechanisms like \emph{Local Differential Privacy (LDP)}~\cite{Duchi-FOCS2013} enforce uniform privacy across all input pairs, often introducing excessive noise and degrading utility, especially in spatial, continuous, or structured domains. \emph{Metric Differential Privacy (mDP)}~\cite{Chatzikokolakis-PETS2013} addresses this limitation by relaxing LDP through a distance-aware formulation: It requires stronger indistinguishability for nearby records and permits weaker guarantees for distant ones. This flexibility enables improved privacy-utility trade-offs and has been applied to protect geo-location data~\cite{Andres-CCS2013}, word embeddings~\cite{feyisetan-WTNLP2021}, speech~\cite{Han-ICME2020}, and image data~\cite{fan-ICME2019, Chen-CVPR2021}. \looseness = -1

\DEL{
\begin{figure}[t]

\begin{minipage}{0.45\textwidth}
\centering
\subfigure{
\includegraphics[width=1.00\textwidth]{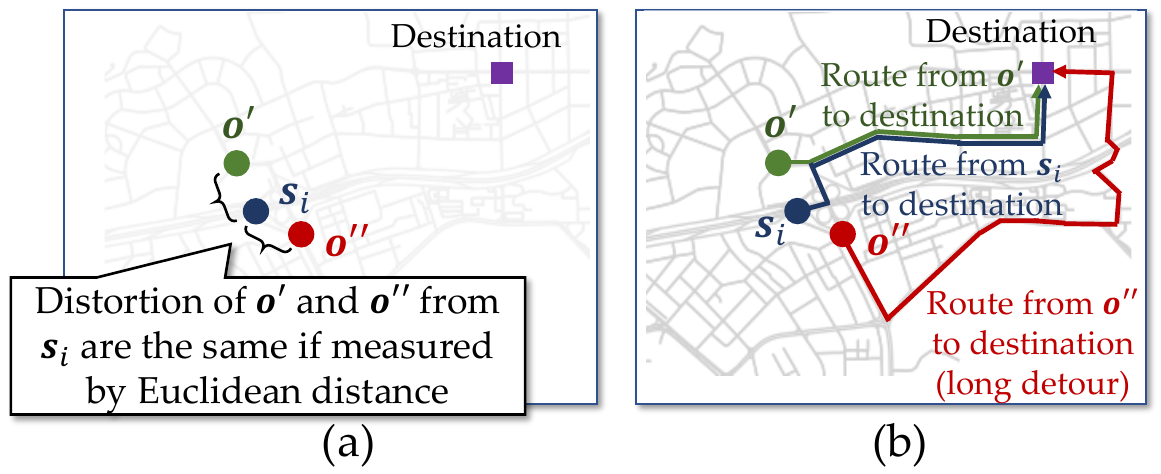}}

\caption{Example: Utility losses are different when a location is perturbed in different directions.}
\label{fig:sensitivito_example}
\end{minipage}

\end{figure}}

Since the introduction of mDP~\cite{Chatzikokolakis-PETS2013}, numerous \emph{pre-defined noise mechanisms} have been proposed to enforce distance-based privacy guarantees. Notably, the planar Laplace mechanism~\cite{Andres-CCS2013} achieves $\ell_2$-based geo-indistinguishability by adding two-dimensional noise scaled to the sensitivity of location queries, while the Exponential Mechanism (EM)~\cite{Chatzikokolakis-PoPETs2015} selects outputs based on a fixed utility function that favors locations closer to the true input. These methods are efficient and well-suited for continuous and fine-grained domains. However, their fixed noise distributions often lead to suboptimal privacy-utility trade-offs, as they fail to account for direction-dependent or context-specific variations in utility.



To address the limitations of pre-defined noise mechanisms, recent work has explored \emph{optimization-based} approaches, most notably \emph{linear programming (LP)}, to directly optimize the perturbation distribution by minimizing expected utility loss subject to mDP constraints~\cite{Bordenabe-CCS2014,ImolaUAI2022,Qiu-IJCAI2024}. However, LP-based formulations are typically restricted to coarse-grained domains: solving them over continuous or fine-grained spaces is computationally prohibitive. A common workaround is to discretize the domain (e.g., using uniform grids or road-map features), which improves tractability but weakens formal guarantees: discretization can overestimate distances between nearby records, thereby loosening the effective mDP constraints and overlooking fine-grained privacy leakage~\cite{Chatzikokolakis-PETS2017}.

To balance utility and efficiency, hybrid methods combine optimization with pre-defined noise. For instance, \emph{Bayesian remapping}~\cite{Chatzikokolakis-PETS2017} post-processes outputs of a fixed mechanism via Bayes’ rule to improve utility; \emph{ConstOPT}~\cite{ImolaUAI2022} and \emph{LR-Geo}~\cite{Qiu-PETS2025} reduce complexity by constraining the search space (e.g., locality- or structure-aware parameterizations). Nonetheless, these approaches have notable limitations: Bayesian remapping depends on fixed noise priors and cannot guarantee global optimality, while ConstOPT and LR-Geo \emph{still incur} substantial computational overhead and remain tied to discretization, which limits scalability and can compromise strict mDP enforcement in fine-grained or continuous domains.

\begin{figure}[t]
\begin{minipage}{0.45\textwidth}
\centering
\subfigure{
\includegraphics[width=1.00\textwidth]{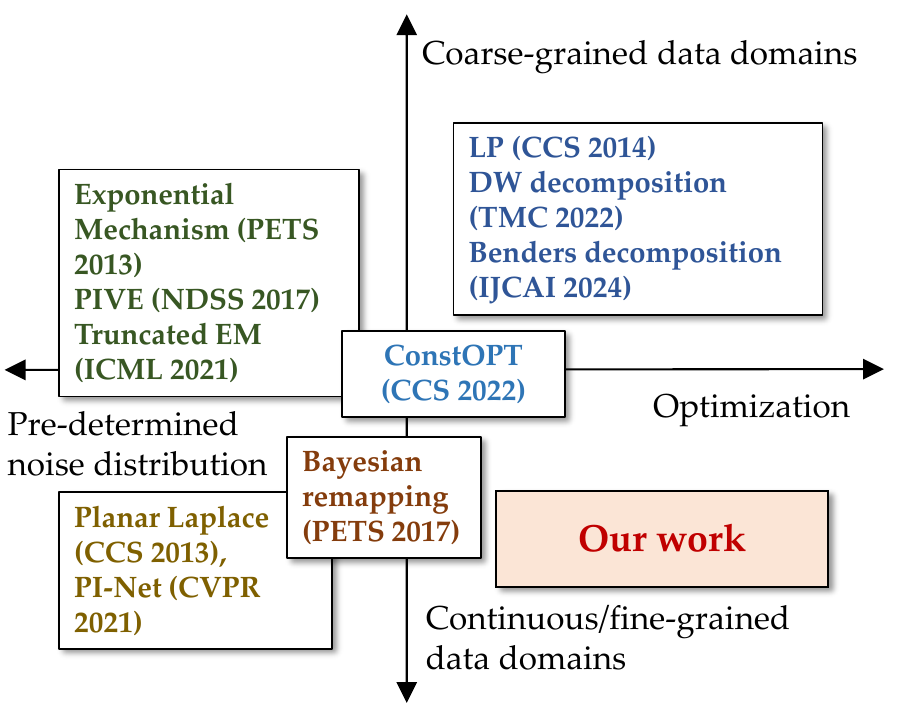}}
\caption{Related works vs. our work. 
\newline \small Example works in the figure: EM~\cite{Chatzikokolakis-PETS2013}, Planar Laplace~\cite{Andres-CCS2013}, LP~\cite{Bordenabe-CCS2014}, PI-Net (Laplace)~\cite{Chen-CVPR2021}, PIVE (EM-based)~\cite{Yu-NDSS2017}, Danzig-Wolfe (DW) decomposition (LP-based)~\cite{Qiu-TMC2022},  Benders decomposition (LP-based)~\cite{Qiu-IJCAI2024}, ConstOPT (EM+LP)~\cite{ImolaUAI2022}, Bayesian Remapping~\cite{Chatzikokolakis-PETS2017}, Truncated EM~\cite{Carvalho2021TEMHU}.}
\label{fig:relatedwork}
\end{minipage}
\end{figure}
Fig.~\ref{fig:relatedwork} situates prior mDP methods along two axes, domain granularity and mechanism design, and shows that most either (i) rely on pre-defined perturbation distributions suited to continuous spaces or (ii) perform optimization only after coarse discretization. This reveals a clear gap: scalable, optimization-based mechanisms that operate efficiently in continuous/fine-grained domains.



\paragraph{Our contributions.} In this paper, we take a first step toward addressing this gap by introducing a new \emph{interpolation-based} framework for optimizing mDP in continuous or fine-grained domains under general $\ell_p$-norm distance metrics (e.g., $\ell_1$, $\ell_2$, $\ell_\infty$). The flexibility of $\ell_p$-norms allows our method to accommodate diverse applications.
Our framework consists of three main steps: (1) partitioning the $N$-dimensional secret domain into disjoint cells and selecting their corners as \emph{anchor records} to approximate the full domain; (2) solving an \emph{Anchor Perturbation Optimization (APO)} problem to compute the optimal perturbation probabilities at these anchor points; and (3) interpolating the perturbation probabilities for non-anchor records using a \emph{log-convex function}, specifically a weighted geometric mean of the anchor probabilities, which ensures smooth and privacy-preserving transitions across the domain. \looseness=-1


Naively applying log-convex interpolation in high dimensions does not by itself enforce $\ell_p$-mDP: because the $\ell_p$ norm is convex for $p \geq 1$, combining anchor-wise bounds during interpolation can enlarge the effective distance and thus violate the \emph{global Lipschitz condition} required by mDP~\cite{koufogiannis2015}. We avoid this by decomposing the $N$-dimensional interpolation into axis-aligned, one-dimensional log-convex steps (\textbf{Definition~\ref{def:logconvex_1d}}). For each axis we establish a one-dimensional Lipschitz bound (\textbf{Propositions~\ref{prop:intral}–\ref{prop:across}}), then distribute a total privacy budget across coordinates using a dimension-wise composition rule (\textbf{Theorem~\ref{thm:composition}}; e.g., $\sum_{\ell}\epsilon_{\ell}^{p/(p-1)}\leq\epsilon^{p/(p-1)}$ for $p>1$). We combine the per-axis interpolants through a product construction (\textbf{Definition~\ref{def:PPI}}) and show that the resulting mechanism satisfies a global Lipschitz guarantee with respect to the $\ell_p$ metric (\textbf{Theorem~\ref{thm:AIPOfeasible}}). After normalization, this yields a valid high-dimensional perturbation mechanism that preserves the target mDP guarantee up to a small constant-factor slack quantified in our analysis.

Moreover, to minimize utility loss within this framework, we formulate the \emph{Anchor Perturbation Optimization (APO)} problem, which jointly optimizes the perturbation probabilities at anchor points and the allocation of dimension-wise privacy budgets. Due to the non-convex nature of the APO objective, we propose a tractable linear approximation, termed \emph{Approx-APO}, and provide theoretical bounds on its optimality gap relative to the original formulation. 

We evaluate our method on real-world road network datasets from Rome, New York City, and London~\cite{openstreetmap}, considering both the $\ell_2$ norm (Euclidean distance) and the $\ell_1$ norm (Manhattan distance) for measuring spatial proximity. Experimental results show that our approach enforces strictly stronger privacy guarantees (0\% mDP violations) than coarse-grained LP-based mechanisms~\cite{Bordenabe-CCS2014, ImolaUAI2022}, while consistently achieving lower utility loss compared to pre-defined noise mechanisms (e.g., EM~\cite{Chatzikokolakis-PoPETs2015, Carvalho2021TEMHU}, Laplace~\cite{Andres-CCS2013}, and \emph{Truncated EM}~\cite{Carvalho2021TEMHU}), and hybrid methods (e.g., ConstOPT~\cite{ImolaUAI2022} and Bayesian remapping~\cite{Chatzikokolakis-PETS2017}). 
 
Our main contribution can be summarized as follows: 
\looseness = -1

\begin{itemize}[left=0.2em, labelsep=0.5em]
    \item We propose a novel interpolation-based framework for enforcing $\ell_p$-norm mDP in continuous and fine-grained domains, bridging the gap between rigorous privacy guarantees and practical utility.
    
    \item We introduce a dimension-wise composition strategy for $\ell_p$-mDP and design a log-convex interpolation mechanism, proving its theoretical validity under $(\epsilon, d_p)$-mDP (\textbf{Theorems~\ref{thm:composition}–\ref{thm:AIPOfeasible}} and \textbf{Proposition \ref{prop:AIPOfeasible}}).
    
    \item We formulate the \emph{Anchor Perturbation Optimization (APO)} problem, which jointly optimizes anchor perturbations and privacy budget allocation, and propose a linear approximation with provable optimality bounds.
    
    \item Across mobility datasets from Rome, New York City, and London, our method shows zero observed mDP violations in fine-grained settings and consistently achieves lower utility loss than competing approaches.
\end{itemize}

The remainder of the paper is organized as follows. Section~\ref{sec:preliminary} introduces the preliminaries of mDP optimization. Section~\ref{sec:framework} outlines the overall framework, while Sections~\ref{sec:1Dinterpolation} and~\ref{sec:NDinterpolation} present the algorithmic design for one-dimensional and multi-dimensional interpolation, respectively. Section~\ref{sec:experiments} evaluates the performance of the proposed algorithm. Section~\ref{sec:relatedworks} discusses related work, and Section~\ref{sec:conclusions} concludes the paper.

\section{Preliminaries}
\label{sec:preliminary}

In this section, we introduce the \emph{formal definition} of the $\ell_p$-norm mDP (\textbf{Section \ref{subsec:optimization}}), describe the \emph{perturbation optimization framework} (\textbf{Section \ref{subsec:optimization}}), and highlight the \emph{limitations} of existing optimization-based approaches (\textbf{Section \ref{subsec:limitations}}). A summary of the key notations used throughout the paper is provided in \textbf{Appendix~\ref{sec:mathnotitions}}.

\subsection{$\ell_p$-norm mDP}
\label{subsec:mDPdef}

\emph{Local Differential Privacy (LDP)} enforces the same level of \emph{indistinguishability} for every pair of inputs, regardless of how similar they are~\cite{Duchi-FOCS2013}. \emph{Metric differential privacy} (mDP), also called \emph{Lipschitz privacy}~\cite{koufogiannis2015}, $d_{\mathcal X}$-privacy~\cite{Feyisetan-WDSM2020}, or \emph{smooth DP}~\cite{Dharangutte-AAAI2023}, generalizes this idea by tying the privacy guarantee to a distance over the input space: nearby inputs must be harder to distinguish than far-apart ones. Originally proposed for location privacy~\cite{Andres-CCS2013}, mDP naturally extends to high-dimensional continuous domains via distance-aware privacy control.

We consider a continuous (or fine-grained) secret domain $\mathcal{X} \subseteq \mathbb{R}^{N}$, where each record $\mathbf{x}_a \in \mathcal{X}$ is a vector $\mathbf{x}_a = [x_{a,1},\ldots,x_{a,N}]$. Similarity between $\mathbf{x}_a,\mathbf{x}_b \in \mathcal X$ is measured by the $\ell_p$ distance
\begin{equation}
d_p(\mathbf{x}_a,\mathbf{x}_b) \coloneqq \Big(\sum_{\ell=1}^{N} |x_{a,\ell}-x_{b,\ell}|^{p}\Big)^{1/p},
\end{equation}
which yields a flexible family of metrics (e.g., $\ell_1$, $\ell_2$, $\ell_\infty$) to capture task-specific sensitivity.

\begin{definition}[Lipschitz bound and continuity w.r.t.\ $d_p$]
Let $f:\mathcal X\to\mathbb R$ and let $d_p$ be the $\ell_p$ distance on $\mathbb R^N$ with $p\in[1,\infty]$.
\begin{itemize}
    \item \textbf{Pairwise Lipschitz bound.} We say $f$ satisfies an \emph{$(\epsilon,d_p)$-Lipschitz bound} between $\mathbf{x}_a,\mathbf{x}_b\in\mathcal X$ if
    \begin{equation} \label{eq:Lipschitzbound}
    |f(\mathbf{x}_a)-f(\mathbf{x}_b)| \le \epsilon\, d_p(\mathbf{x}_a,\mathbf{x}_b).
    \end{equation}
    \item \textbf{Lipschitz continuity.} We say $f$ is \emph{$(\epsilon,d_p)$-Lipschitz continuous} if the bound in~\eqref{eq:Lipschitzbound} holds for all $\mathbf{x}_a,\mathbf{x}_b\in\mathcal X$.
\end{itemize}
\end{definition}


\begin{definition}[$(\epsilon, d_p)$-metric differential privacy (mDP)] \label{def:metricDP} Let $\mathcal{X} \subseteq \mathbb{R}^N$ be the secret (input) domain and let $\mathcal{Y}$ denote the perturbation (output) domain. A randomized mechanism $\mathcal{M}: \mathcal{X} \rightarrow \mathcal{Y}$ is said to satisfy \emph{$(\epsilon, d_p)$-mDP} (or \emph{$\ell_p$-norm mDP}) if, 
\begin{itemize} 
\item for each element $\mathbf{y} \in \mathcal{Y}$, the log-probability function $f_\mathbf{y}(\mathbf{x})\coloneqq\ln \Pr\left[\mathcal{M}(\mathbf{x}) = \mathbf{y}\right]$ of the output is $(\epsilon, d_p)$-{\em Lipschitz continuous}, i.e., $\forall \mathbf{x}_a, \mathbf{x}_b \in \mathcal{X}$ 
\begin{equation} 
\label{eq:DP} 
\left|\ln \Pr\left[\mathcal{M}(\mathbf{x}_a) = \mathbf{y}\right] - \ln \Pr\left[\mathcal{M}(\mathbf{x}_b) = \mathbf{y}\right]\right| \leq \epsilon d_p(\mathbf{x}_a, \mathbf{x}_b), 
\end{equation} 
\item and $\forall \mathbf{x}\in \mathcal{X}$, the \emph{normalization constraint} of its perturbation probability is satisfied, i.e., 
\begin{equation} 
\sum_{\mathbf{y}\in \mathcal{Y}}\Pr\left[\mathcal{M}(\mathbf{x})  = \mathbf{y}\right] = 1. 
\end{equation} 
\end{itemize} 
\end{definition} 
\paragraph{Threat Model.}
We assume an adversary that observes the obfuscated output $\mathbf y\in\mathcal Y$ released by the mechanism $\mathcal M$ and attempts to infer the user's true input $\mathbf x\in\mathcal X$. 
The adversary may have arbitrary auxiliary knowledge about the domain $\mathcal X$ (e.g., road networks, prior distributions) and the mechanism $\mathcal{M}$, but does not control $\mathcal{M}$. 
The $(\epsilon,d_p)$-mDP guarantee ensures that the likelihood ratio between any two possible inputs $\mathbf x_a,\mathbf x_b$ is bounded in proportion to their distance $d_p(\mathbf x_a,\mathbf x_b)$, thereby limiting the adversary’s ability to distinguish between nearby inputs.


\subsection{Perturbation Discretization and Optimization}
\label{subsec:optimization}
While pre-defined noise mechanisms can enforce mDP over continuous domains, they do not explicitly optimize for utility. Optimization-based methods address this by minimizing utility loss, but exact solutions in continuous or fine-grained spaces are computationally intractable. To enable tractable optimization, prior work discretizes the input domain into a finite set of representative points. For instance,~\cite{Bordenabe-CCS2014, Wang-WWW2017} partition geographic regions into uniform grids and represent each cell by its centroid. Similarly,~\cite{Qiu-TMC2022, Qiu-IJCAI2024} project raw locations onto road network features (e.g., intersections and junctions) to define a discrete optimization space. \looseness = -1

Formally, let $\hat{\mathcal{X}} \subset \mathcal{X}$ denote the finite set of representative records obtained by discretizing the continuous domain $\mathcal{X}$. Let $p(\mathbf{x})$ be the prior distribution over $\mathcal{X}$, representing the probability that the secret record is located at $\mathbf{x} \in \mathcal{X}$. We consider the case that perturbation domain $\mathcal{Y}$ is discrete, and model the randomized mechanism $\mathcal{M}$ as a stochastic \emph{perturbation matrix} $\mathbf{Z} = \left\{ z(\mathbf{y}_k \mid \hat{\mathbf{x}}_i) \right\}_{(\hat{\mathbf{x}}_i, \mathbf{y}_k) \in \hat{\mathcal{X}} \times \mathcal{Y}}$, where each entry $z(\mathbf{y}_k \mid \hat{\mathbf{x}}_i)$ denotes the probability of reporting output $\mathbf{y}_k \in \mathcal{Y}$ given input $\hat{\mathbf{x}}_i \in \hat{\mathcal{X}}$, i.e., $z(\mathbf{y}_k \mid \hat{\mathbf{x}}_i) = \Pr\left[ \mathcal{M}(\hat{\mathbf{x}}_i) = \mathbf{y}_k \right]$. The Lipschitz bound constraint in Eq.~\eqref{eq:DP} is then enforced over all pairs of records in the discretized domain $\hat{\mathcal{X}}$, and corresponds to the following set of linear constraints:
\begin{equation}
\label{eq:DPdiscrete}
z(\mathbf{y}_k \mid \hat{\mathbf{x}}_i) - e^{\epsilon d_p(\hat{\mathbf{x}}_i, \hat{\mathbf{x}}_j)} z(\mathbf{y}_k \mid \hat{\mathbf{x}}_j) \leq 0, ~ \forall \hat{\mathbf{x}}_i, \hat{\mathbf{x}}_j \in \hat{\mathcal{X}}, ~\forall \mathbf{y}_k \in \mathcal{Y}.
\end{equation}
We define $\mathcal{L}(\hat{\mathbf{x}}_i,\mathbf{y}_k)$ as the utility loss incurred when the mechanism reports $\mathbf{y}_k$ while the true input lies in the region of $\mathcal{X}$ represented by the discretized point $\hat{\mathbf{x}}_i$; let $p(\hat{\mathbf{x}}_i)$ denote the prior probability that the true input lies in that region (i.e., the probability mass of the cell corresponding to $\hat{\mathbf{x}}_i$, with $\sum_i p(\hat{\mathbf{x}}_i)=1$). Then, the expected utility loss caused by the perturbation matrix $\mathbf{Z}$ can be represented by 
\begin{equation}
\mathcal{L}(\mathbf{Z}) = \sum_{\hat{\mathbf{x}}_i\in \hat{\mathcal{X}}}\sum_{\mathbf{y}_k \in \mathcal{Y}} p(\hat{\mathbf{x}}_i) z(\mathbf{y}_k|\hat{\mathbf{x}}_i) \mathcal{L}(\hat{\mathbf{x}}_i, \mathbf{y}_k).
\end{equation}
Consequently, the goal of the \emph{mDP optimization} problem is to minimize the expected utility loss $\mathcal{L}(\mathbf{Z})$ subject to the constraints imposed by mDP (\emph{Lipschitz bound} and \emph{normalization} constraints). This can be formulated as the following \emph{linear programming (LP)} problem:
\begin{eqnarray}
\label{eq:LPobjective}
\min && \mathcal{L}(\mathbf{Z}) \\
\text{s.t.} && z(\mathbf{y}_k \mid \hat{\mathbf{x}}_i) - e^{\epsilon d_p(\hat{\mathbf{x}}_i, \hat{\mathbf{x}}_j)} z(\mathbf{y}_k \mid \hat{\mathbf{x}}_j) \leq 0, \nonumber \\
&& \forall \hat{\mathbf{x}}_i, \hat{\mathbf{x}}_j \in \hat{\mathcal{X}}, \forall \mathbf{y}_k \in \mathcal{Y} ~\text{(Lipschitz bound)} \\
\label{eq:unitmeasure}
&& \sum_{\mathbf{y}_k \in \mathcal{Y}} z(\mathbf{y}_k \mid \hat{\mathbf{x}}_i) = 1, \forall \hat{\mathbf{x}}_i \in \hat{\mathcal{X}} ~\text{(Normalization)} \\
\label{eq:nonnegativity}
&& z(\mathbf{y}_k \mid \hat{\mathbf{x}}_i) \geq 0, ~\forall \hat{\mathbf{x}}_i \in \hat{\mathcal{X}}, ~\forall \mathbf{y}_k \in \mathcal{Y} 
\end{eqnarray}
where the \textit{non-negativity constraint} in Eq.~\eqref{eq:nonnegativity} enforces that each individual probability is non-negative~\cite{probability}.



\DEL{
\begin{wrapfigure}{r}{0.41\textwidth}

\begin{minipage}{0.41\textwidth}
\centering
\subfigure{
\includegraphics[width=1.00\textwidth]{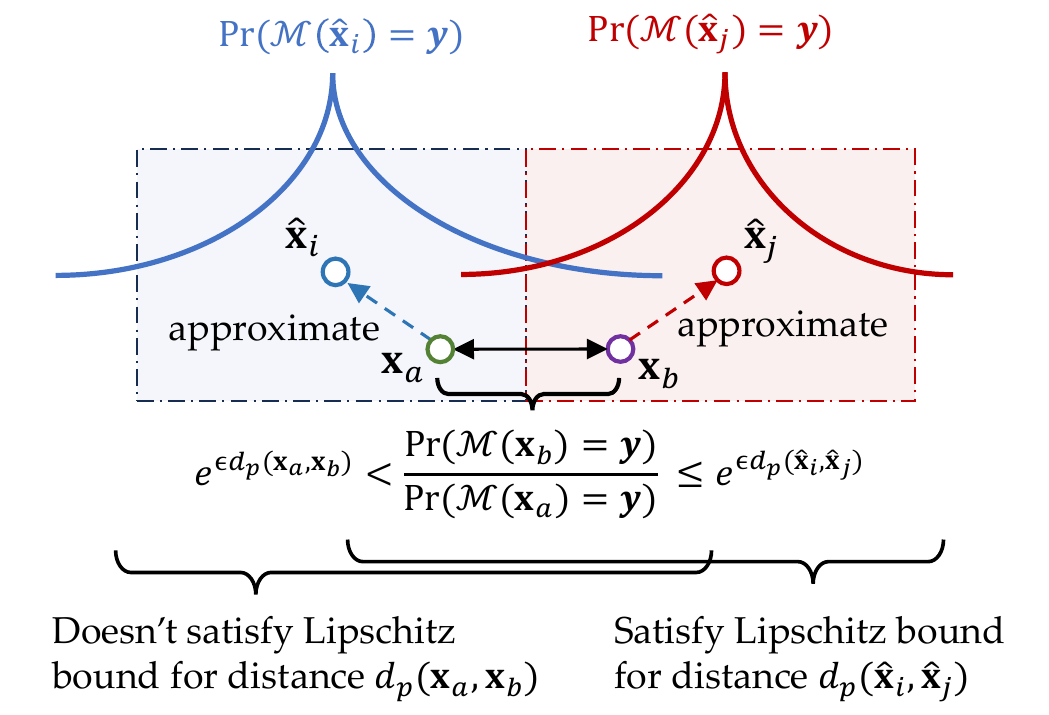}}

\caption{mDP violation due to approximated distance.}
\label{fig:DPlimit}
\end{minipage}

\end{wrapfigure}}

\subsection{Limitations of Discretization-Based mDP Enforcement} 
\label{subsec:limitations}
While discretization-based methods significantly reduce the computational cost of solving LPs, they do not ensure strict compliance with mDP in continuous domains or high-resolution settings. The key limitation arises from distance overestimation: discretized representative points (e.g., grid cell centers) may misrepresent the true distance between original inputs, especially when records lie near cell boundaries. This can lead to relaxed mDP constraints that are satisfied by the mechanism, but fail to hold for the underlying continuous/fine-grained domain. 

\begin{figure}[t]
\centering
\begin{minipage}{0.45\textwidth}
\centering
  \subfigure{
\includegraphics[width=1.00\textwidth]{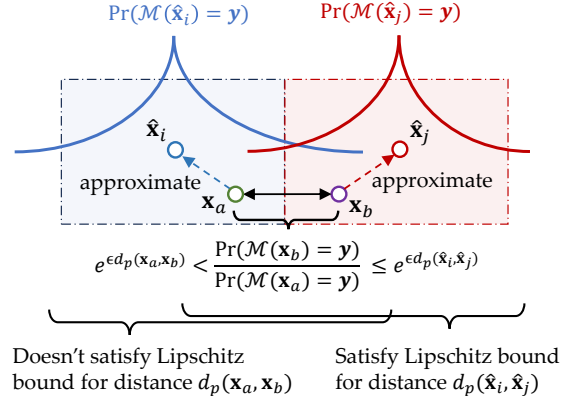}}

\caption{mDP based on approximated distance.}
\label{fig:DPlimit}
\end{minipage}

\end{figure}
Fig.~\ref{fig:DPlimit} illustrates this limitation. 
Suppose that a user moves from location $\mathbf{x}_a$ to a nearby location $\mathbf{x}_b$ in an adjacent grid cell, 
and let $\hat{\mathbf{x}}_i$ and $\hat{\mathbf{x}}_j$ denote the corresponding cell centers. 
Prior methods~\cite{Bordenabe-CCS2014, Wang-WWW2017, Pappachan-EDBT2023} discretize the domain by 
approximating each location with its cell center, so that $\Pr\!\left[\mathcal{M}(\mathbf{x}) = \mathbf{y}\right]
= \Pr\!\left[\mathcal{M}(\hat{\mathbf{x}}) = \mathbf{y}\right]$, 
and they enforce $\frac{\Pr\!\left[\mathcal{M}(\hat{\mathbf{x}}_i) = \mathbf{y}\right]}
     {\Pr\!\left[\mathcal{M}(\hat{\mathbf{x}}_j) = \mathbf{y}\right]}
\leq e^{\epsilon d_p(\hat{\mathbf{x}}_i, \hat{\mathbf{x}}_j)},
~\forall \mathbf{y} \in \mathcal{Y}$.
However, near cell boundaries one typically has 
$d_p(\hat{\mathbf{x}}_i, \hat{\mathbf{x}}_j) > d_p(\mathbf{x}_a, \mathbf{x}_b)$. 
Thus, the center-based enforcement is \emph{weaker} than the true constraint: $\frac{\Pr\!\left[\mathcal{M}(\mathbf{x}_a) = \mathbf{y}\right]}
     {\Pr\!\left[\mathcal{M}(\mathbf{x}_b) = \mathbf{y}\right]}
\leq e^{\epsilon d_p(\mathbf{x}_a, \mathbf{x}_b)}$ in Definition \ref{def:metricDP}. 
Consequently, there may exist some $\mathbf{y}$ such that
$\frac{\Pr\left[\mathcal{M}(\mathbf{x}_a) = \mathbf{y}\right]}
     {\Pr\left[\mathcal{M}(\mathbf{x}_b) = \mathbf{y}\right]}
\leq e^{\epsilon d_p(\hat{\mathbf{x}}_i, \hat{\mathbf{x}}_j)}$ but $> e^{\epsilon d_p(\mathbf{x}_a, \mathbf{x}_b)}$, which violates the Lipschitz bound between the true locations $\mathbf{x}_{a}$ and $\mathbf{x}_{b}$. 
In essence, mDP requires that small changes in the input domain induce only 
$e^{\epsilon d_p}$-bounded changes in output probabilities, 
an invariant that coarse discretization may fail to preserve.


\section{Framework}
\label{sec:framework}


\begin{figure*}[t]
\centering
\begin{minipage}{1.00\textwidth}
\centering
  \subfigure{
\includegraphics[width=0.92\textwidth]{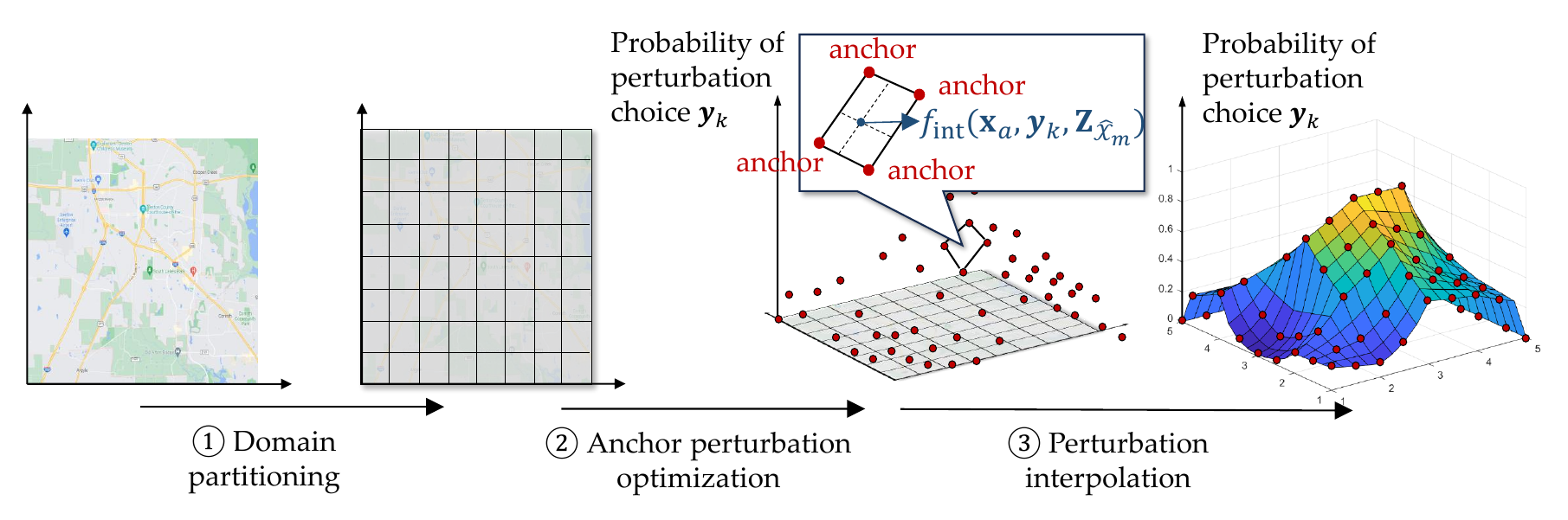}}

\caption{Framework of our method 
}
\label{fig:framework}
\end{minipage}

\end{figure*}

In this section, we introduce our interpolation-based framework for optimizing perturbation mechanisms under $\ell_p$-norm mDP in continuous or fine-grained secret domains. As illustrated in Fig.~\ref{fig:framework}, the framework consists of three main steps:

\begin{itemize}[left=0.7em, labelsep=0.5em]
    \item[\textbf{\textcircled{1}}] \textbf{Domain partitioning}: The secret domain $\mathcal{X}$ is partitioned into cells, with anchor records placed at each cell corner to serve as representative inputs for optimization. \looseness=-1
    
    \item[\textbf{\textcircled{2}}] \textbf{Anchor perturbation optimization}: Perturbation distributions are optimized at anchor locations to minimize expected utility loss, subject to mDP constraints over the anchor set.
    
    \item[\textbf{\textcircled{3}}] \textbf{Perturbation interpolation}: For non-anchor records, perturbation probabilities are inferred via interpolation from anchor distributions, while preserving mDP guarantees.
\end{itemize}

\noindent \textbf{Intuition behind interpolation design.} In Step~\textcircled{3}, let $\mathcal{C}_m$ denote a cell in the partitioned domain, and let $\hat{\mathcal{X}}_m$ denote the set of anchor points within $\mathcal{C}_m$. Suppose the perturbation distributions at these anchor points, denoted by $\mathbf{Z}_{\hat{\mathcal{X}}_m} = \{z(\mathbf{y}_k\mid\hat{\mathbf{x}}_j)\}_{(\hat{\mathbf{x}}_j, \mathbf{y}_k) \in \hat{\mathcal{X}}_m \times \mathcal{Y}}$, have already been optimized. The goal of the interpolation function $f_{\mathrm{int}}$ is to assign perturbation probabilities to each non-anchor record $\mathbf{x}_a \in \mathcal{C}_m$ such that mDP is satisfied between $\mathbf{x}_a$ and any other record in the domain $\mathcal{X}$. 

As defined by the Lipschitz bound (Eq. (\ref{eq:DP}) in Definition~\ref{def:metricDP}), $(\epsilon, d_p)$-mDP requires that the \emph{log-probability} of outputs varies at most linearly with the $\ell_p$-distance between inputs. This motivates the use of \emph{log-convex interpolation}, which linearly interpolates log-probabilities to ensure smooth transitions in the output distribution. As a natural first attempt, we define the log-probability at a non-anchor point $\mathbf{x}_a \in \mathcal{C}_m$ as a convex combination of those at the anchor points: 
$$\ln z(\mathbf{y}_k \mid \mathbf{x}_a) = \sum_{\hat{\mathbf{x}} \in \hat{\mathcal{X}}_m} \lambda_{\hat{\mathbf{x}}, \mathbf{x}_a} \ln z(\mathbf{y}_k \mid \hat{\mathbf{x}}),$$ 
where the convex coefficients $\lambda_{\hat{\mathbf{x}}, \mathbf{x}_a} \geq 0$ sum to one and reflect the relative position of $\mathbf{x}_a$ within $\mathcal{C}_m$. Ideally, such interpolation would preserve mDP, i.e., if a reference point $\mathbf{x}_b$ satisfies mDP with respect to each anchor in $\hat{\mathcal{X}}_m$, then it should also satisfy mDP with respect to the interpolated point $\mathbf{x}_a$. Formally, we would desire:
\begin{equation}
\label{eq:d_pineq}
\bigl| \underbrace{\textstyle  \sum_{\hat{\mathbf{x}} \in \hat{\mathcal{X}}_m} \lambda_{\hat{\mathbf{x}}, \mathbf{x}_a} \ln z(\mathbf{y}_k \mid \hat{\mathbf{x}})}_{\text{Interpolated probability for $\mathbf{x}_a$}} - \ln z(\mathbf{y}_k \mid \mathbf{x}_b) \bigr| 
\leq \epsilon \cdot d_p(\mathbf{x}_a, \mathbf{x}_b).
\end{equation}
However, this inequality does not generally hold in high dimensions due to the convexity of the $\ell_p$-norm for $p \geq 1$. Specifically, the convex combination of anchor distances $\sum \lambda_{\hat{\mathbf{x}}, \mathbf{x}_a} d_p(\hat{\mathbf{x}}, \mathbf{x}_b)$ may exceed $d_p(\mathbf{x}_a, \mathbf{x}_b)$, leading to interpolated log-probabilities that violate the Lipschitz bound required by mDP. A formal analysis of this problem is provided in \textbf{Appendix~\ref{subsec:discussion:d_p}}. \looseness = -1

To overcome this issue, we factor the $N$-dimensional interpolation into a sequence of \emph{one-dimensional, log-convex interpolations}, applied independently along each coordinate axis. In one dimension, the Lipschitz bound in Eq.~\eqref{eq:d_pineq} holds exactly (i.e., for $N=1$), as formally established in \textbf{Propositions~\ref{prop:intral}} and \textbf{\ref{prop:across}}. We then construct the $N$-dimensional mechanism by multiplicatively composing the
per-axis interpolators in \textbf{Definition \ref{def:logconvex_nd}} and normalizing to obtain a valid joint perturbation
distribution in \textbf{Definition \ref{def:PPI}}. The correctness of this composition is established by \textbf{Theorem~\ref{thm:AIPOfeasible}} and \textbf{Proposition~\ref{prop:AIPOfeasible}}, which respectively show that the resulting mechanism is \emph{$(\epsilon,d_p)$-Lipschitz continuous} and satisfies \emph{$(\epsilon,d_p)$-mDP} over the entire $N$-dimensional domain $\mathcal{X}$.

Next, we first introduce the one-dimensional interpolation primitive in \textbf{Section~\ref{sec:1Dinterpolation}}, and then extend this construction to the multi-dimensional setting in \textbf{Section~\ref{sec:NDinterpolation}}, following the three-step procedure outlined in Fig.~\ref{fig:framework} (Steps~\textcircled{1}–\textcircled{3}).

\begin{figure*}[t]
\centering
\hspace{0.00in}
\begin{minipage}{1.00\textwidth}
\centering
  \subfigure[Intra-interval validity]{
\includegraphics[width=0.375\textwidth, height = 0.175\textheight]{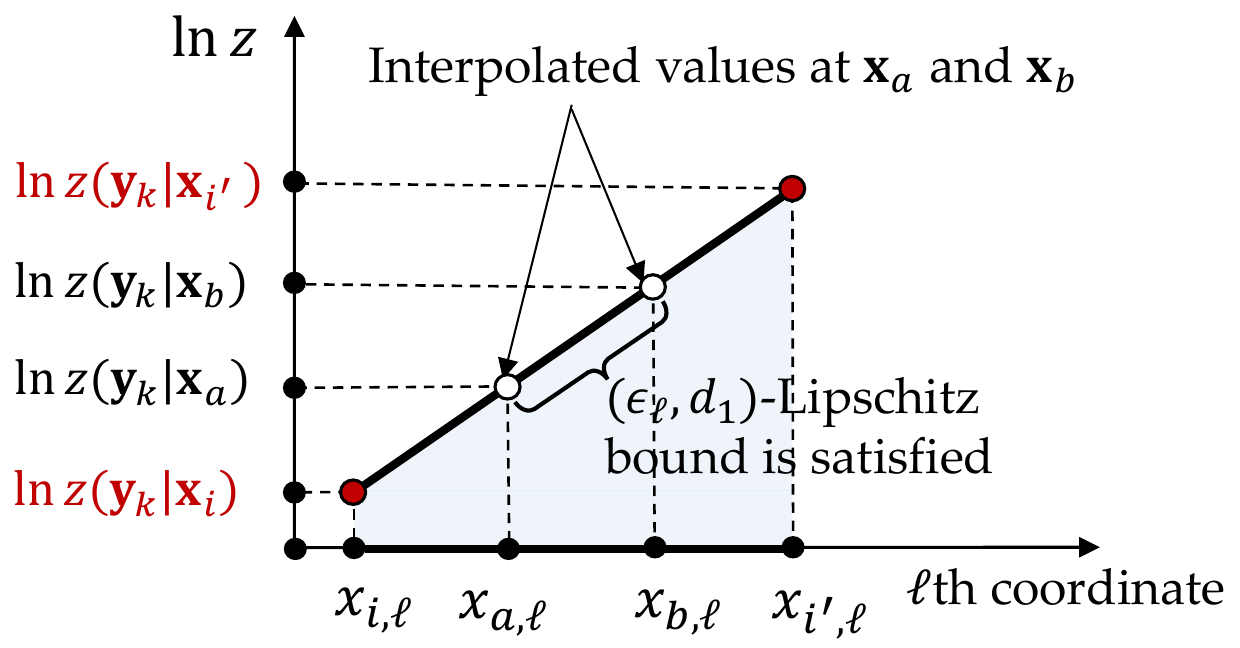}}
\hspace{0.2in}
  \subfigure[Across-interval validity]{
\includegraphics[width=0.47\textwidth, height = 0.19\textheight]{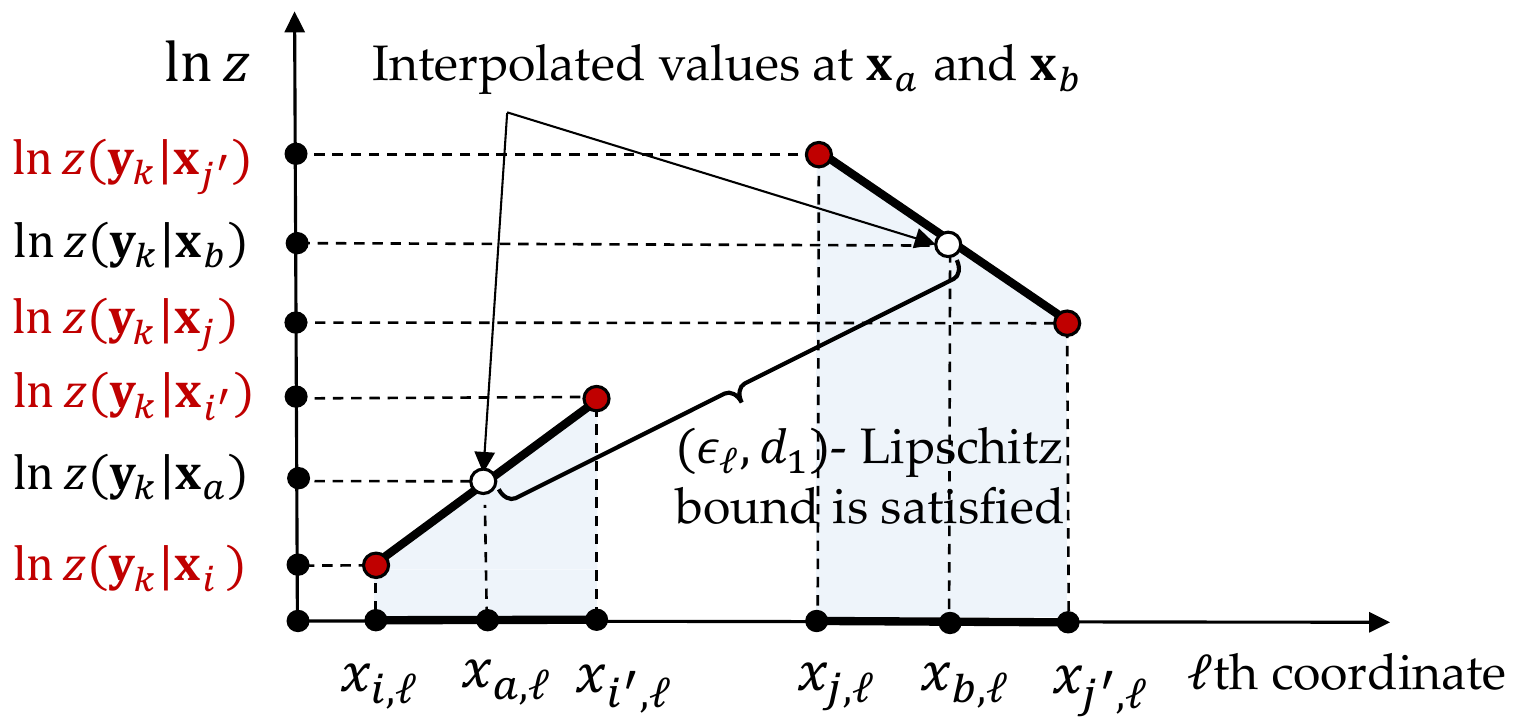}}

\end{minipage}
\caption{Illustration of Proposition \ref{prop:intral} and Proposition \ref{prop:across}.}
\label{fig:property}

\end{figure*}


\section{One-Dimensional Interpolation and Privacy Composition}
\label{sec:1Dinterpolation}

\DEL{
\begin{equation}
\epsilon_\ell^{\frac{p}{p-1}} = \frac{1}{N} \epsilon^{\frac{p}{p-1}}
\end{equation}
\begin{equation}
\epsilon_\ell = N^{\frac{p-1}{p}} \epsilon
\end{equation}
\begin{equation}
|\ln x_\ell - \ln x'_\ell| \leq \epsilon_\ell d_{x_\ell, x'_\ell} = N^{\frac{p-1}{p}} \epsilon d_{x_\ell, x'_\ell}
\end{equation}}


\begin{definition}[\textbf{One-Dimensional Log-Convex Interpolation}]
\label{def:logconvex_1d}
Let $\mathbf{x}_i, \mathbf{x}_{i'} \in \mathcal{X}$ be two records that differ only in their $\ell$th coordinate, with $x_{i', \ell} = x_{i, \ell} + \Delta_\ell$ for some $\Delta_\ell > 0$. For any intermediate point $\mathbf{x}_a$ such that $x_{a,\ell} \in [x_{i,\ell}, x_{i',\ell}]$ and all other coordinates match $\mathbf{x}_i$ and $\mathbf{x}_{i'}$, define the convex coefficient $\lambda_{\mathbf{x}_i, \mathbf{x}_a}^{\ell}= \frac{x_{i',\ell} - x_{a,\ell}}{x_{i',\ell} - x_{i,\ell}}$. Then, the log-probability at $\mathbf{x}_a$ is given by the log-convex interpolation:
\begin{equation}
\label{eq:logconvex}
\ln z(\mathbf{y}_k \mid \mathbf{x}_a) = \lambda_{\mathbf{x}_i, \mathbf{x}_a}^{\ell} \ln z(\mathbf{y}_k \mid \mathbf{x}_i) + (1 - \lambda_{\mathbf{x}_i, \mathbf{x}_a}^{\ell}) \ln z(\mathbf{y}_k \mid \mathbf{x}_{i'}),
\end{equation}
which is written as $z(\mathbf{y}_k \mid \mathbf{x}_a) \lnconv \left(z(\mathbf{y}_k \mid \mathbf{x}_i),  z(\mathbf{y}_k \mid \mathbf{x}_{i'})\right)$. 
\end{definition}

\noindent
In \textbf{Propositions~\ref{prop:intral}} and \textbf{\ref{prop:across}}, we prove that the interpolation mechanism preserves $(\epsilon_\ell, d_1)$-Lipschitz continuity; that is, it is $(\epsilon_\ell, d_1)$-Lipschitz within each one-dimensional interval and across adjacent intervals.
\begin{proposition}[\textbf{Intra-Interval Validity}]
\label{prop:intral}
Let $\mathbf{x}_i$ and $\mathbf{x}_{i'}$ be two records that differ only in their $\ell$th coordinate, with $x_{i, \ell} < x_{i', \ell}$, and suppose their corresponding log-perturbation probabilities $\ln z(\mathbf{y}_k \mid \mathbf{x}_i)$ and $\ln z(\mathbf{y}_k \mid \mathbf{x}_{i'})$ satisfy the $(\epsilon, d_1)$-Lipschitz bound. Then, for any $\mathbf{x}_a, \mathbf{x}_b \in \mathcal{X}$ such that $x_{a, \ell}, x_{b, \ell} \in [x_{i, \ell}, x_{i', \ell}]$ and all other coordinates are identical to those of $\mathbf{x}_i$, if the interpolated values $\hat{z}(\mathbf{y}_k \mid \mathbf{x}_a)$ and $\hat{z}(\mathbf{y}_k \mid \mathbf{x}_b)$ are calculated by 
\begin{align}
\hat{z}(\mathbf{y}_k \mid \mathbf{x}_a) \lnconv\left(z(\mathbf{y}_k \mid \mathbf{x}_i), z(\mathbf{y}_k \mid \mathbf{x}_{i'})\right), \\
\hat{z}(\mathbf{y}_k \mid \mathbf{x}_b)\lnconv\left(z(\mathbf{y}_k \mid \mathbf{x}_i), z(\mathbf{y}_k \mid \mathbf{x}_{i'})\right),
\end{align}
then $\ln \hat{z}(\mathbf{y}_k \mid \mathbf{x}_a)$ and $\ln \hat{z}(\mathbf{y}_k \mid \mathbf{x}_b)$ also satisfy the $(\epsilon, d_1)$-Lipschitz bound between $\mathbf{x}_a$ and $\mathbf{x}_b$.
This property is illustrated in Fig.~\ref{fig:property}(a).
\end{proposition}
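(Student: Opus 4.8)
The plan is to reduce the claim to a one-line algebraic identity for the difference of two interpolated log-probabilities and then bound it termwise using the hypothesis. First I would fix an output $\mathbf{y}_k$ and abbreviate $u \coloneqq \ln z(\mathbf{y}_k \mid \mathbf{x}_i)$, $v \coloneqq \ln z(\mathbf{y}_k \mid \mathbf{x}_{i'})$, and write $\lambda_a \coloneqq \lambda_{\mathbf{x}_i,\mathbf{x}_a}^{\ell} = \frac{x_{i',\ell}-x_{a,\ell}}{x_{i',\ell}-x_{i,\ell}}$ and $\lambda_b$ analogously; both lie in $[0,1]$ since $x_{a,\ell}, x_{b,\ell}\in[x_{i,\ell},x_{i',\ell}]$ and $\Delta_\ell = x_{i',\ell}-x_{i,\ell}>0$. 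By Definition~\ref{def:logconvex_1d}, $\ln\hat z(\mathbf{y}_k\mid\mathbf{x}_a) = \lambda_a u + (1-\lambda_a)v$ and likewise for $\mathbf{x}_b$, so subtracting yields the exact identity $\ln\hat z(\mathbf{y}_k\mid\mathbf{x}_a) - \ln\hat z(\mathbf{y}_k\mid\mathbf{x}_b) = (\lambda_a-\lambda_b)(u-v)$.

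Next I would compute $\lambda_a - \lambda_b = \frac{x_{b,\ell}-x_{a,\ell}}{x_{i',\ell}-x_{i,\ell}}$ directly from the definition, and invoke the hypothesis that $u$ and $v$ satisfy the $(\epsilon,d_1)$-Lipschitz bound between $\mathbf{x}_i$ and $\mathbf{x}_{i'}$; because these two records differ only in coordinate $\ell$, this reads $|u-v| \le \epsilon\, d_1(\mathbf{x}_i,\mathbf{x}_{i'}) = \epsilon\,(x_{i',\ell}-x_{i,\ell})$. Combining the two, $|\ln\hat z(\mathbf{y}_k\mid\mathbf{x}_a) - \ln\hat z(\mathbf{y}_k\mid\mathbf{x}_b)| = \frac{|x_{a,\ell}-x_{b,\ell}|}{x_{i',\ell}-x_{i,\ell}}\,|u-v| \le \epsilon\,|x_{a,\ell}-x_{b,\ell}|$. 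Finally, since $\mathbf{x}_a$ and $\mathbf{x}_b$ agree in every coordinate except possibly $\ell$, we have $d_1(\mathbf{x}_a,\mathbf{x}_b) = |x_{a,\ell}-x_{b,\ell}|$, which closes the inequality and establishes the $(\epsilon,d_1)$-Lipschitz bound between $\mathbf{x}_a$ and $\mathbf{x}_b$. As $\mathbf{y}_k$ was arbitrary, the interpolant is $(\epsilon,d_1)$-Lipschitz throughout the interval.

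I do not expect a genuine obstacle here: the point is that the difference of two convex combinations over the \emph{same} two endpoints collapses to $(\lambda_a-\lambda_b)(u-v)$, so the argument is exact rather than approximate, and the factor $x_{i',\ell}-x_{i,\ell}$ produced by $\lambda_a-\lambda_b$ cancels precisely against the same factor in the Lipschitz bound on $|u-v|$. The only care needed is bookkeeping on the metric — using that $\mathbf{x}_i,\mathbf{x}_{i'}$ (hence also $\mathbf{x}_a,\mathbf{x}_b$) differ in at most the $\ell$th coordinate so the $d_1$ distances reduce to one-dimensional gaps — and noting $\Delta_\ell>0$ so the coefficients are well-defined. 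The substantive work is deferred to the companion across-interval case (Proposition~\ref{prop:across}), where $\mathbf{x}_a$ and $\mathbf{x}_b$ may fall in different sub-intervals of the partitioned axis and one must chain several such one-dimensional bounds while controlling the endpoint values at the shared breakpoint.
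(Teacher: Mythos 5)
Your proof is correct and follows essentially the same route as the paper's: subtract the two convex combinations over the shared endpoints to get the exact identity $(\lambda_a-\lambda_b)(u-v)$, observe that $\lambda_a-\lambda_b$ contributes the factor $|x_{a,\ell}-x_{b,\ell}|/(x_{i',\ell}-x_{i,\ell})$, and cancel the interval length against the Lipschitz bound on $|u-v|$. No gaps.
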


\begin{proposition}[\textbf{Across-Interval Validity}]
\label{prop:across}
Let $\mathbf{x}_i, \mathbf{x}_{i'}, \mathbf{x}_j, \mathbf{x}_{j'}$ be four records that differ only in their $\ell$th coordinate, with $x_{i, \ell} < x_{i', \ell} \leq x_{j, \ell} < x_{j', \ell}$. Suppose that each pair of log-perturbation probabilities $\ln z(\mathbf{y}_k \mid \mathbf{x}_i)$, $\ln z(\mathbf{y}_k \mid \mathbf{x}_{i'})$, $\ln z(\mathbf{y}_k \mid \mathbf{x}_j)$, and $\ln z(\mathbf{y}_k \mid \mathbf{x}_{j'})$ satisfy $(\epsilon, d_1)$-Lipschitz bound.

Let $\mathbf{x}_a$ and $\mathbf{x}_b$ be two additional records that differ from the above four points only in the $\ell$th coordinate, with $x_{a,\ell} \in [x_{i,\ell}, x_{i',\ell}]$ and $x_{b,\ell} \in [x_{j,\ell}, x_{j',\ell}]$. If the corresponding interpolated values $\hat{z}(\mathbf{y}_k \mid \mathbf{x}_a)$ and $\hat{z}(\mathbf{y}_k \mid \mathbf{x}_b)$ are defined via log-convex interpolation as:
\begin{eqnarray}
&& \hat{z}(\mathbf{y}_k \mid \mathbf{x}_a) \lnconv \left(z(\mathbf{y}_k \mid \mathbf{x}_i), z(\mathbf{y}_k \mid \mathbf{x}_{i'})\right) \\
&&  
\hat{z}(\mathbf{y}_k \mid \mathbf{x}_b) \lnconv \left(z(\mathbf{y}_k \mid \mathbf{x}_j), z(\mathbf{y}_k \mid \mathbf{x}_{j'})\right),
\end{eqnarray}
then the pair $\left(\hat{z}(\mathbf{y}_k \mid \mathbf{x}_a), \hat{z}(\mathbf{y}_k \mid \mathbf{x}_b)\right)$ satisfies $(\epsilon, d_1)$-Lipschitz bound between $\mathbf{x}_a$ and $\mathbf{x}_b$. This property is illustrated in Fig.~\ref{fig:property}(b).
\end{proposition}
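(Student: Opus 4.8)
\emph{Proof plan.} Because $\mathbf{x}_a$, $\mathbf{x}_b$ and the four anchors agree in every coordinate but the $\ell$th, the statement lives on a line. Abbreviate the $\ell$th coordinates as $p_i<p_{i'}\le p_j<p_{j'}$ for the anchors and $p_a\in[p_i,p_{i'}]$, $p_b\in[p_j,p_{j'}]$ for $\mathbf{x}_a,\mathbf{x}_b$, so that $d_1(\mathbf{x}_a,\mathbf{x}_b)=p_b-p_a\ge 0$ (using $p_a\le p_{i'}\le p_j\le p_b$). Write $u,v,s,t$ for $\ln z(\mathbf{y}_k\mid\cdot)$ at $\mathbf{x}_i,\mathbf{x}_{i'},\mathbf{x}_j,\mathbf{x}_{j'}$, and $\lambda_a=\lambda^{\ell}_{\mathbf{x}_i,\mathbf{x}_a}$, $\lambda_b=\lambda^{\ell}_{\mathbf{x}_j,\mathbf{x}_b}\in[0,1]$ for the interpolation weights of Definition~\ref{def:logconvex_1d}, which satisfy $p_a=\lambda_a p_i+(1-\lambda_a)p_{i'}$ and $p_b=\lambda_b p_j+(1-\lambda_b)p_{j'}$. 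By definition $\ln\hat z(\mathbf{y}_k\mid\mathbf{x}_a)=\lambda_a u+(1-\lambda_a)v$ and $\ln\hat z(\mathbf{y}_k\mid\mathbf{x}_b)=\lambda_b s+(1-\lambda_b)t$, and the goal is to bound the absolute value of their difference by $\epsilon(p_b-p_a)$.

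\textbf{Main step (bilinear decomposition).} Insert $1=\lambda_b+(1-\lambda_b)$ before the first block and $1=\lambda_a+(1-\lambda_a)$ before the second, obtaining
\begin{equation}
\begin{aligned}
&\ln\hat z(\mathbf{y}_k\mid\mathbf{x}_a)-\ln\hat z(\mathbf{y}_k\mid\mathbf{x}_b)\\
&\;=\lambda_a\lambda_b(u-s)+\lambda_a(1-\lambda_b)(u-t)\\
&\qquad+(1-\lambda_a)\lambda_b(v-s)+(1-\lambda_a)(1-\lambda_b)(v-t),
\end{aligned}
\end{equation}
a convex combination of the four ``cross'' differences (weights nonnegative, summing to $1$). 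Apply the triangle inequality and substitute the four cross Lipschitz bounds given in the hypothesis, namely $|u-s|\le\epsilon(p_j-p_i)$, $|u-t|\le\epsilon(p_{j'}-p_i)$, $|v-s|\le\epsilon(p_j-p_{i'})$, $|v-t|\le\epsilon(p_{j'}-p_{i'})$ (each right-hand side $\ge 0$ since $p_i<p_{i'}\le p_j<p_{j'}$). The result is $\epsilon$ times a weighted sum of the four anchor gaps; collecting the terms that carry $p_j,p_{j'}$ with a $+$ sign and those that carry $p_i,p_{i'}$ with a $-$ sign and using $\lambda_a+(1-\lambda_a)=\lambda_b+(1-\lambda_b)=1$, this sum telescopes to exactly $\bigl(\lambda_b p_j+(1-\lambda_b)p_{j'}\bigr)-\bigl(\lambda_a p_i+(1-\lambda_a)p_{i'}\bigr)=p_b-p_a$. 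Hence $|\ln\hat z(\mathbf{y}_k\mid\mathbf{x}_a)-\ln\hat z(\mathbf{y}_k\mid\mathbf{x}_b)|\le\epsilon(p_b-p_a)=\epsilon\,d_1(\mathbf{x}_a,\mathbf{x}_b)$, which is the claim.

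\textbf{Cross-check and main obstacle.} Equivalently, $\ln\hat z(\mathbf{y}_k\mid\mathbf{x}_a)$ and $\ln\hat z(\mathbf{y}_k\mid\mathbf{x}_b)$ are the values at $p_a$ and $p_b$ of the piecewise-linear function on $[p_i,p_{j'}]$ that interpolates $u,v,s,t$ at the four knots; each of its segments has slope equal to a Lipschitz quotient of consecutive anchors, hence absolute slope $\le\epsilon$, so the function is globally $\epsilon$-Lipschitz and the bound is immediate (the degenerate case $p_{i'}=p_j$ is harmless: the Lipschitz bound at distance $0$ forces $v=s$, collapsing the middle segment). The only genuine difficulty is that $\mathbf{x}_a$ and $\mathbf{x}_b$ are interpolated from \emph{different} anchor pairs, so routing a single triangle inequality through one shared anchor would over-count the distance and leave slack; the bilinear decomposition above is exactly what makes the four cross-distances cancel down to $d_1(\mathbf{x}_a,\mathbf{x}_b)$ with no loss. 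The special case in which both points share one anchor pair reproduces Proposition~\ref{prop:intral}, where the four-term sum reduces to $(\lambda_a-\lambda_b)(u-v)$.
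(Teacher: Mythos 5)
Your proof is correct, and it takes a genuinely different route from the paper's. The paper telescopes the difference through the two inner anchors, writing $\ln\hat z(\mathbf{y}_k\mid\mathbf{x}_a)-\ln\hat z(\mathbf{y}_k\mid\mathbf{x}_b)$ as the sum of three components $\bigl(\mathbf{x}_a\to\mathbf{x}_{i'}\bigr)+\bigl(\mathbf{x}_{i'}\to\mathbf{x}_j\bigr)+\bigl(\mathbf{x}_j\to\mathbf{x}_b\bigr)$, bounding the middle leg by the anchor-to-anchor Lipschitz hypothesis and the two outer legs by the intra-interval argument of Proposition~\ref{prop:intral}, and then using the ordering $x_{a,\ell}\le x_{i',\ell}\le x_{j,\ell}\le x_{b,\ell}$ so that the three gaps sum exactly to $d_1(\mathbf{x}_a,\mathbf{x}_b)$ (so, contrary to your closing remark, the paper's path-based route also incurs no slack). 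You instead expand the difference as a bilinear convex combination of the four cross-anchor differences $u-s$, $u-t$, $v-s$, $v-t$ with weights $\lambda_a\lambda_b$, $\lambda_a(1-\lambda_b)$, $(1-\lambda_a)\lambda_b$, $(1-\lambda_a)(1-\lambda_b)$, and observe that the weighted anchor gaps collapse exactly to $p_b-p_a$. Your identity checks out, the four cross bounds you invoke are all part of the stated hypothesis, and the argument is self-contained: it never needs Proposition~\ref{prop:intral} or any claim about interpolated-to-anchor pairs. The paper's decomposition buys a path/induction template that is reused in the multi-dimensional argument of Theorem~\ref{thm:AIPOfeasible}; your bilinear decomposition (and the equivalent piecewise-linear, slope-bounded reformulation in your cross-check, including the degenerate case $x_{i',\ell}=x_{j,\ell}$) is arguably the more economical proof of this one-dimensional statement and cleanly subsumes the intra-interval case as $\lambda_a-\lambda_b$ times $u-v$.
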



\textbf{Propositions~\ref{prop:intral}} and \textbf{\ref{prop:across}} establish that the one-dimensional log-convex interpolation is $(\epsilon_\ell, d_1)$-Lipschitz along coordinate $\ell$. In multi-dimensional domains, however, inputs typically differ along multiple coordinates simultaneously. To extend our interpolation framework while preserving an overall $(\epsilon, d_p)$-Lipschitz guarantee, we must coordinate the privacy leakage across dimensions via per-coordinate budgets $\{\epsilon_\ell\}_{\ell=1}^N$. \textbf{Theorem~\ref{thm:composition}} formalizes this by specifying how the $\{\epsilon_\ell\}$ compose to yield a mechanism that satisfies the $(\epsilon, d_p)$-Lipschitz condition on $\mathcal{X}$, thereby providing the theoretical foundation for our high-dimensional interpolation scheme.



\begin{theorem}
[\textbf{Dimension-wise Composition for Lipschitz Bound Condition}]
\label{thm:composition}
Let $f:\mathcal X\to\mathbb R$ be a mechanism that interpolates values in an $N$-dimensional space. Suppose that for each $\ell \in \{1, \dots, N\}$, $f$ satisfies $(\epsilon_\ell, d_1)$-Lipschitz bound when the input records differ only in the $\ell$th coordinate. If the parameters $\epsilon_1, \dots, \epsilon_N$ satisfy the following budget composition condition:
\begin{eqnarray}
\label{eq:budgetcompo1}
\textstyle
&& \sum_{\ell=1}^{N} \epsilon_\ell^{\frac{p}{p-1}} \leq \epsilon^{\frac{p}{p-1}}, \quad \text{for } p > 1, 
\\  \label{eq:budgetcompo2}
\text{and} && 
\max_{\ell} \epsilon_\ell \leq \epsilon, \quad \text{for } p = 1,
\end{eqnarray}
then $f$ is $(\epsilon, d_p)$-Lipschitz continuous.
\end{theorem}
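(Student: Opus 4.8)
The plan is to reduce the $N$-dimensional statement to the one-dimensional hypotheses by moving between two records one coordinate at a time, and then to bridge the resulting weighted-$\ell_1$ bound and the target $\ell_p$ bound with H\"older's inequality.

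First I would fix arbitrary $\mathbf{x}_a,\mathbf{x}_b\in\mathcal{X}$ and construct an axis-aligned path $\mathbf{x}^{(0)},\mathbf{x}^{(1)},\dots,\mathbf{x}^{(N)}$ with $\mathbf{x}^{(0)}=\mathbf{x}_a$ and $\mathbf{x}^{(N)}=\mathbf{x}_b$, where $\mathbf{x}^{(\ell)}$ agrees with $\mathbf{x}_b$ on coordinates $1,\dots,\ell$ and with $\mathbf{x}_a$ on the remaining coordinates. Consecutive records $\mathbf{x}^{(\ell-1)}$ and $\mathbf{x}^{(\ell)}$ then differ only in their $\ell$th coordinate, so $d_1(\mathbf{x}^{(\ell-1)},\mathbf{x}^{(\ell)})=|x_{a,\ell}-x_{b,\ell}|$, and the per-coordinate Lipschitz hypothesis applies to each hop. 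Combining these with the triangle inequality gives
\[
|f(\mathbf{x}_a)-f(\mathbf{x}_b)|\;\le\;\sum_{\ell=1}^{N}|f(\mathbf{x}^{(\ell-1)})-f(\mathbf{x}^{(\ell)})|\;\le\;\sum_{\ell=1}^{N}\epsilon_\ell\,|x_{a,\ell}-x_{b,\ell}|.
\]
One small point to verify is that the intermediate records $\mathbf{x}^{(\ell)}$ lie in $\mathcal{X}$; this holds in our setting because the effective interpolation domain (each cell) is an axis-aligned box, so I would record it as a standing assumption on $\mathcal{X}$, and note that the bound does not depend on the order in which coordinates are changed.

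Next, for $p>1$ I would apply H\"older's inequality with conjugate exponents $q=\tfrac{p}{p-1}$ and $p$ to the nonnegative vectors $(\epsilon_1,\dots,\epsilon_N)$ and $(|x_{a,1}-x_{b,1}|,\dots,|x_{a,N}-x_{b,N}|)$:
\[
\sum_{\ell=1}^{N}\epsilon_\ell\,|x_{a,\ell}-x_{b,\ell}|\;\le\;\Big(\sum_{\ell=1}^{N}\epsilon_\ell^{\frac{p}{p-1}}\Big)^{\frac{p-1}{p}}\Big(\sum_{\ell=1}^{N}|x_{a,\ell}-x_{b,\ell}|^{p}\Big)^{\frac{1}{p}}.
\]
The second factor is exactly $d_p(\mathbf{x}_a,\mathbf{x}_b)$, and raising the budget condition~\eqref{eq:budgetcompo1} to the power $\tfrac{p-1}{p}$ bounds the first factor by $\epsilon$, which yields $|f(\mathbf{x}_a)-f(\mathbf{x}_b)|\le\epsilon\,d_p(\mathbf{x}_a,\mathbf{x}_b)$; since $\mathbf{x}_a,\mathbf{x}_b$ were arbitrary, $f$ is $(\epsilon,d_p)$-Lipschitz continuous. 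For $p=1$ this is the degenerate case $q=\infty$: $\sum_\ell\epsilon_\ell|x_{a,\ell}-x_{b,\ell}|\le(\max_\ell\epsilon_\ell)\sum_\ell|x_{a,\ell}-x_{b,\ell}|=(\max_\ell\epsilon_\ell)\,d_1(\mathbf{x}_a,\mathbf{x}_b)$, and~\eqref{eq:budgetcompo2} closes the argument. The proof is short, and the only place that needs genuine care is applying the one-dimensional hypothesis with the correct single-coordinate distance at each hop (and confirming domain closure under the axis-aligned detour); as a consistency check, letting $p\to\infty$ turns~\eqref{eq:budgetcompo1} into $\sum_\ell\epsilon_\ell\le\epsilon$, the expected composition rule for the $\ell_\infty$ metric, so the statement extends coherently over the whole range $p\in[1,\infty]$.
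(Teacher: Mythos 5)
Your proposal is correct and follows essentially the same route as the paper's proof: an axis-aligned path that changes one coordinate per hop, the per-coordinate $(\epsilon_\ell,d_1)$-Lipschitz hypothesis plus the triangle inequality to obtain $\sum_\ell \epsilon_\ell|x_{a,\ell}-x_{b,\ell}|$, and then H\"older's inequality with dual exponent $q=p/(p-1)$ together with the budget condition to conclude. Your explicit remark that the intermediate records must lie in $\mathcal{X}$ (guaranteed here by the axis-aligned cell structure) is a point the paper's proof glosses over, but it does not change the argument.
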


\begin{proof}[Proof Sketch]
Fix $\mathbf{x}_a,\mathbf{x}_b\in\mathcal{X}$ and let $\Delta=\mathbf{x}_b-\mathbf{x}_a$, where $\boldsymbol{\Delta} = [\Delta_1, \ldots, \Delta_N]$.
Construct an axis-aligned path that updates one coordinate at a time:
$\mathbf{x}^{(0)}=\mathbf{x}_a$ and 
$\mathbf{x}^{(\ell)}=\mathbf{x}^{(\ell-1)}+\Delta_{\ell}\mathbf{e}_{\ell}$, 
where $\mathbf{e}_{\ell}$ is the $\ell$th basis vector.
Given that each one-dimensional step satisfies,
for all $\mathbf{y}_k \subseteq\mathcal{Y}$,

\begin{equation}
\left|f(\mathbf{x}^{(\ell)}) - f(\mathbf{x}^{(\ell-1)})\right|
\leq  \epsilon_{\ell}|\Delta_{\ell}|.
\end{equation}
Summing these bounds along the path gives

\begin{equation}
\textstyle \left|f(\mathbf{x}_b) - f(\mathbf{x}_a)\right|
\leq \sum_{\ell=1}^N \epsilon_\ell|\Delta_\ell|.
\end{equation}
Let $p\in[1,\infty]$ and $q$ be its dual exponent ($1/p+1/q=1$). By Hölder’s inequality, $\sum_{\ell=1}^N \epsilon_\ell|\Delta_\ell|
\leq  \|\boldsymbol{\epsilon}\|_{q}\|\Delta\|_{p}$. 
Setting $\epsilon := \|\boldsymbol{\epsilon}\|_{q}$ and 
$d_p(\mathbf{x}_a,\mathbf{x}_b) := \|\mathbf{x}_b-\mathbf{x}_a\|_{p}$ yields $\sum_{\ell=1}^N \epsilon_\ell|\Delta_\ell| \leq \epsilon  d_p(\mathbf{x}_a,\mathbf{x}_b)$, and hence $\left|f(\mathbf{x}_b) - f(\mathbf{x}_a)\right| \leq \epsilon  d_p(\mathbf{x}_a,\mathbf{x}_b)$, which is the desired $(\epsilon,d_p)$-Lipschitz bound.
A complete proof is provided in \textbf{Appendix~\ref{subsec:proof:thm:composition}}.
\end{proof}

\noindent \textbf{Remark.} \textbf{Theorem~\ref{thm:composition}} ensures preservation of coordinate-wise Lipschitz bounds but does not guarantee that the interpolated values form a valid probability distribution
(i.e., summing to one over $\mathcal{Y}$). In applications where $\mathcal{M}$ represents a data perturbation mechanism (where the normalization constraint is required), we therefore apply a normalization step after interpolation to restore validity. In our construction, normalization can increase the effective Lipschitz bound (and thus the mDP budget) by up to a factor of $2$, which will be further discussed in \textbf{Section \ref{subsec:step3}}.  

\medskip
In \textbf{Corollary \ref{cor:mDP}}, we instantiate \textbf{Theorem~\ref{thm:composition}} for a
normalized data-perturbation mechanism $\mathcal M:\mathcal X\to \mathcal Y$. 
\begin{corollary}[\textbf{Dimension-wise Composition for mDP}]
\label{cor:mDP}
Let $\mathcal{M}$ be a mechanism that perturbs data in an $N$-dimensional space. Suppose that for each $\ell \in \{1,\dots,N\}$,  $\mathcal{M}$ satisfies $(\epsilon_\ell, d_1)$-mDP when the input records differ only in the $\ell$th coordinate. If the parameters $\epsilon_1,\dots,\epsilon_N$ satisfy the budget composition condition defined in Eqs. (\ref{eq:budgetcompo1})(\ref{eq:budgetcompo2}), then $\mathcal{M}$ is $(\epsilon, d_p)$-mDP.
\end{corollary}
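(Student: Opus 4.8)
The plan is to reduce the corollary directly to Theorem~\ref{thm:composition}, applied separately to each output of $\mathcal{M}$. Recall from Definition~\ref{def:metricDP} that $\mathcal{M}$ satisfies $(\epsilon, d_p)$-mDP precisely when two conditions hold: (i) for every $\mathbf{y}_k \in \mathcal{Y}$, the log-probability function $f_{\mathbf{y}_k}(\mathbf{x}) \coloneqq \ln \Pr[\mathcal{M}(\mathbf{x}) = \mathbf{y}_k]$ is $(\epsilon, d_p)$-Lipschitz continuous on $\mathcal{X}$; and (ii) the normalization constraint $\sum_{\mathbf{y}_k \in \mathcal{Y}} \Pr[\mathcal{M}(\mathbf{x}) = \mathbf{y}_k] = 1$ holds for every $\mathbf{x} \in \mathcal{X}$. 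I would establish these two conditions in turn.

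For condition (i), fix an arbitrary $\mathbf{y}_k \in \mathcal{Y}$. The hypothesis that $\mathcal{M}$ satisfies $(\epsilon_\ell, d_1)$-mDP whenever the inputs differ only in their $\ell$th coordinate says, by unpacking the Lipschitz part of Definition~\ref{def:metricDP}, that $f_{\mathbf{y}_k}$ satisfies the $(\epsilon_\ell, d_1)$-Lipschitz bound between every pair of records differing only in coordinate $\ell$, and this holds for each $\ell \in \{1,\dots,N\}$. This is exactly the per-axis hypothesis required by Theorem~\ref{thm:composition} with $f = f_{\mathbf{y}_k}$. Since the budgets $\epsilon_1,\dots,\epsilon_N$ are assumed to satisfy the composition condition~\eqref{eq:budgetcompo1}--\eqref{eq:budgetcompo2} (the $p>1$ branch or the $p=1$ branch, as appropriate), Theorem~\ref{thm:composition} immediately yields that $f_{\mathbf{y}_k}$ is $(\epsilon, d_p)$-Lipschitz continuous on all of $\mathcal{X}$. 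As $\mathbf{y}_k$ was arbitrary, condition (i) holds for every output.

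For condition (ii), the normalization constraint is itself part of the definition of the $(\epsilon_\ell, d_1)$-mDP property that $\mathcal{M}$ is assumed to satisfy along each axis; since every record $\mathbf{x} \in \mathcal{X}$ arises as an endpoint of some axis-restricted pair, we get $\sum_{\mathbf{y}_k \in \mathcal{Y}} \Pr[\mathcal{M}(\mathbf{x}) = \mathbf{y}_k] = 1$ for all $\mathbf{x} \in \mathcal{X}$. (Equivalently, $\mathcal{M}$ is given as a data-perturbation mechanism, so its output is by construction a probability distribution at every input.) Combining (i) and (ii) gives that $\mathcal{M}$ is $(\epsilon, d_p)$-mDP, which is the claim.

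The ``hard part'' here is really only bookkeeping rather than any new estimate: one must check that the per-coordinate hypothesis stated in the corollary (inputs differing only in the $\ell$th coordinate) is literally the per-axis hypothesis consumed by Theorem~\ref{thm:composition}, and that the two branches of the budget-composition condition are routed to the matching $p=1$ and $p>1$ cases of that theorem. The genuine analytic work — the axis-aligned path decomposition and the Hölder step that converts $\sum_\ell \epsilon_\ell |\Delta_\ell|$ into $\epsilon\, d_p(\mathbf{x}_a,\mathbf{x}_b)$ — is inherited wholesale from Theorem~\ref{thm:composition} and need not be redone; I would only remark, as the paper already notes, that this statement concerns the idealized (pre-normalization) guarantee, with the additional factor-of-$2$ slack from renormalization deferred to the discussion in Section~\ref{subsec:step3}.
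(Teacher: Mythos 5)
Your proposal is correct and matches the paper's (implicit) argument: the paper presents Corollary~\ref{cor:mDP} as a direct instantiation of Theorem~\ref{thm:composition} applied to each log-probability function $f_{\mathbf{y}_k}(\mathbf{x}) = \ln \Pr[\mathcal{M}(\mathbf{x}) = \mathbf{y}_k]$, with normalization inherited from the mDP hypothesis, exactly as you do. The only superfluous element is your closing remark about the factor-of-$2$ renormalization slack, which pertains to the interpolated mechanism of Section~\ref{subsec:step3} rather than to this corollary, whose hypothesis already supplies a properly normalized $\mathcal{M}$.
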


\paragraph{Discussion: Multi-attribute LDP as a special case of $\ell_p$-norm mDP.}
Consider $\mathcal X \subseteq \mathcal A^N$ with Hamming distance 
$d_H(\mathbf{x}_a,\mathbf{x}_b) = \sum_{\ell=1}^N \mathbf 1[x_{a,\ell} \neq x_{b,\ell}]$.
A mechanism $\mathcal M$ satisfies \emph{multi-attribute $\epsilon$-LDP} \cite{Arcolezi-VLDB2023} if, for all 
$\mathbf{x}_a,\mathbf{x}_b\in\mathcal X$ and $\mathbf y\in\mathcal Y$,
\begin{equation}
\Pr[\mathcal M(\mathbf{x}_a)=\mathbf y] \le 
e^{\epsilon  d_H(\mathbf{x}_a,\mathbf{x}_b)} 
\Pr[\mathcal M(\mathbf{x}_b)=\mathbf y],
\end{equation}
which is exactly $(\epsilon,d_H)$-mDP. 

Moreover, multi-attribute LDP can be seen as a limiting case of $\ell_p$-mDP when $p\to\infty$ on binary domains. 
Assume $\mathcal X \subseteq \{0,1\}^N$ (or more generally, that each coordinate difference is at most $1$ after rescaling). Then
\begin{equation}
d_p(\mathbf x_a,\mathbf x_b) = \|\mathbf x_a - \mathbf x_b\|_\infty 
= \max_\ell |x_{a,\ell} - x_{b,\ell}|
=\begin{cases}
1, & \mathbf x_a \neq \mathbf x_b,\\
0, & \mathbf x_a = \mathbf x_b,
\end{cases}
\end{equation}
and per-coordinate distances reduce to
\begin{equation}
d_1(x_{a,\ell},x_{b,\ell}) = |x_{a,\ell}-x_{b,\ell}| 
=\begin{cases}
1, & x_{a,\ell}\neq x_{b,\ell},\\
0, & x_{a,\ell}=x_{b,\ell}.
\end{cases}
\end{equation}
Under this metric, the pointwise mDP guarantee
\begin{equation}
\Pr[\mathcal M(\mathbf x_a)=\mathbf y]
\le  e^{\epsilon d_p(\mathbf x_a,\mathbf x_b)} \Pr[\mathcal M(\mathbf x_b)=\mathbf y]
\end{equation}
simplifies to the standard LDP bound
\begin{equation}
\Pr[\mathcal M(\mathbf x_a)=\mathbf y]
\le e^{\epsilon}\Pr[\mathcal M(\mathbf x_b)=\mathbf y]
\end{equation}
whenever $\mathbf x_a \neq \mathbf x_b$ (and equals $1$ when $\mathbf x_a=\mathbf x_b$). 

Finally, as $p \to \infty$, the exponent $p/(p-1)$ in the budget composition rule in Eqs. (\ref{eq:budgetcompo1})(\ref{eq:budgetcompo2}) tends to $1$, and hence  
\begin{equation}
\sum_{\ell=1}^N \epsilon_\ell^{\frac{p}{p-1}} \leq  
\epsilon^{\frac{p}{p-1}}
\quad \Longrightarrow \quad 
\sum_{\ell=1}^N \epsilon_\ell \leq  \epsilon,
\end{equation}
which is exactly the familiar sequential composition condition. 
Thus, $\ell_p$-norm mDP recovers multi-attribute LDP in the $p\to\infty$ limit on binary domains (equivalently, under the Hamming metric).

\section{Multi-Dimensional Interpolation}
\label{sec:NDinterpolation}

Having established one-dimensional interpolation and the corresponding privacy composition property in Section~\ref{sec:1Dinterpolation}, we now turn to extending our framework to multi-dimensional domains. Directly applying log-convex interpolation in higher dimensions can violate $(\epsilon, d_p)$-mDP due to the geometric properties of $\ell_p$-norms. Based on \textbf{Theorem \ref{thm:composition}}, we adopt a coordinate-wise approach: we interpolate along each dimension and then combine the results using a carefully designed composition rule to ensure global privacy guarantees. For clarity, we present the anchor perturbation optimization (Step~\textcircled{2}) before introducing the interpolation function (Step~\textcircled{3}), as the optimization depends on the specific interpolation structure.

\subsection{Step \textcircled{1} - Domain Partitioning} 
\label{subsec:partitioning}

According to \textbf{Theorem~\ref{thm:composition}}, $\ell_p$-norm mDP can be enforced by bounding privacy leakage separately along each coordinate. To support this dimension-wise composition, we partition the secret domain $\mathcal{X}$ into $M$ non-overlapping $N$-orthotopes $\mathcal{C}_1, \ldots, \mathcal{C}_M$, axis-aligned hyperrectangles that generalize rectangles to $N$ dimensions. This coordinate-aligned structure ensures that neighboring anchors differ in only one dimension, enabling efficient log-convex interpolation while preserving $(\epsilon, d_p)$-mDP. 


More precisely, for each $N$-orthotope $\mathcal{C}_m$, we let $\hat{\mathbf{x}}_{i_m} = [\hat{x}_{i_m,1}, \ldots, \hat{x}_{i_m,N}]$ denote its base (minimum) corner, and let $\boldsymbol{\Delta} = [\Delta_1, \ldots, \Delta_N]$ represent the side lengths along each dimension. The full set of $2^N$ corner points of $\mathcal{C}_m$ is given by: 
$\hat{\mathcal{X}}_m = \left\{ \hat{\mathbf{x}}_{i_m} + \boldsymbol{\gamma} \odot \boldsymbol{\Delta} \;\middle|\; \boldsymbol{\gamma} \in \{0,1\}^N \right\}$, where $\boldsymbol{\gamma}$ is a binary indicator vector and $\odot$ denotes element-wise multiplication. Each corner point thus has coordinate $\hat{x}_{i_m,\ell} + \gamma_\ell \Delta_\ell$ along dimension $\ell$. We refer to $\hat{\mathcal{X}}_m$ as the \emph{anchor set} of $\mathcal{C}_m$, and define the full anchor set as $\hat{\mathcal{X}} = \bigcup_m \hat{\mathcal{X}}_m$, which serves as the support for interpolation and optimization. \looseness = -1

\begin{definition}[Axis-Aligned Anchor Neighbors]
\label{def:anchorneighbor}
Let $\hat{\mathbf{x}}_i, \hat{\mathbf{x}}_j \in \hat{\mathcal{X}}_m$ be two anchor points within the same $N$-orthotope $\mathcal{C}_m$. We say that $\hat{\mathbf{x}}_i$ and $\hat{\mathbf{x}}_j$ are \textbf{$\ell$-axis neighbors} (or simply \textbf{$\ell$-neighbors}) if they differ only along the $\ell$th coordinate; that is, $\hat{x}_{i,\ell} \neq \hat{x}_{j,\ell}$ and $\hat{x}_{i,\ell'} = \hat{x}_{j,\ell'}$ for all $\ell' \neq \ell$.
\end{definition}

\subsection{Step \textcircled{3} - Perturbation Interpolation}
\label{subsec:step3}


With the domain partitioned into $N$-orthotopes (Step~\textcircled{1}), we interpolate perturbation distributions for non-anchor records using their corner anchors. For a record inside a cell, we first construct an \emph{unnormalized} multi-dimensional log-convex interpolant $f_{\mathrm{int}}$ by applying log-convex interpolation \emph{separately} along each coordinate and composing the results multiplicatively (\textbf{Definition~\ref{def:logconvex_nd}}).
Because these interpolated values might not sum to one over $\mathcal{Y}$, we then obtain a valid probability distribution by \emph{normalizing} them, yielding the \emph{normalized} multi-dimensional log-convex interpolation $\overline{f}_{\mathrm{int}}$ (\textbf{Definition~\ref{def:PPI}}).
The \emph{Lipschitz continuity} of the interpolated (log-)values is established by \textbf{Theorem~\ref{thm:AIPOfeasible}}, and the $(\epsilon,d_p)$-mDP guarantee for the normalized mechanism follows from \textbf{Proposition~\ref{prop:AIPOfeasible}}.

\begin{definition}[\textbf{Unnormalized Multi-Dimensional Log-Convex Interpolation $f_{\mathrm{int}}$}]
\label{def:logconvex_nd}
Let $\hat{\mathcal{X}}_m$ denote the set of $2^N$ anchor points at the corners of an $N$-dimensional orthotope $\mathcal{C}_m$, and let $\mathbf{Z}_{\hat{\mathcal{X}}_m}$ represent their corresponding perturbation distributions. For any point $\mathbf{x}_a \in \mathcal{C}_m$ and output $\mathbf{y}_k \in \mathcal{Y}$, the interpolated value is defined as:

\begin{equation}
\label{eq:logconvex_nd_}
\ln f_{\mathrm{int}}(\mathbf{x}_a, \mathbf{y}_k, \mathbf{Z}_{\hat{\mathcal{X}}_m}) = 
\sum_{\boldsymbol{\gamma} \in \{0,1\}^N} w(\boldsymbol{\gamma}) \ln z\big(\mathbf{y}_k \mid \hat{\mathbf{x}}_{i_m} + \boldsymbol{\gamma} \odot \boldsymbol{\Delta}\big),
\end{equation}
or equivalently: 
\begin{equation}
\label{eq:logconvex_nd}
\textstyle  f_{\mathrm{int}}(\mathbf{x}_a, \mathbf{y}_k, \mathbf{Z}_{\hat{\mathcal{X}}_m}) = 
\displaystyle\prod_{\boldsymbol{\gamma} \in \{0,1\}^N} z\big(\mathbf{y}_k \mid \hat{\mathbf{x}}_{i_m} + \boldsymbol{\gamma} \odot \boldsymbol{\Delta}\big)^{w(\boldsymbol{\gamma})}.
\end{equation}
Here, the weight function $w(\boldsymbol{\gamma})$, defined as 
\begin{equation}
\textstyle 
w(\boldsymbol{\gamma}) = \prod_{\ell=1}^N \left[(1 - \gamma_\ell)\lambda_{\hat{\mathbf{x}}_{i_m}, \mathbf{x}_a}^\ell + \gamma_\ell \left(1 - \lambda_{\hat{\mathbf{x}}_{i_m}, \mathbf{x}_a}^\ell\right)\right], 
\end{equation}
represents how each anchor point’s distribution contributes to the interpolated distribution at the non-anchor point $\mathbf{x}_a$, with the weights reflecting the relative position of the point within its cell. 
\end{definition}

\begin{theorem}
[Correctness of Log-Convex Interpolation $f_{\mathrm{int}}$]
\label{thm:AIPOfeasible}
Given that $(\epsilon_\ell, d_1)$-Lipschitz bound holds between each pair of $\ell$-neighbors in $\hat{\mathcal{X}}$ and $\{\epsilon_\ell\}_{\ell=1}^N$ satisfy the privacy budget composition condition formalized in Eq. (\ref{eq:budgetcompo1})(\ref{eq:budgetcompo2}), the use of the interpolation function $f_{\mathrm{int}}$ (defined by Eq.~(\ref{eq:logconvex_nd_})) guarantees that any two interpolated values within the entire secret data domain $\mathcal{X}$ satisfy $(\epsilon, d_p)$-Lipschitz bound.
\end{theorem}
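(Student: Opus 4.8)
Throughout, fix an output $\mathbf{y}_k\in\mathcal{Y}$ and set $f(\mathbf{x})\coloneqq\ln f_{\mathrm{int}}(\mathbf{x},\mathbf{y}_k,\mathbf{Z}_{\hat{\mathcal{X}}_m})$ for $\mathbf{x}$ in the cell $\mathcal{C}_m$ containing it (well-definedness on shared faces will follow from continuity, established below). The plan is to feed $f$ into \textbf{Theorem~\ref{thm:composition}}: it then suffices to show that for every coordinate $\ell\in\{1,\dots,N\}$, $f$ satisfies the $(\epsilon_\ell,d_1)$-Lipschitz bound whenever its two arguments differ only in the $\ell$th coordinate — and, importantly, that this bound is \emph{global}, i.e.\ valid for two such points lying in possibly different cells, since the axis-aligned path in the proof of \textbf{Theorem~\ref{thm:composition}} passes through intermediate points that need not remain in a single cell. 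Given the budget composition hypothesis \eqref{eq:budgetcompo1}–\eqref{eq:budgetcompo2}, \textbf{Theorem~\ref{thm:composition}} would then deliver $(\epsilon,d_p)$-Lipschitz continuity of $f$ on $\mathcal{X}$, which is exactly the claim.

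The first technical step is a \emph{one-dimensional factorization} of $f_{\mathrm{int}}$. Since the weight $w(\boldsymbol{\gamma})=\prod_{\ell'=1}^N[(1-\gamma_{\ell'})\lambda^{\ell'}_{\hat{\mathbf{x}}_{i_m},\mathbf{x}_a}+\gamma_{\ell'}(1-\lambda^{\ell'}_{\hat{\mathbf{x}}_{i_m},\mathbf{x}_a})]$ is a product over coordinates, $f=\ln f_{\mathrm{int}}$ is the multilinear interpolation (in log-space) of the $2^N$ corner values of $\mathcal{C}_m$. Pulling out the $\ell$th factor and grouping the remaining $2^{N-1}$ corners by whether they lie on the lower ($\gamma_\ell=0$) or upper ($\gamma_\ell=1$) $\ell$-face of $\mathcal{C}_m$ gives, for any $\mathbf{x}_a\in\mathcal{C}_m$,
\begin{equation}
f(\mathbf{x}_a)=\lambda_{\hat{\mathbf{x}}_{i_m},\mathbf{x}_a}^{\ell}\,\ln g_{0}^{(m)}+\left(1-\lambda_{\hat{\mathbf{x}}_{i_m},\mathbf{x}_a}^{\ell}\right)\ln g_{1}^{(m)},
\end{equation}
where $g_{0}^{(m)},g_{1}^{(m)}$ are the multilinear interpolants over the lower- and upper-$\ell$-face corners, evaluated at the non-$\ell$ coordinates of $\mathbf{x}_a$, and both use the \emph{same} $2^{N-1}$ mixing weights (the product of the factors $\ell'\neq\ell$). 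Because $\lambda_{\hat{\mathbf{x}}_{i_m},\mathbf{x}_a}^{\ell}$ is affine in $x_{a,\ell}$, this exhibits $f$ restricted to an axis-$\ell$ segment of $\mathcal{C}_m$ as a one-dimensional log-convex interpolation between two ``virtual endpoints'' $g_{0}^{(m)},g_{1}^{(m)}$ — exactly the setting of \textbf{Definition~\ref{def:logconvex_1d}}.

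Next I would check the hypotheses needed to run the one-dimensional arguments along axis $\ell$ and glue across cells. (i) \emph{Endpoint bound}: pairing each lower-face corner with its $\ell$-neighbor on the upper face, the assumed $(\epsilon_\ell,d_1)$-Lipschitz bound gives $|\ln z(\mathbf{y}_k\mid\cdot^{\mathrm{low}})-\ln z(\mathbf{y}_k\mid\cdot^{\mathrm{up}})|\le\epsilon_\ell\Delta_\ell$; taking the common convex combination of these inequalities yields $|\ln g_{0}^{(m)}-\ln g_{1}^{(m)}|\le\epsilon_\ell\Delta_\ell$, so the virtual endpoints themselves satisfy the $(\epsilon_\ell,d_1)$-Lipschitz bound. (ii) \emph{Cross-cell continuity}: two cells adjacent along axis $\ell$ share a full face carrying a common set of $2^{N-1}$ anchors, and the multilinear interpolant over that face is computed identically from either side, so $f$ is continuous across the shared face. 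With (i)–(ii) in hand, \textbf{Proposition~\ref{prop:intral}} handles two points in one cell and \textbf{Proposition~\ref{prop:across}} (with the virtual face-interpolants in the role of the four anchors, the boundary ones coinciding by (ii)) handles two points in adjacent cells; chaining over intermediate cells gives the global coordinate-$\ell$ bound. Equivalently, one may argue directly: the restriction of $f$ to any line parallel to axis $\ell$ is continuous and piecewise affine with each piece of slope at most $\epsilon_\ell$ in absolute value, hence globally $\epsilon_\ell$-Lipschitz.

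Combining the global $(\epsilon_\ell,d_1)$-Lipschitz bounds for all $\ell$ with \eqref{eq:budgetcompo1}–\eqref{eq:budgetcompo2}, \textbf{Theorem~\ref{thm:composition}} applied to $f=\ln f_{\mathrm{int}}(\cdot,\mathbf{y}_k)$ yields $|\ln f_{\mathrm{int}}(\mathbf{x}_a,\mathbf{y}_k)-\ln f_{\mathrm{int}}(\mathbf{x}_b,\mathbf{y}_k)|\le\epsilon\,d_p(\mathbf{x}_a,\mathbf{x}_b)$ for all $\mathbf{x}_a,\mathbf{x}_b\in\mathcal{X}$, which is the asserted bound; since $\mathbf{y}_k$ was arbitrary, we are done. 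I expect the main obstacle to be step (ii) together with the across-interval part of step (iii): making rigorous that the per-axis interpolant is \emph{globally} (not merely cell-wise) Lipschitz requires the partition to be conforming — adjacent $N$-orthotopes meeting in full shared faces with shared anchors — so that continuity holds, and requires carefully matching the endpoint-only hypotheses of \textbf{Propositions~\ref{prop:intral}–\ref{prop:across}} to the virtual face-interpolant endpoints rather than literal anchor records. The factorization in step (ii)’s first half and the convex-combination bound in (i) are the routine parts.
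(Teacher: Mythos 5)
Your proposal is correct, and it reaches the paper's conclusion by the same two-stage strategy (establish a global per-coordinate $(\epsilon_\ell,d_1)$-Lipschitz bound, then compose via Theorem~\ref{thm:composition}), but the way you establish the per-coordinate bound is genuinely different from the paper's. The paper proves it by building, for each of the two points, a recursive binary tree of depth $N$ whose leaves are anchor records, propagating a stronger invariant (a dimension-wise weighted-$\ell_1$ bound, their ``DW-Lipschitz'' condition) from the anchors up to the roots via a chain lemma, a convex-combination lemma, and a lemma showing the $N$-dimensional interpolant restricted to an axis segment is a one-dimensional log-convex interpolation. You instead exploit the product structure of $w(\boldsymbol{\gamma})$ once per axis: pulling out the $\ell$th factor exhibits $\ln f_{\mathrm{int}}$ on any axis-$\ell$ line as an affine function of $x_{\cdot,\ell}$ between two virtual face-interpolants $g_0^{(m)},g_1^{(m)}$ sharing the same non-$\ell$ weights, so the slope is $|\ln g_0^{(m)}-\ln g_1^{(m)}|/\Delta_\ell\le\epsilon_\ell$ by a single convex combination of the anchor $\ell$-neighbor inequalities, and conformity of the grid gives continuity across shared faces and hence a globally piecewise-affine, $\epsilon_\ell$-Lipschitz restriction. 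This buys a substantially shorter argument that avoids the tree induction and the auxiliary DW-Lipschitz machinery; your single-level face projection suffices precisely because, for the coordinate-$\ell$ bound alone, the non-$\ell$ mixing weights cancel between the two faces. What the paper's heavier construction buys is the explicit intermediate DW-Lipschitz bound for arbitrary (not only axis-aligned) pairs of interpolated points before Hölder is applied, though that generality is not needed for the stated theorem. Your identification of the two genuine pressure points — continuity across cell interfaces (which requires the conforming, shared-anchor partition the paper uses) and the fact that Theorem~\ref{thm:composition}'s path visits points outside the original cells — is exactly right, and both are handled correctly in your argument.
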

\begin{proof}[Proof Sketch]
We begin by proving that if two records $\mathbf{x}_a, \mathbf{x}_b \in \mathcal{X}$ differ only in a single coordinate $\ell$, then their interpolated perturbation probabilities under $f_{\mathrm{int}}$ satisfy $(\epsilon_\ell, d_1)$-mDP along that dimension. Then by applying the dimension-wise composition theorem (\textbf{Theorem~\ref{thm:composition}}), we establish that the composed mechanism satisfies $(\epsilon, d_p)$-mDP globally. The full proof is provided in \textbf{Appendix~\ref{prop:thm:AIPOfeasible}}.
\end{proof}

While the unnormalized log-convex interpolant preserves coordinate-wise Lipschitz continuity, its outputs do not necessarily lie on the probability simplex. To obtain a valid perturbation mechanism, we normalize these values (\textbf{Definition~\ref{def:PPI}}), which restores a proper probability distribution at the cost of increasing the effective Lipschitz bound by at most a factor of~2 (\textbf{Proposition \ref{prop:AIPOfeasible}}). 

\begin{definition}[\textbf{Normalized Multi-Dimensional Log-Convex Interpolation} $\overline{f}_{\mathrm{int}}$]
\label{def:PPI}
Given a point $\mathbf{x}_a$ within cell $\mathcal{C}_m$ and a perturbation candidate $\mathbf{y}_k \in \mathcal{Y}$, the normalized interpolated probability is defined as:

\begin{equation}
\label{eq:PPI}
\overline{f}_{\mathrm{int}}(\mathbf{x}_a, \mathbf{y}_k, \mathbf{Z}_{\hat{\mathcal{X}}_m}) = 
\frac{f_{\mathrm{int}}(\mathbf{x}_a, \mathbf{y}_k, \mathbf{Z}_{\hat{\mathcal{X}}_m})}
{\sum_{\mathbf{y}_j \in \mathcal{Y}} f_{\mathrm{int}}(\mathbf{x}_a, \mathbf{y}_j, \mathbf{Z}_{\hat{\mathcal{X}}_m})},
\end{equation}
where the denominator normalizes the interpolated scores over all possible outputs, ensuring that the resulting distribution is valid: $\sum_{\mathbf{y}_k \in \mathcal{Y}} \overline{f}_{\mathrm{int}}(\mathbf{x}_a, \mathbf{y}_k, \mathbf{Z}_{\hat{\mathcal{X}}_m}) = 1$.
\end{definition}

\begin{proposition}
\label{prop:AIPOfeasible}
Given that any pair of real records $\mathbf{x}_a \in \mathcal{X}_m$ and $\mathbf{x}_b \in \mathcal{X}_{m'}$ with their perturbation probabilities interpolated by 
\begin{eqnarray}
&& z(\mathbf{y}_k \mid \mathbf{x}_a) = \overline{f}_{\mathrm{int}}(\mathbf{x}_a, \mathbf{y}_k, \mathbf{Z}_{\hat{\mathcal{X}}_m})\\
&& z(\mathbf{y}_k \mid \mathbf{x}_b) = \overline{f}_{\mathrm{int}}(\mathbf{x}_b, \mathbf{y}_k, \mathbf{Z}_{\hat{\mathcal{X}}_{m'}})
\end{eqnarray} 
then their perturbation probabilities satisfy $(2\epsilon, d_p)$-mDP.  
\end{proposition}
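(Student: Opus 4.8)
The plan is to derive the $(2\epsilon,d_p)$-mDP guarantee directly from the Lipschitz continuity of the \emph{unnormalized} interpolant established in \textbf{Theorem~\ref{thm:AIPOfeasible}}, treating the normalization denominator as a source of at most one extra factor of $\epsilon\,d_p$. First I would invoke \textbf{Theorem~\ref{thm:AIPOfeasible}} to obtain, for every output $\mathbf{y}_k\in\mathcal{Y}$ and every pair $\mathbf{x}_a\in\mathcal{X}_m$, $\mathbf{x}_b\in\mathcal{X}_{m'}$ (the statement is global over $\mathcal{X}$, so the cross-cell case is covered),
\[
\bigl|\ln f_{\mathrm{int}}(\mathbf{x}_a,\mathbf{y}_k,\mathbf{Z}_{\hat{\mathcal{X}}_m})-\ln f_{\mathrm{int}}(\mathbf{x}_b,\mathbf{y}_k,\mathbf{Z}_{\hat{\mathcal{X}}_{m'}})\bigr|\le\epsilon\,d_p(\mathbf{x}_a,\mathbf{x}_b),
\]
equivalently $e^{-\epsilon d_p(\mathbf{x}_a,\mathbf{x}_b)}f_{\mathrm{int}}(\mathbf{x}_b,\mathbf{y}_k,\cdot)\le f_{\mathrm{int}}(\mathbf{x}_a,\mathbf{y}_k,\cdot)\le e^{\epsilon d_p(\mathbf{x}_a,\mathbf{x}_b)}f_{\mathrm{int}}(\mathbf{x}_b,\mathbf{y}_k,\cdot)$. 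Here I would note that feasibility of the anchor optimization with finite $\epsilon_\ell$ forces every anchor probability $z(\mathbf{y}_k\mid\hat{\mathbf{x}})$ to be strictly positive, so $f_{\mathrm{int}}$ is a positive weighted geometric mean and all logarithms below are well defined.

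Next I would introduce the normalizers $S(\mathbf{x})\coloneqq\sum_{\mathbf{y}_j\in\mathcal{Y}}f_{\mathrm{int}}(\mathbf{x},\mathbf{y}_j,\cdot)$ and observe that summing the pointwise ratio bound over $\mathbf{y}_j$ yields $e^{-\epsilon d_p(\mathbf{x}_a,\mathbf{x}_b)}S(\mathbf{x}_b)\le S(\mathbf{x}_a)\le e^{\epsilon d_p(\mathbf{x}_a,\mathbf{x}_b)}S(\mathbf{x}_b)$, hence $|\ln S(\mathbf{x}_a)-\ln S(\mathbf{x}_b)|\le\epsilon\,d_p(\mathbf{x}_a,\mathbf{x}_b)$. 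Then, using the decomposition
\[
\ln\overline{f}_{\mathrm{int}}(\mathbf{x}_a,\mathbf{y}_k,\cdot)-\ln\overline{f}_{\mathrm{int}}(\mathbf{x}_b,\mathbf{y}_k,\cdot)=\bigl[\ln f_{\mathrm{int}}(\mathbf{x}_a,\mathbf{y}_k,\cdot)-\ln f_{\mathrm{int}}(\mathbf{x}_b,\mathbf{y}_k,\cdot)\bigr]-\bigl[\ln S(\mathbf{x}_a)-\ln S(\mathbf{x}_b)\bigr],
\]
the triangle inequality gives $|\ln\overline{f}_{\mathrm{int}}(\mathbf{x}_a,\mathbf{y}_k,\cdot)-\ln\overline{f}_{\mathrm{int}}(\mathbf{x}_b,\mathbf{y}_k,\cdot)|\le 2\epsilon\,d_p(\mathbf{x}_a,\mathbf{x}_b)$, which is precisely the $(2\epsilon,d_p)$-Lipschitz bound of \textbf{Definition~\ref{def:metricDP}}. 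The normalization constraint $\sum_{\mathbf{y}_k\in\mathcal{Y}}\overline{f}_{\mathrm{int}}(\mathbf{x}_a,\mathbf{y}_k,\cdot)=1$ holds by construction (\textbf{Definition~\ref{def:PPI}}), so both conditions of mDP are satisfied, establishing $(2\epsilon,d_p)$-mDP.

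There is no deep obstacle here; the only care needed is (i) confirming that the bound of \textbf{Theorem~\ref{thm:AIPOfeasible}} genuinely applies across distinct cells $\mathcal{C}_m,\mathcal{C}_{m'}$, which relies on the face-shared anchors making $f_{\mathrm{int}}$ single-valued on cell boundaries so that the axis-path argument underlying that theorem is legitimate between points of different cells, and (ii) the strict positivity of anchor probabilities noted above, so that taking logarithms is valid. I would also remark that the factor of $2$ is essentially unavoidable for this construction: as the input varies, the numerator $f_{\mathrm{int}}(\cdot,\mathbf{y}_k)$ and the denominator $S(\cdot)$ can move in opposite directions, each contributing a full $\epsilon\,d_p$, and anchor configurations approaching this worst case can be exhibited.
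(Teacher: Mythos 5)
Your proposal is correct and follows essentially the same route as the paper's proof in Appendix~\ref{subsec:proof:prop:AIPOfeasible}: invoke the $(\epsilon,d_p)$-Lipschitz bound on the unnormalized interpolant from Theorem~\ref{thm:AIPOfeasible}, sum over outputs to show the partition function obeys the same multiplicative bound, and combine the per-output ratio with the inverse normalizer ratio to obtain the factor of $2$. Your additional remarks on strict positivity of anchor probabilities and on the near-tightness of the factor $2$ are sensible but not needed for the core argument.
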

\begin{proof}[Proof Sketch]
The unnormalized interpolant $f_{\text{int}}$ already satisfies
pointwise $(\epsilon,d_p)$-mDP, i.e., $
\frac{f_{\mathrm{int}}(\mathbf{x}_a, \mathbf{y}_k, \mathbf{Z}_{\hat{\mathcal{X}}_m})}{f_{\mathrm{int}}(\mathbf{x}_b, \mathbf{y}_k, \mathbf{Z}_{\hat{\mathcal{X}}_m})}
\;\le\; e^{\epsilon d_p(\mathbf{x}_a,\mathbf{x}_b)}$. 
Summing over $\mathbf{y}_j$ shows that the partition function 
$Z(\mathbf{x})=\sum_{\mathbf{y}_j\in \mathcal{Y}} f_{\mathrm{int}}(\mathbf{x}_a, \mathbf{y}_j, \mathbf{Z}_{\hat{\mathcal{X}}_m})$ obeys the same multiplicative bounds. The normalized mechanism is $z(\mathbf{y}_k|\mathbf{x}_a) = \frac{f_{\text{int}}(\mathbf{x}_a,\mathbf{y}_k)}{Z(\mathbf{x}_a)}$. Thus, the ratio of normalized probabilities is the product of the
per-class ratio and the inverse normalizer ratio, giving $\frac{z(\mathbf{y}_k|\mathbf{x}_a)}{z(\mathbf{y}_k|\mathbf{x}_b)}
\;\le\; e^{2\epsilon d_p(\mathbf{x}_a,\mathbf{x}_b)}$. 
Hence, the normalized mechanism is $(2\epsilon,d_p)$-mDP.
The detailed proof can be found in \textbf{Appendix \ref{subsec:proof:prop:AIPOfeasible}.} 
\end{proof}




\subsection{Step \textcircled{2} - Anchor Perturbation Optimization (APO)}
\label{subsec:anchoropt}

The goal of APO is to jointly optimize (1) the per-dimension privacy budgets $\{\epsilon_\ell\}_{\ell=1}^N$ under a global privacy budget composition constraint (formalized in Eq. (\ref{eq:budgetcompo1})(\ref{eq:budgetcompo2}) in \textbf{Theorem \ref{thm:composition}}), and (2) the perturbation distributions of each pair of $\ell$-neighbor in the anchor set $\hat{\mathcal{X}}$ satisfy $(\epsilon_\ell, d_1)$-mDP constraints along each dimension $\ell$, 
so as to minimize the expected utility loss over the secret domain. Here, we 
define the expected utility loss of the secret data within each $N$-orthotope $\mathcal{C}_m$ by
\normalsize
\begin{eqnarray}
\mathcal{L}(\mathbf{Z}_{\hat{\mathcal{X}}_m}) 
\label{eq:loss}
= \sum_{\mathbf{y}_k \in \mathcal{Y}} \int_{\mathcal{C}_m}\overline{f}_{\mathrm{int}}(\mathbf{x}, \mathbf{y}_k, \mathbf{Z}_{\hat{\mathcal{X}}_m}) p(\mathbf{x}) \mathcal{L}(\mathbf{x}, \mathbf{y}_k) \mathrm{d}\mathbf{x}. 
\end{eqnarray}
\normalsize
\begin{definition}[\textbf{Anchor Perturbation Optimization (APO)}]
\label{def:APO}
The APO problem jointly optimizes the anchor perturbation distributions and the per-dimension privacy budgets to minimize the total expected utility loss $\sum_{m=1}^M \mathcal{L}(\mathbf{Z}_{\hat{\mathcal{X}}_m})$, subject to privacy and probability constraints. Formally:
\begin{eqnarray}
\label{eq:APO:obj}
\min && 
\sum_{m=1}^M \mathcal{L}({\bl \mathbf{Z}_{\hat{\mathcal{X}}_m}}) \\ 
\label{eq:APO:privacybudget}
\text{s.t.} && \textstyle \sum_{\ell=1}^N {\rd \epsilon_\ell}^{\frac{p}{p-1}} \leq \left(\frac{\epsilon}{2}\right)^{\frac{p}{p-1}}, \quad \text{when } p > 1,  \\
&& 
\max_{\ell} {\rd\epsilon_\ell} \leq \frac{\epsilon}{2}, \quad \text{when } p = 1 \\ \nonumber 
&& {\bl z(\mathbf{y}_k \mid \hat{\mathbf{x}}_i)} - e^{{\rd \epsilon_\ell} \Delta_\ell} {\bl z(\mathbf{y}_k \mid \hat{\mathbf{x}}_j)} \leq 0, \\
\label{eq:APO:mDP}
&& \text{for each pair $\ell$-neighbor $\hat{\mathbf{x}}_i, \hat{\mathbf{x}}_j\in \hat{\mathcal{X}}$}, \forall \ell \\
\label{eq:APO:normalization}
&& \textstyle  \sum_{\mathbf{y}_k \in \mathcal{Y}} {\bl z(\mathbf{y}_k \mid \hat{\mathbf{x}}_i)} = 1,~ \forall \hat{\mathbf{x}}_i \in \hat{\mathcal{X}}, \\ \label{eq:APO:nonnagative}
&&  {\bl z(\mathbf{y}_k \mid \hat{\mathbf{x}}_i)} \geq 0, \forall \hat{\mathbf{x}}_i \in \hat{\mathcal{X}}, ~\forall \mathbf{y}_k \in \mathcal{Y}.  
\end{eqnarray}
Here, Eq.~\eqref{eq:APO:privacybudget} enforces the \emph{privacy budget composition constraint}, Eq.~\eqref{eq:APO:mDP} imposes $(\epsilon_\ell, d_1)$-mDP constraints between each pair of $\ell$-neighbor anchors, and Eqs.~\eqref{eq:APO:normalization}–(\ref{eq:APO:nonnagative}) define the \emph{normalization and non-negativity constraints}.
\end{definition}

\begin{table*}[ht]
\centering
\small 
\caption{Statistics of Road Network Datasets and Grid Partitioning}
\label{tab:dataset-stats}
\begin{tabular}{lccccc}
\toprule
\textbf{City} & \textbf{Bounding Box (SW -- NE)} & \textbf{\#Nodes} & \textbf{\#Edges} & \textbf{Grid Size} & \textbf{Cell Size} \\
\hline
Rome, Italy   & (41.66, 12.24) -- (42.10, 12.81) & 43,160 & 89,739  & $15 \times 20$ & $3.18km \times 3.18km$ 
\\
NYC, USA      & (40.50, 73.70) -- (40.91, 74.25) & 55,353 & 139,638 & $10 \times 20$ & $4.58km \times 4.58km$ \\
London, UK    & (51.29, -0.51) -- (51.69, 0.28)  & 12,820 & 299,524 & $10 \times 20$ & $4.52km \times 4.52km$ \\
\bottomrule
\end{tabular}
\end{table*}

The APO formulation introduces a nontrivial coupling between two sets of decision variables: the per-dimension privacy budgets ${\rd \{\epsilon_\ell\}_{\ell=1}^N}$ and the perturbation matrix ${\bl \mathbf{Z}_{\hat{\mathcal{X}}} = \{ z(\mathbf{y}_k \mid \hat{\mathbf{x}}_i) \}_{(\hat{\mathbf{x}}_i, \mathbf{y}_k) \in \hat{\mathcal{X}} \times \mathcal{Y}}}$. Since the feasible space of $\mathbf{Z}_{\hat{\mathcal{X}}}$ is constrained by the choice of $\{\epsilon_\ell\}$, this coupling complicates joint optimization. In our experiments (\textbf{Section~\ref{sec:experiments}}), we focus on a 2D $\ell_2$-norm setting, where the set of feasible privacy budget allocations $\{(\epsilon_1, \epsilon_2)\}$ must satisfy the constraint $\epsilon_1^2 + \epsilon_2^2 \leq \epsilon^2/4$. This defines a quarter-circle region in the first quadrant. To search for the optimal allocation, we discretize this curve by sampling values of $\epsilon_1$, compute the corresponding $\epsilon_2 = \sqrt{\epsilon^2/4 - \epsilon_1^2}$, and evaluate the resulting utility loss.  This grid search remains tractable in low dimensions. For higher-dimensional settings, we discuss potential scalable extensions based on decomposition techniques in \textbf{Appendix~\ref{sec:discussion:budgetalloc}}. \looseness = -1

Another practical obstacle in solving the APO problem is the form of the expected‐loss objective
$\mathcal{L}\bigl(\mathbf{Z}_{\mathcal{X}}\bigr)
       = \sum_{m}\mathcal{L}\bigl(\mathbf{Z}_{\hat{\mathcal{X}}_{m}}\bigr)$, where for any fixed cell $\mathcal{C}_{m}$ the integrand
$\overline{f}_{\mathrm{int}}(\mathbf{x}_a, \mathbf{y}_k, \mathbf{Z}_{\hat{\mathcal{X}}_m})$
is a \emph{ratio of products of exponentials}; hence the resulting integral in
Eq.~\eqref{eq:loss} is analytically intractable and costly to evaluate numerically. Therefore, we replace this exact probability by a weighted geometric mean of the anchor probabilities and thereby convert the cell‑level loss into a linear form.

\begin{proposition}[\textbf{Linear Surrogate for Utility Loss}]
\label{prop:linapprox}
Let $\mathbf{x} = \hat{\mathbf{x}}_{i_m} + \boldsymbol{\lambda}  \odot \boldsymbol{\Delta} \in \mathcal{C}_m$ be a non-anchor point, where $\boldsymbol{\lambda} = [\lambda_{\hat{\mathbf{x}}_{i_m}, \mathbf{x}}^{1}, \ldots, \lambda_{\hat{\mathbf{x}}_{i_m}, \mathbf{x}}^{N}] \in [0,1]^N$ are the convex interpolation weights. Approximating the perturbation probability in the objective function via weighted geometric interpolation yields: \looseness = -1
\begin{eqnarray}
\small && \Pr\big[\mathcal{M}(\mathbf{x}; \mathbf{Z}_{\hat{\mathcal{X}}_m}) = \mathbf{y}_k\big]
\\ 
\nonumber &\approx& 
\sum_{\boldsymbol{\gamma} \in \{0,1\}^N}
\left( \prod_{\ell=1}^N \big( (1 - \gamma_\ell)\lambda_{\hat{\mathbf{x}}_{i_m}, \mathbf{x}}^\ell + \gamma_\ell(1 - \lambda_{\hat{\mathbf{x}}_{i_m}, \mathbf{x}}^\ell) \big) \right) \\
&\times& z\big(\mathbf{y}_k \mid \hat{\mathbf{x}}_{i_m} + \boldsymbol{\gamma} \odot \boldsymbol{\Delta}\big).
\end{eqnarray}
Under this approximation, the expected utility loss over cell $\mathcal{C}_m$ admits a linear surrogate: $
\tilde{\mathcal{L}}(\mathbf{Z}_{\hat{\mathcal{X}}_m}) = \left\langle \tilde{\mathbf{C}}_{\hat{\mathcal{X}}_m}, \mathbf{Z}_{\hat{\mathcal{X}}_m} \right\rangle$, where $\tilde{\mathbf{C}}_{\hat{\mathcal{X}}_m} = \left\{ \tilde{c}(\hat{\mathbf{x}}_i, \mathbf{y}_k) \right\}_{(\hat{\mathbf{x}}_i, \mathbf{y}_k) \in \hat{\mathcal{X}}_m \times \mathcal{Y}}$ is a constant coefficient matrix that depends only on the prior distribution $p(\mathbf{x})$ and the pointwise utility loss $\mathcal{L}(\mathbf{x}, \mathbf{y}_k)$. Here, $\langle A, B \rangle$ denotes the standard Frobenius inner product between two matrices of the same shape: $
\langle A, B \rangle = \sum_{i,k} A_{i,k} \cdot B_{i,k}$. A closed-form derivation of $\tilde{\mathbf{C}}_{\hat{\mathcal{X}}_m}$ is provided in \textbf{Appendix~\ref{subsec:proof:prop:linapprox}}.
\end{proposition}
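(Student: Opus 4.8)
The plan is to substitute the stated multilinear surrogate for $\overline{f}_{\mathrm{int}}$ into the cell-level loss of Eq.~\eqref{eq:loss}, exchange the finite sums with the integral over $\mathcal{C}_m$, and read off the coefficient matrix. The first point to settle is that the surrogate is a bona fide conditional distribution. With $w(\boldsymbol{\gamma};\mathbf{x})=\prod_{\ell=1}^N\big[(1-\gamma_\ell)\lambda^\ell_{\hat{\mathbf{x}}_{i_m},\mathbf{x}}+\gamma_\ell(1-\lambda^\ell_{\hat{\mathbf{x}}_{i_m},\mathbf{x}})\big]$ as in \textbf{Definition~\ref{def:logconvex_nd}}, each axis factor sums to $\lambda^\ell+(1-\lambda^\ell)=1$ over $\gamma_\ell\in\{0,1\}$, so $\sum_{\boldsymbol{\gamma}\in\{0,1\}^N}w(\boldsymbol{\gamma};\mathbf{x})=1$. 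Hence the surrogate $\tilde{z}(\mathbf{y}_k\mid\mathbf{x}):=\sum_{\boldsymbol{\gamma}}w(\boldsymbol{\gamma};\mathbf{x})\,z(\mathbf{y}_k\mid\hat{\mathbf{x}}_{i_m}+\boldsymbol{\gamma}\odot\boldsymbol{\Delta})$ automatically satisfies $\sum_{\mathbf{y}_k\in\mathcal{Y}}\tilde{z}(\mathbf{y}_k\mid\mathbf{x})=1$ and requires no renormalization, unlike the geometric-mean interpolant $f_{\mathrm{int}}$ that is divided by its partition function in \textbf{Definition~\ref{def:PPI}}. The approximation $\overline{f}_{\mathrm{int}}\approx\tilde{z}$ is exactly the replacement of a weighted geometric mean of the anchor probabilities by the weighted arithmetic mean with the same weights (AM--GM); the size of this gap, and hence the optimality gap of Approx-APO, is the subject of the separate bound and is not needed for the present claim.

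Second, I substitute $\tilde{z}$ into $\mathcal{L}(\mathbf{Z}_{\hat{\mathcal{X}}_m})$ and use linearity of the integral — the sums over $\mathbf{y}_k\in\mathcal{Y}$ and $\boldsymbol{\gamma}\in\{0,1\}^N$ being finite — together with the fact that each corner probability $z(\mathbf{y}_k\mid\hat{\mathbf{x}}_{i_m}+\boldsymbol{\gamma}\odot\boldsymbol{\Delta})$ is a decision variable independent of the integration variable $\mathbf{x}$. This yields
\[
\tilde{\mathcal{L}}(\mathbf{Z}_{\hat{\mathcal{X}}_m})
=\sum_{\mathbf{y}_k\in\mathcal{Y}}\sum_{\boldsymbol{\gamma}\in\{0,1\}^N}
z\big(\mathbf{y}_k\mid\hat{\mathbf{x}}_{i_m}+\boldsymbol{\gamma}\odot\boldsymbol{\Delta}\big)\,
\tilde{c}\big(\hat{\mathbf{x}}_{i_m}+\boldsymbol{\gamma}\odot\boldsymbol{\Delta},\,\mathbf{y}_k\big),
\]
where
\[
\tilde{c}\big(\hat{\mathbf{x}}_{i_m}+\boldsymbol{\gamma}\odot\boldsymbol{\Delta},\,\mathbf{y}_k\big)
:=\int_{\mathcal{C}_m} w(\boldsymbol{\gamma};\mathbf{x})\,p(\mathbf{x})\,\mathcal{L}(\mathbf{x},\mathbf{y}_k)\,\mathrm{d}\mathbf{x}.
\]
Since $\boldsymbol{\gamma}\mapsto\hat{\mathbf{x}}_{i_m}+\boldsymbol{\gamma}\odot\boldsymbol{\Delta}$ is a bijection onto the anchor set $\hat{\mathcal{X}}_m$, this is precisely $\langle\tilde{\mathbf{C}}_{\hat{\mathcal{X}}_m},\mathbf{Z}_{\hat{\mathcal{X}}_m}\rangle$ with $\tilde{\mathbf{C}}_{\hat{\mathcal{X}}_m}=\{\tilde{c}(\hat{\mathbf{x}}_i,\mathbf{y}_k)\}$. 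The integrand involves only the fixed cell geometry (through $w(\boldsymbol{\gamma};\cdot)$), the prior $p$, and the pointwise loss $\mathcal{L}(\cdot,\mathbf{y}_k)$ — no decision variables — so $\tilde{\mathbf{C}}_{\hat{\mathcal{X}}_m}$ is constant and $\tilde{\mathcal{L}}$ is linear in $\mathbf{Z}_{\hat{\mathcal{X}}_m}$, as claimed.

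Third, for the closed-form expression of $\tilde{\mathbf{C}}_{\hat{\mathcal{X}}_m}$ promised in the appendix I would make $w(\boldsymbol{\gamma};\mathbf{x})$ explicit: each $\lambda^\ell_{\hat{\mathbf{x}}_{i_m},\mathbf{x}}$ is affine in $x_\ell$ (\textbf{Definition~\ref{def:logconvex_1d}}), so $w(\boldsymbol{\gamma};\mathbf{x})$ is a product of coordinate-wise affine factors, i.e.\ multilinear in $\mathbf{x}$. If $p$ is (piecewise) constant on $\mathcal{C}_m$ and $\mathcal{L}(\mathbf{x},\mathbf{y}_k)$ is polynomial in $\mathbf{x}$ (e.g.\ squared Euclidean distortion, or any polynomial loss), the integrand is a polynomial on the orthotope, and the integral factorizes into one-dimensional polynomial integrals over $[\hat{x}_{i_m,\ell},\hat{x}_{i_m,\ell}+\Delta_\ell]$, each evaluated in closed form; the general entry is assembled by multiplying these per-coordinate integrals. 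I expect the only real obstacle to be bookkeeping — choosing a representation of $w(\boldsymbol{\gamma};\mathbf{x})$, $p$, and $\mathcal{L}$ that is jointly coordinate-separable so the $N$-fold integral factors — whereas the interchange of sum and integral and the linearity conclusion are routine.
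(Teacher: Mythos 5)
Your proposal is correct and follows essentially the same route as the paper's proof: substitute the weighted-arithmetic-mean surrogate into the cell loss, exchange the finite sums with the integral over $\mathcal{C}_m$, and read off the constant coefficients $\tilde{c}(\hat{\mathbf{x}}_j,\mathbf{y}_k)=\int_{\mathcal{C}_m} w(\boldsymbol{\gamma};\mathbf{x})\,p(\mathbf{x})\,\mathcal{L}(\mathbf{x},\mathbf{y}_k)\,\mathrm{d}\mathbf{x}$. Your additional observation that the weights sum to one (so the surrogate needs no renormalization) and the remarks on coordinate-separable evaluation of the integrals are sound extras not present in, but fully consistent with, the paper's argument.
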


\begin{definition}[\textbf{Approximate Anchor Perturbation Optimization (Approx‑APO)}]
\label{def:Approx-APO}
Approx‑APO retains all linear $(\epsilon_\ell, d_1)$-mDP and probability constraints from the original APO formulation, but replaces the expected utility loss objective $\sum_{m=1}^M \mathcal{L}(\mathbf{Z}_{\hat{\mathcal{X}}_m})$ with its linear surrogate: 
\begin{eqnarray}
\sum_{m=1}^M \tilde{\mathcal{L}}(\mathbf{Z}_{\hat{\mathcal{X}}_m}) 
= \sum_{m=1}^M \left\langle \tilde{\mathbf{C}}_{\hat{\mathcal{X}}_m}, \mathbf{Z}_{\hat{\mathcal{X}}_m} \right\rangle,     
\end{eqnarray} 
as defined in \textbf{Proposition~\ref{prop:linapprox}}.
\end{definition}

Approx‑APO can be solved efficiently using standard linear programming solvers~\cite{MATLABlinprog}, making it scalable to larger domains. However, since the surrogate objective is an approximation of the original expected utility loss, the resulting solution may not be optimal for the original APO. To assess its quality, we derive a universal lower bound on the optimal utility loss of the full APO formulation (see \textbf{Appendix~\ref{sec:gapanalysis}}). This bound serves as a benchmark for evaluating how closely the Approx‑APO solution approaches the true optimum.

\DEL{
\begin{figure*}[t]
\centering
\hspace{0.00in}
\begin{minipage}{1.00\textwidth}
  \subfigure[Rome]{
\includegraphics[width=0.30\textwidth, height = 0.18\textheight]{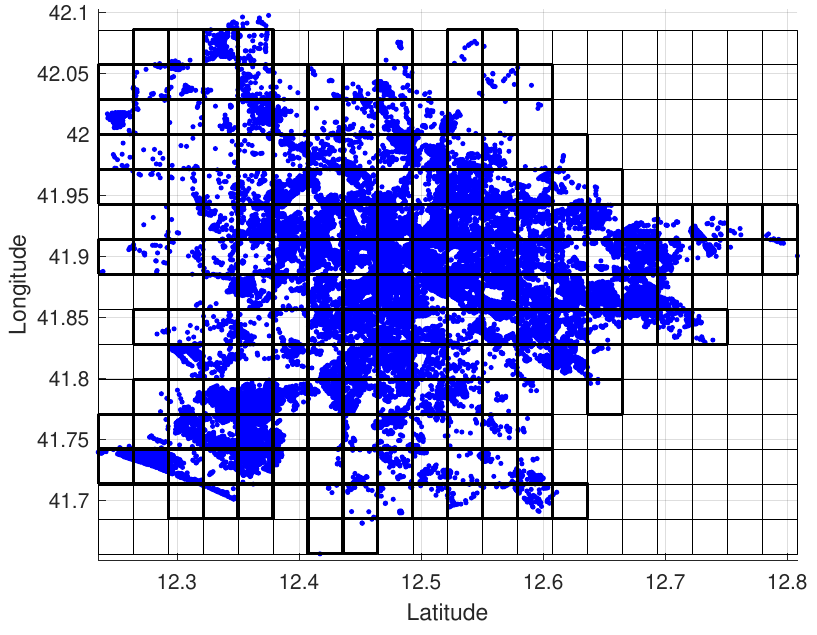}}
  \subfigure[London]{
\includegraphics[width=0.30\textwidth, height = 0.18\textheight]{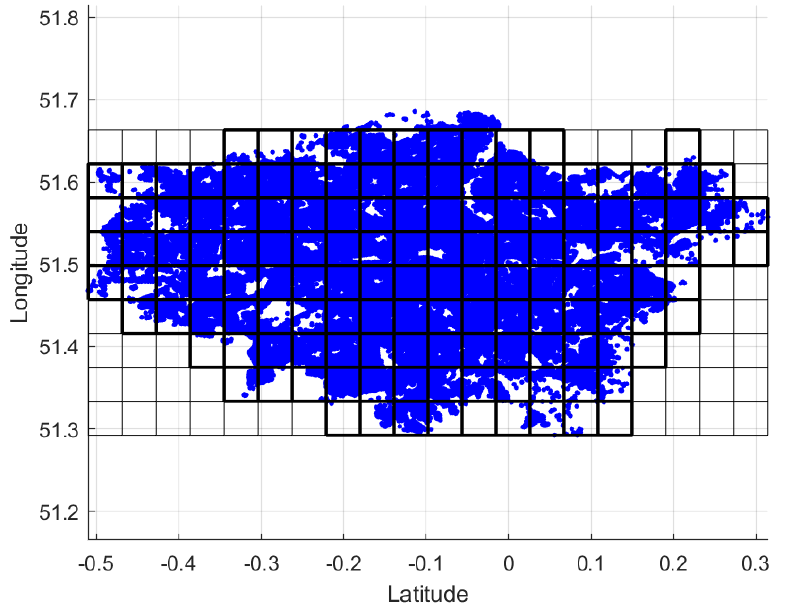}}
  \subfigure[New York City]{
\includegraphics[width=0.30\textwidth, height = 0.18\textheight]{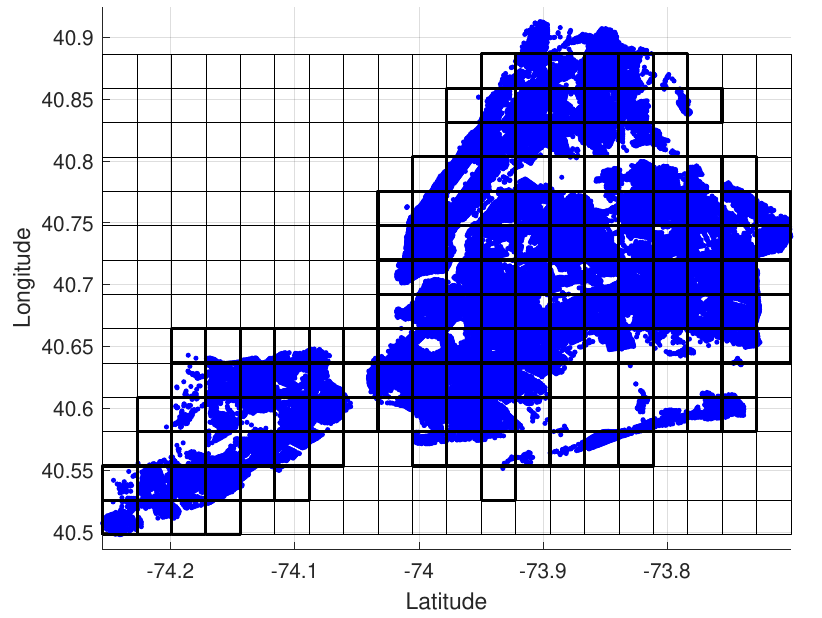}}
\end{minipage}
\caption{Spatial distribution of locations and grid maps for the three cities.}
\label{fig:exp:roadmap}
\end{figure*}}

\begin{table*}[t]
\label{Tb:exp:speedlimits}
\centering
\footnotesize 
\begin{tabular}{p{1.50cm} | p{1.25cm} p{1.25cm} p{1.25cm} p{1.25cm} p{1.25cm} p{1.25cm} p{1.25cm} p{1.25cm} p{1.25cm}} 
\toprule
\multicolumn{9}{c }{Rome road map}\\ 
  \hline
\multicolumn{2}{c }{Method} & $\epsilon = 0.2$& $\epsilon = 0.4$& $\epsilon = 0.6$& $\epsilon = 0.8$& $\epsilon = 1.0$& $\epsilon = 1.2$& $\epsilon = 1.4$& $\epsilon = 1.6$\\ 
 \hline
 \hline
Pre-defined& EM & 0.00$\pm$0.00 & 0.00$\pm$0.00 & 0.00$\pm$0.00 & 0.00$\pm$0.00 & 0.00$\pm$0.00 & 0.00$\pm$0.00 & 0.00$\pm$0.00 & 0.00$\pm$0.00 \\
\cline{2-2}
Noise & Laplace & 0.00$\pm$0.00 & 0.00$\pm$0.00 & 0.00$\pm$0.00 & 0.00$\pm$0.00 & 0.00$\pm$0.00 & 0.00$\pm$0.00 & 0.00$\pm$0.00 & 0.00$\pm$0.00 \\
\cline{2-2}
Distribution & TEM & 0.00$\pm$0.00 & 0.00$\pm$0.00 & 0.00$\pm$0.00 & 0.00$\pm$0.00 & 0.00$\pm$0.00 & 0.00$\pm$0.00 & 0.00$\pm$0.00 & 0.00$\pm$0.00 \\
\cline{1-2}
Hybrid & RMP & 0.00$\pm$0.00 & 0.00$\pm$0.00 & 0.00$\pm$0.00 & 0.00$\pm$0.00 & 0.00$\pm$0.00 & 0.00$\pm$0.00 & 0.00$\pm$0.00 & 0.00$\pm$0.00 \\
\cline{2-2}
Method &  COPT  & 1.17$\pm$0.72 \cellcolor{red!10} & 3.14$\pm$2.04 \cellcolor{red!10} & 1.12$\pm$0.54 \cellcolor{red!10} & 0.99$\pm$0.39 \cellcolor{red!10} & 0.90$\pm$0.32 \cellcolor{red!10} & 0.82$\pm$0.27 \cellcolor{red!10} & 0.76$\pm$0.23 \cellcolor{red!10} & 0.70$\pm$0.20 \cellcolor{red!10} \\
\cline{1-2}
\multicolumn{2}{c }{LP} & 1.76$\pm$0.14 \cellcolor{red!10} & 1.55$\pm$0.13 \cellcolor{red!10} & 1.32$\pm$0.11 \cellcolor{red!10} & 1.06$\pm$0.10 \cellcolor{red!10} & 1.21$\pm$0.05 \cellcolor{red!10} & 1.11$\pm$0.04 \cellcolor{red!10} & 0.95$\pm$0.03 \cellcolor{red!10} & 0.88$\pm$0.03 \cellcolor{red!10} \\
\hline
 \multicolumn{2}{c}{AIPO-R  } & 7.24$\pm$0.51 & 6.91$\pm$0.38 & 4.80$\pm$0.30 & 3.84$\pm$0.17 & 2.70$\pm$0.04 & 0.00$\pm$0.00 & 0.00$\pm$0.00 & 0.00$\pm$0.00 \\
\toprule
\rowcolor{lightgray!30}
\multicolumn{2}{c}{\textbf{AIPO}$^\dagger$} & 0.00$\pm$0.00 & 0.00$\pm$0.00 & 0.00$\pm$0.00 & 0.00$\pm$0.00 & 0.00$\pm$0.00 & 0.00$\pm$0.00 & 0.00$\pm$0.00 & 0.00$\pm$0.00 \\
\toprule
\multicolumn{9}{c }{London road map}\\ 
  \hline
\multicolumn{2}{c }{Method} & $\epsilon = 0.2$& $\epsilon = 0.4$& $\epsilon = 0.6$& $\epsilon = 0.8$& $\epsilon = 1.0$& $\epsilon = 1.2$& $\epsilon = 1.4$& $\epsilon = 1.6$\\ 
 \hline
 \hline
Pre-defined& EM & 0.00$\pm$0.00 & 0.00$\pm$0.00 & 0.00$\pm$0.00 & 0.00$\pm$0.00 & 0.00$\pm$0.00 & 0.00$\pm$0.00 & 0.00$\pm$0.00 & 0.00$\pm$0.00 \\
\cline{2-2}
Noise & Laplace & 0.00$\pm$0.00 & 0.00$\pm$0.00 & 0.00$\pm$0.00 & 0.00$\pm$0.00 & 0.00$\pm$0.00 & 0.00$\pm$0.00 & 0.00$\pm$0.00 & 0.00$\pm$0.00 \\
\cline{2-2}
Distribution & TEM & 0.00$\pm$0.00 & 0.00$\pm$0.00 & 0.00$\pm$0.00 & 0.00$\pm$0.00 & 0.00$\pm$0.00 & 0.00$\pm$0.00 & 0.00$\pm$0.00 & 0.00$\pm$0.00 \\
\cline{1-2}
Hybrid  & RMP & 0.00$\pm$0.00 & 0.00$\pm$0.00 & 0.00$\pm$0.00 & 0.00$\pm$0.00 & 0.00$\pm$0.00 & 0.00$\pm$0.00 & 0.00$\pm$0.00 & 0.00$\pm$0.00 \\
\cline{2-2}
Method &  COPT & 0.72$\pm$0.69 \cellcolor{red!10} & 2.43$\pm$1.10 \cellcolor{red!10} & 0.64$\pm$0.29 \cellcolor{red!10} & 0.63$\pm$0.26 \cellcolor{red!10} & 0.59$\pm$0.24 \cellcolor{red!10} & 0.55$\pm$0.21 \cellcolor{red!10} & 0.51$\pm$0.19 \cellcolor{red!10} & 0.48$\pm$0.17 \cellcolor{red!10} \\
\cline{1-2}
\multicolumn{2}{c }{LP} & 1.35$\pm$0.10 \cellcolor{red!10} & 1.35$\pm$0.09 \cellcolor{red!10} & 1.10$\pm$0.06 \cellcolor{red!10} & 0.82$\pm$0.05 \cellcolor{red!10} & 0.72$\pm$0.04 \cellcolor{red!10} & 0.58$\pm$0.03 \cellcolor{red!10} & 0.65$\pm$0.04 \cellcolor{red!10} & 1.86$\pm$0.17 \cellcolor{red!10}\\
\hline
 \multicolumn{2}{c}{AIPO-R  }  & 7.82$\pm$0.25 & 9.26$\pm$0.32 & 5.77$\pm$0.30 & 4.21$\pm$0.14 & 1.75$\pm$0.02 & 1.40$\pm$0.05 & 1.08$\pm$0.06 & 0.89$\pm$0.08 \\
 
\toprule
\rowcolor{lightgray!30} 
\multicolumn{2}{c}{\textbf{AIPO}$^\dagger$} & 0.00$\pm$0.00 & 0.00$\pm$0.00 & 0.00$\pm$0.00 & 0.00$\pm$0.00 & 0.00$\pm$0.00 & 0.00$\pm$0.00 & 0.00$\pm$0.00 & 0.00$\pm$0.00 \\
\toprule
\multicolumn{9}{c }{New York City road map}\\ 
  \hline
\multicolumn{2}{c }{Method} & $\epsilon = 0.2$& $\epsilon = 0.4$& $\epsilon = 0.6$& $\epsilon = 0.8$& $\epsilon = 1.0$& $\epsilon = 1.2$& $\epsilon = 1.4$& $\epsilon = 1.6$\\  
 \hline
 \hline
Pre-defined& EM & 0.00$\pm$0.00 & 0.00$\pm$0.00 & 0.00$\pm$0.00 & 0.00$\pm$0.00 & 0.00$\pm$0.00 & 0.00$\pm$0.00 & 0.00$\pm$0.00 & 0.00$\pm$0.00 \\
\cline{2-2}
Noise & Laplace & 0.00$\pm$0.00 & 0.00$\pm$0.00 & 0.00$\pm$0.00 & 0.00$\pm$0.00 & 0.00$\pm$0.00 & 0.00$\pm$0.00 & 0.00$\pm$0.00 & 0.00$\pm$0.00 \\
\cline{2-2}
Distribution & TEM & 0.00$\pm$0.00 & 0.00$\pm$0.00 & 0.00$\pm$0.00 & 0.00$\pm$0.00 & 0.00$\pm$0.00 & 0.00$\pm$0.00 & 0.00$\pm$0.00 & 0.00$\pm$0.00 \\
\cline{1-2}
Hybrid  & RMP & 0.00$\pm$0.00 & 0.00$\pm$0.00 & 0.00$\pm$0.00 & 0.00$\pm$0.00 & 0.00$\pm$0.00 & 0.00$\pm$0.00 & 0.00$\pm$0.00 & 0.00$\pm$0.00 \\
\cline{2-2}
Method &  COPT  & 0.97$\pm$0.48 \cellcolor{red!10} & 4.10$\pm$0.84 \cellcolor{red!10} & 0.74$\pm$0.39 \cellcolor{red!10} & 0.72$\pm$0.41 \cellcolor{red!10} & 0.68$\pm$0.41 \cellcolor{red!10} & 0.65$\pm$0.40 \cellcolor{red!10} & 0.61$\pm$0.38 \cellcolor{red!10} & 0.57$\pm$0.37 \cellcolor{red!10}\\
\cline{1-2}
\multicolumn{2}{c }{LP}  & 1.28$\pm$0.08 \cellcolor{red!10} & 1.27$\pm$0.07 \cellcolor{red!10} & 1.00$\pm$0.05 \cellcolor{red!10} & 1.12$\pm$0.08 \cellcolor{red!10} & 0.88$\pm$0.05 \cellcolor{red!10} & 0.77$\pm$0.06 \cellcolor{red!10} & 0.64$\pm$0.05 \cellcolor{red!10} & 1.76$\pm$0.12 \cellcolor{red!10} \\
\hline
 \multicolumn{2}{c}{AIPO-R  }  & 10.99$\pm$0.52 & 6.84$\pm$0.36 & 6.01$\pm$0.37 & 4.22$\pm$0.29 & 2.07$\pm$0.11 & 1.35$\pm$0.06 & 1.06$\pm$0.07 & 0.90$\pm$0.08 \\
 
\toprule
\rowcolor{lightgray!30}
\multicolumn{2}{c}{\textbf{AIPO}$^\dagger$} & 0.00$\pm$0.00 & 0.00$\pm$0.00 & 0.00$\pm$0.00 & 0.00$\pm$0.00 & 0.00$\pm$0.00 & 0.00$\pm$0.00 & 0.00$\pm$0.00 & 0.00$\pm$0.00 \\
\hline
    \end{tabular}
\caption{mDP violation ratio (Mean$\pm$1.96$\times$standard deviation).}
\label{tab:privacy2norm}
\end{table*}
\section{Case Study: Location Privacy in Navigation and Spatial Crowdsourcing}
\label{sec:experiments}

This section presents a case study of mDP in the context of location privacy protection. Such problems commonly arise when users must approach a target location to obtain a service or complete a task. Representative examples include (i) navigation services~\cite{Qiu-TMC2022}, where users query for routes while concealing their true location, and (ii) spatial crowdsourcing~\cite{Wang-WWW2017}, where participants contribute geo-tagged data under privacy constraints.

To investigate this setting, we evaluate the proposed \emph{Anchor-Interpolated Privacy Optimization (AIPO)} algorithm on real-world road-network datasets and compare it against representative baselines. The evaluation covers three dimensions: 
(i) \emph{privacy} (\textbf{Section \ref{subsec:mainexp:privacy}}), measured by violations of the $(\epsilon,d_p)$-mDP constraints; 
(ii) \emph{utility loss} (\textbf{Section \ref{subsec:mainexp:UL}}), quantified by expected service loss under different distance metrics; and 
(iii) \emph{computational efficiency} (\textbf{Section \ref{subsec:mainexp:time}}), assessed via runtime performance. 
The results show that AIPO enforces strict privacy with zero observed mDP violations, consistently reduces utility loss relative to existing methods, and achieves lower runtime compared to other optimization-based approaches. We begin by describing the experimental setup in \textbf{Section \ref{subsec:mainexp:settings}}, including details on the \emph{datasets}, \emph{computational resources}, and \emph{baseline methods}.

\subsection{Experiment Settings}
\label{subsec:mainexp:settings}

\noindent \textbf{Datasets.} We conducted experiments on road network datasets from three major cities: \emph{Rome, Italy}, \emph{New York City (NYC), USA}, and \emph{London, UK}. Each dataset comprises nodes representing intersections, junctions, and other key points in the urban road infrastructure, with edges corresponding to actual road segments. The data were obtained from OpenStreetMap~\cite{openstreetmap}. To support our interpolation-based method, we discretize each city's geographic area into a uniform grid map. Table~\ref{tab:dataset-stats} provides a summary of geographic boundaries, node and edge counts, grid configurations, and cell sizes for each dataset. 

\noindent \textbf{Distance metric.} Unless otherwise stated, all main results are reported under the Euclidean distance $d_2$ (i.e., the $\ell_2$ norm) when evaluating $(\epsilon,d_p)$-mDP and utility. 
For completeness, we also include a full set of results under the Manhattan distance $d_1$ (i.e., the $\ell_1$ norm) in  \textbf{Appendix \ref{subsec:exp:1norm}}: utility loss (Table~\ref{tab:ULnorm1}), privacy compliance (Table~\ref{tab:privacy1norm}), and runtime (Table~\ref{tab:timenorm1}). These $\ell_1$ results corroborate the conclusions drawn from the $\ell_2$ setting.




%

\begin{table*}[t]
\centering
\footnotesize 
\begin{tabular}{p{1.50cm} |  p{1.25cm} p{1.25cm} p{1.25cm} p{1.25cm} p{1.25cm} p{1.25cm} p{1.25cm} p{1.25cm} p{1.25cm}} 
\toprule
\multicolumn{9}{c }{Rome road map}\\ 
 \hline
\multicolumn{2}{c }{Method} & $\epsilon = 0.2$& $\epsilon = 0.4$& $\epsilon = 0.6$& $\epsilon = 0.8$& $\epsilon = 1.0$& $\epsilon = 1.2$& $\epsilon = 1.4$& $\epsilon = 1.6$\\ 
 \hline
 \hline
Pre-defined& EM & 8.71$\pm$0.78 & 8.70$\pm$1.13 & 8.65$\pm$1.28 & 8.62$\pm$1.38 & 8.58$\pm$1.45 & 8.56$\pm$1.51 & 8.54$\pm$1.55 & 8.52$\pm$1.58 \\
\cline{2-2}
Noise & Laplace & 8.71$\pm$0.71 & 8.48$\pm$1.00 & 8.46$\pm$1.40 & 8.45$\pm$1.69 & 8.44$\pm$1.90 & 8.43$\pm$2.05 & 8.42$\pm$2.16 & 8.40$\pm$2.24 \\
\cline{2-2}
Distribution & TEM & 8.85$\pm$2.71 & 8.95$\pm$3.19 & 8.66$\pm$2.44 & 8.64$\pm$1.83 & 8.66$\pm$1.11 & 8.66$\pm$0.69 & 8.66$\pm$0.27 & 8.62$\pm$0.22 \\
\cline{1-2}
Hybrid  & RMP & 5.94$\pm$0.25 & 4.96$\pm$0.45 & 4.28$\pm$0.36 & 3.85$\pm$0.26 & 3.58$\pm$0.21 & 3.40$\pm$0.19 & 3.28$\pm$0.18 & 3.19$\pm$0.18 \\
\cline{2-2}
Method &  COPT  & 7.99$\pm$1.53 & 7.95$\pm$1.04 & 8.33$\pm$1.50 & 8.29$\pm$1.57 & 8.27$\pm$1.59 & 8.25$\pm$1.61 & 8.24$\pm$1.60 & 8.24$\pm$1.60 \\
\cline{1-2}
\multicolumn{2}{c }{LP} & 4.25$\pm$0.41 & 2.97$\pm$0.11 & 2.56$\pm$0.03 & 2.45$\pm$0.07 & 2.43$\pm$0.03 & 2.42$\pm$0.02 & 2.42$\pm$0.01 & 2.42$\pm$0.01 \\
\hline
\multicolumn{2}{c}{AIPO-R} & 5.19$\pm$0.23 & 3.97$\pm$0.21 & 3.34$\pm$0.17 & 3.01$\pm$0.12 & 2.81$\pm$0.07 & 2.66$\pm$0.03 & 2.56$\pm$0.05 & 2.50$\pm$0.01 \\
\cline{1-2}
\multicolumn{2}{c}{LB}  & 1.82$\pm$0.01 & 1.73$\pm$0.01 & 1.73$\pm$0.00 & 1.73$\pm$0.01 & 1.73$\pm$0.00 & 1.73$\pm$0.01 & 1.73$\pm$0.00 & 1.73$\pm$0.00 \\
\hline
\rowcolor{lightgray!30}
\multicolumn{2}{c}{\textbf{AIPO}$^\dagger$} & \textbf{5.68$\pm$0.34} & \textbf{4.65$\pm$0.45} & \textbf{4.02$\pm$0.22} & \textbf{3.63$\pm$0.08} & \textbf{3.38$\pm$0.37} & \textbf{3.14$\pm$0.25} & \textbf{2.99$\pm$0.11} & \textbf{2.88$\pm$0.19} \\
\toprule
\multicolumn{9}{c }{London road map}\\ 
 \hline
\multicolumn{2}{c }{Method} & $\epsilon = 0.2$& $\epsilon = 0.4$& $\epsilon = 0.6$& $\epsilon = 0.8$& $\epsilon = 1.0$& $\epsilon = 1.2$& $\epsilon = 1.4$& $\epsilon = 1.6$\\  
 \hline
 \hline
Pre-defined& EM & 7.69$\pm$0.74 & 7.44$\pm$1.30 & 7.30$\pm$1.58 & 7.20$\pm$1.73 & 7.14$\pm$1.82 & 7.09$\pm$1.88 & 7.05$\pm$1.93 & 7.02$\pm$1.97 \\
\cline{2-2}
Noise & Laplace & 8.65$\pm$0.93 & 8.63$\pm$0.92 & 8.60$\pm$0.91 & 8.56$\pm$0.90 & 8.51$\pm$0.88 & 8.42$\pm$0.83 & 8.29$\pm$0.71 & 8.11$\pm$0.43 \\
\cline{2-2}
Distribution & TEM & 8.01$\pm$2.22 & 7.72$\pm$2.27 & 7.87$\pm$1.94 & 7.96$\pm$1.11 & 7.98$\pm$0.62 & 7.99$\pm$0.35 & 7.99$\pm$0.12 & 7.98$\pm$0.13 \\
\cline{1-2}
Hybrid  & RMP & 5.86$\pm$0.21 & 5.07$\pm$0.39 & 4.49$\pm$0.41 & 4.09$\pm$0.37 & 3.83$\pm$0.32 & 3.65$\pm$0.29 & 3.53$\pm$0.26 & 3.44$\pm$0.24 \\
\cline{2-2}
Method &  COPT  & 8.06$\pm$2.24 & 8.06$\pm$2.22 & 8.06$\pm$2.18 & 8.07$\pm$2.15 & 8.07$\pm$2.10 & 8.07$\pm$2.04 & 8.11$\pm$1.19 & 7.35$\pm$0.93 \\
\cline{1-2}
\multicolumn{2}{c }{LP} & 4.19$\pm$0.24 & 2.92$\pm$0.13 & 2.56$\pm$0.14 & 2.47$\pm$0.11 & 2.45$\pm$0.07 & 2.44$\pm$0.09 & 2.44$\pm$0.03 & 2.44$\pm$0.05 \\
\hline
\multicolumn{2}{c}{AIPO-R}  & 4.97$\pm$0.21 & 3.94$\pm$0.13 & 3.22$\pm$0.08 & 2.83$\pm$0.03 & 2.59$\pm$0.01 & 2.43$\pm$0.04 & 2.31$\pm$0.03 & 2.24$\pm$0.05 \\
\cline{1-2}
\multicolumn{2}{c}{LB}  & 1.51$\pm$0.05 & 1.44$\pm$0.02 & 1.44$\pm$0.03 & 1.44$\pm$0.01 & 1.44$\pm$0.00 & 1.44$\pm$0.01 & 1.44$\pm$0.00 & 1.44$\pm$0.00 \\
\hline
\rowcolor{lightgray!30}
\multicolumn{2}{c}{\textbf{AIPO}$^\dagger$} & \textbf{5.42$\pm$0.76} & \textbf{4.50$\pm$0.26} & \textbf{3.90$\pm$0.17} & \textbf{3.43$\pm$0.17} & \textbf{3.11$\pm$0.11} & \textbf{2.88$\pm$0.09} & \textbf{2.71$\pm$0.21} & \textbf{2.58$\pm$0.13} \\
\toprule
\multicolumn{9}{c }{New York City road map}\\ 
 \hline
\multicolumn{2}{c }{Method} & $\epsilon = 0.2$& $\epsilon = 0.4$& $\epsilon = 0.6$& $\epsilon = 0.8$& $\epsilon = 1.0$& $\epsilon = 1.2$& $\epsilon = 1.4$& $\epsilon = 1.6$\\  
 \hline
 \hline
Pre-defined& EM & 13.96$\pm$1.59 & 13.95$\pm$2.38 & 13.88$\pm$2.78 & 13.80$\pm$3.06 & 13.73$\pm$3.27 & 13.69$\pm$3.44 & 13.65$\pm$3.57 & 13.64$\pm$3.66 \\
\cline{2-2}
Noise & Laplace & 13.75$\pm$1.95 & 13.62$\pm$2.52 & 13.48$\pm$2.58 & 13.41$\pm$2.63 & 13.37$\pm$2.72 & 13.36$\pm$2.82 & 13.35$\pm$2.92 & 13.35$\pm$3.00 \\
\cline{2-2}
Distribution & TEM & 13.62$\pm$3.79 & 13.53$\pm$4.02 & 13.77$\pm$3.00 & 13.98$\pm$1.96 & 13.95$\pm$1.11 & 13.92$\pm$0.64 & 13.83$\pm$0.23 & 13.72$\pm$0.17 \\
\cline{1-2}
Hybrid  & RMP & 7.69$\pm$0.37 & 5.58$\pm$0.37 & 4.55$\pm$0.24 & 4.00$\pm$0.24 & 3.69$\pm$0.28 & 3.50$\pm$0.31 & 3.38$\pm$0.32 & 3.29$\pm$0.32 \\
\cline{2-2}
Method &  COPT  & 8.19$\pm$1.63 & 13.39$\pm$2.91 & 13.72$\pm$3.30 & 13.64$\pm$3.55 & 13.62$\pm$3.64 & 13.61$\pm$3.72 & 13.63$\pm$3.75 & 13.64$\pm$3.77 \\
\cline{1-2}
\multicolumn{2}{c }{LP} & 4.80$\pm$0.12 & 3.14$\pm$0.07 & 2.67$\pm$0.11 & 2.56$\pm$0.21 & 2.53$\pm$0.09 & 2.52$\pm$0.04 & 2.52$\pm$0.07 & 2.52$\pm$0.01 \\
\hline
\multicolumn{2}{c}{AIPO-R}  & 6.10$\pm$0.21 & 4.24$\pm$0.05 & 3.42$\pm$0.08 & 2.97$\pm$0.12 & 2.73$\pm$0.03 & 2.59$\pm$0.01 & 2.49$\pm$0.01 & 2.37$\pm$0.09 \\
\cline{1-2}
\multicolumn{2}{c}{LB}  & 2.18$\pm$0.11 & 2.04$\pm$0.02 & 2.03$\pm$0.01 & 2.03$\pm$0.01 & 2.03$\pm$0.00 & 2.03$\pm$0.00 & 2.03$\pm$0.01 & 2.03$\pm$0.00 \\
\hline
\rowcolor{lightgray!30}
\multicolumn{2}{c}{\textbf{AIPO}$^\dagger$} & \textbf{7.14$\pm$0.32} & \textbf{5.26$\pm$0.21} & \textbf{4.32$\pm$0.11} & \textbf{3.73$\pm$0.09} & \textbf{3.36$\pm$0.12} & \textbf{3.10$\pm$0.17} & \textbf{2.91$\pm$0.36} & \textbf{2.77$\pm$0.05} \\
\bottomrule
    \end{tabular}
\caption{Utility loss across different perturbation methods (Mean$\pm$1.96$\times$standard deviation; see Table \ref{tab:add:UL2norm} in the appendix for the full version).}
\label{tab:UL2norm}
\end{table*}

\noindent \textbf{Experiments Compute Resources.} Our experiments were conducted on a workstation equipped with an Intel Core i9-13900F processor (24 cores, 2.00–5.60 GHz), 32 GB of DDR5 memory (4800 MHz), and an NVIDIA GeForce RTX 4090 GPU with 24 GB of GDDR6X VRAM. Linear programs were solved using the MATLAB Optimization Toolbox function \texttt{linearprog}~\cite{MATLABlinprog}.

\noindent \textbf{Representative baselines.} We list several representative baselines that achieve $\epsilon$-mDP:
\begin{itemize}
    \item [(1)] \emph{Pre-defined Noise Distribution Methods}, including \emph{Exponential Mechanism (EM)}~\cite{Chatzikokolakis-PoPETs2015}, \emph{Planar Laplace Mechanism (Laplace)}~\cite{Andres-CCS2013}, and \emph{Truncated Exponential Mechanism (TEM)}~\cite{Carvalho2021TEMHU}. 
    \item [(2)] \emph{Linear programming (LP)~\cite{Bordenabe-CCS2014}}, which minimizes expected utility under mDP constraints via a LP on a \emph{discretized} domain; this approximation can misestimate pairwise distances and weaken mDP enforcement in fine-grained settings.  
    \item [(3)] \emph{Hybrid Methods}, including \emph{ConstOPTMech (COPT)~\cite{ImolaUAI2022}}, which combines LP with EM to balance utility and scalability under mDP, and \emph{Bayesian Remapping (RMP)~\cite{Chatzikokolakis-PETS2017}}, which is a post-processing technique that enhances utility without compromising mDP.  \looseness = -1
    \item [(4)] \emph{AIPO-Relaxed (AIPO-R)}. AIPO-R is a relaxed variant of the proposed method that directly enforces the $(\epsilon, d_p)$-mDP constraint using pairwise distances between anchors, without decomposing the privacy budget across dimensions.
\end{itemize}


\subsection{Privacy Evaluation}
\label{subsec:mainexp:privacy}
To evaluate the empirical compliance of each mechanism with $(\epsilon, d_p)$-mDP, we introduce the \emph{perturbation probability ratio (PPR)} as a diagnostic metric. We randomly sample $1{,}000$ locations from the input domain, denoted by $\mathcal{S}$, and for each pair $\mathbf{x}, \mathbf{x}' \in \mathcal{S}$ and output $\mathbf{y}_k$, compute
\begin{equation}
\label{eq:PPR}
\mathrm{PPR}(\mathbf{x}, \mathbf{x}', \mathbf{y}_k) \;=\; 
\frac{\left| \ln z(\mathbf{y}_k \mid \mathbf{x}) - \ln z(\mathbf{y}_k \mid \mathbf{x}') \right|}{d_p(\mathbf{x}, \mathbf{x}')}.
\end{equation}
A violation of the $(\epsilon, d_p)$-mDP constraint is recorded whenever $\mathrm{PPR}(\mathbf{x}, \mathbf{x}', \mathbf{y}_k) > \epsilon$ for any output $\mathbf{y}_k$.

Table~\ref{tab:privacy2norm} reports empirical violation ratios under privacy budgets $\epsilon \in \{0.2, 0.4, \dots, 1.6\}\,\text{km}^{-1}$. \emph{AIPO} attains $0\%$ violations across all datasets and budgets, corroborating the correctness of its dimension-wise composition and log-convex interpolation. In contrast, \emph{LP} and \emph{COPT} exhibit nonzero violation ratios because they enforce constraints over discretized representatives, thereby \emph{approximating} pairwise distances; such approximations can overestimate true continuous distances and relax the effective mDP constraints, missing privacy leakage at finer granularity. Pre-defined Noise Distribution mechanisms (e.g., Laplace, EM, TEM) do not incur violations but achieve this via heavier randomization, as reflected in their utility (detailed in Section \ref{subsec:mainexp:UL}). 

Complementing the aggregate ratios, the distributional analysis in \textbf{Section~\ref{subsec:exp:privacy}} (Fig.~\ref{fig:PLdistRome}–Fig.~\ref{fig:PLdistNYC}) shows that AIPO’s PPR values concentrate well below $\epsilon$ with tight tails, whereas LP-based and hybrid methods yield broader spreads with noticeable mass near (and occasionally beyond) the threshold; these patterns are consistent in Rome, London, and NYC.

The relaxed variant, \emph{AIPO-R}, exhibits higher violation ratios. Unlike AIPO, it enforces mDP only between anchor points and does not guarantee compliance in interpolated regions; consequently, violations arise in areas between anchors, especially under sparse anchoring or in higher-dimensional settings. A formal discussion is provided in \textbf{Appendix~\ref{subsec:discussion:d_p}}.

\begin{table*}[t]
\centering
\footnotesize 
\begin{tabular}{ 
p{0.95cm} p{1.45cm} p{1.45cm} p{1.45cm} p{1.45cm} p{1.45cm} p{1.45cm} p{1.45cm} p{1.45cm}
} 
\toprule
\multicolumn{9}{c }{Rome road map}\\ 
 \hline
Method & $\epsilon = 0.2$& $\epsilon = 0.4$& $\epsilon = 0.6$& $\epsilon = 0.8$& $\epsilon = 1.0$& $\epsilon = 1.2$& $\epsilon = 1.4$& $\epsilon = 1.6$\\ 
 \hline
 \hline
COPT & 147.4$\pm$18.2 & 138.7$\pm$10.3 & 140.4$\pm$9.2 & 135.8$\pm$6.2 & 137.2$\pm$6.6 & 136.7$\pm$8.2 & 136.2$\pm$4.2 & 137.6$\pm$5.0 \\
LP & 210.8$\pm$296.7 & 122.1$\pm$19.3 & 345.5$\pm$323.4 & 342.1$\pm$242.8 & 399.0$\pm$158.7 & 318.1$\pm$138.7 & 428.2$\pm$223.9 & 366.3$\pm$249.4 \\
\toprule
\rowcolor{lightgray!30}
$\textbf{AIPO}^\dagger$ & 29.3$\pm$10.9 & 20.0$\pm$5.8 & 28.6$\pm$5.6 & 64.7$\pm$159.9 & 10.0$\pm$2.4 & 8.3$\pm$2.6 & 5.1$\pm$0.9 & 8.0$\pm$19.3 \\
\toprule
\multicolumn{9}{c }{London road map}\\ 
 \hline
Method & $\epsilon = 0.2$& $\epsilon = 0.4$& $\epsilon = 0.6$& $\epsilon = 0.8$& $\epsilon = 1.0$& $\epsilon = 1.2$& $\epsilon = 1.4$& $\epsilon = 1.6$\\ 
\hline
\hline
COPT & 285.1$\pm$115.0 & 274.6$\pm$110.9 & 272.9$\pm$114.4 & 272.2$\pm$113.2 & 271.5$\pm$118.1 & 269.7$\pm$117.8 & 266.4$\pm$109.7 & 269.4$\pm$103.2 \\
LP & 159.3$\pm$48.7 & 106.9$\pm$32.2 & 125.7$\pm$100.3 & 99.9$\pm$41.3 & 111.8$\pm$57.6 & 104.7$\pm$51.3 & 101.5$\pm$34.3 & 180.7$\pm$208.6 \\
\toprule
\rowcolor{lightgray!30}
$\textbf{AIPO}^\dagger$ & 78.2$\pm$13.9 & 77.0$\pm$22.4 & 78.5$\pm$24.2 & 77.4$\pm$13.8 & 80.7$\pm$16.1 & 75.0$\pm$26.4 & 62.3$\pm$8.7 & 63.9$\pm$10.9 \\
\toprule
\multicolumn{9}{c }{New York City road map}\\ 
 \hline
Method & $\epsilon = 0.2$& $\epsilon = 0.4$& $\epsilon = 0.6$& $\epsilon = 0.8$& $\epsilon = 1.0$& $\epsilon = 1.2$& $\epsilon = 1.4$& $\epsilon = 1.6$\\ 
 \hline
 \hline
COPT & 157.3$\pm$12.3 & 159.4$\pm$16.2 & 154.6$\pm$14.5 & 153.1$\pm$15.6 & 155.2$\pm$5.1 & 151.3$\pm$11.9 & 152.5$\pm$12.6 & 153.5$\pm$9.6 \\
LP & 303.8$\pm$140.8 & 324.1$\pm$218.8 & 263.0$\pm$141.9 & 418.6$\pm$202.1 & 265.6$\pm$178.5 & 284.8$\pm$203.5 & 393.7$\pm$304.8 & 403.2$\pm$174.9 \\
\toprule
\rowcolor{lightgray!30}
$\textbf{AIPO}^\dagger$ & 75.1$\pm$25.9 & 56.3$\pm$12.9 & 39.9$\pm$14.9 & 29.3$\pm$7.5 & 24.3$\pm$7.7 & 21.3$\pm$3.8 & 27.6$\pm$44.7 & 17.3$\pm$3.9 \\
\hline
    \end{tabular}
\caption{Computation time of different perturbation methods (Mean$\pm$1.96$\times$standard deviation).}
\label{tab:timenorm2}
\end{table*}

\subsection{Utility Loss Comparison}
\label{subsec:mainexp:UL}
Given a true vehicle position $\mathbf{x}_i$ and a task destination $\mathbf{x}_{\mathrm{task}}$, the loss incurred by releasing the perturbed location $\mathbf{y}_k$ is defined as the absolute difference between the corresponding shortest-path lengths, $\bigl|\text{path}(\mathbf{x}_i,\mathbf{x}_{\mathrm{task}}) - \text{path}(\mathbf{y}_k,\mathbf{x}_{\mathrm{task}})\bigr|$. 
Aggregating over the prior distribution $p(\mathbf{x}_{\mathrm{task}})$ of task locations $\mathcal{Q}$ yields the expected utility loss:
\begin{equation}
\label{eq:UL}
\mathcal{L}(\mathbf{x}_i,\mathbf{y}_k) =
\sum_{\mathbf{x}_{\mathrm{task}} \in \mathcal{Q}}
p(\mathbf{x}_{\mathrm{task}})
\bigl|\text{path}(\mathbf{x}_i,\mathbf{x}_{\mathrm{task}})
       - \text{path}(\mathbf{y}_k,\mathbf{x}_{\mathrm{task}})\bigr|.
\end{equation}
To compute $\text{path}(\cdot,\cdot)$, we apply Dijkstra’s algorithm~\cite{Algorithm} to determine the shortest-path distance between the origin and destination nodes.

Table~\ref{tab:UL2norm} reports the utility loss of our interpolation mechanism AIPO versus baseline perturbation methods on three road-network datasets:

\noindent \textbf{AIPO vs. pre-defined Noise Distribution Methods (EM, TEM, Laplace).} 
Across all three datasets, \emph{AIPO} consistently outperforms pre-defined noise distribution methods in terms of utility. 
\emph{On average, AIPO reduces utility loss by 59.03\%, 60.94\%, and 60.65\% compared to EM, Laplace, and TEM, respectively.} 
EM and TEM define perturbation probabilities using (truncated) exponential scoring functions, while Laplace adds noise sampled independently from a fixed distribution. These methods rely on global or isotropic perturbation rules that ignore the underlying geometry and the direction-dependent sensitivity of task-specific utility. Consequently, they tend to over-perturb in dense regions and under-protect in sparse ones. In contrast, AIPO optimizes perturbation distributions at a sparse set of anchor points and interpolates them across the domain via log-convex combinations, yielding smooth transitions aligned with the metric structure. 

\noindent \textbf{AIPO vs. Hybrid Methods (COPT and RMP).}  \emph{Across the three datasets, AIPO achieves an average of 59.16\% lower utility loss compared to COPT}. The limited scalability of COPT stems from its rigid LP formulation, which becomes impractical in high-resolution domains. In contrast, AIPO employs an anchor-based framework with log-convex interpolation, enabling flexible adaptation to fine-grained input spaces. By explicitly optimizing perturbation probabilities at a sparse set of anchor points, AIPO preserves the mDP guarantee while achieving improved utility.

\emph{AIPO also outperforms RMP, achieving 10.36\% lower utility loss on average across the evaluated datasets.} While RMP improves pre-defined noise mechanisms by reshaping posterior distributions to approximate mDP, its effectiveness is fundamentally limited by the quality of the initial noise distribution, which often falls short of global optimality. In contrast, AIPO formulates and solves an optimization problem directly under mDP constraints, without relying on any pre-defined perturbation mechanism, leading to a lower utility loss.

\noindent \textbf{AIPO vs. LP.} The LP method achieves lower utility loss compared to AIPO, since LP directly optimizes the perturbation distributions to minimize expected loss under distance-based constraints. 
However, this comes at the expense of privacy: the LP method only enforces the mDP constraints on a discrete grid of locations, without guaranteeing that the constraints hold for all possible input pairs in the continuous domain. As a result, while LP appears effective in terms of utility, it may produce perturbation probabilities that violate the mDP guarantee in off-grid regions, which has been demonstrated in \textbf{Table \ref{tab:privacy2norm}}.

\noindent \textbf{AIPO vs. Lower Bound (LB).} For theoretical reference, we also compare \emph{AIPO} with a universal lower bound (LB) from Proposition~\ref{prop:cell-lp-lower-bound} (Appendix~\ref{sec:gapanalysis}), which lower-bounds the minimum utility loss attainable by \emph{any} mechanism satisfying $(\epsilon,d_p)$-mDP. Empirically, \emph{AIPO} lies within $1.36\times$–$3.13\times$ of this bound across datasets and budgets, with larger gaps at tighter privacy (e.g., $\epsilon=0.4$) due to stricter constraints; the ratio narrows as $\epsilon$ increases.

\noindent \textbf{AIPO vs. AIPO-R.} Empirically, AIPO achieves higher utility loss compared to AIPO-R as AIPO-R relaxes the mDP constraint. In return, AIPO offers stronger theoretical guarantees, ensuring full compliance with $(\epsilon, d_p)$-mDP across the continuous space.

\noindent \textbf{Ablation Study: Utility loss with and without privacy-budget optimization.} Additionally, we compare \emph{AIPO} with an equal-split variant (\emph{AIPO-E}) that assigns identical per-dimension budgets (e.g., $\epsilon_1=\epsilon_2=\epsilon/\sqrt{2}$ under $\ell_2$). Across Rome, London, and NYC, \emph{AIPO} consistently achieves lower utility loss, with gains enlarging at higher $\epsilon$ (up to $7.8\%$ on NYC at $\epsilon=1.6$), indicating that learned asymmetric budgets better match directional sensitivity and improve privacy–utility trade-offs. The detailed experimental results can be found in Table~\ref{tab:privacybudget} and Fig.~\ref{fig:ULbudgetRome}--\ref{fig:ULbudgetNYC} in  \textbf{Appendix \ref{subsec:exp:privacybudget}}.

\subsection{Computation Efficiency}
\label{subsec:mainexp:time}

Table~\ref{tab:timenorm2} compares runtimes for the optimization-based methods (AIPO, LP) and the hybrid method COPT, each leveraging LP at some stage to construct perturbation mechanisms. AIPO attains a favorable utility–efficiency trade-off: it solves LPs only at a sparse set of anchors, and then uses log-convex interpolation to cover the continuous domain without a dense, full-scale program. This anchor-based design reduces both the number of decision variables and the effective constraint set, yielding stable performance across cities. Consistent with this, the grid-granularity study in \textit{AppendixD.3} (Fig.\ref{fig:gridgranularity}) shows that while finer grids monotonically reduce utility loss, runtime grows super-linearly, with gains saturating around $8$ horizontal cells for Rome/London and $10$ for NYC.

By contrast, the LP baseline incurs substantially higher cost in fine-grained settings because it solves a large program with mDP constraints over (nearly) all pairs of grid cells: the constraint count grows quadratically with grid size, and barrier/simplex iterations amplify this growth, making LP increasingly prohibitive as resolution increases.

\section{Related Works}
\label{sec:relatedworks}


The earliest and most extensively studied applications of mDP focus on \emph{location privacy over grid maps}. Foundational works such as Andrés et al.~\cite{Andres-CCS2013} and Bordenabe et al.~\cite{Bordenabe-CCS2014} introduced geo-indistinguishability and linear programming (LP)-based mechanisms to enforce spatial privacy while preserving utility. These approaches rely on \emph{pre-defined noise distributions} or \emph{grid-based optimization} over discretized spatial domains. Subsequent works improve utility or scalability by incorporating personalized or group-based noise~\cite{Zhang-TBD2023, Ma-TITS2022}, truncated noise distributions~\cite{yang2021blockchain}, and adaptations to federated or blockchain settings~\cite{galli2022group}. Hybrid mechanisms such as Bayesian remapping~\cite{Chatzikokolakis-PETS2017} and utility-aware post-processing~\cite{oyaGeoIndLooking} refine pre-defined perturbations to improve performance. Yu et al.~\cite{Yu-NDSS2017} exemplify a hybrid design that combines mDP with an alternative privacy metric, "expected inference error", using context-aware noise adaptation.

A parallel line of research applies mDP to \emph{high-dimensional embeddings} in domains such as natural language processing and multimedia. In these settings, input records reside in continuous semantic vector spaces. Fernandes et al.~\cite{Fernandes2018AuthorOU} and Feyisetan et al.~\cite{Feyisetan-ICDM2019} added Laplace noise directly to text embeddings or used nearest-neighbor remapping. Carvalho et al.~\cite{Carvalho2021TEMHU} proposed a truncated exponential mechanism constrained to semantically similar candidates. For non-text data, Han et al.~\cite{Han-ICME2020} and Chen et al.~\cite{Chen-CVPR2021} adapted Laplace-based perturbations to voice and facial embeddings. These approaches often fall under \emph{pre-defined} or \emph{hybrid} categories, tailored for continuous or structured vector domains.

Recent work has shifted toward \emph{optimization-based mDP mechanisms} designed for \emph{fine-grained or continuous domains}, where more precise control over privacy-utility trade-offs is critical. Imola et al.~\cite{ImolaUAI2022} introduced ConstOPT, which combines LP with the exponential mechanism to improve scalability and utility. Qiu et al.~\cite{Qiu-IJCAI2024, Qiu-PETS2025} proposed decomposition-based perturbation frameworks, including partitioning and Benders decomposition. Other studies explore reinforcement learning~\cite{Min-TDSC2024} and bilevel optimization~\cite{Yu-TSP2023} to enforce mDP under task-specific utility objectives such as mobility prediction. These results demonstrate the versatility of \emph{customized optimization} strategies for enforcing mDP in high-resolution domains. Additionally, trajectory privacy methods such as sequential perturbation~\cite{Xiao-CCS2015} showcase the broader applicability of mDP frameworks in complex spatiotemporal contexts.

Taken together, \emph{Table~\ref{tab:mdp_summary} in Appendix} illustrates a clear evolution in the mDP literature, from pre-defined, coarse-grained perturbation mechanisms to increasingly sophisticated, optimization-based frameworks designed for continuous and fine-grained domains.

\section{Conclusion}

\label{sec:conclusions}
We propose an interpolation-based framework for optimizing $(\epsilon, d_p)$-mDP in continuous domains. By learning perturbation distributions at anchor points and applying dimension-wise log-convex interpolation, our method achieves both \emph{strict privacy guarantees} (zero mDP violations) and \emph{low utility loss}, supported by rigorous theoretical analysis and empirical evaluation on real-world datasets. This approach enables practical and theoretically grounded privacy mechanisms for fine-grained spatial and continuous data. While our method is highly effective in low-dimensional settings, its scalability is challenged in high-dimensional domains due to the exponential growth in anchor points, increased interpolation complexity, and fragmented privacy budget allocation. These limitations, common across mDP mechanisms under $\ell_p$-norms, motivate future work on bilevel programming, optimization decomposition, or adaptive partitioning. A detailed discussion is provided in \textbf{Appendix~\ref{subsec:discuss_limitations}}.




\cleardoublepage

\appendix

\section*{Ethical Considerations}
We reviewed the venue's ethics guidance, submission instructions, and ethics-document requirements. The study was conducted responsibly, and our planned next steps adhere to those principles.

\paragraph{Stakeholders.} 
The primary stakeholders of this work include: (i) the research team and the broader community of researchers working on mDP; (ii) practitioners and platform operators who may deploy mDP mechanisms in navigation, mobility analytics, spatial crowdsourcing, or related services, and the end-users of such systems; (iii) data subjects whose locations or other sensitive attributes might be protected by such mechanisms in future deployments; (iv) data and infrastructure providers such as OpenStreetMap contributors; and (v) society at large, including companies and institutions that rely on privacy-preserving data analytics, as well as marginalized or highly surveilled groups who may face disproportionate harms if location privacy is weak. For the identified stakeholders, the research process has negligible direct impact beyond advancing mDP methodology, whereas publication and deployment primarily affect practitioners, data subjects, and society by shaping how privacy mechanisms are chosen and used in practice.

\paragraph{Impacts of the research process and publication.}
Regarding the \emph{research process}, our evaluation relies solely on publicly available OpenStreetMap road-network graphs and synthetic perturbations. We do not collect, process, or attempt to infer personal or identifiable user-level information, and we do not interact with live systems. As a result, we believe that tangible harms (e.g., financial loss, exposure to disturbing content) and rights-based harms (e.g., violations of informed consent or expectations of privacy) to individuals during the research process are minimal.

Regarding the \emph{publication and potential deployment} of our results, the intended impact is positive: our interpolation-based optimization framework aims to make $\ell_p$-mDP mechanisms for continuous and fine-grained domains more scalable, analyzable, and usable, thereby strengthening protections for sensitive data. This can benefit researchers (clearer theory and reproducible baselines), practitioners (deployable and auditable mechanisms with guidance on composition and parameter selection), and data subjects (stronger protection for location and similar data in downstream applications). At the same time, there is a dual-use risk: our framework could be misused to justify overly weak privacy configurations, such as very large $\epsilon$ or poorly chosen metrics, that technically satisfy formal definitions while providing limited real-world protection, particularly for vulnerable populations. There is also a risk that deployers might over-interpret our results outside the assumptions we state (e.g., different threat models), leading to a mismatch between perceived and actual privacy.

\paragraph{Mitigations and residual risks.}
To mitigate these risks, we (i) clearly document the assumptions and limitations of our approach, including its sensitivity to $\epsilon$, metric choice, and modeling assumptions; (ii) explicitly warn against using extreme parameter settings or metrics that do not reflect meaningful notions of distance or harm for the affected population; (iii) recommend that practitioners adopt conservative privacy budgets and carefully evaluate deployments in light of relevant ethical, legal, and fairness considerations, especially in high-risk or high-stakes domains; and (iv) restrict our artifacts to code and derived or synthetic data that support defensive, privacy-preserving mechanisms, without releasing additional sensitive raw data. Despite these steps, some residual risk remains that third parties may deploy mechanisms irresponsibly or ignore our guidance; this residual risk cannot be fully eliminated.

\paragraph{Decision to conduct and publish the research.}
In deciding to conduct this research, we judged, under the lens of Beneficence and Respect for Persons, that simulation-based analysis on public infrastructure-level data, with no new collection or linkage of individual records, posed low risk relative to the potential benefit of improving privacy protection tools. Justice and Respect for Law and Public Interest are reflected in our use of publicly available, properly licensed datasets and in our aim to strengthen protections for users, including those at higher risk of surveillance or discrimination. In deciding to publish, we considered whether withholding the methods would better protect stakeholders; however, we concluded that making the techniques public, together with explicit discussion of their assumptions, limitations, potential dual use, and recommended safeguards, provides a net ethical benefit. Publication enables community scrutiny, supports more robust and transparent privacy mechanisms, and offers practitioners concrete guidance on responsible deployment.

\section*{Open Science}
We provide a detailed MATLAB implementation of the proposed AIPO framework to ensure full reproducibility of all experimental results reported in the paper. The same artifact is available at the link: \url{https://doi.org/10.5281/zenodo.17851733}. Table \ref{tab:reproduce} shows the script files and estimated running time of each figure and table. 

\begin{table}[h]
\footnotesize
\centering
\caption{Summary of reproducible tables and figures from the provided artifact.}
\label{tab:reproduce}
\begin{tabular}{p{2.2cm} p{3.9cm} p{1.1cm}}
\toprule
\textbf{Reproduced Tables and Figures} & \textbf{Script File} & \textbf{Estimated Run Time} \\
\midrule
Tables~\ref{tab:privacy2norm}–\ref{tab:timenorm2} (Main), Figures~\ref{fig:PLdistRome}–\ref{fig:PLdistNYC} \newline (Appendix) & \texttt{main\_2norm.m} & 37.0 hours \\
\midrule
Tables~\ref{tab:ULnorm1}–\ref{tab:timenorm1} \newline (Appendix) & \texttt{main\_1norm.m} & 36.0 hours \\
\midrule
Figure~\ref{fig:gridgranularity} \newline (Appendix) & \texttt{main\_granularity.m} & 15.0 hours \\
\midrule
Table~\ref{tab:privacybudget}  (Appendix), Figures~\ref{fig:ULbudgetRome}–\ref{fig:ULbudgetNYC} \newline (Appendix) & \texttt{main\_ablation\_privacybudget.m} & 7.5 hours \\
\bottomrule
\end{tabular}
\end{table}

\paragraph{(1) Code and Scripts.} The following MATLAB scripts reproduce the key tables and figures in the paper. All experiments were run on a machine with MATLAB R2024b and the Optimization Toolbox and Statistics and Machine Learning Toolbox enabled.
\begin{itemize}
    \item \texttt{main\_2norm.m}: Reproduces main results under the $\ell_2$ utility metric, including Tables~\ref{tab:privacy2norm}–\ref{tab:timenorm2} and Fig.~\ref{fig:PLdistRome}–\ref{fig:PLdistNYC}. \\
    \textit{Estimated runtime: 37.0 hours}

    \item \texttt{main\_1norm.m}: Reproduces experiments under the $\ell_1$ utility metric, including Tables~\ref{tab:ULnorm1}–\ref{tab:timenorm1}. \\
    \textit{Estimated runtime: 36.0 hours}

    \item \texttt{main\_granularity.m}: Evaluates the effect of grid granularity (Fig.~\ref{fig:gridgranularity}). \\
    \textit{Estimated runtime: 15.0 hours}

    \item \texttt{main\_ablation\_privacybudget.m}: Investigates the impact of privacy budget allocation strategies, reproducing Table~\ref{tab:privacybudget} and Fig.~\ref{fig:ULbudgetRome}–\ref{fig:ULbudgetNYC}. \\
    \textit{Estimated runtime: 7.5 hours}
\end{itemize}

\paragraph{(2) Instructions.} To reproduce the results:
\begin{enumerate}
    \item Set the working directory in MATLAB to the root folder containing the provided scripts.
    
    \item Run the desired script (e.g., \texttt{main\_2norm.m}) to generate the corresponding tables and figures.
    
    \item Output files (e.g., utiliy loss, mDP violation ratio, and runtime of different methods) will be saved in the following directories:
    \begin{itemize}
        \item \texttt{./results/1norm/}: Output of \texttt{main\_1norm.m}, including utility loss, mDP violation ratio, PPR, and runtime of all compared methods across the three datasets.
        
        \item \texttt{./results/2norm/}: Output of \texttt{main\_2norm.m}, including utility loss, mDP violation ratio, PPR, and runtime of all compared methods across the three datasets.
        
        \item \texttt{./results/granularity/}: Output of \texttt{main\_granularity.m}, including utility loss, runtime, and the number of anchors used in AIPO.
        
        \item \texttt{./results/ablation\_budget/}: \newline Output of \texttt{main\_ablation\_privacybudget.m}, including the utility loss of AIPO with and without privacy budget optimization.
    \end{itemize}
\end{enumerate}



\bibliographystyle{ACM-Reference-Format} 
\bibliography{mybib}

\cleardoublepage
\section{Math Notations}
\label{sec:mathnotitions}

\begin{table}[ht]
\centering
\caption{Summary of Mathematical Notation}
\label{tab:notation}
\small 
\begin{tabular}{p{1.5cm} p{6.2cm}}
\toprule
\textbf{Symbol} & \textbf{Description} \\
\midrule
\multicolumn{2}{l}{\textbf{Sets and Domains}} \\
\midrule
$\mathcal{X}$ & Continuous secret data domain \\
$\hat{\mathcal{X}}$ & Set of anchor records (corners of all $N$-orthotopes) \\
$\mathcal{Y}$ & Perturbed (output) data domain \\
$\mathcal{C}_m$ & The $m$-th $N$-orthotope (hyperrectangle) partition of $\mathcal{X}$ \\
$\hat{\mathcal{X}}_m$ & Anchor set for partition $\mathcal{C}_m$ \\
$\mathcal{Q}$ & Set of task locations for utility evaluation \\
\midrule
\multicolumn{2}{l}{\textbf{Variables and Vectors}} \\
\midrule
$\mathbf{x}_i, \mathbf{x}_j$ & Secret input records \\
$\mathbf{y}_k$ & Perturbed output records \\
$\hat{\mathbf{x}}_{i}$ & Anchor record indexed by $i$ \\
$\boldsymbol{\gamma} \in \{0,1\}^N$ & Binary vector to enumerate cube corners \\
$\epsilon_\ell$ & Privacy budget allocated to dimension $\ell$ \\
$\Delta_\ell$ & Length of cube edge along dimension $\ell$ \\
\midrule
\multicolumn{2}{l}{\textbf{Functions and Distributions}} \\
\midrule
$d_p(\mathbf{x}, \mathbf{x}')$ & $\ell_p$-norm distance between two records \\
$p(\mathbf{x})$ & Prior distribution over secret data domain $\mathcal{X}$ \\
$\mathcal{M}$ & Randomized perturbation mechanism \\
$z(\mathbf{y}_k \mid \mathbf{x})$ & Probability of reporting $\mathbf{y}_k$ given input $\mathbf{x}$ \\
$\mathcal{L}(\mathbf{x}, \mathbf{y}_k)$ & Task-specific utility loss between $\mathbf{x}$ and $\mathbf{y}_k$ \\
$\mathcal{L}(\mathbf{Z})$ & Expected utility loss under perturbation matrix $\mathbf{Z}$ \\
\midrule
\multicolumn{2}{l}{\textbf{Optimization and Mechanisms}} \\
\midrule
AIPO & Anchor-based Interpolation for enforcing $\ell_p$-mDP \\
AIPO-R & Relaxed variant of AIPO (enforces mDP only at anchor pairs) \\
APO & Anchor Perturbation Optimization (jointly optimizes anchors and budget) \\
Approx-APO & Linear approximation of APO with bounded optimality gap \\
$\mathbf{Z}$ & Perturbation matrix \{$z(\mathbf{y}_k \mid \mathbf{x}_i)$\} \\
\midrule
\multicolumn{2}{l}{\textbf{Constants and Parameters}} \\
\midrule
$\epsilon$ & Global privacy budget for $\ell_p$-mDP \\
$N$ & Dimensionality of the secret domain \\
$M$ & Number of $N$-orthotope partitions \\
\bottomrule
\end{tabular}
\end{table}

\section{Omitted Proofs}
\label{sec:proofs}

\subsection{Proof of Proposition \ref{prop:intral}: Intra-Interval Validity}
\label{subsec:proof:prop:intral}
\begin{reproposition}[Intra-interval validity]
Let $\mathbf{x}_i$ and $\mathbf{x}_{i'}$ be two records that differ only in their $\ell$th coordinate, with $x_{i, \ell} < x_{i', \ell}$, and suppose their corresponding log-perturbation probabilities $\ln z(\mathbf{y}_k \mid \mathbf{x}_i)$ and $\ln z(\mathbf{y}_k \mid \mathbf{x}_{i'})$ satisfy the $(\epsilon, d_1)$-Lipschitz bound. Then, for any $\mathbf{x}_a, \mathbf{x}_b \in \mathcal{X}$ such that $x_{a, \ell}, x_{b, \ell} \in [x_{i, \ell}, x_{i', \ell}]$ and all other coordinates are identical to those of $\mathbf{x}_i$, if the interpolated values $\hat{z}(\mathbf{y}_k \mid \mathbf{x}_a)$ and $\hat{z}(\mathbf{y}_k \mid \mathbf{x}_b)$ are calculated by 
\begin{align}
\hat{z}(\mathbf{y}_k \mid \mathbf{x}_a) \lnconv\left(z(\mathbf{y}_k \mid \mathbf{x}_i), z(\mathbf{y}_k \mid \mathbf{x}_{i'})\right), \\
\hat{z}(\mathbf{y}_k \mid \mathbf{x}_b)\lnconv\left(z(\mathbf{y}_k \mid \mathbf{x}_i), z(\mathbf{y}_k \mid \mathbf{x}_{i'})\right),
\end{align}
then $\ln \hat{z}(\mathbf{y}_k \mid \mathbf{x}_a)$ and $\ln \hat{z}(\mathbf{y}_k \mid \mathbf{x}_b)$ also satisfy the $(\epsilon, d_1)$-Lipschitz bound between $\mathbf{x}_a$ and $\mathbf{x}_b$.
This property is illustrated in Fig.~\ref{fig:property}(a).
\end{reproposition}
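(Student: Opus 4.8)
The plan is to reduce the claim to a one-line algebraic identity in a single variable, exploiting that $\mathbf{x}_a,\mathbf{x}_b,\mathbf{x}_i,\mathbf{x}_{i'}$ all agree outside coordinate $\ell$, so every $d_1$-distance that appears collapses to an absolute difference of $\ell$th coordinates. First I would fix the output $\mathbf{y}_k$ and abbreviate $A := \ln z(\mathbf{y}_k \mid \mathbf{x}_i)$ and $B := \ln z(\mathbf{y}_k \mid \mathbf{x}_{i'})$, and write the convex weights $\lambda_a := \lambda_{\mathbf{x}_i,\mathbf{x}_a}^{\ell} = \frac{x_{i',\ell}-x_{a,\ell}}{x_{i',\ell}-x_{i,\ell}}$ and $\lambda_b := \lambda_{\mathbf{x}_i,\mathbf{x}_b}^{\ell} = \frac{x_{i',\ell}-x_{b,\ell}}{x_{i',\ell}-x_{i,\ell}}$, which are well defined since $x_{i,\ell} < x_{i',\ell}$ is assumed strictly. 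By Definition~\ref{def:logconvex_1d}, $\ln \hat z(\mathbf{y}_k \mid \mathbf{x}_a) = \lambda_a A + (1-\lambda_a) B$ and likewise $\ln \hat z(\mathbf{y}_k \mid \mathbf{x}_b) = \lambda_b A + (1-\lambda_b) B$.

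Next I would subtract these two expressions and observe the telescoping simplification $\ln \hat z(\mathbf{y}_k \mid \mathbf{x}_a) - \ln \hat z(\mathbf{y}_k \mid \mathbf{x}_b) = (\lambda_a - \lambda_b)(A - B)$, so that taking absolute values factors the quantity of interest into a product of two controllable terms. For the first factor, since $\lambda_a$ and $\lambda_b$ share the denominator $x_{i',\ell}-x_{i,\ell}$, one gets $|\lambda_a - \lambda_b| = \frac{|x_{a,\ell}-x_{b,\ell}|}{x_{i',\ell}-x_{i,\ell}}$. For the second factor, the hypothesis that $A$ and $B$ satisfy the $(\epsilon,d_1)$-Lipschitz bound, together with $d_1(\mathbf{x}_i,\mathbf{x}_{i'}) = x_{i',\ell}-x_{i,\ell}$ (they differ only in coordinate $\ell$), gives $|A - B| \le \epsilon\,(x_{i',\ell}-x_{i,\ell})$.

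Finally I would multiply the two bounds: the common factor $x_{i',\ell}-x_{i,\ell}$ cancels, leaving $|\ln \hat z(\mathbf{y}_k \mid \mathbf{x}_a) - \ln \hat z(\mathbf{y}_k \mid \mathbf{x}_b)| \le \epsilon\,|x_{a,\ell}-x_{b,\ell}|$. Since $\mathbf{x}_a$ and $\mathbf{x}_b$ also differ only in coordinate $\ell$, we have $|x_{a,\ell}-x_{b,\ell}| = d_1(\mathbf{x}_a,\mathbf{x}_b)$, which is exactly the asserted $(\epsilon,d_1)$-Lipschitz bound. I would close by noting that this holds for every $\mathbf{y}_k$, which is all that is required.

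There is no serious obstacle here; the proof is a short computation. The only things to be careful about are bookkeeping points: verifying that both $d_1(\mathbf{x}_i,\mathbf{x}_{i'})$ and $d_1(\mathbf{x}_a,\mathbf{x}_b)$ genuinely reduce to single-coordinate absolute differences (which is immediate from the hypothesis that all other coordinates coincide), and keeping the cancellation of $x_{i',\ell}-x_{i,\ell}$ clean. The one conceptual point worth highlighting is the telescoping identity $(\lambda_a-\lambda_b)(A-B)$: it is precisely this exact cancellation that makes the bound tight and avoids the spurious factor of $2$ that a naive triangle-inequality argument (bounding $\ln\hat z(\mathbf{y}_k\mid\mathbf{x}_a)$ and $\ln\hat z(\mathbf{y}_k\mid\mathbf{x}_b)$ separately against $A$) would incur. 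Convexity of the weights, i.e.\ $\lambda_a,\lambda_b\in[0,1]$, follows from $x_{a,\ell},x_{b,\ell}\in[x_{i,\ell},x_{i',\ell}]$ but is not actually needed for the inequality itself.
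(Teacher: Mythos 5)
Your proposal is correct and follows essentially the same route as the paper's proof: both express the interpolated log-probabilities via the convex weights, exploit the telescoping identity $(\lambda_a-\lambda_b)(A-B)$, and combine $|\lambda_a-\lambda_b|=|x_{a,\ell}-x_{b,\ell}|/(x_{i',\ell}-x_{i,\ell})$ with the anchor-pair Lipschitz bound so that the interval length cancels. No gaps.
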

\begin{proof}
The $\ell$th coordinates of $\mathbf{x}_a$ and $\mathbf{x}_b$ can be expressed as convex combinations of $\mathbf{x}_i$ and $\mathbf{x}_{i'}$, i.e.,
\begin{eqnarray}
x_{a,\ell} &=& \lambda_{\mathbf{x}_i, \mathbf{x}_a}^{\ell} x_{i,\ell} + (1 - \lambda_{\mathbf{x}_i, \mathbf{x}_a}^{\ell})(x_{i,\ell} + \Delta), \\
x_{b,\ell} &=& \lambda_{\mathbf{x}_i, \mathbf{x}_b}^{\ell} x_{i,\ell} + (1 - \lambda_{\mathbf{x}_i, \mathbf{x}_b}^{\ell})(x_{i,\ell} + \Delta).
\end{eqnarray}
Then, their $\ell_1$ distance is:
\begin{equation}
d_1(\mathbf{x}_a, \mathbf{x}_b) = |x_{a,\ell} - x_{b,\ell}| = |\lambda_{\mathbf{x}_i, \mathbf{x}_a}^{\ell} - \lambda_{\mathbf{x}_i, \mathbf{x}_b}^{\ell}| \cdot \Delta.
\end{equation}
From the log-convexity assumption:
\begin{eqnarray}
&& \ln z(\mathbf{y}_k \mid \mathbf{x}_a) 
\\ \nonumber 
&=& \lambda_{\mathbf{x}_i, \mathbf{x}_a}^{\ell} \ln z(\mathbf{y}_k \mid \mathbf{x}_i) + (1 - \lambda_{\mathbf{x}_i, \mathbf{x}_a}^{\ell}) \ln z(\mathbf{y}_k \mid \mathbf{x}_{i'}) \\
&& \ln z(\mathbf{y}_k \mid \mathbf{x}_b) \\ \nonumber
&=& \lambda_{\mathbf{x}_i, \mathbf{x}_b}^{\ell} \ln z(\mathbf{y}_k \mid \mathbf{x}_i) + (1 - \lambda_{\mathbf{x}_i, \mathbf{x}_b}^{\ell}) \ln z(\mathbf{y}_k \mid \mathbf{x}_{i'})
\end{eqnarray}

Subtracting the two expressions:
\begin{eqnarray}
\nonumber && \ln z(\mathbf{y}_k \mid \mathbf{x}_a) - \ln z(\mathbf{y}_k \mid \mathbf{x}_b)
\\ \nonumber
&=& (\lambda_{\mathbf{x}_i, \mathbf{x}_a}^{\ell} - \lambda_{\mathbf{x}_i, \mathbf{x}_b}^{\ell})(\ln z(\mathbf{y}_k \mid \mathbf{x}_i) - \ln z(\mathbf{y}_k \mid \mathbf{x}_{i'})) \\
&=& \frac{d_1(\mathbf{x}_a, \mathbf{x}_b)}{\Delta} \cdot (\ln z(\mathbf{y}_k \mid \mathbf{x}_i) - \ln z(\mathbf{y}_k \mid \mathbf{x}_{i'})).
\end{eqnarray}

Since $z(\cdot \mid \mathbf{x}_i)$ and $z(\cdot \mid \mathbf{x}_{i'})$ satisfy $(\epsilon_\ell, d_1)$-mDP:
\begin{equation}
|\ln z(\mathbf{y}_k \mid \mathbf{x}_i) - \ln z(\mathbf{y}_k \mid \mathbf{x}_{i'})| \leq \epsilon_\ell \cdot \Delta,
\end{equation}
from which we obtain:
\begin{equation}
|\ln z(\mathbf{y}_k \mid \mathbf{x}_a) - \ln z(\mathbf{y}_k \mid \mathbf{x}_b)| \leq \epsilon_\ell \cdot d_1(\mathbf{x}_a, \mathbf{x}_b).
\end{equation}
Hence, the perturbation distributions satisfy $(\epsilon_\ell, d_1)$-Lipschitz bound between $\mathbf{x}_a$ and $\mathbf{x}_b$. The proof is complete.
\end{proof}

\subsection{Proof of Proposition \ref{prop:across}: Across-Interval Validity}
\begin{reproposition}[Across-interval Validity]
Let $\mathbf{x}_i, \mathbf{x}_{i'}, \mathbf{x}_j, \mathbf{x}_{j'}$ be four records that differ only in their $\ell$th coordinate, with $x_{i, \ell} < x_{i', \ell} \leq x_{j, \ell} < x_{j', \ell}$. Suppose that each pair of log-perturbation probabilities $\ln z(\mathbf{y}_k \mid \mathbf{x}_i)$, $\ln z(\mathbf{y}_k \mid \mathbf{x}_{i'})$, $\ln z(\mathbf{y}_k \mid \mathbf{x}_j)$, and $\ln z(\mathbf{y}_k \mid \mathbf{x}_{j'})$ satisfy $(\epsilon, d_1)$-Lipschitz bound.

Let $\mathbf{x}_a$ and $\mathbf{x}_b$ be two additional records that differ from the above four points only in the $\ell$th coordinate, with $x_{a,\ell} \in [x_{i,\ell}, x_{i',\ell}]$ and $x_{b,\ell} \in [x_{j,\ell}, x_{j',\ell}]$. If the corresponding interpolated values $\hat{z}(\mathbf{y}_k \mid \mathbf{x}_a)$ and $\hat{z}(\mathbf{y}_k \mid \mathbf{x}_b)$ are defined via log-convex interpolation as:
\begin{eqnarray}
&& \hat{z}(\mathbf{y}_k \mid \mathbf{x}_a) \lnconv \left(z(\mathbf{y}_k \mid \mathbf{x}_i), z(\mathbf{y}_k \mid \mathbf{x}_{i'})\right) \\
&&  
\hat{z}(\mathbf{y}_k \mid \mathbf{x}_b) \lnconv \left(z(\mathbf{y}_k \mid \mathbf{x}_j), z(\mathbf{y}_k \mid \mathbf{x}_{j'})\right),
\end{eqnarray}
then the pair $\left(\hat{z}(\mathbf{y}_k \mid \mathbf{x}_a), \hat{z}(\mathbf{y}_k \mid \mathbf{x}_b)\right)$ satisfies $(\epsilon, d_1)$-Lipschitz bound between $\mathbf{x}_a$ and $\mathbf{x}_b$. This property is illustrated in Fig.~\ref{fig:property}(b).
\end{reproposition}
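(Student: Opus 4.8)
The plan is to reduce the statement to a purely one-dimensional computation along coordinate $\ell$, express the difference of the two interpolated log-probabilities as a \emph{convex combination} of the four pairwise differences among the anchor log-probabilities, and then show that the induced convex combination of anchor-to-anchor distances telescopes exactly to $d_1(\mathbf{x}_a,\mathbf{x}_b)$. Because all six points involved differ only in coordinate $\ell$, every $d_1$ distance appearing below is simply the corresponding gap between $\ell$th coordinates, so the whole argument is scalar.

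Concretely, abbreviate the $\ell$th coordinates as $s_i < s_{i'} \le s_j < s_{j'}$ and write $u = \ln z(\mathbf{y}_k\mid\mathbf{x}_i)$, $u' = \ln z(\mathbf{y}_k\mid\mathbf{x}_{i'})$, $v = \ln z(\mathbf{y}_k\mid\mathbf{x}_j)$, $v' = \ln z(\mathbf{y}_k\mid\mathbf{x}_{j'})$. By Definition~\ref{def:logconvex_1d}, $\ln\hat{z}(\mathbf{y}_k\mid\mathbf{x}_a) = \lambda u + (1-\lambda)u'$ and $\ln\hat{z}(\mathbf{y}_k\mid\mathbf{x}_b) = \mu v + (1-\mu)v'$, where $\lambda,\mu\in[0,1]$ are the convex weights determined by $s_a = \lambda s_i + (1-\lambda)s_{i'}$ and $s_b = \mu s_j + (1-\mu)s_{j'}$. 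Note $s_a \le s_{i'} \le s_j \le s_b$, so $d_1(\mathbf{x}_a,\mathbf{x}_b) = s_b - s_a \ge 0$. First I would use $\lambda+(1-\lambda)=\mu+(1-\mu)=1$ to expand
\[
\ln\hat{z}(\mathbf{y}_k\mid\mathbf{x}_a) - \ln\hat{z}(\mathbf{y}_k\mid\mathbf{x}_b)
= \lambda\mu(u-v) + \lambda(1-\mu)(u-v') + (1-\lambda)\mu(u'-v) + (1-\lambda)(1-\mu)(u'-v'),
\]
where the four coefficients are nonnegative and sum to $1$. Applying the triangle inequality together with the four assumed pairwise $(\epsilon,d_1)$-Lipschitz bounds — $|u-v|\le\epsilon(s_j-s_i)$, $|u-v'|\le\epsilon(s_{j'}-s_i)$, $|u'-v|\le\epsilon(s_j-s_{i'})$, $|u'-v'|\le\epsilon(s_{j'}-s_{i'})$ — bounds the right-hand side by $\epsilon$ times a convex combination of these four anchor distances.

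The remaining step is the one small piece of bookkeeping: collecting the coefficients of $s_i,s_{i'},s_j,s_{j'}$ in that convex combination shows the positive part equals $\mu s_j + (1-\mu)s_{j'} = s_b$ and the negative part equals $-(\lambda s_i + (1-\lambda)s_{i'}) = -s_a$, so the combination is exactly $s_b - s_a = d_1(\mathbf{x}_a,\mathbf{x}_b)$. Hence $\bigl|\ln\hat{z}(\mathbf{y}_k\mid\mathbf{x}_a) - \ln\hat{z}(\mathbf{y}_k\mid\mathbf{x}_b)\bigr| \le \epsilon\, d_1(\mathbf{x}_a,\mathbf{x}_b)$, which is the claimed bound.

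I do not expect a genuine obstacle here; the content is that a bilinear-style interpolation of log-probabilities over the two intervals, paired coordinate-wise, cannot amplify distances because the anchor gaps recombine convexly and telescope. The two points to be careful about are (i) that the ordering $s_i < s_{i'} \le s_j < s_{j'}$ is used precisely to guarantee nonnegativity of each anchor gap and of $s_b-s_a$, so the Lipschitz bounds are applied with the correct signs; and (ii) that all four cross-pair bounds are invoked, not only the extreme ones — the hypothesis that every pair among the four anchors satisfies the bound covers this. An alternative, slightly longer route would chain Proposition~\ref{prop:intral} (after checking that the $(\mathbf{x}_i,\mathbf{x}_{i'})$ and $(\mathbf{x}_j,\mathbf{x}_{j'})$ interpolants agree consistently on the overlap/endpoints), but the direct convex-combination computation above is cleaner and self-contained.
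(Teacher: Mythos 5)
Your proof is correct, and it takes a genuinely different decomposition from the paper's. The paper telescopes $\ln\hat{z}(\mathbf{y}_k\mid\mathbf{x}_a)-\ln\hat{z}(\mathbf{y}_k\mid\mathbf{x}_b)$ through the two inner anchors, splitting it into three segments $(\mathbf{x}_a\!\to\!\mathbf{x}_{i'})$, $(\mathbf{x}_{i'}\!\to\!\mathbf{x}_j)$, $(\mathbf{x}_j\!\to\!\mathbf{x}_b)$, bounding each by $\epsilon_\ell$ times the corresponding coordinate gap and using the ordering to collapse the sum of gaps to $|x_{b,\ell}-x_{a,\ell}|$; the outer two segments are interpolated-point-to-anchor bounds, i.e.\ instances of the intra-interval result (Proposition~\ref{prop:intral}) applied with one endpoint of the interval. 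You instead expand the difference of the two interpolants bilinearly into a convex combination of the four anchor-to-anchor differences with weights $\lambda\mu,\ \lambda(1-\mu),\ (1-\lambda)\mu,\ (1-\lambda)(1-\mu)$, invoke only the four assumed anchor-pair Lipschitz bounds, and verify that the weighted anchor gaps recombine exactly to $s_b-s_a$. Both arguments hinge on the same telescoping phenomenon enabled by the ordering $s_i<s_{i'}\le s_j<s_{j'}$, and your coefficient bookkeeping checks out ($\lambda\mu+(1-\lambda)\mu=\mu$, etc., giving $\mu s_j+(1-\mu)s_{j'}-\lambda s_i-(1-\lambda)s_{i'}=s_b-s_a$). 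What your route buys is self-containedness: it never needs a bound between an interpolated point and an anchor, so it does not lean on Proposition~\ref{prop:intral} even implicitly, at the cost of carrying four terms instead of three. The paper's route is slightly shorter but, as written, justifies the outer segments only by the loose phrase that ``all involved distributions'' satisfy the Lipschitz bound, which your version avoids.
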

\begin{proof}
Since both $z(\mathbf{y}_k|\mathbf{x}_{a})$ and $z(\mathbf{y}_k|\mathbf{x}_{b})$ satisfy the logarithmic convexity condition in Eq.~(\ref{eq:logconvex})
\begin{eqnarray}
&& \ln z(\mathbf{y}_k|\mathbf{x}_{a}) \\
&=& \lambda_{\mathbf{x}_i, \mathbf{x}_a}^{\ell} \ln z(\mathbf{y}_k|\mathbf{x}_{i}) + (1- \lambda_{\mathbf{x}_i, \mathbf{x}_a}^{\ell}) \ln z(\mathbf{y}_k|\mathbf{x}_{i'}).  \\
&& \ln z(\mathbf{y}_k|\mathbf{x}_{b}) \\
&=& \lambda_{\mathbf{x}_j, \mathbf{x}_b}^{\ell} \ln z(\mathbf{y}_k|\mathbf{x}_{j}) + (1- \lambda_{\mathbf{x}_j, \mathbf{x}_b}^{\ell}) \ln z(\mathbf{y}_k|\mathbf{x}_{j'}).
\end{eqnarray}
We decompose the log-ratio between $z(\mathbf{y}_k \mid \mathbf{x}_a)$ and $z(\mathbf{y}_k \mid \mathbf{x}_b)$ into three additive segments:
\begin{eqnarray}
&& \ln z(\mathbf{y}_k|\mathbf{x}_{a}) - \ln z(\mathbf{y}_k|\mathbf{x}_{b}) \\ \nonumber 
&=& \ln z(\mathbf{y}_k|\mathbf{x}_{a}) - \underbrace{\ln z(\mathbf{y}_k|\mathbf{x}_{i'}) + \ln z(\mathbf{y}_k|\mathbf{x}_{i'})}_{=0} \\
&-& \underbrace{\ln z(\mathbf{y}_k|\mathbf{x}_{j})+\ln z(\mathbf{y}_k|\mathbf{x}_{j})}_{=0}- \ln z(\mathbf{y}_k|\mathbf{x}_{b}) \\
&=& \underbrace{\ln z(\mathbf{y}_k|\mathbf{x}_{a}) - \ln z(\mathbf{y}_k|\mathbf{x}_{i'})}_{\small \mathrm{Component}_1} \\ 
&+&  \underbrace{\ln z(\mathbf{y}_k|\mathbf{x}_{i'}) - \ln z(\mathbf{y}_k|\mathbf{x}_{j})}_{\mathrm{Component}_2} \\
&+&\underbrace{\ln z(\mathbf{y}_k|\mathbf{x}_{j})- \ln z(\mathbf{y}_k|\mathbf{x}_{b})}_{\mathrm{Component}_3}  
\end{eqnarray} 
Using the fact that all involved distributions satisfy $(\epsilon_\ell, d_1)$-Lipschitz bound, we apply distance-based upper bounds:
\begin{eqnarray}
|\mathrm{Component}_1| 
&=& |\ln z(\mathbf{y}_k \mid \mathbf{x}_a) - \ln z(\mathbf{y}_k \mid \mathbf{x}_{i'})| \\
&\leq& \epsilon_\ell \cdot |x_{i', \ell} - x_{a, \ell}|, \\
|\mathrm{Component}_2| 
&=& 
|\ln z(\mathbf{y}_k \mid \mathbf{x}_{i'}) - \ln z(\mathbf{y}_k \mid \mathbf{x}_{j})| \\
&\leq& \epsilon_\ell \cdot |x_{j, \ell} - x_{i', \ell}|, \\
|\mathrm{Component}_3| 
&=& 
|\ln z(\mathbf{y}_k \mid \mathbf{x}_{j}) - \ln z(\mathbf{y}_k \mid \mathbf{x}_b)| \\
&\leq& \epsilon_\ell \cdot |x_{b, \ell} - x_{j, \ell}|.
\end{eqnarray}
Summing these bounds:
\begin{eqnarray}
&& |\ln z(\mathbf{y}_k \mid \mathbf{x}_a) - \ln z(\mathbf{y}_k \mid \mathbf{x}_b)| \\ \nonumber 
&\leq& \epsilon_\ell \cdot \big(|x_{i', \ell} - x_{a, \ell}| + |x_{j, \ell} - x_{i', \ell}| + |x_{b, \ell} - x_{j, \ell}|\big) \\
&=& \epsilon_\ell \cdot |x_{b, \ell} - x_{a, \ell}| \\
&=& \epsilon_\ell \cdot d_1(\mathbf{x}_a, \mathbf{x}_b).
\end{eqnarray}
Thus, $z(\mathbf{y}_k \mid \mathbf{x}_a)$ and $z(\mathbf{y}_k \mid \mathbf{x}_b)$ satisfy $(\epsilon_\ell, d_1)$-Lipschitz bound, as required. The proof is completed. 
\end{proof}

\subsection{Proof of Theorem \ref{thm:composition}: Dimension-wise Composition for Lipschitz Bound Condition}
\label{subsec:proof:thm:composition}

\begin{reptheorem}
[\textbf{Dimension-wise Composition for Lipschitz Bound Condition}]
\label{thm:composition}
Let $f:\mathcal X\to\mathbb R$ be a mechanism that interpolates values in an $N$-dimensional space. Suppose that for each $\ell \in \{1, \dots, N\}$, $f$ satisfies $(\epsilon_\ell, d_1)$-Lipschitz bound when the input records differ only in the $\ell$th coordinate. If the parameters $\epsilon_1, \dots, \epsilon_N$ satisfy the following budget composition condition:
\begin{eqnarray}
\label{eq:budgetcompo1_}
\textstyle
&& \sum_{\ell=1}^{N} \epsilon_\ell^{\frac{p}{p-1}} \leq \epsilon^{\frac{p}{p-1}}, \quad \text{for } p > 1, 
\\  \label{eq:budgetcompo2_}
\text{and} && 
\max_{\ell} \epsilon_\ell \leq \epsilon, \quad \text{for } p = 1,
\end{eqnarray}
then $f$ is $(\epsilon, d_p)$-Lipschitz continuous.
\end{reptheorem}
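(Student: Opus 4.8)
The plan is to reduce the $N$-dimensional Lipschitz claim to a one-dimensional telescoping argument followed by a single application of Hölder's inequality, essentially as the proof sketch indicates. First I would fix arbitrary $\mathbf{x}_a,\mathbf{x}_b\in\mathcal{X}$, set $\boldsymbol{\Delta}=\mathbf{x}_b-\mathbf{x}_a=[\Delta_1,\dots,\Delta_N]$, and construct the axis-aligned path $\mathbf{x}^{(0)}=\mathbf{x}_a$, $\mathbf{x}^{(\ell)}=\mathbf{x}^{(\ell-1)}+\Delta_\ell\mathbf{e}_\ell$, so that $\mathbf{x}^{(N)}=\mathbf{x}_b$. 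At this point I would record the mild structural hypothesis that $\mathcal{X}$ is coordinate-convex (e.g.\ a product of intervals / an $N$-orthotope), which guarantees every intermediate vertex $\mathbf{x}^{(\ell)}$ lies in $\mathcal{X}$; this is automatically satisfied by the cells $\mathcal{C}_m$ introduced in Section~\ref{subsec:partitioning}. Since $\mathbf{x}^{(\ell-1)}$ and $\mathbf{x}^{(\ell)}$ differ only in coordinate $\ell$ with $d_1(\mathbf{x}^{(\ell-1)},\mathbf{x}^{(\ell)})=|\Delta_\ell|$, the coordinate-wise hypothesis yields $|f(\mathbf{x}^{(\ell)})-f(\mathbf{x}^{(\ell-1)})|\le\epsilon_\ell|\Delta_\ell|$.

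Second, I would chain these single-step bounds through the triangle inequality to obtain $|f(\mathbf{x}_b)-f(\mathbf{x}_a)|\le\sum_{\ell=1}^N|f(\mathbf{x}^{(\ell)})-f(\mathbf{x}^{(\ell-1)})|\le\sum_{\ell=1}^N\epsilon_\ell|\Delta_\ell|$. The remaining work is purely to show $\sum_{\ell}\epsilon_\ell|\Delta_\ell|\le\epsilon\,d_p(\mathbf{x}_a,\mathbf{x}_b)$. For $p\in(1,\infty)$, with Hölder conjugate $q=p/(p-1)$, Hölder's inequality gives $\sum_\ell\epsilon_\ell|\Delta_\ell|\le\|\boldsymbol{\epsilon}\|_q\,\|\boldsymbol{\Delta}\|_p$; here $\|\boldsymbol{\Delta}\|_p=d_p(\mathbf{x}_a,\mathbf{x}_b)$ by definition of $d_p$, and $\|\boldsymbol{\epsilon}\|_q=\bigl(\sum_\ell\epsilon_\ell^{p/(p-1)}\bigr)^{(p-1)/p}\le\epsilon$ is exactly the budget condition~\eqref{eq:budgetcompo1}. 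Combining gives $|f(\mathbf{x}_b)-f(\mathbf{x}_a)|\le\epsilon\,d_p(\mathbf{x}_a,\mathbf{x}_b)$, and since $\mathbf{x}_a,\mathbf{x}_b$ were arbitrary, $f$ is $(\epsilon,d_p)$-Lipschitz continuous.

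Third, I would dispatch the two endpoint cases separately. For $p=1$ the conjugate exponent is $q=\infty$, so $\sum_\ell\epsilon_\ell|\Delta_\ell|\le(\max_\ell\epsilon_\ell)\sum_\ell|\Delta_\ell|\le\epsilon\,d_1(\mathbf{x}_a,\mathbf{x}_b)$ using~\eqref{eq:budgetcompo2}. For $p=\infty$ (conjugate $q=1$), the condition becomes $\sum_\ell\epsilon_\ell\le\epsilon$ and $\|\boldsymbol{\Delta}\|_\infty=d_\infty(\mathbf{x}_a,\mathbf{x}_b)$, which again closes the bound. I would also note in passing that Hölder is tight (equality is attainable by aligning $\boldsymbol{\Delta}$ with the appropriate power of $\boldsymbol{\epsilon}$), so the composition rule cannot be improved in general.

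The telescoping step and the algebra of Hölder exponents are entirely routine; the only point requiring care is ensuring the intermediate path points $\mathbf{x}^{(\ell)}$ are admissible inputs to $f$, which is why I would make the coordinate-convexity assumption on $\mathcal{X}$ explicit rather than leave it implicit. Beyond that, the main ``obstacle'' is simply bookkeeping the $p=1$ and $p=\infty$ boundary cases consistently with the two forms of the budget condition stated in~\eqref{eq:budgetcompo1}--\eqref{eq:budgetcompo2}.
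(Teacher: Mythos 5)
Your proposal is correct and follows essentially the same route as the paper's proof in Appendix~\ref{subsec:proof:thm:composition}: an axis-aligned telescoping path, per-coordinate $(\epsilon_\ell,d_1)$ bounds summed via the triangle inequality, and a single application of H\"older's inequality with dual exponent $q=p/(p-1)$, including the same treatment of the $p=1$ and $p=\infty$ endpoints. Your only addition is making explicit the coordinate-convexity of $\mathcal X$ needed so that the intermediate path points are admissible, which the paper leaves implicit; this is a welcome clarification but not a different argument.
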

\begin{proof}
For any pair $\mathbf{x}_a,\mathbf{x}_b\in\mathcal{X}$, we write the coordinate difference as $\Delta=\mathbf{x}'-\mathbf{x}$ with components
$\Delta_\ell = x'_\ell - x_\ell$ for $\ell\in\{1,\dots,N\}$, and let $\mathbf{e}_\ell$ denote the
$\ell$th standard basis vector. Consider the axis-aligned path that changes one coordinate at a time:
\begin{equation}
\mathbf{x}^{(0)} := \mathbf{x}, ~
\mathbf{x}^{(\ell)} := \mathbf{x}^{(\ell-1)} + \Delta_{\ell}\mathbf{e}_{\ell}
\quad\text{for } \ell=1,\dots,N,  
\end{equation}
i.e., at the $\ell$th step only coordinate $\ell$ changes while all other coordinates are held fixed at
their current values.

Given that the mechanism satisfies the
$(\epsilon_{\ell},d_1)$-Lipschitz bound along each coordinate, for each $\ell$ and for any $\mathbf{y}_k \in \mathcal{Y}$,
\begin{equation}\label{eq:onedim-step}
|\ln z(\mathbf{y}_k|\mathbf{x}^{(\ell)}) - \ln z(\mathbf{y}_k|\mathbf{x}^{(\ell-1)})
| \leq \epsilon_{\ell}|\Delta_{\ell}|
.
\end{equation}
Iterating \eqref{eq:onedim-step} along the $N$ steps and telescoping yields
\begin{eqnarray}
\nonumber 
&&\left|\ln z(\mathbf{y}_k|\mathbf{x}') - \ln z(\mathbf{y}_k|\mathbf{x})\right| \\
&\leq& \sum_{\ell = 1}^N \left|\ln z(\mathbf{y}_k|\mathbf{x}^{(\ell)}) - \ln z(\mathbf{y}_k|\mathbf{x}^{(\ell-1)})\right| \nonumber \\ \label{eq:sum-bound}
&\leq& \sum_{\ell=1}^N \epsilon_\ell|\Delta_\ell|.
\end{eqnarray}

Given $p\in[1,\infty]$, we let $q$ be its Hölder dual ($1/p+1/q=1$). Define the vectors $\boldsymbol{\epsilon}=(\epsilon_1,\dots,\epsilon_N)$ and $\Delta=(\Delta_1,\dots,\Delta_N)$. 
By Hölder’s inequality,
\begin{equation}
\label{eq:holder}
\sum_{\ell=1}^N \epsilon_\ell|\Delta_\ell|
\leq  \|\boldsymbol{\epsilon}\|_{q}\;\|\Delta\|_{p}.
\end{equation}
Note that according to Eq. (\ref{eq:budgetcompo1}), and $q = \frac{p}{p-1}$
\begin{eqnarray}
\sum_{\ell=1}^{N} \epsilon_\ell^{\frac{p}{p-1}} \leq \epsilon^{\frac{p}{p-1}}
&\Rightarrow& \sum_{\ell=1}^{N} \epsilon_\ell^{q} \leq \epsilon^{q} \\
&\Rightarrow&  \|\boldsymbol{\epsilon}\|_{q} \leq \epsilon,
\end{eqnarray}
and by definition 
$d_p(\mathbf{x}_a,\mathbf{x}_b) := \|\mathbf{x}_b-\mathbf{x}_a\|_{p} = \|\Delta\|_{p}$.
Therefore, we can obtain that 
\begin{eqnarray}
&& \left|\ln z(\mathbf{y}_k|\mathbf{x}') - \ln z(\mathbf{y}_k|\mathbf{x})\right| \\
&\leq& \sum_{\ell=1}^N \epsilon_\ell|\Delta_\ell| \quad \text{(according to Eq. (\ref{eq:sum-bound}))} \\
&\leq & \|\boldsymbol{\epsilon}\|_{q}\;\|\Delta\|_{p} \quad \text{(according to Eq. (\ref{eq:holder}))} \\
&\leq & \epsilon d_p(\mathbf{x}_a,\mathbf{x}_b)
\end{eqnarray}
which is precisely the $(\epsilon,d_p)$-Lipschitz bound between $\mathbf{x}$ and $\mathbf{x}'$.

\paragraph{Remarks on edge cases.}
For $p=1$ (so $q=\infty$), the bound reduces to
$\sum_{\ell}\epsilon_\ell|\Delta_\ell|\le \|\boldsymbol{\epsilon}\|_{\infty}\|\Delta\|_{1}$.
For $p=\infty$ (so $q=1$), it becomes
$\sum_{\ell}\epsilon_\ell|\Delta_\ell|\le \|\boldsymbol{\epsilon}\|_{1}\|\Delta\|_{\infty}$.
Both cases are encompassed by \eqref{eq:holder} and thus by the argument above.

This establishes that coordinating the per-dimension budgets
$\{\epsilon_\ell\}_{\ell=1}^N$ via $\|\boldsymbol{\epsilon}\|_q \leq \epsilon$ yields a mechanism
that satisfies the global $(\epsilon,d_p)$-Lipschitz bound on $\mathcal{X}$.
\end{proof}

\DEL{
\subsection{Proof of Lemma \ref{lem:convex}: Convex Combination Preservation}

\begin{lemma}[Convex combination preservation]
\label{lem:convex}
Consider any two points $\mathbf{x}_{a}$ and $\mathbf{x}_{b}$ that differ from only in the $\ell$th coordinate. Suppose that for each $j = 1, \ldots, L$, the pair of perturbation probabilities  $\left(z^{(j)}(\mathbf{y}_{k}|\mathbf{x}_{a}), z^{(j)}(\mathbf{y}_{k}|\mathbf{x}_{b})\right)$ satisfies $(\epsilon_\ell, d_1)$-mDP. Then, for any non-negative weights $\lambda^{(1)}, \ldots, \lambda^{(L)}$ such that $\sum_{j=1}^L \lambda^{(j)} = 1$, the convex combinations  
$\sum_{j=1}^L \lambda^{(j)} z^{(j)}(\mathbf{y}_{k}|\mathbf{x}_{a})$ 
and $\sum_{j=1}^L \lambda^{(j)} z^{(j)}(\mathbf{y}_{k}|\mathbf{x}_{b})$ also satisfy $(\epsilon_\ell, d_1)$-mDP.
\end{lemma}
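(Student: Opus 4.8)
The plan is to invoke the weighted-mediant (superdiagonal) inequality: the $(\epsilon_\ell,d_1)$-mDP condition on each pair $\bigl(z^{(j)}(\mathbf{y}_k\mid\mathbf{x}_a),\,z^{(j)}(\mathbf{y}_k\mid\mathbf{x}_b)\bigr)$ is equivalent to a two-sided multiplicative sandwich, and since the mixing weights are nonnegative, such a sandwich is preserved when one forms a convex combination of the numerators and of the denominators separately. This is the same mechanism that underlies ordinary sequential/convex composition arguments for DP, specialized to the metric setting along a single coordinate.

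First I would fix the output $\mathbf{y}_k$ and abbreviate $D := d_1(\mathbf{x}_a,\mathbf{x}_b)$, $a_j := z^{(j)}(\mathbf{y}_k\mid\mathbf{x}_a)$, and $b_j := z^{(j)}(\mathbf{y}_k\mid\mathbf{x}_b)$. The hypothesis $|\ln a_j-\ln b_j|\le\epsilon_\ell D$ is equivalent to
$$ e^{-\epsilon_\ell D}\, b_j \;\le\; a_j \;\le\; e^{\epsilon_\ell D}\, b_j, \qquad j=1,\dots,L, $$
which holds whenever the probabilities are positive; if $b_j=0$ the same bound forces $a_j=0$ and the sandwich still holds trivially. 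Next I would multiply each inequality by $\lambda^{(j)}\ge 0$ and sum over $j$, using monotonicity of finite summation, to obtain
$$ e^{-\epsilon_\ell D}\sum_{j=1}^L \lambda^{(j)} b_j \;\le\; \sum_{j=1}^L \lambda^{(j)} a_j \;\le\; e^{\epsilon_\ell D}\sum_{j=1}^L \lambda^{(j)} b_j. $$
Writing $A := \sum_j \lambda^{(j)} a_j$ and $B := \sum_j \lambda^{(j)} b_j$, this reads $e^{-\epsilon_\ell D} B \le A \le e^{\epsilon_\ell D} B$, i.e.\ $|\ln A - \ln B|\le \epsilon_\ell D = \epsilon_\ell d_1(\mathbf{x}_a,\mathbf{x}_b)$, which is exactly the claimed $(\epsilon_\ell,d_1)$-mDP bound for the mixture distributions. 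Since $\mathbf{y}_k$ was arbitrary, the bound holds for every output.

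I do not anticipate a genuine obstacle: the argument is a one-line monotonicity step once the mDP condition is put into multiplicative form, and it is the reason we phrase the hypothesis as a bound on pairs rather than on distributions individually. The only points needing a sentence of care are (i) the degenerate case $\mathbf{x}_a=\mathbf{x}_b$, where $D=0$ forces $a_j=b_j$ for every $j$ and hence $A=B$; and (ii) zero-probability entries, handled above. If one additionally wants the mixture to remain a valid perturbation mechanism, the normalization $\sum_{\mathbf{y}_k}\sum_j \lambda^{(j)} z^{(j)}(\mathbf{y}_k\mid\mathbf{x}) = \sum_j \lambda^{(j)} \sum_{\mathbf{y}_k} z^{(j)}(\mathbf{y}_k\mid\mathbf{x}) = \sum_j \lambda^{(j)} = 1$ follows by swapping the two finite sums, so the convex combination is again a legitimate $(\epsilon_\ell,d_1)$-mDP mechanism along coordinate $\ell$.
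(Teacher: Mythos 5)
Your proof is correct. The paper itself never proves this lemma --- it appears only in a commented-out draft block (inside a \texttt{DEL} macro) with no accompanying argument --- so there is no authorial proof to compare against; your derivation stands on its own. The mechanism you use (rewrite the $(\epsilon_\ell,d_1)$ log-bound as the two-sided sandwich $e^{-\epsilon_\ell D} b_j \le a_j \le e^{\epsilon_\ell D} b_j$, multiply by the nonnegative weights, and sum) is exactly the mediant-type monotonicity step the paper does employ elsewhere, e.g.\ in the proof of Proposition~\ref{prop:AIPOfeasible}, where the same multiplicative bounds are summed over all outputs to control the normalizer; your handling of the zero-probability and $\mathbf{x}_a=\mathbf{x}_b$ edge cases, and your observation that the identical weights must be applied on both sides, are the only points of care, and you address them correctly.
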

\begin{proposition}
\label{lem:convex2}
Let $\mathbf{x}_{a}, x_{b, \ell}, x_{c, \ell}$, and $x_{d, \ell}$ be the coordinates of $\mathbf{x}_a, \mathbf{x}_b, \mathbf{x}_c$, and $\mathbf{x}_d$ in the $\ell$th dimension such that $\mathbf{x}_{a} < x_{b, \ell} \leq x_{c, \ell} < x_{d, \ell}$, with their perturbation probabilities $z(\mathbf{y}_k|\mathbf{x}_{a})$, $z(\mathbf{y}_k|\mathbf{x}_{b})$, $z(x_{c, \ell}, \mathbf{y}_k)$, and $z(x_{d, \ell}, \mathbf{y}_k)$ satisfying $(\epsilon_\ell, d_{1})$-mDP. 

For any $x_{e, \ell} \in [\mathbf{x}_{a}, x_{b, \ell}]$ and  $x_{f, \ell} \in [x_{c, \ell}, x_{d, \ell}]$ expressible as convex combinations
\begin{equation}
x_{e, \ell} = \lambda_{e} \mathbf{x}_{a} + (1 - \lambda_{e}) x_{b, \ell}, \quad 
x_{f, \ell} = \lambda_{f} x_{c, \ell} + (1 - \lambda_{f}) x_{d, \ell},
\end{equation}
where $\lambda_{e}, \lambda_{f} \in [0,1]$, suppose the perturbation probabilities $z(x_{e, \ell}, \mathbf{y}_k)$ and $z(x_{f, \ell}, \mathbf{y}_k)$ satisfy the logarithmic convexity condition   
\begin{eqnarray}
\ln z(x_{e, \ell}, \mathbf{y}_k) &=& \lambda_{e} \ln z(\mathbf{y}_k|\mathbf{x}_{a}) +  (1- \lambda_{e})\ln z(\mathbf{y}_k|\mathbf{x}_{b}), \\
\ln z(x_{f, \ell}, \mathbf{y}_k) &=& \lambda_{f} \ln z(x_{c, \ell}, \mathbf{y}_k) + (1- \lambda_{f}) \ln z(x_{d, \ell}, \mathbf{y}_k), 
\end{eqnarray}
then $z(x_{e, \ell}, \mathbf{y}_k)$ and $z(x_{f, \ell}, \mathbf{y}_k)$ satisfy $(\epsilon_\ell, d_{1})$-mDP.
\end{proposition}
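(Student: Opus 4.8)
The plan is to reduce this to the same telescoping argument that establishes Proposition~\ref{prop:across}, exploiting that the two interpolation intervals $[x_{a,\ell},x_{b,\ell}]$ and $[x_{c,\ell},x_{d,\ell}]$ are separated by the inner anchors $\mathbf{x}_b$ and $\mathbf{x}_c$ --- the points of the two intervals that lie closest to one another. Write $\mathbf{x}_e$ and $\mathbf{x}_f$ for the records whose $\ell$th coordinates are $x_{e,\ell}$ and $x_{f,\ell}$ and which agree with the four base records in every other coordinate, so that all relevant $d_1$-distances collapse to absolute differences of $\ell$th coordinates.

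First I would record the elementary consequence of the log-convexity hypothesis: since $\ln z(\mathbf{y}_k\mid\mathbf{x}_e)$ is the convex combination $\lambda_e\ln z(\mathbf{y}_k\mid\mathbf{x}_a)+(1-\lambda_e)\ln z(\mathbf{y}_k\mid\mathbf{x}_b)$, subtracting $\ln z(\mathbf{y}_k\mid\mathbf{x}_b)$ gives
\[
\ln z(\mathbf{y}_k\mid\mathbf{x}_e)-\ln z(\mathbf{y}_k\mid\mathbf{x}_b)=\lambda_e\big(\ln z(\mathbf{y}_k\mid\mathbf{x}_a)-\ln z(\mathbf{y}_k\mid\mathbf{x}_b)\big),
\]
while the coordinate identity $x_{b,\ell}-x_{e,\ell}=\lambda_e(x_{b,\ell}-x_{a,\ell})$ holds by construction. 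Combining these with the assumed $(\epsilon_\ell,d_1)$-mDP bound on the pair $(\mathbf{x}_a,\mathbf{x}_b)$ yields $|\ln z(\mathbf{y}_k\mid\mathbf{x}_e)-\ln z(\mathbf{y}_k\mid\mathbf{x}_b)|\le\epsilon_\ell|x_{e,\ell}-x_{b,\ell}|$ --- that is, the interpolated value already satisfies the one-dimensional Lipschitz bound relative to the nearer endpoint of its own interval (exactly Proposition~\ref{prop:intral} specialized to a degenerate interpolation at that endpoint). The mirror-image statement holds for $\mathbf{x}_f$ and $\mathbf{x}_c$, using weight $1-\lambda_f$ and the pair $(\mathbf{x}_c,\mathbf{x}_d)$.

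Second I would split the target log-ratio across three hops through the inner anchors,
\[
\ln z(\mathbf{y}_k\mid\mathbf{x}_e)-\ln z(\mathbf{y}_k\mid\mathbf{x}_f)=\big[\ln z(\mathbf{y}_k\mid\mathbf{x}_e)-\ln z(\mathbf{y}_k\mid\mathbf{x}_b)\big]+\big[\ln z(\mathbf{y}_k\mid\mathbf{x}_b)-\ln z(\mathbf{y}_k\mid\mathbf{x}_c)\big]+\big[\ln z(\mathbf{y}_k\mid\mathbf{x}_c)-\ln z(\mathbf{y}_k\mid\mathbf{x}_f)\big],
\]
bound the first and third brackets by $\epsilon_\ell|x_{e,\ell}-x_{b,\ell}|$ and $\epsilon_\ell|x_{c,\ell}-x_{f,\ell}|$ via the first step, and the middle one by $\epsilon_\ell|x_{b,\ell}-x_{c,\ell}|$ via the assumed mDP bound on $(\mathbf{x}_b,\mathbf{x}_c)$. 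A triangle inequality then gives $|\ln z(\mathbf{y}_k\mid\mathbf{x}_e)-\ln z(\mathbf{y}_k\mid\mathbf{x}_f)|\le\epsilon_\ell\big(|x_{e,\ell}-x_{b,\ell}|+|x_{b,\ell}-x_{c,\ell}|+|x_{c,\ell}-x_{f,\ell}|\big)$. Finally, invoking the ordering $x_{e,\ell}\le x_{b,\ell}\le x_{c,\ell}\le x_{f,\ell}$ --- immediate from $x_{e,\ell}\in[x_{a,\ell},x_{b,\ell}]$, $x_{b,\ell}\le x_{c,\ell}$, and $x_{f,\ell}\in[x_{c,\ell},x_{d,\ell}]$ --- the three terms telescope to $x_{f,\ell}-x_{e,\ell}=d_1(\mathbf{x}_e,\mathbf{x}_f)$, which is precisely the claimed $(\epsilon_\ell,d_1)$-mDP bound between $\mathbf{x}_e$ and $\mathbf{x}_f$; since $\mathbf{y}_k\in\mathcal{Y}$ was arbitrary, this completes the argument.

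I do not expect a genuine obstacle here: the argument is a routine telescoping-plus-triangle-inequality computation, structurally identical to Proposition~\ref{prop:across} with its anchor quadruple replaced by $(\mathbf{x}_a,\mathbf{x}_b,\mathbf{x}_c,\mathbf{x}_d)$. The only step meriting attention is the first --- that an \emph{interpolated} value, not merely an anchor, already obeys the per-coordinate Lipschitz bound relative to its interval endpoint --- but this is exactly intra-interval validity (Proposition~\ref{prop:intral}) and in any case drops straight out of the convex-combination identity above. A secondary, purely cosmetic point: the statement writes $\mathbf{x}_a$ for the scalar $x_{a,\ell}$, but since the proof manipulates only $\ell$th coordinates this causes no difficulty.
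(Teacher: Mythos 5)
Your proof is correct and follows essentially the same route as the paper's own argument for this statement (the across-interval validity proof in Appendix B.2): split the log-ratio into three hops through the inner anchors, bound each hop by $\epsilon_\ell$ times the corresponding coordinate gap, and let the ordering telescope the distances. If anything, your first step is slightly more careful than the paper's, which simply asserts the endpoint bounds for the interpolated points, whereas you derive them explicitly from the convex-combination identity.
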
}

\subsection{Proof of Theorem \ref{thm:AIPOfeasible}: Correctness of Interpolation Function}
\label{prop:thm:AIPOfeasible}
\begin{reptheorem}
[Correctness of Log-Convex Interpolation $f_{\mathrm{int}}$]
\label{thm:AIPOfeasible}
Given that $(\epsilon_\ell, d_1)$-Lipschitz bound holds between each pair of $\ell$-neighbors in $\hat{\mathcal{X}}$ and $\{\epsilon_\ell\}_{\ell=1}^N$ satisfy the privacy budget composition condition formalized in Eq. (\ref{eq:budgetcompo1})(\ref{eq:budgetcompo2}), the use of the interpolation function $f_{\mathrm{int}}$ (defined by Eq.~(\ref{eq:PPI})) guarantees that any two interpolated values within the entire secret data domain $\mathcal{X}$ satisfy $(\epsilon, d_p)$-Lipschitz bound.
\end{reptheorem}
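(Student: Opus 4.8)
The plan is to follow the two-stage strategy of the sketch: reduce the $N$-dimensional claim to a family of one-dimensional Lipschitz bounds, one per coordinate, and then invoke the dimension-wise composition theorem. First I would fix an output $\mathbf{y}_k$ and regard $g(\mathbf{x}) := \ln f_{\mathrm{int}}(\mathbf{x}, \mathbf{y}_k, \mathbf{Z}_{\hat{\mathcal{X}}_{m(\mathbf{x})}})$ as a scalar function on all of $\mathcal{X}$, where $m(\mathbf{x})$ indexes the cell containing $\mathbf{x}$. The target of the reduction is: for each coordinate $\ell$ and every $\mathbf{x}_a, \mathbf{x}_b \in \mathcal{X}$ that agree in all coordinates except the $\ell$-th, one has $|g(\mathbf{x}_a) - g(\mathbf{x}_b)| \le \epsilon_\ell\,|x_{a,\ell} - x_{b,\ell}| = \epsilon_\ell\, d_1(\mathbf{x}_a, \mathbf{x}_b)$. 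Granting this, I would apply Theorem~\ref{thm:composition} to $g$: its per-coordinate hypothesis is exactly what the reduction supplies, and its budget hypothesis is the assumed condition in Eqs.~\eqref{eq:budgetcompo1}–\eqref{eq:budgetcompo2}, so the theorem yields $(\epsilon, d_p)$-Lipschitz continuity of $g$ on $\mathcal{X}$; since $\mathbf{y}_k$ was arbitrary, this is precisely the asserted Lipschitz bound between any two interpolated values.

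\textbf{Establishing the per-coordinate bound (intra-cell).} The core of the argument is the case where $\mathbf{x}_a, \mathbf{x}_b$ lie in one cell $\mathcal{C}_m$. I would split the $2^N$-term sum in Eq.~\eqref{eq:logconvex_nd_} on the bit $\gamma_\ell$, writing $\boldsymbol{\gamma} = (\gamma_\ell, \boldsymbol{\gamma}_{-\ell})$ and $w(\boldsymbol{\gamma}) = \big[(1-\gamma_\ell)\lambda^\ell + \gamma_\ell(1-\lambda^\ell)\big]\,w'(\boldsymbol{\gamma}_{-\ell})$ with $w'(\boldsymbol{\gamma}_{-\ell}) = \prod_{\ell'\ne\ell}\big[(1-\gamma_{\ell'})\lambda^{\ell'} + \gamma_{\ell'}(1-\lambda^{\ell'})\big]$. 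This exhibits $g(\mathbf{x}_a)$ as a $w'$-weighted average, over $\boldsymbol{\gamma}_{-\ell} \in \{0,1\}^{N-1}$, of one-dimensional log-convex interpolations along coordinate $\ell$ between the anchor pair $(A_0(\boldsymbol{\gamma}_{-\ell}), A_1(\boldsymbol{\gamma}_{-\ell}))$ obtained by fixing the remaining bits to $\boldsymbol{\gamma}_{-\ell}$. Since $\mathbf{x}_a$ and $\mathbf{x}_b$ share all coordinates except $\ell$, the weights $w'$ are identical for both and only the along-$\ell$ weight $\lambda^\ell$ changes. Two routine facts close this case: (i) $\{w'(\boldsymbol{\gamma}_{-\ell})\}$ is a partition of unity (each two-term coordinate factor sums to $1$, so the product does too); and (ii) each pair $(A_0(\boldsymbol{\gamma}_{-\ell}), A_1(\boldsymbol{\gamma}_{-\ell}))$ consists of $\ell$-neighbor anchors, hence satisfies $|A_0 - A_1| \le \epsilon_\ell \Delta_\ell$ by hypothesis. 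Feeding these into Proposition~\ref{prop:intral} (applied separately for each $\boldsymbol{\gamma}_{-\ell}$) and averaging with the weights $w'$ gives $|g(\mathbf{x}_a) - g(\mathbf{x}_b)| \le \epsilon_\ell |x_{a,\ell}-x_{b,\ell}|$, with no extra factor since $|\lambda^\ell_a - \lambda^\ell_b| = |x_{a,\ell}-x_{b,\ell}|/\Delta_\ell$.

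\textbf{From one cell to all of $\mathcal{X}$.} To lift this to points differing in coordinate $\ell$ but lying in different cells, I would first record the continuity of $f_{\mathrm{int}}$ across shared faces: if $\mathbf{x}$ has its $\ell'$-coordinate on a cell boundary then $\lambda^{\ell'}\in\{0,1\}$, so $w(\boldsymbol{\gamma})$ annihilates every anchor off that face, and the surviving face-anchors and their weights coincide for the two adjacent cells — hence $g$ is genuinely single-valued on $\mathcal{X}$. Then, subdividing the segment $[\mathbf{x}_a, \mathbf{x}_b]$ at the finitely many $\ell$-boundary crossings, applying the intra-cell bound on each piece, and telescoping (the across-interval mechanism of Proposition~\ref{prop:across}) yields $|g(\mathbf{x}_a) - g(\mathbf{x}_b)| \le \epsilon_\ell |x_{a,\ell}-x_{b,\ell}| = \epsilon_\ell d_1(\mathbf{x}_a,\mathbf{x}_b)$ in general, which completes the reduction and, via Theorem~\ref{thm:composition}, the proof.

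\textbf{Anticipated obstacle.} I expect the intra-cell reduction to be the delicate part: one must carefully collapse the product-form, $2^N$-anchor interpolant to a one-dimensional log-convex interpolation between bona fide $\ell$-neighbor anchors while verifying that the residual $(N-1)$-coordinate weights form a partition of unity — this bookkeeping is where a careless argument would leak a spurious factor. A smaller but essential subtlety is the boundary-continuity observation in the last step: it is what makes $g$ a well-defined function on all of $\mathcal{X}$ and legitimizes both the telescoping across cells and the appeal to Theorem~\ref{thm:composition}; without it the cross-cell case is not even well-posed.
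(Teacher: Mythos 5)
Your proposal is correct, and while it shares the paper's top-level skeleton (reduce to a per-coordinate $(\epsilon_\ell,d_1)$-Lipschitz bound, then compose via Theorem~\ref{thm:composition} and H\"older), the way you establish the per-coordinate bound is genuinely different from the paper's own proof. The paper introduces an auxiliary \emph{dimension-wise Lipschitz} invariant, proves a chain rule propagating it across arbitrary anchor pairs, and then runs a recursive construction of two trees rooted at $\mathbf{x}_a$ and $\mathbf{x}_b$ --- extending one coordinate at a time toward cell boundaries until the leaves are anchors --- followed by an induction from the leaves back to the roots using a dedicated six-point lemma and, at the top level, Propositions~\ref{prop:intral} and~\ref{prop:across}. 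You instead collapse the $2^N$-term interpolant of Eq.~(\ref{eq:logconvex_nd_}) directly by factoring the weight on the bit $\gamma_\ell$, exhibiting $\ln f_{\mathrm{int}}(\mathbf{x}_a,\cdot)$ as a $w'$-weighted average (a partition of unity over $\boldsymbol{\gamma}_{-\ell}$) of one-dimensional log-convex interpolations between $\ell$-neighbor anchors; since $\mathbf{x}_a$ and $\mathbf{x}_b$ share all coordinates but the $\ell$th, the difference factors as $(\lambda^\ell_a-\lambda^\ell_b)$ times an average of terms each bounded by $\epsilon_\ell\Delta_\ell$, yielding the bound with no spurious constant. Your cross-cell step (single-valuedness of $f_{\mathrm{int}}$ on shared faces plus telescoping over boundary crossings) plays the role of the paper's chain-rule lemma and across-interval proposition. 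The trade-off: your argument is shorter and makes the source of the bound visible in one algebraic identity, at the cost of having to verify explicitly the partition-of-unity and face-continuity facts you correctly flag as the delicate points; the paper's tree induction is heavier machinery but propagates a stronger multi-coordinate invariant uniformly across levels. Both routes are sound proofs of the theorem.
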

Before proving Theorem \ref{thm:AIPOfeasible}, we introduce the following \emph{Dimension-Wise Lipschitz bound}, which provides a sufficient condition for ensuring $(\epsilon, d_p)$-mDP.
\begin{definition}[Dimension-Wise Lipschitz bound (DW-Lipschitz bound)]
\label{def:dpc}
Given coordinate-specific privacy budgets $\boldsymbol{\epsilon}=\left\{\epsilon_1, \dots, \epsilon_N\right\}$, a mechanism $f$ satisfies $(\boldsymbol{\epsilon}, d_p)$-\emph{DW-Lipschitz bound} if for any two records $\mathbf{x}_a\in \mathcal{X}_m, \mathbf{x}_b \in \mathcal{X}_{m'}$ and perturbed value $\mathbf{y}_k \in \mathcal{Y}$, it holds that
\begin{equation}
\frac{f(\mathbf{x}_a, \mathbf{y}_k, \mathbf{Z}_{\hat{\mathcal{X}}_m})}{f(\mathbf{x}_b, \mathbf{y}_k, \mathbf{Z}_{\hat{\mathcal{X}}_{m'}})} \leq e^{\sum_{\ell=1}^N \epsilon_\ell |x_{a,\ell} - x_{b,\ell}|}.
\end{equation}
\end{definition}
Satisfying the DW-Lipschitz bound ensures that the overall privacy leakage between any two records is controlled by their $\ell_p$-norm distance. We formalize this connection in the following \textbf{Lemma \ref{lem:dwmdp_sufficient}}.
\begin{lemma}
\label{lem:dwmdp_sufficient}
Suppose that an interpolation mechanism $f$ satisfies $(\boldsymbol{\epsilon}, d_p)$-\emph{DW-Lipschitz bound}, where $\boldsymbol{\epsilon} = (\epsilon_1, \dots, \epsilon_N)$ are coordinate-specific privacy parameters. Then, if the dimension-wise composition condition holds:
\begin{eqnarray}
&& \sum_{\ell=1}^N \epsilon_\ell^{\frac{p}{p-1}} \leq \epsilon^{\frac{p}{p-1}} \quad \text{for} \quad p > 1, \\
&& \max_{\ell} \epsilon_\ell \leq \epsilon \quad \text{for} \quad p = 1,
\end{eqnarray}
the mechanism $\mathcal{M}$ satisfies $(\epsilon, d_p)$-Lipschitz bound.
\end{lemma}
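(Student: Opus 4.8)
The plan is to reduce the claim to a single application of H\"older's inequality, exactly paralleling the argument in the proof of Theorem~\ref{thm:composition}. First I would take logarithms in the DW-Lipschitz bound of Definition~\ref{def:dpc}: for any $\mathbf{x}_a\in\mathcal{X}_m$, $\mathbf{x}_b\in\mathcal{X}_{m'}$, and $\mathbf{y}_k\in\mathcal{Y}$,
\begin{equation}
\ln f(\mathbf{x}_a,\mathbf{y}_k,\mathbf{Z}_{\hat{\mathcal{X}}_m}) - \ln f(\mathbf{x}_b,\mathbf{y}_k,\mathbf{Z}_{\hat{\mathcal{X}}_{m'}}) \;\le\; \sum_{\ell=1}^N \epsilon_\ell\,|x_{a,\ell}-x_{b,\ell}|.
\end{equation}
Since the bound in Definition~\ref{def:dpc} holds for every ordered pair of records and its right-hand side is symmetric under swapping $\mathbf{x}_a$ and $\mathbf{x}_b$, exchanging the two records yields the reverse inequality, so in fact
\begin{equation}
\bigl|\ln f(\mathbf{x}_a,\mathbf{y}_k,\mathbf{Z}_{\hat{\mathcal{X}}_m}) - \ln f(\mathbf{x}_b,\mathbf{y}_k,\mathbf{Z}_{\hat{\mathcal{X}}_{m'}})\bigr| \;\le\; \sum_{\ell=1}^N \epsilon_\ell\,|x_{a,\ell}-x_{b,\ell}|.
\end{equation}

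Next I would bound the right-hand side by the $\ell_p$ distance. Write $\boldsymbol{\epsilon}=(\epsilon_1,\dots,\epsilon_N)$ and $\boldsymbol{\Delta}=\mathbf{x}_a-\mathbf{x}_b$, and let $q$ be the H\"older conjugate of $p$ (so $q=p/(p-1)$ for $p>1$, $q=\infty$ for $p=1$, and $q=1$ for $p=\infty$). By H\"older's inequality, $\sum_{\ell=1}^N \epsilon_\ell|\Delta_\ell| \le \|\boldsymbol{\epsilon}\|_q\,\|\boldsymbol{\Delta}\|_p$. It then remains to check $\|\boldsymbol{\epsilon}\|_q\le\epsilon$: for $p>1$ this is immediate from the budget-composition hypothesis, since $\sum_\ell \epsilon_\ell^{p/(p-1)}\le\epsilon^{p/(p-1)}$ is exactly $\|\boldsymbol{\epsilon}\|_q^{q}\le\epsilon^{q}$; for $p=1$ we have $q=\infty$ and the hypothesis $\max_\ell\epsilon_\ell\le\epsilon$ is precisely $\|\boldsymbol{\epsilon}\|_\infty\le\epsilon$. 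Chaining the two estimates gives $\bigl|\ln f(\mathbf{x}_a,\mathbf{y}_k,\mathbf{Z}_{\hat{\mathcal{X}}_m}) - \ln f(\mathbf{x}_b,\mathbf{y}_k,\mathbf{Z}_{\hat{\mathcal{X}}_{m'}})\bigr| \le \epsilon\,\|\boldsymbol{\Delta}\|_p = \epsilon\,d_p(\mathbf{x}_a,\mathbf{x}_b)$, which is the desired $(\epsilon,d_p)$-Lipschitz bound.

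The argument is essentially mechanical, so there is no deep obstacle; the points that require the most care are (i) confirming that the two-sided Lipschitz bound genuinely follows from the one-sided multiplicative bound of Definition~\ref{def:dpc} --- it does, because the exponent $\sum_\ell \epsilon_\ell|x_{a,\ell}-x_{b,\ell}|$ is symmetric in the two records --- and (ii) correctly handling the boundary exponents $p=1$ ($q=\infty$, via $\|\boldsymbol{\epsilon}\|_\infty$) and $p=\infty$ ($q=1$, via $\|\boldsymbol{\epsilon}\|_1$) in the H\"older step, which I would state explicitly as in the edge-case remarks accompanying Theorem~\ref{thm:composition}. For completeness I would note that this lemma concerns the unnormalized interpolant and hence incurs no normalization loss; by Proposition~\ref{prop:AIPOfeasible}, passing to the normalized mechanism $\overline{f}_{\mathrm{int}}$ would inflate the effective budget by at most a factor of $2$.
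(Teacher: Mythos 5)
Your proposal is correct and follows essentially the same route as the paper: a single application of H\"older's inequality with dual exponent $q=p/(p-1)$ to bound $\sum_\ell \epsilon_\ell|x_{a,\ell}-x_{b,\ell}|$ by $\|\boldsymbol{\epsilon}\|_q\,d_p(\mathbf{x}_a,\mathbf{x}_b)\le\epsilon\,d_p(\mathbf{x}_a,\mathbf{x}_b)$, then substituting into the DW-Lipschitz exponent. Your added care about the two-sided bound via symmetry and the $p=1$ / $p=\infty$ edge cases is a slight refinement of the paper's terser argument but not a different method.
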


\begin{proof}
[Proof of Lemma \ref{lem:dwmdp_sufficient}]
Applying Hölder's inequality, we have
\begin{eqnarray}
&& \sum_{\ell=1}^N \epsilon_\ell |x_{a,\ell} - x_{b,\ell}| \\
&\leq& \left( \sum_{\ell=1}^N \epsilon_\ell^{\frac{p}{p-1}} \right)^{\frac{p-1}{p}} \left( \sum_{\ell=1}^N |x_{a,\ell} - x_{b,\ell}|^p \right)^{\frac{1}{p}} \\
&=& \epsilon d_p(\mathbf{x}_a, \mathbf{x}_b).
\end{eqnarray}
Thus, $\forall \mathbf{y}_k$
\begin{eqnarray}
&& \frac{f(\mathbf{x}_a, \mathbf{y}_k, \mathbf{Z}_{\hat{\mathcal{X}}_m})}{f(\mathbf{x}_b, \mathbf{y}_k, \mathbf{Z}_{\hat{\mathcal{X}}_{m'}})} \leq e^{\sum_{\ell=1}^N \epsilon_\ell |x_{a,\ell} - x_{b,\ell}|} \\ &\Rightarrow& 
\frac{f(\mathbf{x}_a, \mathbf{y}_k, \mathbf{Z}_{\hat{\mathcal{X}}_m})}{f(\mathbf{x}_b, \mathbf{y}_k, \mathbf{Z}_{\hat{\mathcal{X}}_{m'}})} \leq e^{\epsilon d_p(\mathbf{x}_a, \mathbf{x}_b)},
\end{eqnarray}
which establishes that $f$ satisfies $(\epsilon, d_p)$-Lipschitz bound.
\end{proof}

Next, we introduce \textbf{Lemma \ref{lem:chain}--Lemma \ref{lem:1DPPI}} as a preparation of the proof for \textbf{Theorem \ref{thm:AIPOfeasible}}.   

\begin{lemma}[Chain Rule for $(\boldsymbol{\epsilon}, d_p)$-DW-Lipschitz bound]
\label{lem:chain}
Let $\hat{\mathcal{X}}$ denote the set of anchor records obtained from the $N$-orthotope partitioning of the secret domain. If an interpolation mechanism $f$ ensures that every pair of $\ell$-neighbor anchors satisfies $(\boldsymbol{\epsilon}, d_p)$-DW-Lipschitz bound, then $f$ also guarantees that any pair of anchors $\hat{\mathbf{x}}_i, \hat{\mathbf{x}}_j \in \hat{\mathcal{X}}$ satisfies $(\boldsymbol{\epsilon}, d_p)$-DW-Lipschitz bound.
\end{lemma}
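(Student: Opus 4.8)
The plan is to lift the single-axis hypothesis to arbitrary anchor pairs by chaining multiplicative bounds along an axis-aligned lattice path through $\hat{\mathcal{X}}$ and telescoping. First I would record a consistency property of the interpolant at anchors: when the query point $\mathbf{x}_a$ coincides with a corner $\hat{\mathbf{x}}$ of $\mathcal{C}_m$, the interpolation weight vector $\boldsymbol{\lambda}$ is $0/1$-valued, so $w(\boldsymbol{\gamma})$ concentrates all its mass on the single $\boldsymbol{\gamma}$ that selects $\hat{\mathbf{x}}$; hence $f_{\mathrm{int}}(\hat{\mathbf{x}},\mathbf{y}_k,\mathbf{Z}_{\hat{\mathcal{X}}_m})=z(\mathbf{y}_k\mid\hat{\mathbf{x}})$, independently of which cell has $\hat{\mathbf{x}}$ as a corner. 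Consequently, on anchors the $(\boldsymbol{\epsilon},d_p)$-DW-Lipschitz inequality of Definition~\ref{def:dpc} reduces to a statement about ratios of the stored anchor probabilities, and the hypothesis says precisely that every pair of $\ell$-neighbors obeys $z(\mathbf{y}_k\mid\hat{\mathbf{x}}_i)/z(\mathbf{y}_k\mid\hat{\mathbf{x}}_j)\le e^{\epsilon_\ell|\hat{x}_{i,\ell}-\hat{x}_{j,\ell}|}$, all other coordinates agreeing.

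Second, given two arbitrary anchors $\hat{\mathbf{x}}_i,\hat{\mathbf{x}}_j\in\hat{\mathcal{X}}$, I would construct a path $\hat{\mathbf{x}}_i=\mathbf{p}^{(0)},\mathbf{p}^{(1)},\dots,\mathbf{p}^{(K)}=\hat{\mathbf{x}}_j$ inside the anchor lattice that updates one coordinate at a time and, for each axis $\ell$, moves monotonically from $\hat{x}_{i,\ell}$ to $\hat{x}_{j,\ell}$ in single cell-edge steps. By the regular $N$-orthotope grid structure from Step~\textcircled{1} (Section~\ref{subsec:partitioning}), two anchors that differ in exactly one coordinate by one cell-edge length are corners of a common orthotope, hence $\ell$-neighbors in the sense of Definition~\ref{def:anchorneighbor}; thus each edge $(\mathbf{p}^{(t-1)},\mathbf{p}^{(t)})$ of the path satisfies the hypothesis for the axis $\ell_t$ updated at that step.

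Third, I would telescope. For each fixed $\mathbf{y}_k$, writing $\delta_t$ for the coordinate change at step $t$,
\begin{align}
\frac{z(\mathbf{y}_k\mid\hat{\mathbf{x}}_i)}{z(\mathbf{y}_k\mid\hat{\mathbf{x}}_j)}
&=\prod_{t=1}^{K}\frac{z(\mathbf{y}_k\mid\mathbf{p}^{(t-1)})}{z(\mathbf{y}_k\mid\mathbf{p}^{(t)})} \nonumber\\
&\le\prod_{t=1}^{K}e^{\epsilon_{\ell_t}|\delta_t|}
=e^{\sum_{t=1}^{K}\epsilon_{\ell_t}|\delta_t|}.
\end{align}
Grouping the exponent by axis and using that the path is monotone along each axis, $\sum_{t:\ell_t=\ell}|\delta_t|=|\hat{x}_{i,\ell}-\hat{x}_{j,\ell}|$, so $\sum_t\epsilon_{\ell_t}|\delta_t|=\sum_{\ell=1}^N\epsilon_\ell|\hat{x}_{i,\ell}-\hat{x}_{j,\ell}|$, which is exactly the exponent in Definition~\ref{def:dpc}. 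Invoking the consistency property to rewrite $z(\mathbf{y}_k\mid\cdot)$ as $f(\cdot,\mathbf{y}_k,\mathbf{Z}_{\hat{\mathcal{X}}_\bullet})$, and noting the same argument with the roles of $\hat{\mathbf{x}}_i$ and $\hat{\mathbf{x}}_j$ swapped handles the reverse ratio, yields the claimed $(\boldsymbol{\epsilon},d_p)$-DW-Lipschitz bound for every anchor pair.

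I expect the main obstacle to be the second step: making the lattice-path construction precise and certifying that each consecutive pair is genuinely an $\ell$-neighbor as in Definition~\ref{def:anchorneighbor} (corners of a common $\mathcal{C}_m$), rather than two anchors that merely happen to agree in all but one coordinate. The telescoping in the third step is routine once monotonicity along each axis is baked into the path, and the consistency property in the first step is an immediate consequence of how $w(\boldsymbol{\gamma})$ degenerates on corners.
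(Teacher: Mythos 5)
Your proposal is correct and follows essentially the same route as the paper's proof: an axis-aligned path through the anchor lattice in single cell-edge hops, with the $\ell$-neighbor hypothesis applied at each hop and the per-step bounds summed to give $\sum_{\ell}\epsilon_\ell|\hat{x}_{i,\ell}-\hat{x}_{j,\ell}|$. Your added consistency observation (that $f_{\mathrm{int}}$ reduces to the stored anchor probability at a corner) and your explicit attention to whether consecutive lattice points are genuine $\ell$-neighbors are details the paper glosses over, but they do not change the argument.
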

\begin{proof}[Proof of Lemma \ref{lem:chain}]
For any two anchors $\hat{\mathbf{x}}_i, \hat{\mathbf{x}}_j \in \hat{\mathcal{X}}$, we construct a path $\mathcal{P}$ by sequentially changing each coordinate from $\hat{\mathbf{x}}_i$ to $\hat{\mathbf{x}}_j$. Specifically, for each dimension $\ell \in \{1, \ldots, N\}$:
\begin{itemize}
    \item Move along the $\ell$-th coordinate in steps of size $\Delta$, where each step transitions between $\ell$-neighbor anchors (i.e., points differing only in dimension $\ell$ by exactly $\Delta$),
    \item The number of steps along dimension $\ell$ is $n_\ell =  \frac{|\hat{x}_{i,\ell} - \hat{x}_{j,\ell}|}{\Delta}$.
\end{itemize}
Thus, the total path consists of $\sum_{\ell=1}^N n_\ell$ hops, each between $\ell$-neighbor anchors. By assumption, $f$ satisfies $(\boldsymbol{\epsilon}, d_p)$-DW-Lipschitz bound between every pair of $\ell$-neighbor anchors, so at each hop along dimension $\ell$, the privacy loss is at most $\epsilon_\ell \Delta$. Therefore, the cumulative privacy leakage over the path is bounded by $\sum_{\ell=1}^N n_\ell \epsilon_\ell \Delta = \sum_{\ell=1}^N \epsilon_\ell |\hat{x}_{i,\ell} - \hat{x}_{j,\ell}| = (\boldsymbol{\epsilon}, d_p)$-DW-Lipschitz bound.
\end{proof}

\begin{figure}[t]

\centering
\begin{minipage}{0.50\textwidth}
\centering
\subfigure{
\includegraphics[width=0.70\textwidth]{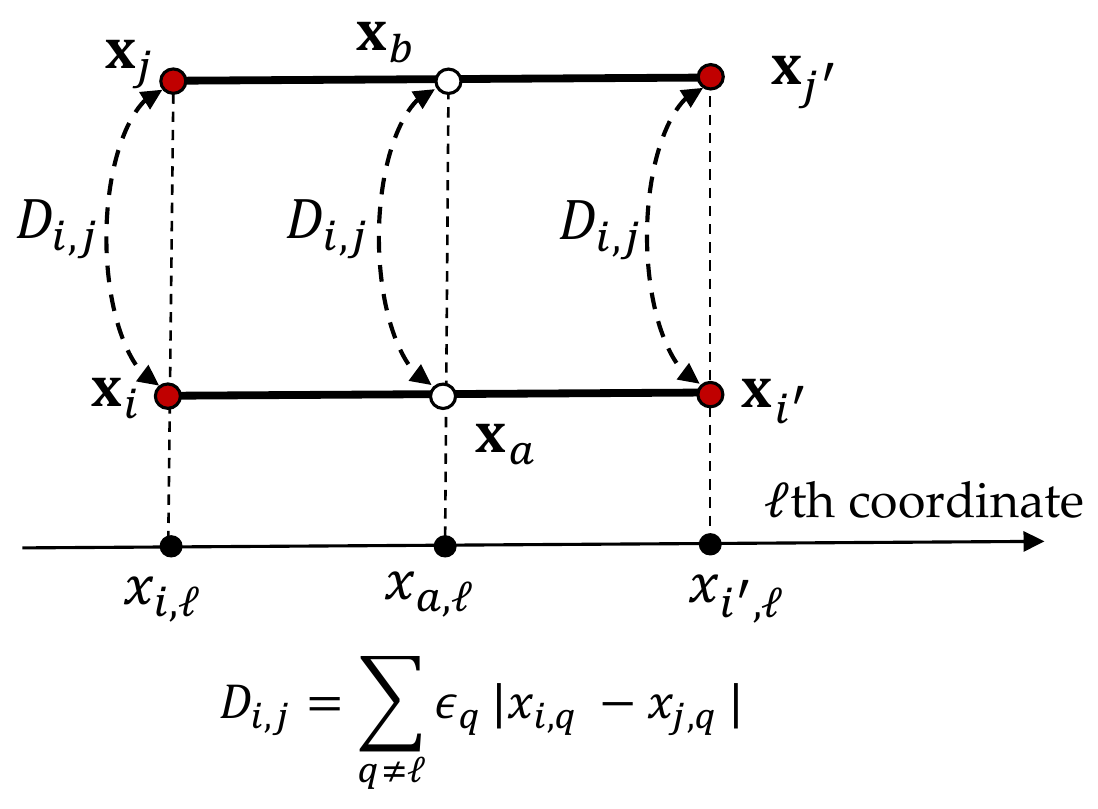}}

\caption{Proof of Lemma \ref{lem:6points}.}
\label{fig:lemma6anchors}
\end{minipage}

\end{figure}

\begin{lemma}
\label{lem:6points}
As shown in Fig.~\ref{fig:lemma6anchors}, let $\mathbf{x}_a$, $\mathbf{x}_b$, $\mathbf{x}_i$, $\mathbf{x}_{i'}$, $\mathbf{x}_j$, and $\mathbf{x}_{j'}$ be six points in $\mathbb{R}^N$ such that:
\begin{itemize}
    \item $\mathbf{x}_a$ shares the same coordinates with $\mathbf{x}_i$ and $\mathbf{x}_{i'}$ at all dimensions except $\ell$;
    \item $\mathbf{x}_b$ shares the same coordinates with $\mathbf{x}_j$ and $\mathbf{x}_{j'}$ at all dimensions except $\ell$;
    \item at dimension $\ell$, $x_{a,\ell} = x_{b,\ell}$, and $x_{i,\ell} = x_{j,\ell}$, $x_{i',\ell} = x_{j',\ell}$.
\end{itemize}
Suppose further that: both pairs $(\mathbf{x}_i, \mathbf{x}_j)$ and $(\mathbf{x}_{i'}, \mathbf{x}_{j'})$ satisfy $(\boldsymbol{\epsilon}, d_p)$-DW-Lipschitz bound; and the perturbation distributions satisfy log-convex interpolation:
\begin{eqnarray}
z(\mathbf{y}_k \mid \mathbf{x}_a) \lnconv \left(z(\mathbf{y}_k \mid \mathbf{x}_i), z(\mathbf{y}_k \mid \mathbf{x}_{i'})\right) \\
z(\mathbf{y}_k \mid \mathbf{x}_b) \lnconv \left(z(\mathbf{y}_k \mid \mathbf{x}_j), z(\mathbf{y}_k \mid \mathbf{x}_{j'})\right).
\end{eqnarray}

Then, the pair $(\mathbf{x}_a, \mathbf{x}_b)$ satisfies $(\boldsymbol{\epsilon}, d_p)$-DW-Lipschitz bound, and hence $(\epsilon, d_p)$-Lipschitz bound.
\end{lemma}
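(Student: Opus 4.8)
The plan is to first reduce the claim to verifying the $(\boldsymbol{\epsilon},d_p)$-DW-Lipschitz bound for the single pair $(\mathbf{x}_a,\mathbf{x}_b)$, and then to invoke \textbf{Lemma~\ref{lem:dwmdp_sufficient}} together with the budget composition condition of Eqs.~\eqref{eq:budgetcompo1}–\eqref{eq:budgetcompo2} to upgrade this to the $(\epsilon,d_p)$-Lipschitz bound. The first ingredient is a bookkeeping observation: since $x_{a,\ell}=x_{b,\ell}$, $x_{i,\ell}=x_{j,\ell}$ and $x_{i',\ell}=x_{j',\ell}$, the one-dimensional interpolation weights along coordinate $\ell$ coincide, $\lambda_{\mathbf{x}_i,\mathbf{x}_a}^{\ell}=\lambda_{\mathbf{x}_j,\mathbf{x}_b}^{\ell}=:\lambda\in[0,1]$. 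Likewise, because $\mathbf{x}_i,\mathbf{x}_{i'}$ agree with $\mathbf{x}_a$ on every coordinate $\ell'\neq\ell$ (and $\mathbf{x}_j,\mathbf{x}_{j'}$ agree with $\mathbf{x}_b$ there), the three coordinate-difference vectors agree off coordinate $\ell$ and vanish on it: $|x_{i,\ell'}-x_{j,\ell'}|=|x_{i',\ell'}-x_{j',\ell'}|=|x_{a,\ell'}-x_{b,\ell'}|$ for $\ell'\neq\ell$, and all three are $0$ for $\ell'=\ell$. Consequently the right-hand side of the DW-Lipschitz bound is the \emph{same} quantity $S:=\sum_{\ell'=1}^{N}\epsilon_{\ell'}|x_{a,\ell'}-x_{b,\ell'}|$ for the pair $(\mathbf{x}_i,\mathbf{x}_j)$, for the pair $(\mathbf{x}_{i'},\mathbf{x}_{j'})$, and for the target pair $(\mathbf{x}_a,\mathbf{x}_b)$.

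Next I would run the core estimate. Expanding the log-convex interpolations for $\mathbf{x}_a$ and $\mathbf{x}_b$ with the common weight $\lambda$ and subtracting,
\begin{equation}
\ln z(\mathbf{y}_k\mid\mathbf{x}_a)-\ln z(\mathbf{y}_k\mid\mathbf{x}_b)
=\lambda\bigl(\ln z(\mathbf{y}_k\mid\mathbf{x}_i)-\ln z(\mathbf{y}_k\mid\mathbf{x}_j)\bigr)
+(1-\lambda)\bigl(\ln z(\mathbf{y}_k\mid\mathbf{x}_{i'})-\ln z(\mathbf{y}_k\mid\mathbf{x}_{j'})\bigr).
\end{equation}
By hypothesis $(\mathbf{x}_i,\mathbf{x}_j)$ and $(\mathbf{x}_{i'},\mathbf{x}_{j'})$ each satisfy the $(\boldsymbol{\epsilon},d_p)$-DW-Lipschitz bound (in both orderings, since its exponent is symmetric in the two arguments), so each bracketed log-difference is at most $S$ in absolute value. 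Applying the triangle inequality with $\lambda,1-\lambda\ge0$ and $\lambda+(1-\lambda)=1$ gives
\begin{equation}
\bigl|\ln z(\mathbf{y}_k\mid\mathbf{x}_a)-\ln z(\mathbf{y}_k\mid\mathbf{x}_b)\bigr|\le\lambda S+(1-\lambda)S=S,
\end{equation}
which is precisely the $(\boldsymbol{\epsilon},d_p)$-DW-Lipschitz bound for $(\mathbf{x}_a,\mathbf{x}_b)$. Feeding this into \textbf{Lemma~\ref{lem:dwmdp_sufficient}} under the composition condition (equivalently, one line of Hölder's inequality converting $S$ into $\epsilon\,d_p(\mathbf{x}_a,\mathbf{x}_b)$) yields the claimed $(\epsilon,d_p)$-Lipschitz bound between $\mathbf{x}_a$ and $\mathbf{x}_b$.

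I do not anticipate a deep obstacle: the argument amounts to ``the interpolation weight is shared and the target distance equals both anchor-pair distances, so convexity collapses the two bounds into one.'' The part that genuinely requires care is the geometric bookkeeping that pins $S$ uniformly across the three pairs — one must use exactly the hypotheses that $\{\mathbf{x}_i,\mathbf{x}_{i'},\mathbf{x}_a\}$ (resp.\ $\{\mathbf{x}_j,\mathbf{x}_{j'},\mathbf{x}_b\}$) agree off coordinate $\ell$, and that the $\ell$-coordinates are aligned under both the $i\!\leftrightarrow\!j$ and $i'\!\leftrightarrow\!j'$ correspondences, so that the interpolation weight is common and the $\ell$-term drops out of every DW sum. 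A secondary, harmless point is to note that only the on-diagonal case $x_{a,\ell}=x_{b,\ell}$ is asserted here; the general off-diagonal configuration is recovered afterward by chaining this lemma with the one-dimensional results (\textbf{Propositions~\ref{prop:intral}–\ref{prop:across}}) and the chain rule of \textbf{Lemma~\ref{lem:chain}}.
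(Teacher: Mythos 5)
Your proof is correct and follows essentially the same route as the paper's: both exploit that the shared $\ell$-coordinate alignment forces a common interpolation weight $\lambda$, expand the two log-convex interpolants, apply the triangle inequality to bound the difference by a convex combination of the two anchor-pair DW bounds, and observe that the $\ell$-term vanishes so all three DW sums coincide. Your explicit identification of the common bound $S$ upfront is a slightly cleaner packaging of the same bookkeeping the paper performs inline, and the final appeal to Lemma~\ref{lem:dwmdp_sufficient} matches the paper's concluding step.
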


\begin{proof}[Proof of Lemma~\ref{lem:6points}]

Let $\lambda \in [0,1]$ the convex interpolation coefficient determined by the relative position of $\mathbf{x}_a$ between $\mathbf{x}_i$ and $\mathbf{x}_{i'}$ along dimension $\ell$, and similarly for $\mathbf{x}_b$ between $\mathbf{x}_j$ and $\mathbf{x}_{j'}$. Specifically, $\lambda$ is chosen such that
\begin{equation}
x_{a,\ell} = \lambda x_{i,\ell} + (1-\lambda) x_{i',\ell},
\quad
x_{b,\ell} = \lambda x_{j,\ell} + (1-\lambda) x_{j',\ell},
\end{equation}
The perturbation distributions at $\mathbf{x}_a$ and $\mathbf{x}_b$ are constructed via log-convex interpolation:
\begin{equation}
\ln z(\mathbf{y}_k \mid \mathbf{x}_a) = \lambda \ln z(\mathbf{y}_k \mid \mathbf{x}_i) + (1-\lambda) \ln z(\mathbf{y}_k \mid \mathbf{x}_{i'}),
\end{equation}
\begin{equation}
\ln z(\mathbf{y}_k \mid \mathbf{x}_b) = \lambda \ln z(\mathbf{y}_k \mid \mathbf{x}_j) + (1-\lambda) \ln z(\mathbf{y}_k \mid \mathbf{x}_{j'}).
\end{equation}

Applying the triangle inequality, we have
\begin{eqnarray}
\label{eq:triangle1}
&& |\ln z(\mathbf{y}_k \mid \mathbf{x}_a) - \ln z(\mathbf{y}_k \mid \mathbf{x}_b)| \\
&=& \left| \lambda \left( \ln z(\mathbf{y}_k \mid \mathbf{x}_i) - \ln z(\mathbf{y}_k \mid \mathbf{x}_j) \right) \right.
\\ 
&+& \left.(1-\lambda) \left( \ln z(\mathbf{y}_k \mid \mathbf{x}_{i'}) - \ln z(\mathbf{y}_k \mid \mathbf{x}_{j'}) \right) \right| \\ \label{eq:triangle2}
&\leq& \lambda \left| \ln z(\mathbf{y}_k \mid \mathbf{x}_i) - \ln z(\mathbf{y}_k \mid \mathbf{x}_j) \right| \\
&+& (1-\lambda) \left| \ln z(\mathbf{y}_k \mid \mathbf{x}_{i'}) - \ln z(\mathbf{y}_k \mid \mathbf{x}_{j'}) \right|.
\end{eqnarray}

By the assumption that $(\mathbf{x}_i, \mathbf{x}_j)$ and $(\mathbf{x}_{i'}, \mathbf{x}_{j'})$ satisfy DW-Lipschitz bound, we have
\begin{eqnarray}
\left| \ln z(\mathbf{y}_k \mid \mathbf{x}_i) - \ln z(\mathbf{y}_k \mid \mathbf{x}_j) \right| \leq \sum_{q=1}^N \epsilon_q |x_{i,q} - x_{j,q}|, \\
\left| \ln z(\mathbf{y}_k \mid \mathbf{x}_{i'}) - \ln z(\mathbf{y}_k \mid \mathbf{x}_{j'}) \right| \leq \sum_{q=1}^N \epsilon_q |x_{i',q} - x_{j',q}|.
\end{eqnarray}
By construction: we have $q \neq \ell$, $x_{i,q} = x_{i',q}$ and $x_{j,q} = x_{j',q}$, and hence $x_{i,q} - x_{j,q} = x_{i',q} - x_{j',q}$ for all $q \neq \ell$. Also, we have $x_{i,\ell} = x_{j,\ell}$, $x_{i',\ell} = x_{j',\ell}$. Therefore:
\begin{equation}
x_{i,\ell} - x_{j,\ell} = 0, \quad x_{i',\ell} - x_{j',\ell} = 0.
\end{equation}
and for $q \neq \ell$:
\begin{equation}
x_{i,q} - x_{j,q} = x_{i',q} - x_{j',q}.
\end{equation}

Thus, the privacy bounds simplify to:
\begin{eqnarray}
\left| \ln z(\mathbf{y}_k \mid \mathbf{x}_i) - \ln z(\mathbf{y}_k \mid \mathbf{x}_j) \right| \leq \sum_{q\neq \ell} \epsilon_q |x_{i,q} - x_{j,q}|, \\
\left| \ln z(\mathbf{y}_k \mid \mathbf{x}_{i'}) - \ln z(\mathbf{y}_k \mid \mathbf{x}_{j'}) \right| \leq \sum_{q\neq \ell} \epsilon_q |x_{i',q} - x_{j',q}|,
\end{eqnarray}
where for $q \neq \ell$, the differences match.

Substituting back into the triangle inequality in Eq. (\ref{eq:triangle1})--(\ref{eq:triangle2}), we get
\begin{equation}
|\ln z(\mathbf{y}_k \mid \mathbf{x}_a) - \ln z(\mathbf{y}_k \mid \mathbf{x}_b)|
\leq \sum_{q\neq \ell} \epsilon_q |x_{i,q} - x_{j,q}|,
\end{equation}
where the right-hand side depends only on differences at dimensions $q \neq \ell$.

Since $\mathbf{x}_a$ and $\mathbf{x}_b$ share the same value at dimension $\ell$ and differ only at other coordinates, the above bound implies that $(\mathbf{x}_a, \mathbf{x}_b)$ satisfy $(\boldsymbol{\epsilon}, d_p)$-DW-Lipschitz bound, and hence $(\epsilon, d_p)$-Lipschitz bound.
\end{proof}
\begin{lemma}
\label{lem:1DPPI}
Let $\mathbf{x}_i, \mathbf{x}_{i'} \in \mathcal{X}_m$ be any two points within an $N$-orthotope that differ only in the $\ell$th coordinate, and let $\mathbf{x}_a \in \mathcal{X}_m$ be any point such that $x_{i,\ell} \leq x_{a,\ell} \leq x_{i',\ell}$ and $\mathbf{x}_a$ also differs from $\mathbf{x}_i$ and $\mathbf{x}_{i'}$ only in the $\ell$th coordinate. Then, the interpolation function defined in Definition~\ref{def:PPI} satisfies the following log-convexity condition:
\begin{equation}
z(\mathbf{y}_k \mid \mathbf{x}_a) \lnconv \left(z(\mathbf{y}_k \mid \mathbf{x}_i), z(\mathbf{y}_k \mid \mathbf{x}_{i'})\right).
\end{equation}
\end{lemma}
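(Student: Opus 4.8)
The plan is to reduce the $N$-dimensional claim to the affine dependence of $\ln f_{\mathrm{int}}$ on a single coordinate weight. First I would observe that, since $\mathbf{x}_i$, $\mathbf{x}_{i'}$, and $\mathbf{x}_a$ all lie in $\mathcal{C}_m$ and differ only in coordinate $\ell$, they share the same cell base corner $\hat{\mathbf{x}}_{i_m}$ and the same per-axis interpolation weights $\lambda_{\hat{\mathbf{x}}_{i_m},\cdot}^{\ell'}$ for every $\ell'\neq\ell$. Hence the residual weight $w_{-\ell}(\boldsymbol{\gamma}_{-\ell}):=\prod_{\ell'\neq\ell}\bigl[(1-\gamma_{\ell'})\lambda_{\hat{\mathbf{x}}_{i_m},\cdot}^{\ell'}+\gamma_{\ell'}(1-\lambda_{\hat{\mathbf{x}}_{i_m},\cdot}^{\ell'})\bigr]$, obtained by dropping the $\ell$th factor of $w(\boldsymbol{\gamma})$ in Definition~\ref{def:logconvex_nd}, is one and the same constant for all three points. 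Writing $\mu(\mathbf{x}):=\lambda_{\hat{\mathbf{x}}_{i_m},\mathbf{x}}^{\ell}=\bigl((\hat{x}_{i_m,\ell}+\Delta_\ell)-x_{\ell}\bigr)/\Delta_\ell$, which is affine in $x_\ell$, and setting $\lambda:=\lambda_{\mathbf{x}_i,\mathbf{x}_a}^{\ell}$ so that $x_{a,\ell}=\lambda x_{i,\ell}+(1-\lambda)x_{i',\ell}$, affineness of $\mu(\cdot)$ immediately yields $\mu(\mathbf{x}_a)=\lambda\,\mu(\mathbf{x}_i)+(1-\lambda)\,\mu(\mathbf{x}_{i'})$.

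Next I would split the sum over $\boldsymbol{\gamma}\in\{0,1\}^N$ in Eq.~\eqref{eq:logconvex_nd_} according to $\gamma_\ell\in\{0,1\}$, using the factorization $w(\boldsymbol{\gamma})=\bigl[(1-\gamma_\ell)\mu(\mathbf{x})+\gamma_\ell(1-\mu(\mathbf{x}))\bigr]\,w_{-\ell}(\boldsymbol{\gamma}_{-\ell})$. This rewrites $\ln f_{\mathrm{int}}(\mathbf{x},\mathbf{y}_k,\mathbf{Z}_{\hat{\mathcal{X}}_m})$ as $\mu(\mathbf{x})\,A_0(\mathbf{y}_k)+(1-\mu(\mathbf{x}))\,A_1(\mathbf{y}_k)$, where $A_0(\mathbf{y}_k)$ and $A_1(\mathbf{y}_k)$ are the $w_{-\ell}$-weighted sums of anchor log-probabilities over the $\gamma_\ell=0$ and $\gamma_\ell=1$ faces of the cell, respectively. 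Since $A_0$ and $A_1$ depend only on the anchor distributions $\mathbf{Z}_{\hat{\mathcal{X}}_m}$ and on the shared residual weight $w_{-\ell}$, they are identical for $\mathbf{x}_i$, $\mathbf{x}_{i'}$, and $\mathbf{x}_a$; therefore $\ln f_{\mathrm{int}}(\mathbf{x},\mathbf{y}_k,\mathbf{Z}_{\hat{\mathcal{X}}_m})$ is an affine function of the single scalar $\mu(\mathbf{x})$ with coefficients that do not depend on which of the three points $\mathbf{x}$ denotes.

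Finally I would substitute $\mu(\mathbf{x}_a)=\lambda\,\mu(\mathbf{x}_i)+(1-\lambda)\,\mu(\mathbf{x}_{i'})$ into this affine form, obtaining $\ln f_{\mathrm{int}}(\mathbf{x}_a,\mathbf{y}_k,\mathbf{Z}_{\hat{\mathcal{X}}_m})=\lambda\ln f_{\mathrm{int}}(\mathbf{x}_i,\mathbf{y}_k,\mathbf{Z}_{\hat{\mathcal{X}}_m})+(1-\lambda)\ln f_{\mathrm{int}}(\mathbf{x}_{i'},\mathbf{y}_k,\mathbf{Z}_{\hat{\mathcal{X}}_m})$ with $\lambda=\lambda_{\mathbf{x}_i,\mathbf{x}_a}^{\ell}$ — precisely $z(\mathbf{y}_k\mid\mathbf{x}_a)\lnconv\bigl(z(\mathbf{y}_k\mid\mathbf{x}_i),z(\mathbf{y}_k\mid\mathbf{x}_{i'})\bigr)$ in the sense of Definition~\ref{def:logconvex_1d}. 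I do not anticipate a genuine obstacle; the only point requiring care is the index bookkeeping in the first step, namely checking that $\mathbf{x}_i$ and $\mathbf{x}_{i'}$, although they need not be anchor points, still agree with $\mathbf{x}_a$ in every coordinate $\ell'\neq\ell$ and therefore induce exactly the same residual weight $w_{-\ell}$ relative to the common base corner $\hat{\mathbf{x}}_{i_m}$, after which the affine-combination argument is immediate. (The exact log-convex identity above is for the unnormalized interpolant $f_{\mathrm{int}}$; under the normalization in Definition~\ref{def:PPI} the common factor $\ln\!\bigl(\sum_{\mathbf{y}_j}f_{\mathrm{int}}(\mathbf{x},\mathbf{y}_j,\mathbf{Z}_{\hat{\mathcal{X}}_m})\bigr)$ is convex but not affine in $\mu$, so for $\overline{f}_{\mathrm{int}}$ the identity is replaced by the constant-factor bound isolated in Proposition~\ref{prop:AIPOfeasible}.)
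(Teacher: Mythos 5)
Your proof is correct, and it reaches the same identity the paper establishes, but by a cleaner and strictly more general route. The paper's proof expands $\ln z(\mathbf{y}_k\mid\mathbf{x}_a)-\lambda^{\ell}_{\hat{\mathbf{x}}_{i_m},\mathbf{x}_a}\ln z(\mathbf{y}_k\mid\mathbf{x}_i)-(1-\lambda^{\ell}_{\hat{\mathbf{x}}_{i_m},\mathbf{x}_a})\ln z(\mathbf{y}_k\mid\mathbf{x}_{i'})$ term by term and shows it vanishes, but the cancellation hinges on the substitutions $\lambda^{\ell}_{\hat{\mathbf{x}}_{i_m},\mathbf{x}_i}=1$ and $\lambda^{\ell}_{\hat{\mathbf{x}}_{i_m},\mathbf{x}_{i'}}=0$, i.e.\ it only covers the case where $\mathbf{x}_i,\mathbf{x}_{i'}$ sit at the two $\ell$-extremes of the cell (which is all the induction in Theorem~\ref{thm:AIPOfeasible} actually needs). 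Your argument --- factor out the $\ell$th weight, observe that $\ln f_{\mathrm{int}}$ is affine in $\mu(\mathbf{x})=\lambda^{\ell}_{\hat{\mathbf{x}}_{i_m},\mathbf{x}}$ with face-sums $A_0,A_1$ shared by all three points, and compose with the affine map $x_\ell\mapsto\mu$ --- proves the lemma as literally stated, for arbitrary $\mathbf{x}_i,\mathbf{x}_{i'}$ in the cell differing only in coordinate $\ell$, and makes the mechanism of the cancellation transparent rather than incidental. Your closing caveat is also well taken and worth keeping: the exact log-convex identity holds for the unnormalized $f_{\mathrm{int}}$ of Definition~\ref{def:logconvex_nd}, not for the normalized $\overline{f}_{\mathrm{int}}$ of Definition~\ref{def:PPI} (whose log picks up a log-sum-exp term that is convex but not affine in $\mu$); the paper's lemma statement cites Definition~\ref{def:PPI} while its proof silently works with the unnormalized formula, and the normalization cost is deferred to the factor-of-two bound in Proposition~\ref{prop:AIPOfeasible}, exactly as you note. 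No gap; the only bookkeeping point, which you already flag, is verifying that the residual weight $w_{-\ell}$ is common to all three points, which follows from their agreeing on every coordinate other than $\ell$.
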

\begin{proof}
[Proof of Lemma \ref{lem:1DPPI}] Let $\hat{\mathbf{x}}_{i_m}$ be the base corner of the $N$-orthotope $\mathcal{X}_m$, and let $\boldsymbol{\Delta}$ denote the side length vector. Recall from Definition~\ref{def:PPI} that the log-interpolated probability at $\mathbf{x}_a$ is:
\begin{eqnarray}
&& \ln z(\mathbf{y}_k \mid \mathbf{x}_a) \\ \nonumber 
&=& \sum_{\boldsymbol\gamma\in \left\{0,1\right\}^N}  \prod_{q=1}^N \left( (1-\gamma_q)\lambda_{\hat{\mathbf{x}}_{i_m}, \mathbf{x}_a}^{q} + \gamma_q(1 - \lambda_{\hat{\mathbf{x}}_{i_m}, \mathbf{x}_a}^{q}) \right) \\
&& \times \ln z(\mathbf{y}_k \mid \hat{\mathbf{x}}_{i_m} + \boldsymbol{\gamma} \odot \boldsymbol{\Delta}).
\end{eqnarray} 
To verify log-convexity, we compute:

\normalsize
\small 
\begin{eqnarray}
&& \ln z(\mathbf{y}_k \mid \mathbf{x}_a) - \lambda_{\hat{\mathbf{x}}_{i_m}, \mathbf{x}_a}^{\ell}  \ln z(\mathbf{y}_k \mid \mathbf{x}_i) \\ 
&-& (1 - \lambda_{\hat{\mathbf{x}}_{i_m}, \mathbf{x}_a}^{\ell})  \ln z(\mathbf{y}_k \mid \mathbf{x}_{i'}) \\ 
\nonumber
&=& \sum_{\boldsymbol\gamma\in \left\{0,1\right\}^N}  \prod_{q=1}^N ((1-\gamma_{q})\lambda_{\hat{\mathbf{x}}_{i_m}, \mathbf{x}_a}^{q}+\gamma_q(1-\lambda_{\hat{\mathbf{x}}_{i_m}, \mathbf{x}_a}^{q})) \\
&\times& \ln \left( z(\mathbf{y}_k \mid \hat{\mathbf{x}}_{i_m}+\boldsymbol{\gamma} \odot \boldsymbol\Delta)\right) \\ \nonumber
&-& {\rd \lambda_{\hat{\mathbf{x}}_{i_m}, \mathbf{x}_a}^{\ell}} \sum_{\boldsymbol\gamma\in \left\{0,1\right\}^N}  \prod_{q\neq \ell} ((1-\gamma_{q})\lambda_{\hat{\mathbf{x}}_{i_m}, \mathbf{x}_a}^{q}+\gamma_q(1-\lambda_{\hat{\mathbf{x}}_{i_m}, \mathbf{x}_a}^{q})) \\
&& \times \underbrace{{\rd((1-\gamma_{\ell})\lambda_{\hat{\mathbf{x}}_{i_m}, \mathbf{x}_i}^{\ell}+\gamma_\ell(1-\lambda_{\hat{\mathbf{x}}_{i_m}, \mathbf{x}_i}^{\ell}))}}_{\mbox{\footnotesize It equals to {\rd $(1-\gamma_{\ell})$} since $\lambda_{\hat{\mathbf{x}}_{i_m}, \mathbf{x}_i}^{\ell}= 1$}}  \\ \nonumber
&& \times \ln \left( z(\mathbf{y}_k \mid \hat{\mathbf{x}}_{i_m}+\boldsymbol{\gamma} \odot \boldsymbol\Delta)\right) \\ \nonumber
&-& {\bl (1 - \lambda_{\hat{\mathbf{x}}_{i_m}, \mathbf{x}_a}^{\ell})} \sum_{\boldsymbol\gamma\in \left\{0,1\right\}^N}  \prod_{q\neq \ell} ((1-\gamma_{q})\lambda_{\hat{\mathbf{x}}_{i_m}, \mathbf{x}_a}^{q}+\gamma_q(1-\lambda_{\hat{\mathbf{x}}_{i_m}, \mathbf{x}_a}^{q})) \\
&& \times \underbrace{({\bl (1-\gamma_{\ell})\lambda_{\hat{\mathbf{x}}_{i_m}, \mathbf{x}_{i'}}^{\ell}+\gamma_\ell(1-\lambda_{\hat{\mathbf{x}}_{i_m}, \mathbf{x}_{i'}}^{\ell})})}_{\mbox{\footnotesize It equals to {\bl $\gamma_{\ell}$} since $\lambda_{\hat{\mathbf{x}}_{i_m}, \mathbf{x}_{i'}}^{\ell}= 0$}}  \\
&& \times \ln \left( z(\mathbf{y}_k \mid \hat{\mathbf{x}}_{i_m}+\boldsymbol{\gamma} \odot \boldsymbol\Delta)\right) \\ \nonumber 
&=& \sum_{\boldsymbol\gamma\in \left\{0,1\right\}^N}  \prod_{q=1}^N ((1-\gamma_{q})\lambda_{\hat{\mathbf{x}}_{i_m}, \mathbf{x}_a}^{q}+\gamma_q(1-\lambda_{\hat{\mathbf{x}}_{i_m}, \mathbf{x}_a}^{q}))  \\
&& \times \ln \left( z(\mathbf{y}_k \mid \hat{\mathbf{x}}_{i_m}+\boldsymbol{\gamma} \odot \boldsymbol\Delta)\right) \\ \nonumber 
&-& \sum_{\boldsymbol\gamma\in \left\{0,1\right\}^N}  \prod_{q\neq \ell} ((1-\gamma_{q})\lambda_{\hat{\mathbf{x}}_{i_m}, \mathbf{x}_a}^{q}+\gamma_q(1-\lambda_{\hat{\mathbf{x}}_{i_m}, \mathbf{x}_a}^{q})) \\
&& \times {\rd \lambda_{\hat{\mathbf{x}}_{i_m}, \mathbf{x}_a}^{\ell} (1-\gamma_{\ell})} \times \ln \left( z(\mathbf{y}_k \mid \hat{\mathbf{x}}_{i_m}+\boldsymbol{\gamma} \odot \boldsymbol\Delta)\right) \\  \nonumber 
&-&  \sum_{\boldsymbol\gamma\in \left\{0,1\right\}^N}  \prod_{q\neq \ell} ((1-\gamma_{q})\lambda_{\hat{\mathbf{x}}_{i_m}, \mathbf{x}_a}^{q}+\gamma_q(1-\lambda_{\hat{\mathbf{x}}_{i_m}, \mathbf{x}_a}^{q})) \\
&&\times {\bl (1 - \lambda_{\hat{\mathbf{x}}_{i_m}, \mathbf{x}_a}^{\ell})\gamma_{\ell}} \times \ln \left( z(\mathbf{y}_k \mid \hat{\mathbf{x}}_{i_m}+\boldsymbol{\gamma} \odot \boldsymbol\Delta)\right)  \\
&=& 0, 
\end{eqnarray}
meaning that $z(\mathbf{y}_k \mid \mathbf{x}_a) \lnconv \left(z(\mathbf{y}_k \mid \mathbf{x}_i), z(\mathbf{y}_k \mid \mathbf{x}_{i'})\right)$. This concludes the proof. 
\end{proof}

\begin{figure*}[t]
\centering
\hspace{0.00in}
\begin{minipage}{1.00\textwidth}
  \subfigure[Node creation]{
\includegraphics[width=0.54\textwidth, height = 0.19\textheight]{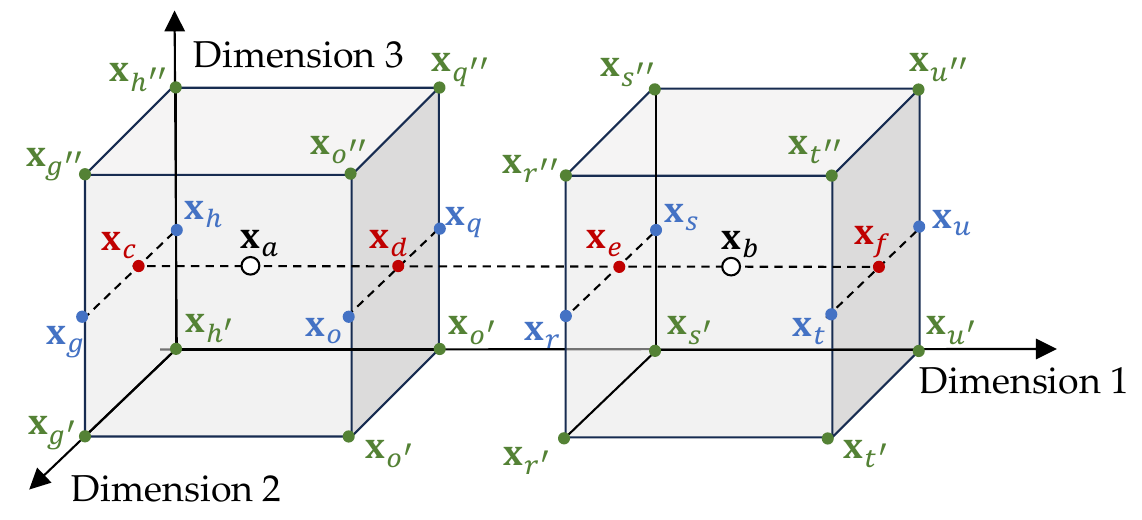}}
  \subfigure[Nodes organized by the two trees]{
\includegraphics[width=0.46\textwidth, height = 0.18\textheight]{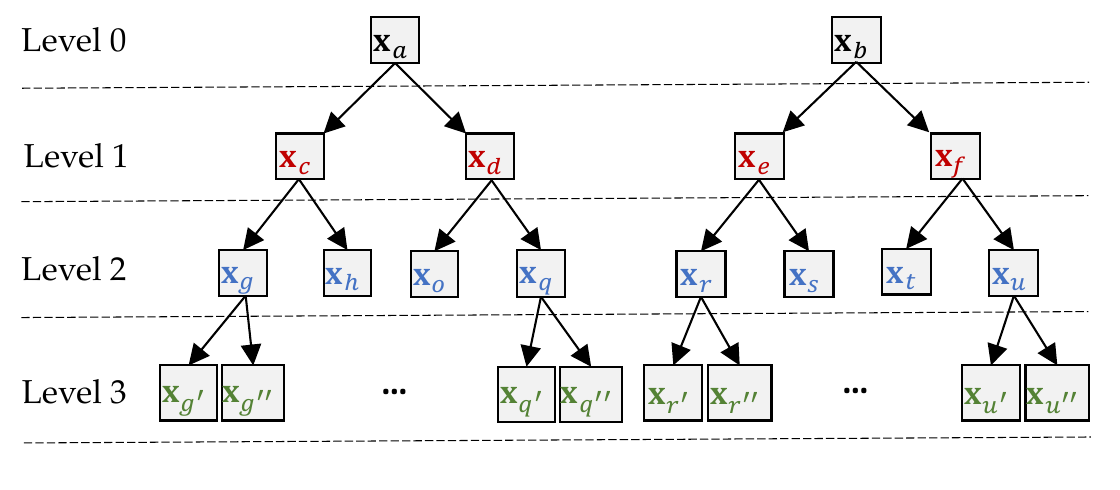}}

\end{minipage}
\caption{Illustration of node creation in Proof of Theorem \ref{thm:AIPOfeasible} (Nodes are recursively extended along dimensions 1, 2, and 3 toward cell boundaries, reaching anchor points at the leaves). }
\label{fig:nodecreation}

\end{figure*}

Next, we provide the detailed proof of \textbf{Theorem \ref{thm:AIPOfeasible}}: 
\begin{proof}[Proof of Theorem~\ref{thm:AIPOfeasible}]
Without loss of generality, assume that $\mathbf{x}_a$ and $\mathbf{x}_b$ differ only in dimension 1, with $x_{a,1} < x_{b,1}$. Let $\mathbf{x}_a \in \mathcal{C}_i$ and $\mathbf{x}_b \in \mathcal{C}_j$, where the ranges of $\mathcal{C}_i$ and $\mathcal{C}_j$ along dimension $\ell$ are $[\hat{x}_{i,\ell}, \hat{x}_{i',\ell}]$ and $[\hat{x}_{j,\ell}, \hat{x}_{j',\ell}]$, respectively.

We construct two trees rooted at $\mathbf{x}_a$ and $\mathbf{x}_b$ as follows:

Intuitively, the two trees systematically decompose the difference between $\mathbf{x}_a$ and $\mathbf{x}_b$ along one dimension at a time.  
At each level $\ell$, nodes extend along the $\ell$-th coordinate toward their cell boundaries, progressively aligning the records with anchor points.  
This recursive structure allows us to apply a dimension-wise induction, ultimately connecting the original records to anchor records through a sequence of intermediate steps that satisfy local Lipschitz bound.

\begin{itemize}
    \item \textbf{Initialization (Level 0)}: Set $\mathbf{x}_a$ and $\mathbf{x}_b$ as the roots.
    
    \item \textbf{Level 1 Construction}: Extend both $\mathbf{x}_a$ and $\mathbf{x}_b$ along dimension 1 toward the boundaries of their respective cells, producing four points $\mathbf{x}_c, \mathbf{x}_d, \mathbf{x}_e, \mathbf{x}_f$, where:
    \begin{equation}
        x_{c,1} = \hat{x}_{i,1}, \quad x_{d,1} = \hat{x}_{i',1}, \quad x_{e,1} = \hat{x}_{j,1}, \quad x_{f,1} = \hat{x}_{j',1}.
    \end{equation}
    Add $\mathbf{x}_c, \mathbf{x}_d$ as children of $\mathbf{x}_a$, and $\mathbf{x}_e, \mathbf{x}_f$ as children of $\mathbf{x}_b$ (see Fig.~\ref{fig:nodecreation}).

    \item \textbf{Level $\ell$ Construction ($\ell = 2, \ldots, N$)}: For each node $\mathbf{x}_v$ at level $\ell-1$, extend along dimension $\ell$ toward the cell boundaries, producing children $\mathbf{x}_{v'}$ and $\mathbf{x}_{v''}$, where $x_{v',\ell} = \hat{x}_{i,\ell}$ and $x_{v'',\ell} = \hat{x}_{i',\ell}$. All other coordinates are inherited from $\mathbf{x}_v$.
\end{itemize}
By construction, every node at level $\ell$ lies on the boundary along the first $\ell$ dimensions.

\begin{figure*}[h]
\centering
\hspace{0.00in}
\begin{minipage}{1.00\textwidth}
\centering
  \subfigure{
\includegraphics[width=0.80\textwidth, height = 0.17\textheight]{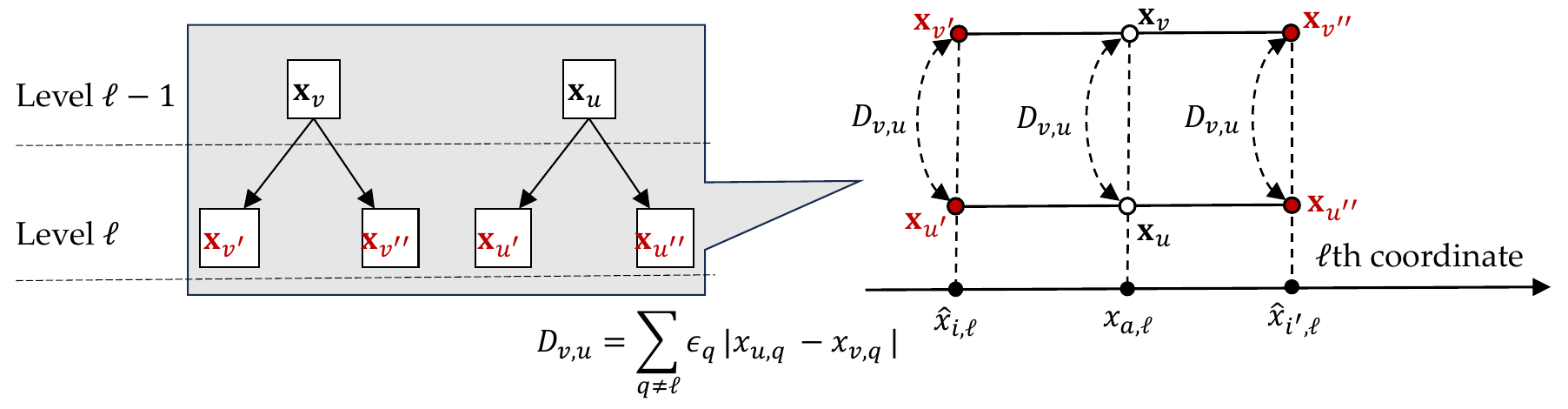}}

\end{minipage}
\caption{Illustration of Property \textbf{P-B}.}
\label{fig:3Dcube_parent}

\end{figure*}
We now prove the theorem via \textbf{induction} on the tree levels:
\begin{itemize}
    \item \textbf{Base case (Level $N$)}: Each leaf node lies on the boundaries in all $N$ dimensions, thus is an anchor record. Since 
    every pair of $\ell$-neighbor anchors satisfies $(\boldsymbol{\epsilon}, d_p)$-DW-Lipschitz bound by assumption, any pair of leaves satisfies $(\boldsymbol{\epsilon}, d_p)$-DW-Lipschitz bound  (according to \textbf{Lemma \ref{lem:chain}}).

    \item \textbf{Inductive step}: Suppose at level $\ell$ ($2 \leq \ell \leq N$), any pair of nodes satisfies $(\boldsymbol{\epsilon}, d_p)$-DW-Lipschitz bound. We show that at level $\ell-1$, their parents also satisfy $(\boldsymbol{\epsilon}, d_p)$-DW-Lipschitz bound.

    Consider two parent nodes $\mathbf{x}_v$ and $\mathbf{x}_u$ at level $\ell-1$ with children $\{\mathbf{x}_{v'}, \mathbf{x}_{v''}\}$ and $\{\mathbf{x}_{u'}, \mathbf{x}_{u''}\}$, respectively. As shown in Fig.~\ref{fig:3Dcube_parent}:
    \begin{itemize}
        \item $\mathbf{x}_v$ shares all coordinates with $\mathbf{x}_{v'}$ and $\mathbf{x}_{v''}$ except along dimension $\ell$.
        \item $\mathbf{x}_u$ shares all coordinates with $\mathbf{x}_{u'}$ and $\mathbf{x}_{u''}$ except along dimension $\ell$.
        \item Moreover, $x_{v,\ell} = x_{u,\ell} = x_{a,\ell}$, $x_{v',\ell} = x_{u',\ell} = \hat{x}_{i,\ell}$, and $x_{v'',\ell} = x_{u'',\ell} = \hat{x}_{i',\ell}$.
    \end{itemize}
    Then, by applying \textbf{Lemma \ref{lem:1DPPI}}, 
    we can obtain 
    \begin{eqnarray}
    z(\mathbf{y}_k \mid \mathbf{x}_v) &\lnconv& \left(z(\mathbf{y}_k \mid \mathbf{x}_{v'}), z(\mathbf{y}_k \mid \mathbf{x}_{v''})\right) \\
    z(\mathbf{y}_k \mid \mathbf{x}_u) &\lnconv& \left(z(\mathbf{y}_k \mid \mathbf{x}_{u'}), z(\mathbf{y}_k \mid \mathbf{x}_{u''})\right).
    \end{eqnarray}
    and then by \textbf{Lemma \ref{lem:6points}}, we can obtain the parent nodes $\mathbf{x}_v$ and $\mathbf{x}_u$ satisfy $(\boldsymbol{\epsilon}, d_p)$-DW-Lipschitz bound.

    \item \textbf{Conclusion (Level 0)}:  
    At level 1, by inducation, we have deduced that the four nodes $\mathbf{x}_c, \mathbf{x}_d, \mathbf{x}_e, \mathbf{x}_f$ satisfy $(\boldsymbol{\epsilon}, d_p)$-DW-Lipschitz bound.  
    Finally, by applying \textbf{Proposition~\ref{prop:across}} (or \textbf{Proposition~\ref{prop:intral}} if $\mathbf{x}_a$ and $\mathbf{x}_b$ lie in the same $N$-orthotope), we conclude that the original pair $(\mathbf{x}_a, \mathbf{x}_b)$ satisfies $(\boldsymbol{\epsilon}, d_p)$-DW-Lipschitz bound, and hence $\epsilon, d_p)$-mDP (according to \textbf{Lemma \ref{lem:dwmdp_sufficient}}).
\end{itemize}

This completes the proof.
\end{proof}

\subsection{Proof of Proposition \ref{prop:AIPOfeasible}}
\label{subsec:proof:prop:AIPOfeasible}
\begin{reproposition}
Given that any pair of real records $\mathbf{x}_a \in \mathcal{X}_m$ and $\mathbf{x}_b \in \mathcal{X}_{m'}$ with their perturbation probabilities interpolated by 
\begin{eqnarray}
&& z(\mathbf{y}_k \mid \mathbf{x}_a) = \overline{f}_{\mathrm{int}}(\mathbf{x}_a, \mathbf{y}_k, \mathbf{Z}_{\hat{\mathcal{X}}_m})\\
&& z(\mathbf{y}_k \mid \mathbf{x}_b) = \overline{f}_{\mathrm{int}}(\mathbf{x}_b, \mathbf{y}_k, \mathbf{Z}_{\hat{\mathcal{X}}_{m'}})
\end{eqnarray} 
then their perturbation probabilities satisfy $(2\epsilon, d_p)$-mDP. 
\end{reproposition}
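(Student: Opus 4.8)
The plan is to reduce Proposition~\ref{prop:AIPOfeasible} to the already-established pointwise $(\epsilon, d_p)$-Lipschitz bound for the \emph{unnormalized} interpolant $f_{\mathrm{int}}$ (Theorem~\ref{thm:AIPOfeasible}), and then control the additional distortion introduced by the normalization step. First I would record the fact that, by Theorem~\ref{thm:AIPOfeasible}, for every output $\mathbf{y}_k$ and every pair $\mathbf{x}_a \in \mathcal{X}_m$, $\mathbf{x}_b \in \mathcal{X}_{m'}$,
\begin{equation}
e^{-\epsilon d_p(\mathbf{x}_a,\mathbf{x}_b)} \;\le\; \frac{f_{\mathrm{int}}(\mathbf{x}_a,\mathbf{y}_k,\mathbf{Z}_{\hat{\mathcal{X}}_m})}{f_{\mathrm{int}}(\mathbf{x}_b,\mathbf{y}_k,\mathbf{Z}_{\hat{\mathcal{X}}_{m'}})} \;\le\; e^{\epsilon d_p(\mathbf{x}_a,\mathbf{x}_b)}.
\end{equation}
(The two-sided form follows by symmetry of the Lipschitz bound in $\mathbf{x}_a,\mathbf{x}_b$.)

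Next I would sum over $\mathbf{y}_j \in \mathcal{Y}$. Writing $Z(\mathbf{x}_a) := \sum_{\mathbf{y}_j \in \mathcal{Y}} f_{\mathrm{int}}(\mathbf{x}_a,\mathbf{y}_j,\mathbf{Z}_{\hat{\mathcal{X}}_m})$ and likewise $Z(\mathbf{x}_b)$, the pointwise bound above, applied termwise and summed, gives
\begin{equation}
e^{-\epsilon d_p(\mathbf{x}_a,\mathbf{x}_b)} \;\le\; \frac{Z(\mathbf{x}_a)}{Z(\mathbf{x}_b)} \;\le\; e^{\epsilon d_p(\mathbf{x}_a,\mathbf{x}_b)},
\end{equation}
so the partition functions obey the same multiplicative envelope. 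Then, since $z(\mathbf{y}_k \mid \mathbf{x}_a) = f_{\mathrm{int}}(\mathbf{x}_a,\mathbf{y}_k,\mathbf{Z}_{\hat{\mathcal{X}}_m})/Z(\mathbf{x}_a)$ by Definition~\ref{def:PPI}, the ratio of normalized probabilities factors as
\begin{equation}
\frac{z(\mathbf{y}_k \mid \mathbf{x}_a)}{z(\mathbf{y}_k \mid \mathbf{x}_b)} \;=\; \frac{f_{\mathrm{int}}(\mathbf{x}_a,\mathbf{y}_k,\mathbf{Z}_{\hat{\mathcal{X}}_m})}{f_{\mathrm{int}}(\mathbf{x}_b,\mathbf{y}_k,\mathbf{Z}_{\hat{\mathcal{X}}_{m'}})} \cdot \frac{Z(\mathbf{x}_b)}{Z(\mathbf{x}_a)} \;\le\; e^{\epsilon d_p(\mathbf{x}_a,\mathbf{x}_b)} \cdot e^{\epsilon d_p(\mathbf{x}_a,\mathbf{x}_b)} \;=\; e^{2\epsilon d_p(\mathbf{x}_a,\mathbf{x}_b)},
\end{equation}
and the matching lower bound follows identically. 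Taking logarithms and absolute values yields $|\ln z(\mathbf{y}_k \mid \mathbf{x}_a) - \ln z(\mathbf{y}_k \mid \mathbf{x}_b)| \le 2\epsilon\, d_p(\mathbf{x}_a,\mathbf{x}_b)$; combined with the normalization $\sum_{\mathbf{y}_k} z(\mathbf{y}_k \mid \mathbf{x}_a) = 1$ guaranteed by Definition~\ref{def:PPI}, this is exactly $(2\epsilon, d_p)$-mDP.

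The argument is essentially a two-line calculation once Theorem~\ref{thm:AIPOfeasible} is in hand, so there is no serious obstacle; the only point requiring a modicum of care is the passage from the \emph{per-output} bound on $f_{\mathrm{int}}$ to the bound on the partition function $Z(\cdot)$. I would make explicit there that if $a_j \le c\, b_j$ for all $j$ with $a_j, b_j > 0$, then $\sum_j a_j \le c \sum_j b_j$ — elementary, but worth stating since it is the one place where the sum over $\mathcal{Y}$ is used. I would also remark that the factor $2$ is generally tight (the normalizer ratio and the numerator ratio can both move in the same direction), so the constant-factor slack is intrinsic to this normalization-by-division approach rather than an artifact of the bound. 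A detailed write-up is deferred to Appendix~\ref{subsec:proof:prop:AIPOfeasible}.
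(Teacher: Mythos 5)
Your proposal is correct and follows essentially the same route as the paper's proof in Appendix~\ref{subsec:proof:prop:AIPOfeasible}: invoke the pointwise $(\epsilon,d_p)$ bound on the unnormalized interpolant, sum over $\mathcal{Y}$ to transfer the same multiplicative envelope to the partition function, and multiply the per-output ratio by the inverse normalizer ratio to obtain the factor $e^{2\epsilon d_p(\mathbf{x}_a,\mathbf{x}_b)}$. Your added remarks (the elementary termwise-summation step and the tightness of the factor $2$) are sound refinements of the same argument rather than a different approach.
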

\begin{proof}
We assume that the unnormalized interpolation function $f_{\mathrm{int}}$ satisfies $(\epsilon, d_p)$-Lipschitz bound, i.e.,
\begin{equation}
\label{eq:unnormalized_mdp}
e^{- \epsilon d_p(\mathbf{x}_a, \mathbf{x}_b)} \leq 
\frac{f_{\mathrm{int}}(\mathbf{x}_a, \mathbf{y}_k, \mathbf{Z}_{\hat{\mathcal{X}}_m})}
     {f_{\mathrm{int}}(\mathbf{x}_b, \mathbf{y}_k, \mathbf{Z}_{\hat{\mathcal{X}}_{m'}})} 
\leq e^{\epsilon d_p(\mathbf{x}_a, \mathbf{x}_b)}.
\end{equation}

By summing both the numerator and denominator over all $\mathbf{y}_j \in \mathcal{Y}$, the same multiplicative bounds hold for the partition function:
\begin{equation}
\label{eq:partition_bound}
e^{- \epsilon d_p(\mathbf{x}_a, \mathbf{x}_b)} \leq 
\frac{\sum_{\mathbf{y}_j \in \mathcal{Y}} f_{\mathrm{int}}(\mathbf{x}_a, \mathbf{y}_j, \mathbf{Z}_{\hat{\mathcal{X}}_m})}
     {\sum_{\mathbf{y}_j \in \mathcal{Y}} f_{\mathrm{int}}(\mathbf{x}_b, \mathbf{y}_j, \mathbf{Z}_{\hat{\mathcal{X}}_{m'}})} 
\leq e^{\epsilon d_p(\mathbf{x}_a, \mathbf{x}_b)}.
\end{equation}

Recall that the normalized output distribution is defined as:
\[
z(\mathbf{y}_k \mid \mathbf{x}) = \overline{f}_{\mathrm{int}}(\mathbf{x}, \mathbf{y}_k, \mathbf{Z}_{\hat{\mathcal{X}}_m}) 
= \frac{f_{\mathrm{int}}(\mathbf{x}, \mathbf{y}_k, \mathbf{Z}_{\hat{\mathcal{X}}_m})}
       {\sum_{\mathbf{y}_j \in \mathcal{Y}} f_{\mathrm{int}}(\mathbf{x}, \mathbf{y}_j, \mathbf{Z}_{\hat{\mathcal{X}}_m})}.
\]

Then, the ratio of normalized probabilities becomes:
\begin{align}
\frac{z(\mathbf{y}_k \mid \mathbf{x}_a)}{z(\mathbf{y}_k \mid \mathbf{x}_b)} 
&= \frac{\overline{f}_{\mathrm{int}}(\mathbf{x}_a, \mathbf{y}_k, \mathbf{Z}_{\hat{\mathcal{X}}_m})}
        {\overline{f}_{\mathrm{int}}(\mathbf{x}_b, \mathbf{y}_k, \mathbf{Z}_{\hat{\mathcal{X}}_{m'}})} \nonumber \\
&= \frac{f_{\mathrm{int}}(\mathbf{x}_a, \mathbf{y}_k, \mathbf{Z}_{\hat{\mathcal{X}}_m})}
         {f_{\mathrm{int}}(\mathbf{x}_b, \mathbf{y}_k, \mathbf{Z}_{\hat{\mathcal{X}}_{m'}})}
   \cdot
   \frac{\sum_{\mathbf{y}_j \in \mathcal{Y}} f_{\mathrm{int}}(\mathbf{x}_b, \mathbf{y}_j, \mathbf{Z}_{\hat{\mathcal{X}}_{m'}})}
        {\sum_{\mathbf{y}_j \in \mathcal{Y}} f_{\mathrm{int}}(\mathbf{x}_a, \mathbf{y}_j, \mathbf{Z}_{\hat{\mathcal{X}}_m})}.
\end{align}

Applying inequalities \eqref{eq:unnormalized_mdp} and \eqref{eq:partition_bound} to each term, we obtain:
\begin{equation}
e^{-2\epsilon d_p(\mathbf{x}_a, \mathbf{x}_b)} \leq 
\frac{z(\mathbf{y}_k \mid \mathbf{x}_a)}{z(\mathbf{y}_k \mid \mathbf{x}_b)} 
\leq e^{2\epsilon d_p(\mathbf{x}_a, \mathbf{x}_b)},
\end{equation}
which concludes the proof that the normalized distribution satisfies $(2\epsilon, d_p)$-metric differential privacy.
\end{proof}

\subsection{Proof of Proposition \ref{prop:linapprox}: Linear Surrogate For Utility Loss}
\label{subsec:proof:prop:linapprox}
\begin{reproposition}
[Linear surrogate for utility loss]
For any point 
$\mathbf{x} =\hat{\mathbf{x}}_{i_m}+\boldsymbol{\lambda}_{a}\odot\boldsymbol{\Delta} \in \mathcal{C}_m$, with the convex coefficients $\boldsymbol{\lambda} = [\lambda_{\hat{\mathbf{x}}_{i_m}, \mathbf{x}}^{1}, \ldots, \lambda_{\hat{\mathbf{x}}_{i_m}, \mathbf{x}}^{N}] \in [0,1]^N$,
approximate the perturbation probability by 
\begin{eqnarray}
&& \Pr\!\bigl[\mathcal{M}(\mathbf{x};\mathbf{Z}_{\hat{\mathcal{X}}_{m}})
             =\mathbf{y}_{k}\bigr]\\
&\approx&
\sum_{\boldsymbol{\gamma}\in\{0,1\}^{N}}
\prod_{\ell=1}^{N}\!
\bigl((1-\gamma_{\ell})\lambda_{\hat{\mathbf{x}}_{i_m}, \mathbf{x}}^{\ell}
+\gamma_{\ell}(1-\lambda_{\hat{\mathbf{x}}_{i_m}, \mathbf{x}}^{\ell})\bigr)
\\
&& \times z\!\bigl(\mathbf{y}_{k}\mid
          \hat{\mathbf{x}}_{i_m} +\boldsymbol{\gamma}\odot\boldsymbol{\Delta}\bigr)
\end{eqnarray}
the cell loss
$\mathcal{L}(\mathbf{Z}_{\hat{\mathcal{X}}_{m}})$
admits the linear surrogate $
\tilde{\mathcal{L}}(\mathbf{Z}_{\hat{\mathcal{X}}_{m}})=
\langle\tilde{\mathbf{C}}_{\hat{\mathcal{X}}_{m}}, \mathbf{Z}_{\hat{\mathcal{X}}_{m}}\rangle$, where $\tilde{\mathbf{C}}_{\hat{\mathcal{X}}_{m}} =\bigl\{\tilde{c}(\hat{\mathbf{x}}_{i},\mathbf{y}_{k})\bigr\}_{(\hat{\mathbf{x}}_{i},\mathbf{y}_{k})\in\hat{\mathcal{X}}_{m}\times\mathcal{Y}}$ is a constant coefficient matrix depending only on the prior $p(\mathbf{x})$ and the utility loss 
$\mathcal{L}(\mathbf{x},\mathbf{y}_{k})$. For each $\hat{\mathbf{x}}_j = \hat{\mathbf{x}}_{i_m} + \boldsymbol{\gamma} \odot \boldsymbol{\Delta} \in \hat{\mathcal{X}}_m$
\begin{eqnarray}
&& \tilde{c}(\hat{\mathbf{x}}_j,\mathbf{y}_{k}) \\ \nonumber 
&=&  \int_{\mathcal{C}_m}\prod_{\ell=1}^N ((1-\gamma_{\ell})\lambda_{\hat{\mathbf{x}}_{i_m}, \mathbf{x}}^{\ell}+\gamma_\ell(1-\lambda_{\hat{\mathbf{x}}_{i_m}, \mathbf{x}}^{\ell}))  p(\mathbf{x}) \mathcal{L}(\mathbf{x}, \mathbf{y}_k)\mathrm{d}\mathbf{x}. 
\end{eqnarray}
If $\mathcal{X}$ is discretized, 
\begin{eqnarray}
&& \tilde{c}(\hat{\mathbf{x}}_j, \mathbf{y}_k)
\\ \nonumber
&=& \sum_{\mathbf{x} \in \mathcal{X}_m}
\prod_{\ell=1}^N ((1-\gamma_{\ell})\lambda_{\hat{\mathbf{x}}_{i_m}, \mathbf{x}}^{\ell}+\gamma_\ell(1-\lambda_{\hat{\mathbf{x}}_{i_m}, \mathbf{x}}^{\ell}))
p(\mathbf{x})  \mathcal{L}(\mathbf{x}, \mathbf{y}_k).
\end{eqnarray}
\end{reproposition}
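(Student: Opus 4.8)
The plan is to prove Proposition~\ref{prop:linapprox} in two stages: first establish the stated geometric-to-linear conversion of the perturbation probability, then substitute this into the cell-level expected loss in Eq.~\eqref{eq:loss} and collect terms. I would begin by recalling the definition of the unnormalized interpolant $f_{\mathrm{int}}$ from Definition~\ref{def:logconvex_nd}, which expresses the interpolated (log-)probability as a weighted product $\prod_{\boldsymbol{\gamma}} z(\mathbf{y}_k\mid\hat{\mathbf{x}}_{i_m}+\boldsymbol{\gamma}\odot\boldsymbol{\Delta})^{w(\boldsymbol{\gamma})}$, where $w(\boldsymbol{\gamma})=\prod_{\ell}\bigl((1-\gamma_\ell)\lambda_{\hat{\mathbf{x}}_{i_m},\mathbf{x}}^\ell+\gamma_\ell(1-\lambda_{\hat{\mathbf{x}}_{i_m},\mathbf{x}}^\ell)\bigr)$. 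The first step is to observe that the weights $\{w(\boldsymbol{\gamma})\}_{\boldsymbol{\gamma}\in\{0,1\}^N}$ form a convex combination: since each one-dimensional factor $(1-\gamma_\ell)\lambda^\ell+\gamma_\ell(1-\lambda^\ell)$ sums to $1$ over $\gamma_\ell\in\{0,1\}$, the product over $\ell$ of these per-coordinate sums is $1$, and by distributing this product we get $\sum_{\boldsymbol{\gamma}}w(\boldsymbol{\gamma})=1$ with $w(\boldsymbol{\gamma})\ge 0$. Then the approximation step replaces the weighted geometric mean $\prod_{\boldsymbol{\gamma}} z^{w(\boldsymbol{\gamma})}$ by the weighted arithmetic mean $\sum_{\boldsymbol{\gamma}} w(\boldsymbol{\gamma})\, z(\mathbf{y}_k\mid\hat{\mathbf{x}}_{i_m}+\boldsymbol{\gamma}\odot\boldsymbol{\Delta})$; this is exactly the stated surrogate for $\Pr[\mathcal{M}(\mathbf{x})=\mathbf{y}_k]$, and I would note it is motivated by the AM--GM relationship (the arithmetic mean upper-bounds the geometric mean, so the surrogate is a tractable convex relaxation).

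The second stage substitutes this linear-in-$\mathbf{Z}_{\hat{\mathcal{X}}_m}$ approximation into Eq.~\eqref{eq:loss}. Writing $\overline{f}_{\mathrm{int}}(\mathbf{x},\mathbf{y}_k,\mathbf{Z}_{\hat{\mathcal{X}}_m})\approx\sum_{\boldsymbol{\gamma}}w_{\mathbf{x}}(\boldsymbol{\gamma})\,z(\mathbf{y}_k\mid\hat{\mathbf{x}}_{i_m}+\boldsymbol{\gamma}\odot\boldsymbol{\Delta})$ (where I now write $w_{\mathbf{x}}$ to emphasize dependence on the interpolation point $\mathbf{x}$ through its convex coordinates $\boldsymbol{\lambda}$), the cell loss becomes
\begin{equation}
\tilde{\mathcal{L}}(\mathbf{Z}_{\hat{\mathcal{X}}_m})=\sum_{\mathbf{y}_k\in\mathcal{Y}}\int_{\mathcal{C}_m}\Bigl(\sum_{\boldsymbol{\gamma}}w_{\mathbf{x}}(\boldsymbol{\gamma})\,z(\mathbf{y}_k\mid\hat{\mathbf{x}}_{i_m}+\boldsymbol{\gamma}\odot\boldsymbol{\Delta})\Bigr)p(\mathbf{x})\mathcal{L}(\mathbf{x},\mathbf{y}_k)\,\mathrm{d}\mathbf{x}.
\end{equation}
Interchanging the finite sum over $\boldsymbol{\gamma}$ with the integral over $\mathcal{C}_m$ (justified since the sum is finite and the integrand is integrable under mild regularity of $p$ and $\mathcal{L}$), the variable $z(\mathbf{y}_k\mid\hat{\mathbf{x}}_{i_m}+\boldsymbol{\gamma}\odot\boldsymbol{\Delta})$ factors out of the integral, leaving a coefficient $\tilde c(\hat{\mathbf{x}}_j,\mathbf{y}_k)=\int_{\mathcal{C}_m}w_{\mathbf{x}}(\boldsymbol{\gamma})\,p(\mathbf{x})\mathcal{L}(\mathbf{x},\mathbf{y}_k)\,\mathrm{d}\mathbf{x}$ attached to each anchor variable $\hat{\mathbf{x}}_j=\hat{\mathbf{x}}_{i_m}+\boldsymbol{\gamma}\odot\boldsymbol{\Delta}$. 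This is precisely the claimed closed form; the discretized version follows by replacing the integral with a sum over $\mathbf{x}\in\mathcal{X}_m$. Recognizing the resulting double sum $\sum_k\sum_j \tilde c(\hat{\mathbf{x}}_j,\mathbf{y}_k)\,z(\mathbf{y}_k\mid\hat{\mathbf{x}}_j)$ as the Frobenius inner product $\langle\tilde{\mathbf{C}}_{\hat{\mathcal{X}}_m},\mathbf{Z}_{\hat{\mathcal{X}}_m}\rangle$ completes the argument.

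The main obstacle is not analytical depth but bookkeeping: I must be careful that each anchor $\hat{\mathbf{x}}_j\in\hat{\mathcal{X}}_m$ corresponds to a unique $\boldsymbol{\gamma}\in\{0,1\}^N$ via $\hat{\mathbf{x}}_j=\hat{\mathbf{x}}_{i_m}+\boldsymbol{\gamma}\odot\boldsymbol{\Delta}$, so that the reindexing from "sum over $\boldsymbol{\gamma}$" to "sum over anchors" is a genuine bijection and no anchor is double-counted or omitted. I also need to confirm that $\tilde c$ genuinely depends only on $p(\mathbf{x})$ and $\mathcal{L}(\mathbf{x},\mathbf{y}_k)$ — which holds because $w_{\mathbf{x}}(\boldsymbol{\gamma})$ is a purely geometric quantity determined by the position of $\mathbf{x}$ within $\mathcal{C}_m$ (through $\boldsymbol{\lambda}=\boldsymbol{\lambda}_{\hat{\mathbf{x}}_{i_m},\mathbf{x}}$) and carries no dependence on the decision variables $\mathbf{Z}_{\hat{\mathcal{X}}_m}$. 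A secondary point worth a remark is that the proposition only asserts the surrogate as an approximation ("$\approx$"), so I do not need to bound the approximation error here; that is deferred to the optimality-gap analysis in Appendix~\ref{sec:gapanalysis}, and I would simply flag this for the reader rather than attempting an error bound within this proof.
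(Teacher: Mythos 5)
Your proposal is correct and follows essentially the same route as the paper's proof: substitute the weighted arithmetic-mean surrogate for the interpolated probability into the cell-loss integral, interchange the finite sum over $\boldsymbol{\gamma}$ with the integral, and read off the coefficients $\tilde c(\hat{\mathbf{x}}_j,\mathbf{y}_k)$ attached to each anchor variable. Your added remarks (that the weights $w(\boldsymbol{\gamma})$ form a convex combination, the AM--GM motivation, and the bijection between $\boldsymbol{\gamma}$ and anchors) are correct and only make explicit what the paper leaves implicit.
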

\begin{proof}
Let $\mathbf{x} = \hat{\mathbf{x}}_{i_m} + \boldsymbol{\lambda} \odot \boldsymbol{\Delta}$ where $\boldsymbol{\lambda} = [\lambda_{\hat{\mathbf{x}}_{i_m}, \mathbf{x}}^{1}, \ldots, \lambda_{\hat{\mathbf{x}}_{i_m}, \mathbf{x}}^{N}] \in [0,1]^N$. By definition of the approximated mechanism, we have:
\normalsize
\begin{eqnarray}
\nonumber && \tilde{\mathcal{L}}(\mathbf{Z}_{\hat{\mathcal{X}}_m})  \\ \nonumber
&=& \sum_{\mathbf{y}_k \in \mathcal{Y}} \int_{\mathcal{C}_m}
\Pr\left[\mathcal{M}(\mathbf{x}, \mathbf{Z}_{\hat{\mathcal{X}}_m}) = \mathbf{y}_k\right] 
p(\mathbf{x}) \mathcal{L}(\mathbf{x}, \mathbf{y}_k) \mathrm{d}\mathbf{x} \\
\nonumber &\approx& \sum_{\mathbf{y}_k \in \mathcal{Y}} \int_{\mathcal{C}_m}
\left[
\sum_{\boldsymbol{\gamma} \in \{0,1\}^N} \bigl((1-\gamma_{\ell})\lambda_{\hat{\mathbf{x}}_{i_m}, \mathbf{x}}^{\ell}
\right. \\ \nonumber 
&& \left.+\gamma_{\ell}(1-\lambda_{\hat{\mathbf{x}}_{i_m}, \mathbf{x}}^{\ell})\bigr)
z(\mathbf{y}_k \mid \hat{\mathbf{x}}_{i_m} + \boldsymbol{\gamma} \odot \boldsymbol{\Delta})
\right]
p(\mathbf{x}) \mathcal{L}(\mathbf{x}, \mathbf{y}_k) \mathrm{d}\mathbf{x} \\
\nonumber  &=& \sum_{\mathbf{y}_k \in \mathcal{Y}} \sum_{\boldsymbol{\gamma} \in \{0,1\}^N}
z(\mathbf{y}_k \mid \hat{\mathbf{x}}_{i_m} + \boldsymbol{\gamma} \odot \boldsymbol{\Delta})
\\ \nonumber
&& \times \int_{\mathcal{C}_m}
\bigl((1-\gamma_{\ell})\lambda_{\hat{\mathbf{x}}_{i_m}, \mathbf{x}}^{\ell}
+\gamma_{\ell}(1-\lambda_{\hat{\mathbf{x}}_{i_m}, \mathbf{x}}^{\ell})\bigr) 
p(\mathbf{x}) \mathcal{L}(\mathbf{x}, \mathbf{y}_k) \mathrm{d}\mathbf{x} \\
&=& \sum_{(\hat{\mathbf{x}}_j, \mathbf{y}_k) \in \hat{\mathcal{X}}_m \times \mathcal{Y}}
\tilde{c}(\hat{\mathbf{x}}_j, \mathbf{y}_k)
z(\mathbf{y}_k \mid \hat{\mathbf{x}}_j),
\end{eqnarray}
where $\hat{\mathbf{x}}_j = \hat{\mathbf{x}}_{i_m} + \boldsymbol{\gamma} \odot \boldsymbol{\Delta}$ and the coefficient
\begin{eqnarray}
&& \tilde{c}(\hat{\mathbf{x}}_j, \mathbf{y}_k)
\\ 
&=& \int_{\mathcal{C}_m} \prod_{\ell=1}^N ((1-\gamma_{\ell})\lambda_{\hat{\mathbf{x}}_{i_m}, \mathbf{x}}^{\ell}+\gamma_\ell(1-\lambda_{\hat{\mathbf{x}}_{i_m}, \mathbf{x}}^{\ell})) \\
&& \times 
p(\mathbf{x}) \mathcal{L}(\mathbf{x}, \mathbf{y}_k) \mathrm{d}\mathbf{x}
\end{eqnarray}
is a constant with respect to $\mathbf{Z}$.

If $\mathcal{X}$ is discretized, then the integral becomes a summation:
\begin{equation}
\tilde{c}(\hat{\mathbf{x}}_j, \mathbf{y}_k)
= \sum_{\mathbf{x} \in \mathcal{X}_m}
\prod_{\ell=1}^N ((1-\gamma_{\ell})\lambda_{\hat{\mathbf{x}}_{i_m}, \mathbf{x}}^{\ell}+\gamma_\ell(1-\lambda_{\hat{\mathbf{x}}_{i_m}, \mathbf{x}}^{\ell}))
p(\mathbf{x})  \mathcal{L}(\mathbf{x}, \mathbf{y}_k).
\end{equation}

In both cases, the resulting loss $\tilde{\mathcal{L}}(\mathbf{Z}_{\hat{\mathcal{X}}_m})$ is a linear function of the perturbation matrix $\mathbf{Z}_{\hat{\mathcal{X}}_m}$.
\end{proof}

\section{Optimal Gap Analysis}
\label{sec:gapanalysis}
To evaluate how closely our solution can approach the true optimum, we derive a universal lower bound on the utility loss, labeled as \textbf{"LB"} in the following experiment part.

For any two cells $\mathcal{C}_m$ and $\mathcal{C}_{m'}$, we first define the following inter-cell distance:
\begin{equation}
  \hat d_p(\mathcal{C}_m, \mathcal{C}_{m'}) := \max_{\mathbf{x} \in \mathcal{C}_m,\ \mathbf{x}' \in \mathcal{C}_{m'}} d_p(\mathbf{x}, \mathbf{x}').
\end{equation}
Since $d_p(\mathbf{x}, \mathbf{x}') \leq \hat d_p(\mathcal{C}_m, \mathcal{C}_{m'})$ for all $\mathbf{x} \in \mathcal{C}_m, \mathbf{x}' \in \mathcal{C}_{m'}$, replacing $d_p$ with $\hat d_p$ yields a relaxation of the original mDP constraints. We define the aggregated decision variables:
\begin{equation}
z(\mathbf{y}_k \mid \mathcal{C}_m) := \int_{\mathcal{C}_m} \Pr(\mathcal{M}(\mathbf{x}) = \mathbf{y}_k)\; \mathrm{d}\mathbf{x},
\end{equation}
and impose the relaxed mDP constraints:
\begin{equation}
  z(\mathbf{y}_k \mid \mathcal{C}_m) -
  e^{\epsilon \hat d_p(\mathcal{C}_m, \mathcal{C}_{m'})}
  z(\mathbf{y}_k \mid \mathcal{C}_{m'}) \le 0,
  \quad \forall \mathcal{C}_m, \mathcal{C}_{m'}.
  \label{eq:relaxed-mdp}
\end{equation}
and normalization constraint:
\begin{equation}
\sum_{\mathbf{y}_k \in \mathcal{Y}} z(\mathbf{y}_k \mid \mathcal{C}_m)
= \int_{\mathcal{C}_m} \mathrm{d}\mathbf{x} = \prod_{\ell=1}^N \Delta_\ell.
\end{equation}
Finally, define the minimal utility loss over each cell:
\begin{equation}
\bar{\mathcal{L}}_m(\mathbf{y}_k) := \min_{\mathbf{x} \in \mathcal{C}_m} p(\mathbf{x}) \mathcal{L}(\mathbf{x}, \mathbf{y}_k).
\end{equation}

We now formulate the relaxed problem $\mathrm{APO}_{\mathrm{lb}}$ that minimizes a lower bound of the cell-aggregated utility loss:
\begin{eqnarray}
\min && \hat{\mathcal{L}}(\mathbf{Z}):= \sum_{m=1}^{M} \sum_{\mathbf{y}_k \in \mathcal{Y}} \bar{\mathcal{L}}_m(\mathbf{y}_k) \cdot z(\mathbf{y}_k \mid \mathcal{C}_m), \\
\mathrm{s.t.} && z(\mathbf{y}_k \mid \mathcal{C}_m) -
  e^{\epsilon \hat d_p(\mathcal{C}_m, \mathcal{C}_{m'})}
  z(\mathbf{y}_k \mid \mathcal{C}_{m'}) \le 0,
  ~ \forall \mathcal{C}_m, \mathcal{C}_{m'}, \\
  && \sum_{\mathbf{y}_k \in \mathcal{Y}} z(\mathbf{y}_k \mid \mathcal{C}_m)
= \prod_{\ell=1}^N \Delta_\ell, \forall \mathcal{C}_m
\end{eqnarray}
and denote its optimal value by $\hat{\mathcal{L}}^*$.

\begin{proposition}[Universal Lower Bound]
\label{prop:cell-lp-lower-bound}
Let $\mathcal{M} : \mathcal{X} \to \mathcal{Y}$ be any mechanism satisfying $(\epsilon, d_p)$-mDP. Let
\begin{equation}
\mathcal{L}(\mathcal{M}) :=
\int_{\mathcal{X}} \sum_{\mathbf{y}_k \in \mathcal{Y}} p(\mathbf{x}) \mathcal{L}(\mathbf{x}, \mathbf{y}_k) \Pr(\mathcal{M}(\mathbf{x}) = \mathbf{y}_k)\; \mathrm{d}\mathbf{x}
\end{equation}
denote its expected utility loss. Then,
\begin{equation}
\mathcal{L}(\mathcal{M}) \geq \hat{\mathcal{L}}^*.
\end{equation}
\end{proposition}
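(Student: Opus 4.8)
The plan is a weak-duality/feasibility argument: I will show that \emph{every} $(\epsilon,d_p)$-mDP mechanism $\mathcal{M}$ induces a feasible point of the relaxed program $\mathrm{APO}_{\mathrm{lb}}$ whose objective value is a lower bound for $\mathcal{L}(\mathcal{M})$, so that $\mathcal{L}(\mathcal{M})\ge \hat{\mathcal{L}}(\mathbf{Z})\ge\hat{\mathcal{L}}^*$. The candidate feasible point is exactly the aggregation given in the statement, $z(\mathbf{y}_k\mid\mathcal{C}_m):=\int_{\mathcal{C}_m}\Pr(\mathcal{M}(\mathbf{x})=\mathbf{y}_k)\,\mathrm{d}\mathbf{x}$, which is nonnegative and finite since the integrand lies in $[0,1]$. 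It then remains to verify three things: (i) these $z$ satisfy the relaxed mDP constraints~\eqref{eq:relaxed-mdp}; (ii) they satisfy the normalization constraint; and (iii) the induced objective $\hat{\mathcal{L}}(\mathbf{Z})$ is at most $\mathcal{L}(\mathcal{M})$.

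For (i), fix cells $\mathcal{C}_m,\mathcal{C}_{m'}$ and an output $\mathbf{y}_k$. For every $\mathbf{x}\in\mathcal{C}_m$ and $\mathbf{x}'\in\mathcal{C}_{m'}$, the pointwise guarantee of Definition~\ref{def:metricDP} together with $d_p(\mathbf{x},\mathbf{x}')\le\hat d_p(\mathcal{C}_m,\mathcal{C}_{m'})$ yields $\Pr(\mathcal{M}(\mathbf{x})=\mathbf{y}_k)\le e^{\epsilon \hat d_p(\mathcal{C}_m,\mathcal{C}_{m'})}\Pr(\mathcal{M}(\mathbf{x}')=\mathbf{y}_k)$. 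Averaging the right-hand side over $\mathbf{x}'\in\mathcal{C}_{m'}$ replaces $\Pr(\mathcal{M}(\mathbf{x}')=\mathbf{y}_k)$ by $z(\mathbf{y}_k\mid\mathcal{C}_{m'})/\mathrm{vol}(\mathcal{C}_{m'})$, and then integrating over $\mathbf{x}\in\mathcal{C}_m$ gives $z(\mathbf{y}_k\mid\mathcal{C}_m)\le e^{\epsilon \hat d_p(\mathcal{C}_m,\mathcal{C}_{m'})}\,\tfrac{\mathrm{vol}(\mathcal{C}_m)}{\mathrm{vol}(\mathcal{C}_{m'})}\,z(\mathbf{y}_k\mid\mathcal{C}_{m'})$; since the partition consists of congruent $N$-orthotopes, $\mathrm{vol}(\mathcal{C}_m)=\mathrm{vol}(\mathcal{C}_{m'})=\prod_\ell\Delta_\ell$ and this is precisely~\eqref{eq:relaxed-mdp}. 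For (ii), interchanging the (countable) sum over $\mathcal{Y}$ with the integral by Tonelli and using $\sum_{\mathbf{y}_k}\Pr(\mathcal{M}(\mathbf{x})=\mathbf{y}_k)=1$ gives $\sum_{\mathbf{y}_k} z(\mathbf{y}_k\mid\mathcal{C}_m)=\int_{\mathcal{C}_m}1\,\mathrm{d}\mathbf{x}=\mathrm{vol}(\mathcal{C}_m)=\prod_\ell\Delta_\ell$, so the normalization constraint holds.

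For (iii), since the cells partition $\mathcal{X}$ (up to a null set), write $\mathcal{L}(\mathcal{M})=\sum_m\int_{\mathcal{C}_m}\sum_{\mathbf{y}_k}p(\mathbf{x})\mathcal{L}(\mathbf{x},\mathbf{y}_k)\Pr(\mathcal{M}(\mathbf{x})=\mathbf{y}_k)\,\mathrm{d}\mathbf{x}$. On each cell, $p(\mathbf{x})\mathcal{L}(\mathbf{x},\mathbf{y}_k)\ge\bar{\mathcal{L}}_m(\mathbf{y}_k)$ for all $\mathbf{x}\in\mathcal{C}_m$, and $\Pr(\mathcal{M}(\mathbf{x})=\mathbf{y}_k)\ge 0$, so each summand is bounded below by $\bar{\mathcal{L}}_m(\mathbf{y}_k)\Pr(\mathcal{M}(\mathbf{x})=\mathbf{y}_k)$; integrating and swapping the sum and integral gives $\mathcal{L}(\mathcal{M})\ge\sum_m\sum_{\mathbf{y}_k}\bar{\mathcal{L}}_m(\mathbf{y}_k)\,z(\mathbf{y}_k\mid\mathcal{C}_m)=\hat{\mathcal{L}}(\mathbf{Z})$. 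Because $\mathbf{Z}$ is feasible for $\mathrm{APO}_{\mathrm{lb}}$ by (i)–(ii), its objective dominates the optimum, $\hat{\mathcal{L}}(\mathbf{Z})\ge\hat{\mathcal{L}}^*$, and chaining the two inequalities completes the proof.

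The difficulty here is not conceptual but a handful of measure-theoretic points that should be stated carefully: the map $\mathbf{x}\mapsto\Pr(\mathcal{M}(\mathbf{x})=\mathbf{y}_k)$ must be measurable so the aggregated $z(\cdot\mid\mathcal{C}_m)$ are well-defined; the Fubini/Tonelli interchanges are legitimate since all integrands are nonnegative and $\mathcal{Y}$ is discrete; and the minimum defining $\bar{\mathcal{L}}_m(\mathbf{y}_k)$ should be guaranteed to exist (e.g.\ $\mathcal{C}_m$ compact and $p\cdot\mathcal{L}(\cdot,\mathbf{y}_k)$ lower semicontinuous), or else one works throughout with $\inf$, which does not change any step. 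It is also worth noting that the self-constraints ($m=m'$) in~\eqref{eq:relaxed-mdp} are vacuous because $\hat d_p(\mathcal{C}_m,\mathcal{C}_m)\ge 0$, so no feasibility obstruction arises there.
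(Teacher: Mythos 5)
Your proposal is correct and follows essentially the same route as the paper's proof: show that the cell-aggregated probabilities $z(\mathbf{y}_k\mid\mathcal{C}_m)$ form a feasible point of the relaxed program $\mathrm{APO}_{\mathrm{lb}}$ (using the equal-volume property of the cells), lower-bound the expected loss by the per-cell minima $\bar{\mathcal{L}}_m(\mathbf{y}_k)$, and conclude by optimality of $\hat{\mathcal{L}}^*$. The only difference is cosmetic: you verify the relaxed mDP constraint by averaging the pointwise inequality over both cells, whereas the paper passes through $\sup/\inf$ bounds on $f$ over each cell; both hinge on the same equal-volume cancellation.
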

\DEL{
\begin{proof}[Proof of Proposition \ref{prop:cell-lp-lower-bound}]
Let $\mathcal{M}$ be an arbitrary $(\epsilon, d_p)$-mDP mechanism. For any $\mathbf{x} \in \mathcal{C}_m$, $\mathbf{x}' \in \mathcal{C}_{m'}$, the privacy guarantee implies
\begin{equation}
\frac{\Pr(\mathcal{M}(\mathbf{x}) = \mathbf{y}_k)}{\Pr(\mathcal{M}(\mathbf{x}') = \mathbf{y}_k)} 
\leq e^{\epsilon d_p(\mathbf{x}, \mathbf{x}')} 
\leq e^{\epsilon \hat d_p(\mathcal{C}_m, \mathcal{C}_{m'})}.
\end{equation}
Thus, for all such pairs, the mechanism satisfies the relaxed constraint with distance $\hat d_p(\mathcal{C}_m, \mathcal{C}_{m'})$.

This implies that the aggregated decision variables
\begin{equation}
z(\mathbf{y}_k \mid \mathcal{C}_m) := \int_{\mathcal{C}_m} \Pr(\mathcal{M}(\mathbf{x}) = \mathbf{y}_k) \mathrm{d}\mathbf{x}
\end{equation}
satisfy the relaxed mDP constraints:
\begin{equation}
z(\mathbf{y}_k \mid \mathcal{C}_m) 
\leq e^{\epsilon \hat d_p(\mathcal{C}_m, \mathcal{C}_{m'})} z(\mathbf{y}_k \mid \mathcal{C}_{m'}), 
\quad \forall \mathcal{C}_m, \mathcal{C}_{m'}.
\end{equation}
Additionally, normalization is preserved:
\begin{equation}
\sum_{\mathbf{y}_k \in \mathcal{Y}} z(\mathbf{y}_k \mid \mathcal{C}_m)
= \int_{\mathcal{C}_m} \sum_{\mathbf{y}_k \in \mathcal{Y}} \Pr(\mathcal{M}(\mathbf{x}) = \mathbf{y}_k) \mathrm{d}\mathbf{x}
= \int_{\mathcal{C}_m} \mathrm{d}\mathbf{x}
= \prod_{\ell=1}^N \Delta_\ell.
\end{equation}
Hence, the aggregated variables $\{z(\mathbf{y}_k \mid \mathcal{C}_m)\}$ induced by any valid mechanism $\mathcal{M}$ form a feasible solution to the relaxed linear program $\mathrm{APO}_{\mathrm{lb}}$.

Now we lower-bound the utility loss:
\begin{align}
\mathcal{L}(\mathcal{M}) 
&= \sum_{\mathbf{y}_k \in \mathcal{Y}} \int_{\mathcal{X}} p(\mathbf{x}) \mathcal{L}(\mathbf{x}, \mathbf{y}_k) \Pr(\mathcal{M}(\mathbf{x}) = \mathbf{y}_k) \mathrm{d}\mathbf{x} \\
&= \sum_{m=1}^M \sum_{\mathbf{y}_k \in \mathcal{Y}} \int_{\mathcal{C}_m} p(\mathbf{x}) \mathcal{L}(\mathbf{x}, \mathbf{y}_k) \Pr(\mathcal{M}(\mathbf{x}) = \mathbf{y}_k) \mathrm{d}\mathbf{x} \\
&\geq \sum_{m=1}^M \sum_{\mathbf{y}_k \in \mathcal{Y}} \bar{\mathcal{L}}_m(\mathbf{y}_k) z(\mathbf{y}_k \mid \mathcal{C}_m) \\
&\geq \hat{\mathcal{L}}^*,
\end{align}
where $\bar{\mathcal{L}}_m(\mathbf{y}_k) := \min_{\mathbf{x} \in \mathcal{C}_m} p(\mathbf{x}) \mathcal{L}(\mathbf{x}, \mathbf{y}_k)$, and the last line holds because $\hat{\mathcal{L}}^*$ is the optimal value of the relaxed problem.
This completes the proof.
\end{proof}}

\begin{proof}[Proof of Proposition~\ref{prop:cell-lp-lower-bound}]
Fix any data perturbation mechanism $\mathcal{M}$ that satisfies $(\epsilon,d_p)$-mDP.
For a fixed output symbol $\mathbf{y}_k$ let
\begin{equation}
f(\mathbf{x})
\;:=\;
\Pr\!\bigl(\mathcal{M}(\mathbf{x}) = \mathbf{y}_k\bigr),
\qquad
\mathbf{x}\in\mathcal{X}.
\end{equation}

For any two cells $\mathcal{C}_m,\mathcal{C}_{m'}$ and for every
$\mathbf{x}\in\mathcal{C}_m,\;\mathbf{x}'\in\mathcal{C}_{m'}$ the mDP guarantee gives
\begin{eqnarray}
\label{eq:A.1}
&& \frac{f(\mathbf{x})}{f(\mathbf{x}')}
\leq 
e^{\epsilon d_p(\mathbf{x}_a,\mathbf{x}_b)}
\leq 
e^{\epsilon \hat d_p(\mathcal{C}_m,\mathcal{C}_{m'})}, \\
&& \hat d_p(\mathcal{C}_m,\mathcal{C}_{m'})
\;=\;
\max_{\mathbf{u}\in\mathcal{C}_m,\;\mathbf{v}\in\mathcal{C}_{m'}}
      d_p(\mathbf{u},\mathbf{v}).
\end{eqnarray}
Set
$M_m := \sup_{\mathbf{x}\in\mathcal{C}_m} f(\mathbf{x}),
\quad
m_{m'} := \inf_{\mathbf{x}'\in\mathcal{C}_{m'}} f(\mathbf{x}')
$.
By Eq. (\ref{eq:A.1}) we have
\begin{equation}
\label{eq:A.2}
M_m \leq  e^{\epsilon\hat d_p(\mathcal{C}_m,\mathcal{C}_{m'})}m_{m'}.
\end{equation}
Since every grid cell has the same volume
$V:=\mu(\mathcal{C}_m)=\prod_{\ell=1}^{N}\Delta_\ell$,
\begin{equation}
z(\mathbf{y}_k\mid\mathcal{C}_m)
  := \int_{\mathcal{C}_m} f(\mathbf{x})\mathrm{d}\mathbf{x}
  \leq  VM_m,
~
z(\mathbf{y}_k\mid\mathcal{C}_{m'})
  \;\ge\; Vm_{m'}.
\end{equation}
Combining with Eq. (\ref{eq:A.2}) yields
\begin{equation}
z(\mathbf{y}_k\mid\mathcal{C}_m)
\leq 
e^{\epsilon\hat d_p(\mathcal{C}_m,\mathcal{C}_{m'})}
z(\mathbf{y}_k\mid\mathcal{C}_{m'}),
\end{equation}
i.e.\ the vector $\bigl\{z(\mathbf{y}_k\mid\mathcal{C}_m)\bigr\}$ satisfies the relaxed mDP
constraints~\eqref{eq:relaxed-mdp}.  Normalisation also holds because
$\sum_{\mathbf{y}_k}f(\mathbf{x})=1$ for every $\mathbf{x}$.  Hence, \textbf{the aggregated variables
induced by $\mathcal{M}$ form a feasible solution of $\mathrm{APO}_{\mathrm{lb}}$.}

Now, we can derive the expected loss of $\mathcal{M}$ as
\begin{eqnarray}
&& \mathcal{L}(\mathcal{M}) \\ 
&=& \sum_{\mathbf{y}_k\in\mathcal{Y}}
    \int_{\mathcal{X}} p(\mathbf{x})
           \mathcal{L}\bigl(\mathbf{x},\mathbf{y}_k\bigr)
           f(\mathbf{x})\mathrm{d}\mathbf{x} \\ 
&=& \sum_{m=1}^{M} \sum_{\mathbf{y}_k\in\mathcal{Y}}
    \int_{\mathcal{C}_m} p(\mathbf{x})
           \mathcal{L}\bigl(\mathbf{x},\mathbf{y}_k\bigr)
           f(\mathbf{x})\mathrm{d}\mathbf{x} \\ 
&\geq& \sum_{m=1}^{M} \sum_{\mathbf{y}_k\in\mathcal{Y}}
\underbrace{\min_{\mathbf{x}\in\mathcal{C}_m}
                  p(\mathbf{x})
\mathcal{L}\bigl(\mathbf{x},\mathbf{y}_k\bigr)}               _{=\:\bar{\mathcal{L}}_m(\mathbf{y}_k)}
     z(\mathbf{y}_k\mid\mathcal{C}_m) \\
&=&
  \hat{\mathcal{L}}\bigl(\mathbf{Z}\bigr).
\end{eqnarray}
Because $\mathbf{Z}$ is feasible for $\mathrm{APO}_{\mathrm{lb}}$,
$\hat{\mathcal{L}}(\mathbf{Z}) \ge \hat{\mathcal{L}}^{*}$.
Therefore
$
\mathcal{L}(\mathcal{M}) \ge \hat{\mathcal{L}}^{*},
$
establishing that $\hat{\mathcal{L}}^{*}$ is a universal lower bound on the utility loss of any
$(\epsilon,d_p)$-mDP mechanism.
\end{proof}

\section{Additional Experimental Results}
\label{sec:addexperiment}
This section presents additional experimental results to support our main findings in Section \ref{sec:experiments}. 

In this section, we present supplementary experiments to further validate our framework. \textbf{Section~\ref{subsec:exp:privacy}} provides a detailed distributional analysis of the \emph{perturbation probability ratio (PPR)}, as a supplementary results for Table \ref{tab:privacy2norm}. \textbf{Section~\ref{subsec:exp:granularity}} investigates the performance of the interpolation-based method under varying grid granularities, highlighting the trade-off between accuracy and efficiency. \textbf{Section~\ref{subsec:exp:privacybudget}} conducts an ablation study that compares utility loss with and without privacy budget optimization, illustrating the benefits of joint optimization. \textbf{Section~\ref{subsec:exp:1norm}} evaluates the framework when spatial proximity is measured by the $\ell_1$-norm instead of the $\ell_2$-norm, demonstrating the generality of our approach across different distance metrics.

\DEL{
\subsection{Experiment Settings}
\label{subsec:exp:settings}
\subsubsection{Datasets}
\label{subsubsec:datasets}
We conducted experiments on road network datasets from three major cities: \emph{Rome, Italy}, \emph{New York City (NYC), USA}, and \emph{London, UK}. Each dataset comprises nodes representing intersections, junctions, and other key points in the urban road infrastructure, with edges corresponding to actual road segments. The data were obtained from OpenStreetMap~\cite{openstreetmap}.

To support our interpolation-based method, we discretize each city's geographic area into a uniform grid map. Table~\ref{tab:dataset-stats} provides a summary of the geographic boundaries, node and edge counts, grid configurations, and cell sizes for each dataset. Fig.~\ref{fig:exp:roadmap}(a)(b)(c) illustrates the spatial distribution of locations for the three cities.

\begin{figure*}[h]
\centering
\hspace{0.00in}
\begin{minipage}{1.00\textwidth}
  \subfigure[Rome]{
\includegraphics[width=0.30\textwidth, height = 0.18\textheight]{./fig/exp/rome}}
  \subfigure[London]{
\includegraphics[width=0.30\textwidth, height = 0.18\textheight]{./fig/exp/london}}
  \subfigure[New York City]{
\includegraphics[width=0.30\textwidth, height = 0.18\textheight]{./fig/exp/nyc}}

\end{minipage}
\caption{Spatial distribution of locations and grid maps for the three cities.}
\label{fig:exp:roadmap}
\end{figure*}

\textbf{Coordinate transformation.} Since raw latitude and longitude coordinates lie on a spherical surface, directly applying the $\ell_2$-norm to them does not reflect true geographic distances. To enable accurate computation under the $\ell_2$-norm, we convert all geographic coordinates into local Cartesian coordinates using an equirectangular projection. Specifically, we map each location $(\text{lon}, \text{lat})$ to $(x, y)$ in kilometers via:
\[
x = R \cdot (\text{lon} - \text{lon}_0) \cdot \cos\left( \frac{\text{lat} + \text{lat}_0}{2} \right), \quad
y = R \cdot (\text{lat} - \text{lat}_0),
\]
where $R = 6371\text{km}$ is the Earth’s radius, and $(\text{lon}_0, \text{lat}_0)$ is a dataset-specific reference point. For each dataset, we use the following center coordinates: Rome $(12.4964^\circ\text{E},\ 41.9028^\circ\text{N})$, London $(0.1278^\circ\text{W},\ 51.5074^\circ\text{N})$, and New York City $(74.0060^\circ\text{W},\ 40.7128^\circ\text{N})$. This transformation ensures that pairwise distances computed with the $\ell_2$-norm approximate real-world geographic distances within each local region.
}

\DEL{
\begin{eqnarray}
&& \frac{\Pr(\mathcal{M}(\mathbf{x})=\mathbf y_k)}{\Pr(\mathcal{M}(\mathbf{x}')=\mathbf y_k)} \leq e^{d_p(\mathbf{x}, \mathbf{x}')} \leq e^{d_p(\mathcal{C}_m, \mathcal{C}_{m'})}~ \forall\mathbf{x} \in \mathcal{C}_m, \mathbf{x}' \in \mathcal{C}_{m'} \\
&\Rightarrow& \frac{\frac{\int_{\mathcal{C}_m}\Pr(\mathcal{M}(\mathbf{x})=\mathbf y_k)\mathrm{d}\mathbf{x}}{\int_{\mathcal{C}_m} \mathrm{d}\mathbf{x}} }{\frac{\int_{\mathcal{C}_{m'}}\Pr(\mathcal{M}(\mathbf{x}')=\mathbf y_k)\;\mathrm{d}\mathbf{x}'}{\int_{\mathcal{C}_{m'}} \mathrm{d}\mathbf{x}} } \leq e^{d_p(\mathcal{C}_m, \mathcal{C}_{m'})}
\end{eqnarray}}


\subsection{Privacy Evaluation: Perturbation Probability Ratio Distributional Analysis}
\label{subsec:exp:privacy}

\begin{figure*}[h]
\begin{minipage}{1.00\textwidth}
\centering
  \subfigure[EM]{
\includegraphics[width=0.133\textwidth, height = 0.09\textheight]{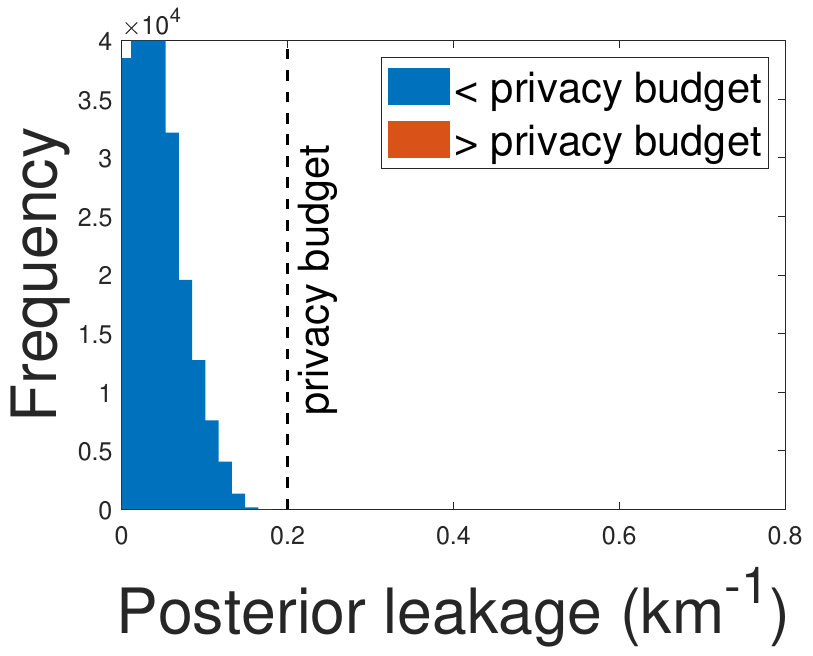}}
  \subfigure[Laplace]{
\includegraphics[width=0.133\textwidth, height = 0.09\textheight]{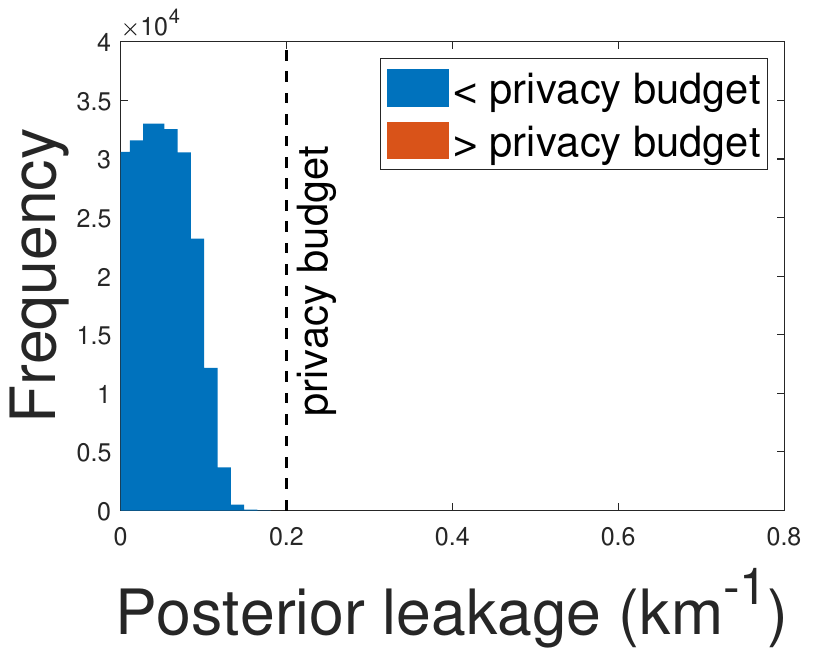}}
  \subfigure[TEM]{
\includegraphics[width=0.133\textwidth, height = 0.09\textheight]{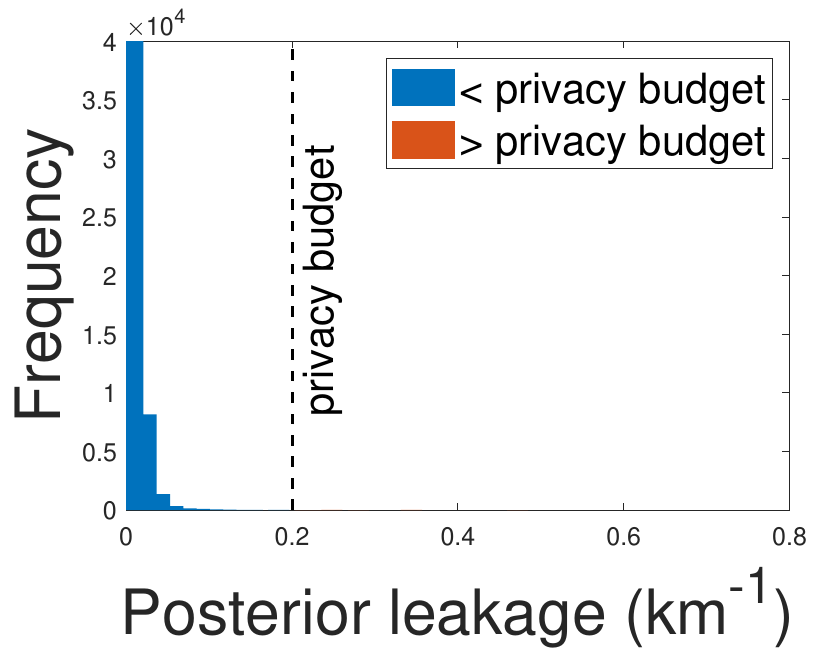}}
  \subfigure[COPT]{
\includegraphics[width=0.133\textwidth, height = 0.09\textheight]{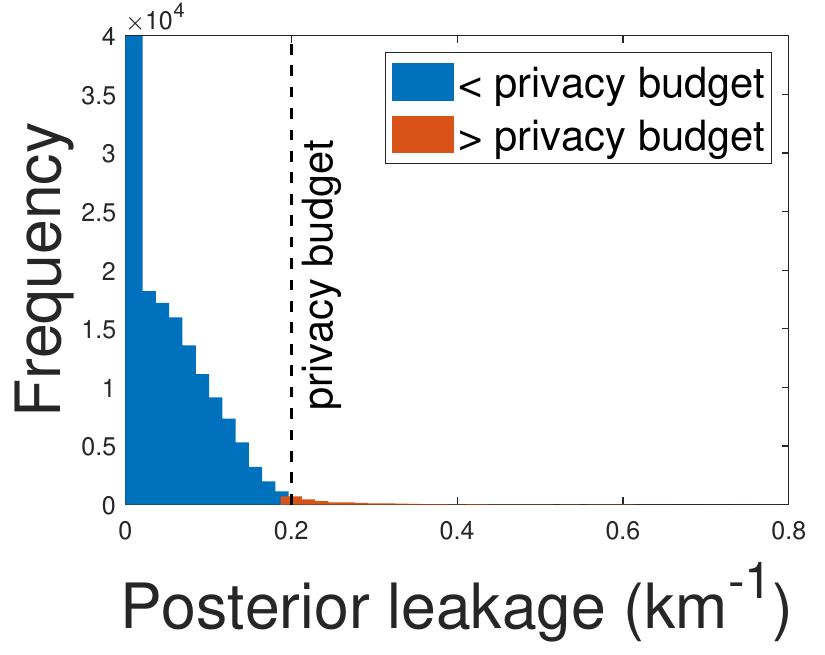}}
\subfigure[LP]{
\includegraphics[width=0.133\textwidth, height = 0.09\textheight]{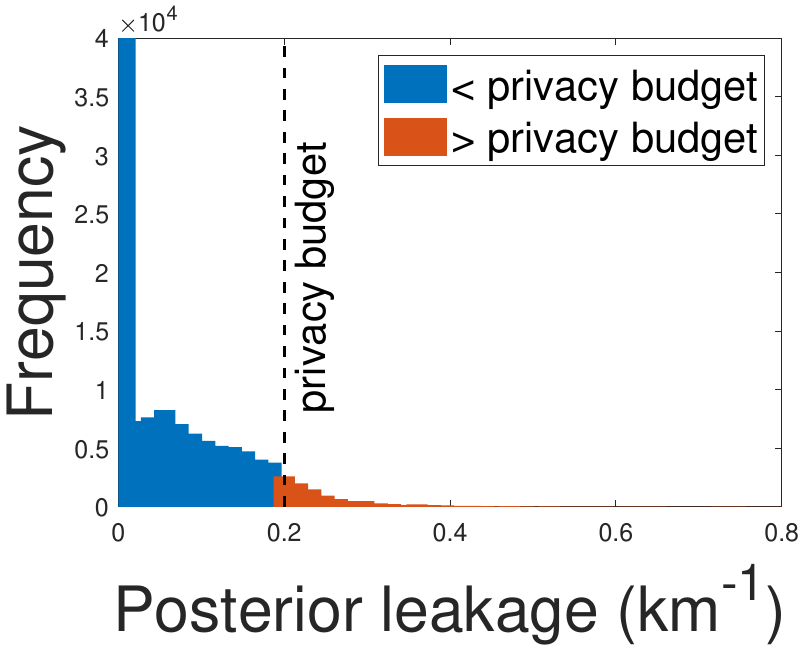}}  
\subfigure[AIPO-R]{
\includegraphics[width=0.133\textwidth, height = 0.09\textheight]{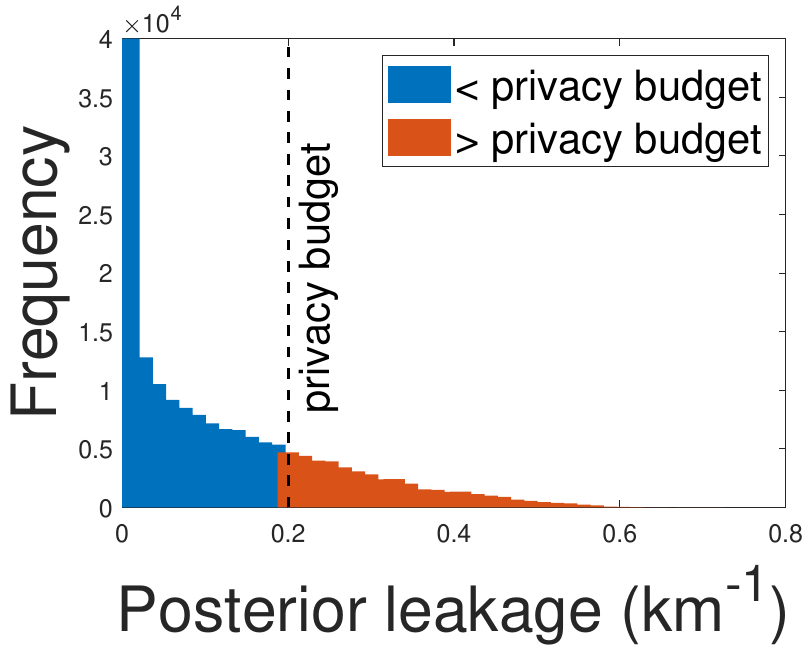}}
\subfigure[AIPO]{
\includegraphics[width=0.133\textwidth, height = 0.09\textheight]{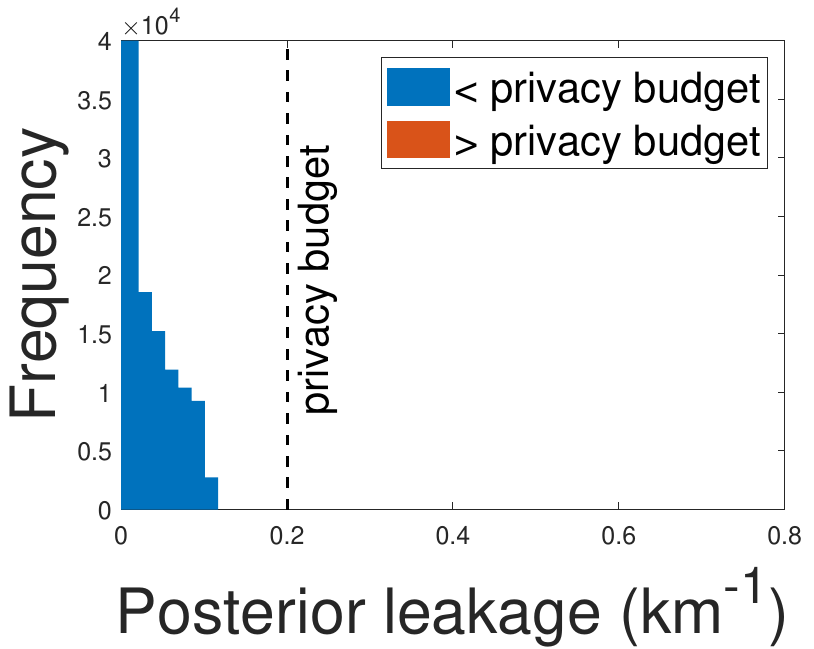}}
\end{minipage}
\caption{Example of posterior leakage distribution (Rome).}
\label{fig:PLdistRome}
\begin{minipage}{1.00\textwidth}
\centering
  \subfigure[EM]{
\includegraphics[width=0.133\textwidth, height = 0.09\textheight]{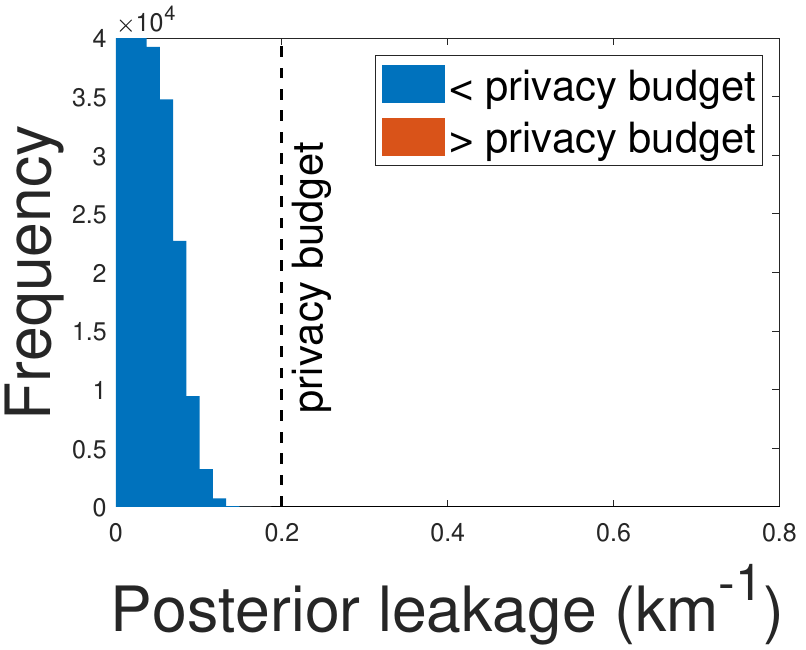}}
  \subfigure[Laplace]{
\includegraphics[width=0.133\textwidth, height = 0.09\textheight]{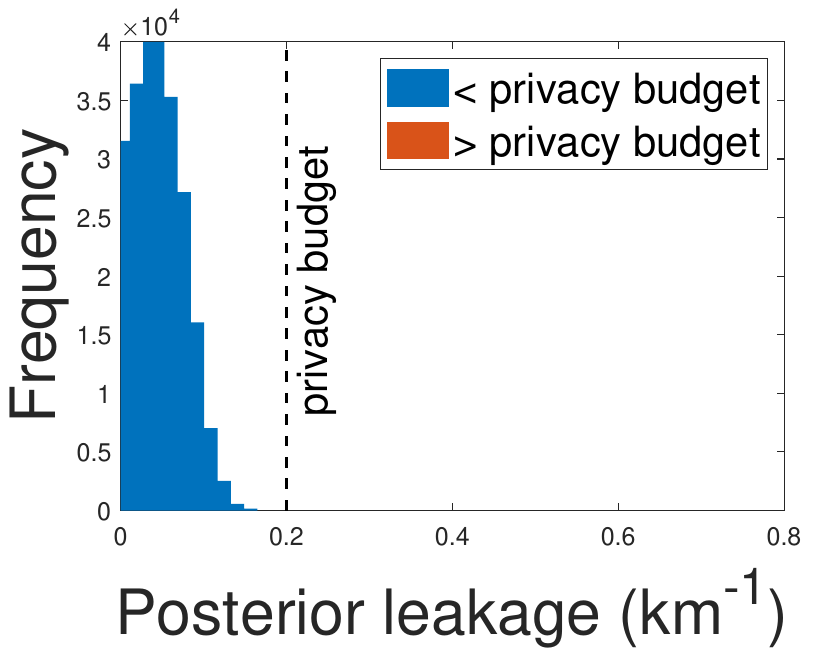}}
  \subfigure[TEM]{
\includegraphics[width=0.133\textwidth, height = 0.09\textheight]{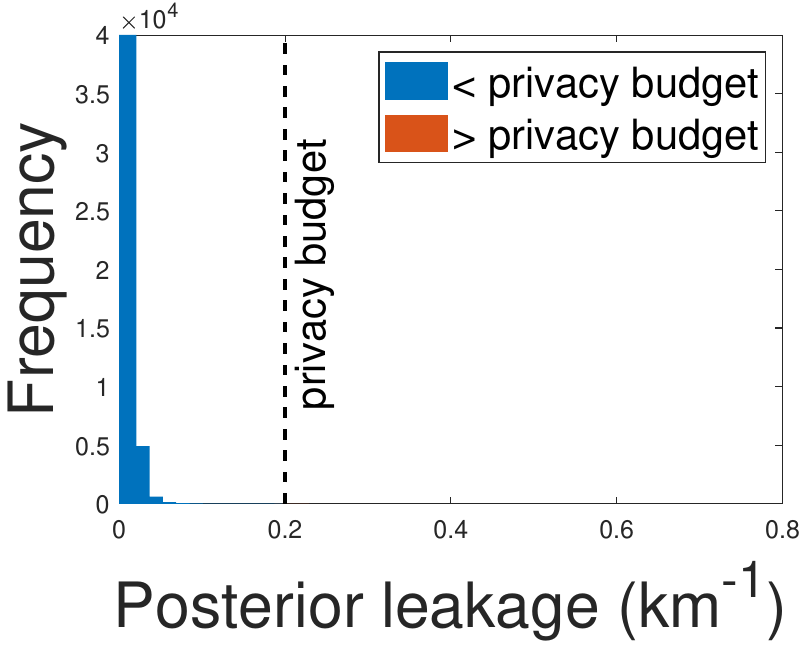}}
  \subfigure[COPT]{
\includegraphics[width=0.133\textwidth, height = 0.09\textheight]{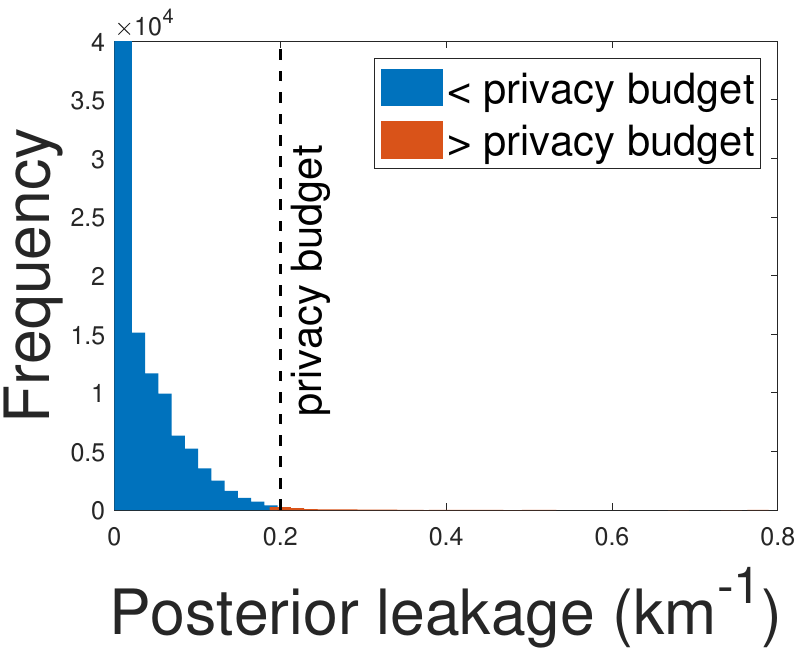}}
\subfigure[LP]{
\includegraphics[width=0.133\textwidth, height = 0.09\textheight]{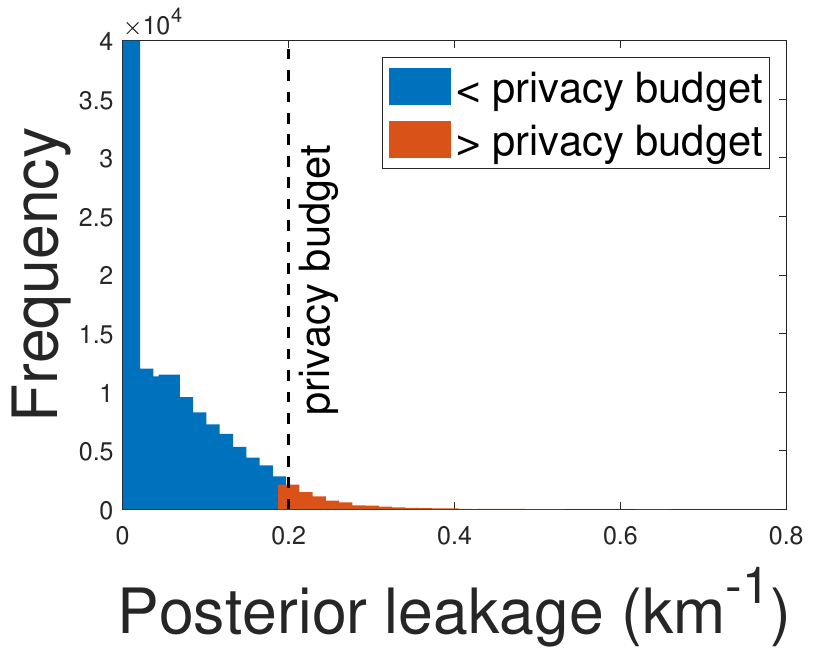}}
\subfigure[AIPO-R]{
\includegraphics[width=0.133\textwidth, height = 0.09\textheight]{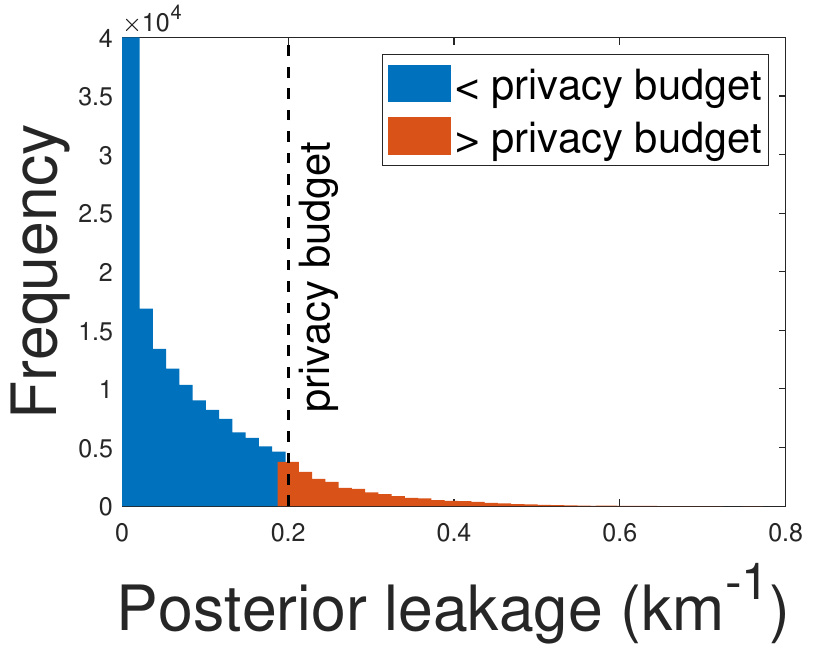}}
\subfigure[AIPO]{
\includegraphics[width=0.133\textwidth, height = 0.09\textheight]{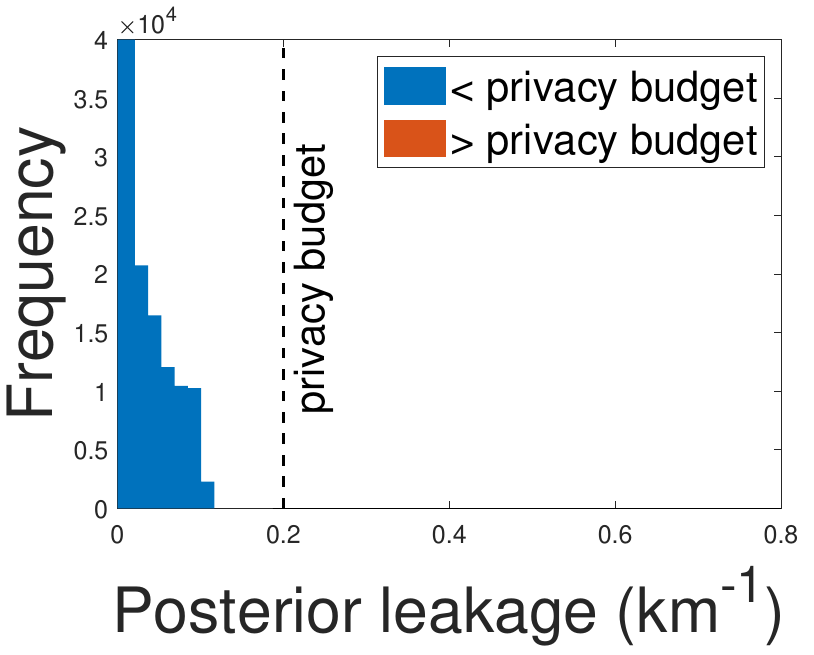}}
\end{minipage}
\caption{Example of posterior leakage distribution (London).}
\label{fig:PLdistLondon}
\begin{minipage}{1.00\textwidth}
\centering
  \subfigure[EM]{
\includegraphics[width=0.133\textwidth, height = 0.09\textheight]{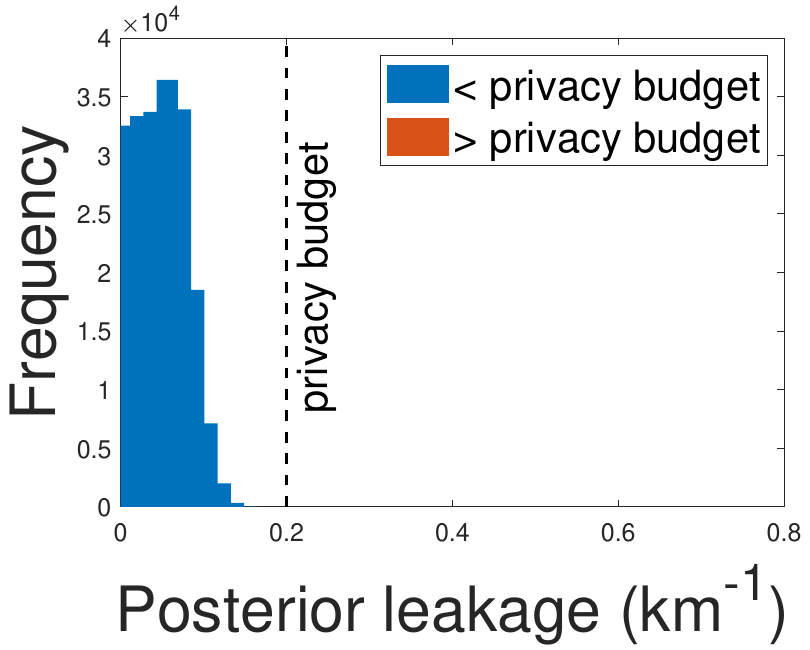}}
  \subfigure[Laplace]{
\includegraphics[width=0.133\textwidth, height = 0.09\textheight]{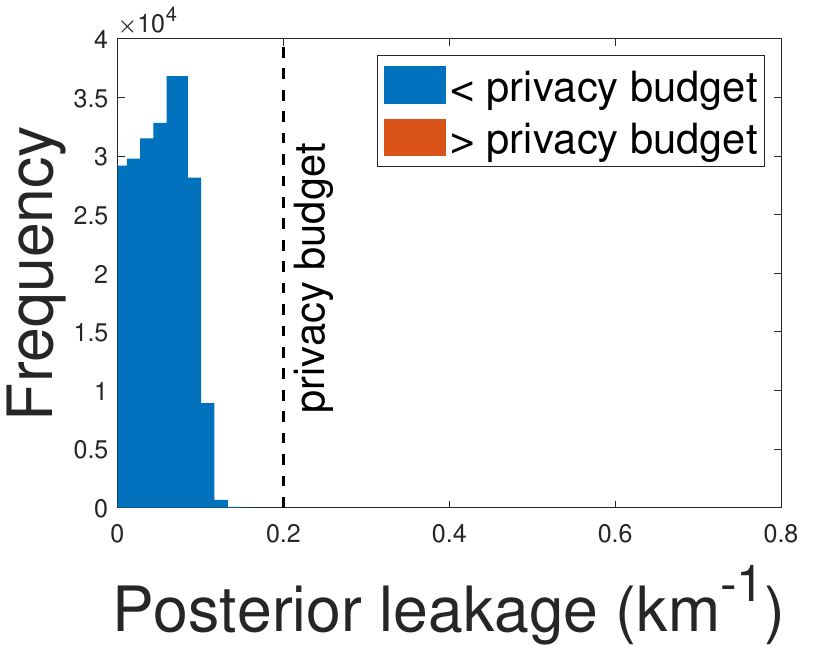}}
  \subfigure[TEM]{
\includegraphics[width=0.133\textwidth, height = 0.09\textheight]{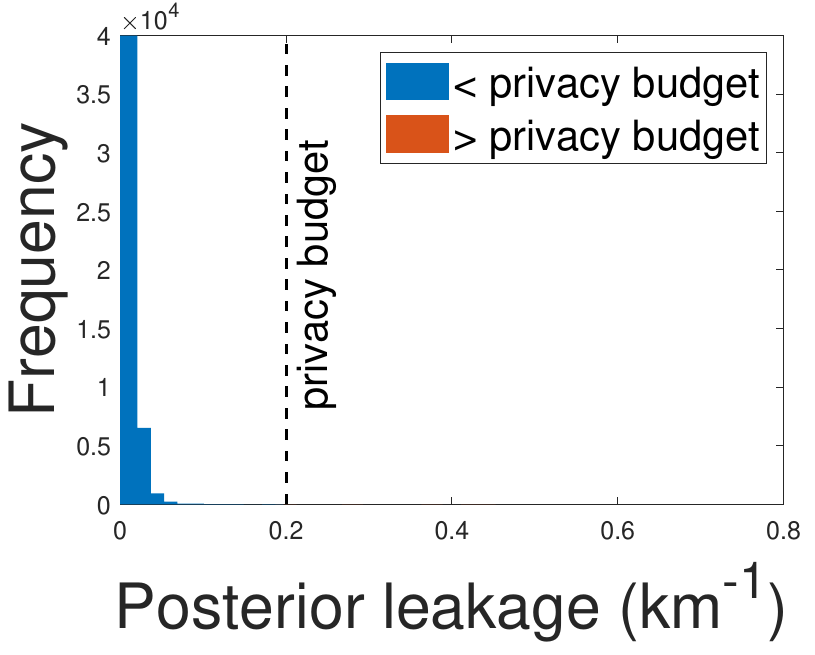}}
  \subfigure[COPT]{
\includegraphics[width=0.133\textwidth, height = 0.09\textheight]{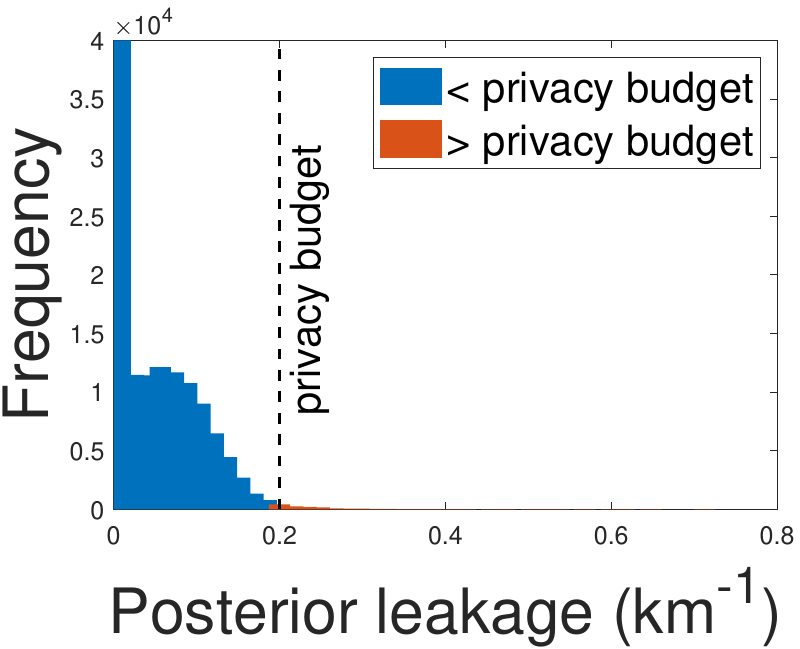}}
\subfigure[LP]{
\includegraphics[width=0.133\textwidth, height = 0.09\textheight]{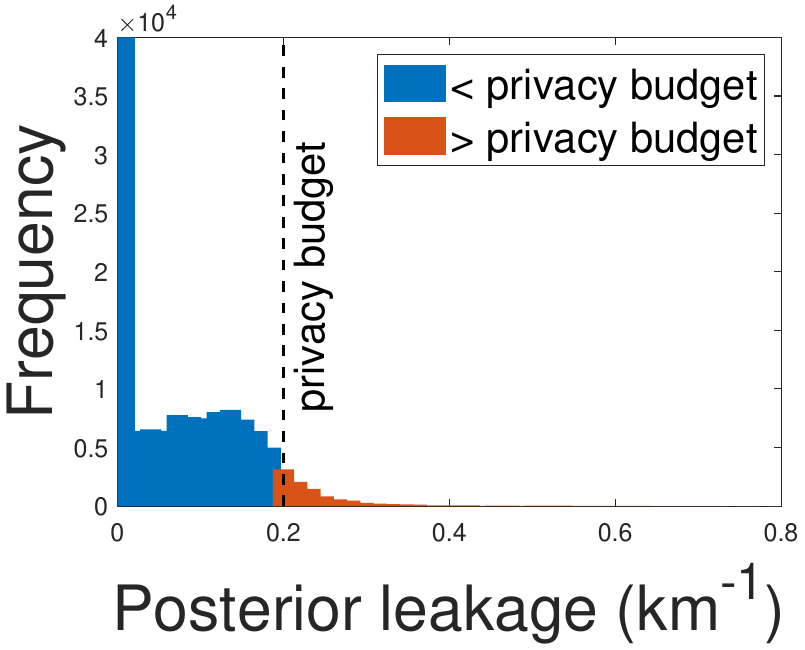}}
\subfigure[AIPO-R]{
\includegraphics[width=0.133\textwidth, height = 0.09\textheight]{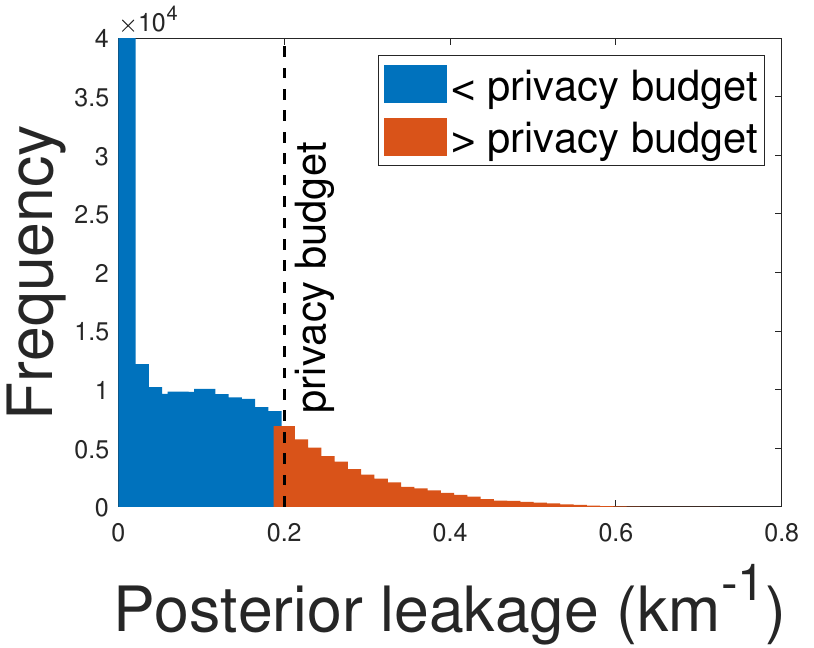}}
\subfigure[AIPO]{
\includegraphics[width=0.133\textwidth, height = 0.09\textheight]{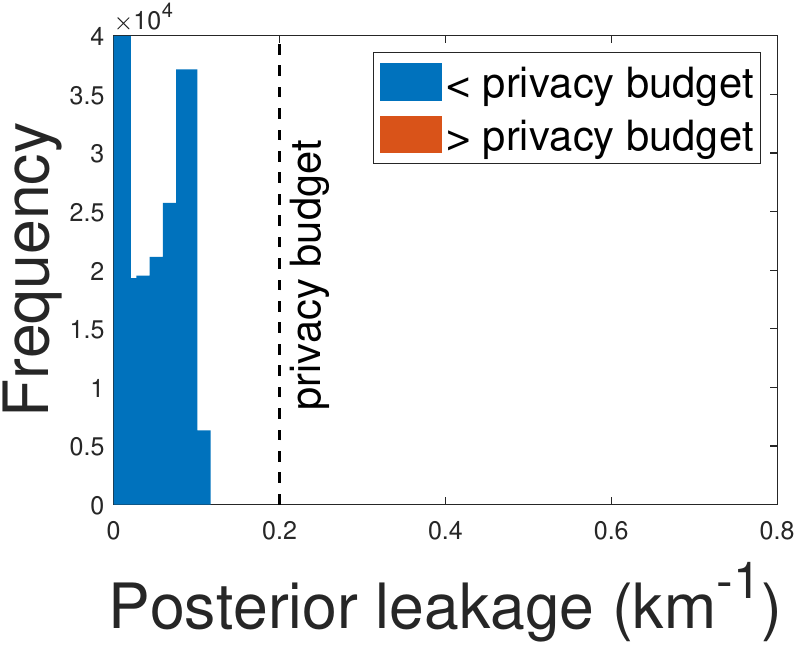}}
\end{minipage}
\caption{Example of posterior leakage distribution (New York City).}
\label{fig:PLdistNYC}
\end{figure*}


Fig.~\ref{fig:PLdistRome}--\ref{fig:PLdistNYC} (corresponding to Rome, London, and New York City) illustrate the distributional behavior of the perturbation probability ratio (PPR) under different mechanisms. 

Recall that PPR measures the log-probability difference between two inputs normalized by their $\ell_p$ distance (Eq.~(\ref{eq:PPR})), and violations occur when this ratio exceeds the privacy budget $\epsilon$. Across all three datasets, AIPO exhibits a sharp concentration of PPR values strictly below the threshold, confirming zero mDP violations in practice. In contrast, LP-based optimization yields wider distributions with noticeable mass near or beyond the $\epsilon$ boundary, reflecting the effect of discretization and distance approximation errors. Hybrid methods such as COPT and RMP partially mitigate these issues, but their distributions remain more dispersed than AIPO. pre-defined mechanisms (Laplace, EM, TEM) also avoid explicit violations, yet their distributions are much flatter, indicating weaker privacy-utility trade-offs due to excessive randomization. Overall, the distributional analysis highlights the superiority of AIPO: not only does it enforce strict compliance with $(\epsilon,d_p)$-mDP, but it also produces sharply bounded PPR profiles that are consistent across cities of varying scale and density.

\begin{table*}[t]
\centering
\footnotesize 
\begin{tabular}{ p{1.25cm} p{1.25cm} p{1.25cm} p{1.25cm} p{1.25cm} p{1.25cm} p{1.25cm} p{1.25cm} p{1.25cm}} 
\toprule
\multicolumn{9}{c }{Rome road map}\\ 
 \cline{2-9}
Method & $\epsilon = 0.2$& $\epsilon = 0.4$& $\epsilon = 0.6$& $\epsilon = 0.8$& $\epsilon = 1.0$& $\epsilon = 1.2$& $\epsilon = 1.4$& $\epsilon = 1.6$\\ 
 \hline
 \hline
\rowcolor{lightgray!30}
\textbf{AIPO}$^\dagger$ & 6.01$\pm$0.26 & 4.96$\pm$0.56 & 4.28$\pm$0.67 & 3.77$\pm$0.67 & 3.52$\pm$0.83 & 3.22$\pm$4.01 & 3.14$\pm$0.92 & 3.01$\pm$0.93 \\
AIPO-E & 6.24$\pm$0.02 & 5.02$\pm$0.26 & 4.37$\pm$0.48 & 3.86$\pm$0.59 & 3.58$\pm$0.66 & 3.37$\pm$0.70 & 3.21$\pm$0.73 & 3.09$\pm$0.74 \\
\toprule
\multicolumn{9}{c }{London road map}\\ 
 \cline{2-9}
Method & $\epsilon = 0.2$& $\epsilon = 0.4$& $\epsilon = 0.6$& $\epsilon = 0.8$& $\epsilon = 1.0$& $\epsilon = 1.2$& $\epsilon = 1.4$& $\epsilon = 1.6$\\ 
\hline
\hline
\rowcolor{lightgray!30}
\textbf{AIPO}$^\dagger$ & 5.42$\pm$0.03 & 4.34$\pm$0.07 & 3.66$\pm$0.12 & 3.20$\pm$0.14 & 2.90$\pm$0.16 & 2.66$\pm$0.14 & 2.50$\pm$0.14 & 2.37$\pm$0.17 \\
AIPO-E & 5.46$\pm$0.03 & 4.42$\pm$0.08 & 3.85$\pm$0.13 & 3.44$\pm$0.15 & 3.14$\pm$0.16 & 2.91$\pm$0.16 & 2.73$\pm$0.16 & 2.58$\pm$0.16 \\
\toprule
\multicolumn{9}{c }{New York City road map}\\ 
 \cline{2-9}
Method & $\epsilon = 0.2$& $\epsilon = 0.4$& $\epsilon = 0.6$& $\epsilon = 0.8$& $\epsilon = 1.0$& $\epsilon = 1.2$& $\epsilon = 1.4$& $\epsilon = 1.6$\\ 
 \hline
 \hline
\rowcolor{lightgray!30}
\textbf{AIPO}$^\dagger$ & 7.15$\pm$0.14 & 5.16$\pm$0.14 & 4.14$\pm$0.21 & 3.54$\pm$0.22 & 3.13$\pm$0.22 & 2.86$\pm$0.23 & 2.66$\pm$0.24 & 2.51$\pm$0.22 \\
AIPO-E & 7.19$\pm$0.13 & 5.33$\pm$0.14 & 4.39$\pm$0.22 & 3.80$\pm$0.23 & 3.40$\pm$0.24 & 3.05$\pm$0.64 & 2.90$\pm$0.25 & 2.74$\pm$0.25 \\
\hline
    \end{tabular}
\caption{AIPO with and without privacy budget optimization (Mean$\pm$1.96$\times$standard deviation).}
\label{tab:privacybudget}
\end{table*}

\subsection{Ablantion Study: Utility Loss With And Without Privacy Budget Optimization}
\label{subsec:exp:privacybudget}

\begin{figure*}[t]
\centering
\hspace{0.00in}
\begin{minipage}{1.00\textwidth}
\centering
  \subfigure[$\epsilon = 0.2$]{
\includegraphics[width=0.115\textwidth, height = 0.08\textheight]{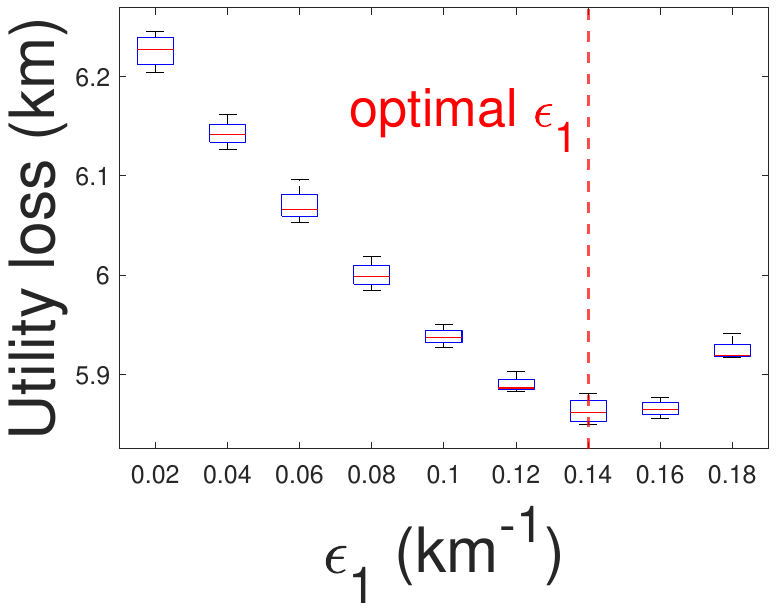}}
  \subfigure[$\epsilon = 0.4$]{
\includegraphics[width=0.115\textwidth, height = 0.08\textheight]{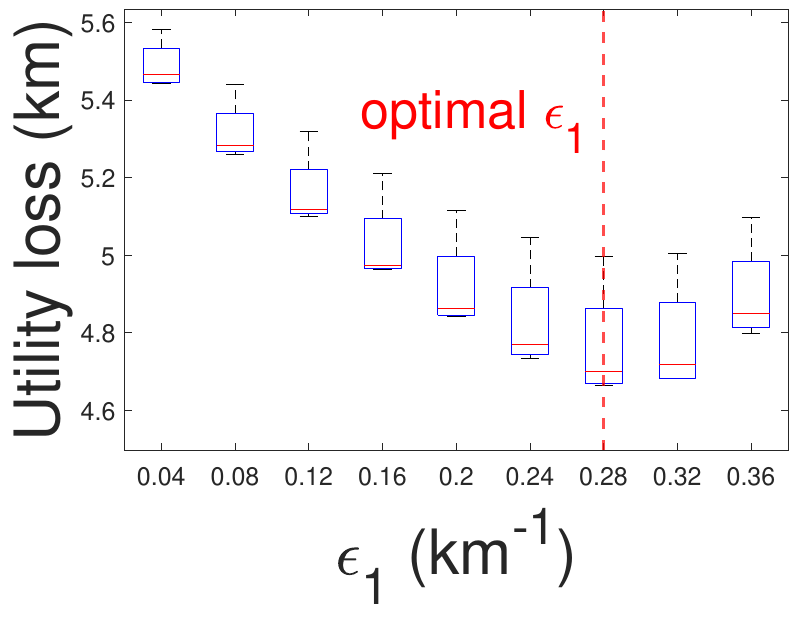}}
  \subfigure[$\epsilon = 0.6$]{
\includegraphics[width=0.115\textwidth, height = 0.08\textheight]{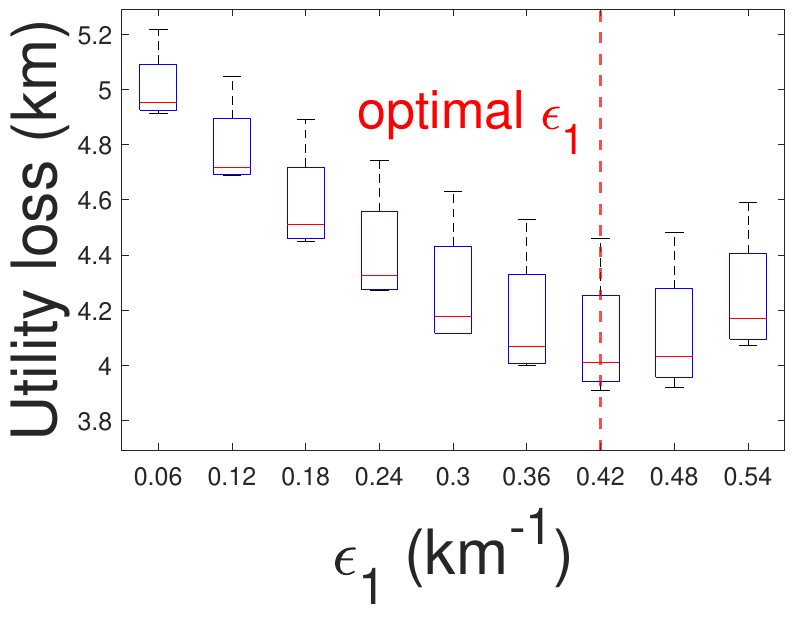}}
  \subfigure[$\epsilon = 0.8$]{
\includegraphics[width=0.115\textwidth, height = 0.08\textheight]{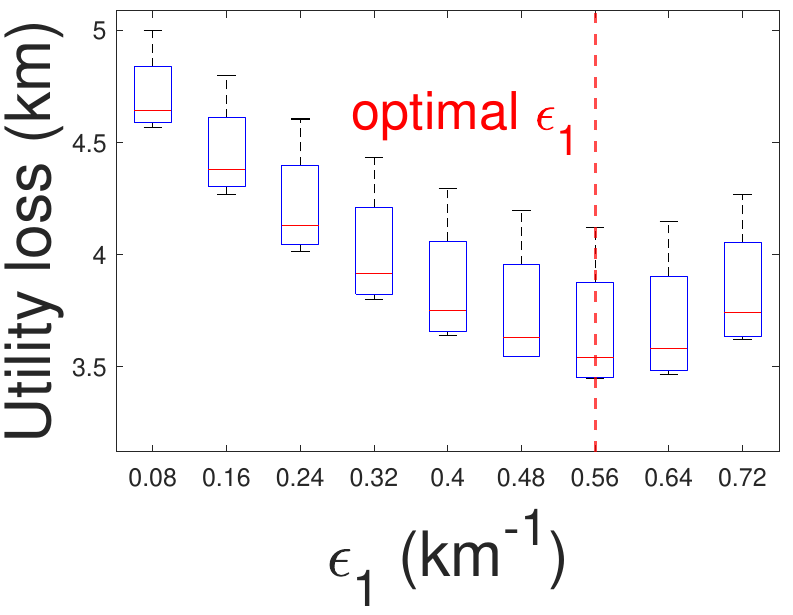}}
\subfigure[$\epsilon = 1.0$]{
\includegraphics[width=0.115\textwidth, height = 0.08\textheight]{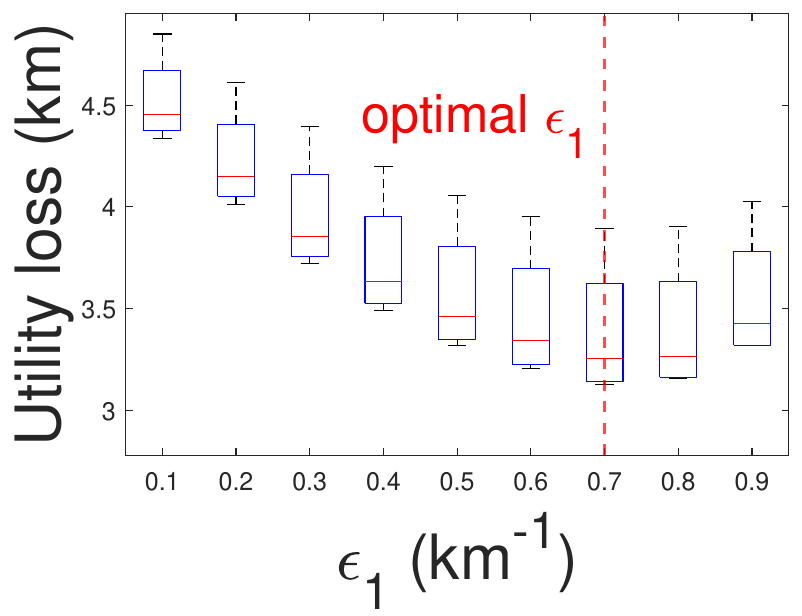}}
  \subfigure[$\epsilon = 1.2$]{
\includegraphics[width=0.115\textwidth, height = 0.08\textheight]{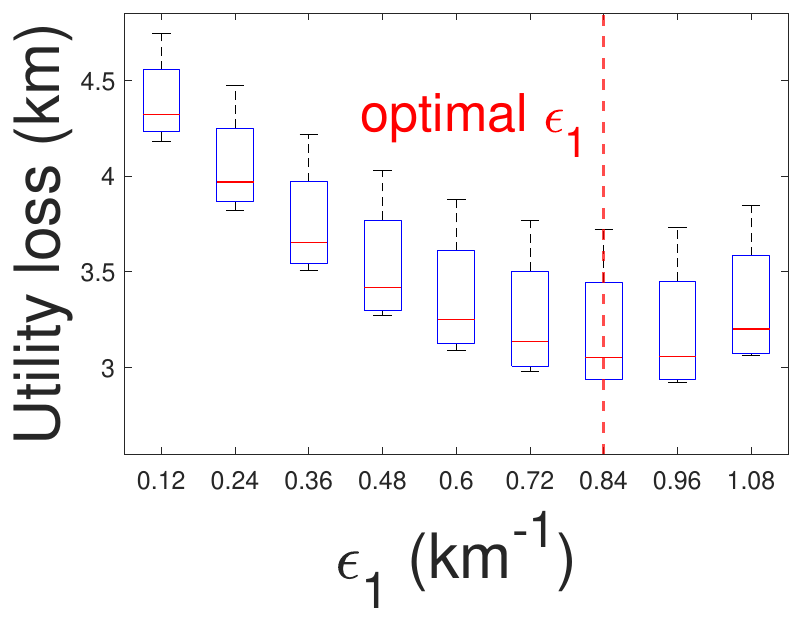}}
  \subfigure[$\epsilon = 1.4$]{
\includegraphics[width=0.115\textwidth, height = 0.08\textheight]{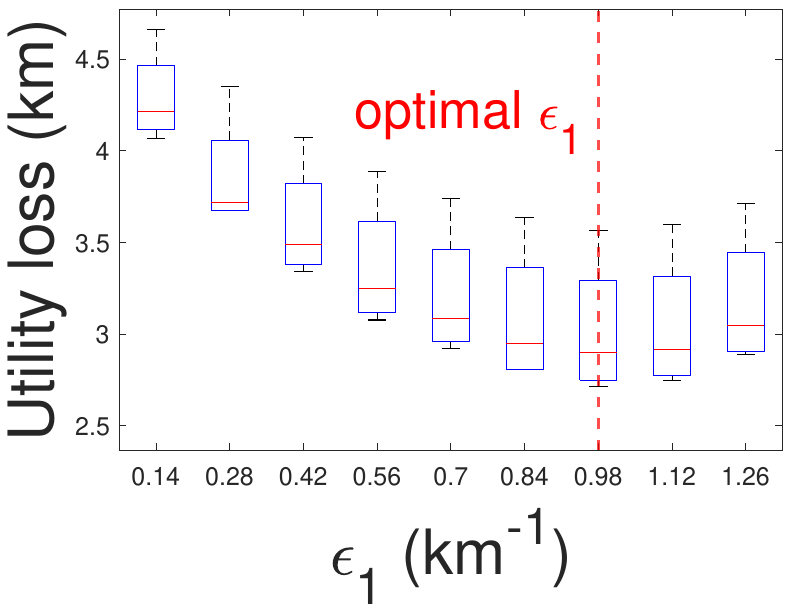}}
  \subfigure[$\epsilon = 1.6$]{
\includegraphics[width=0.115\textwidth, height = 0.08\textheight]{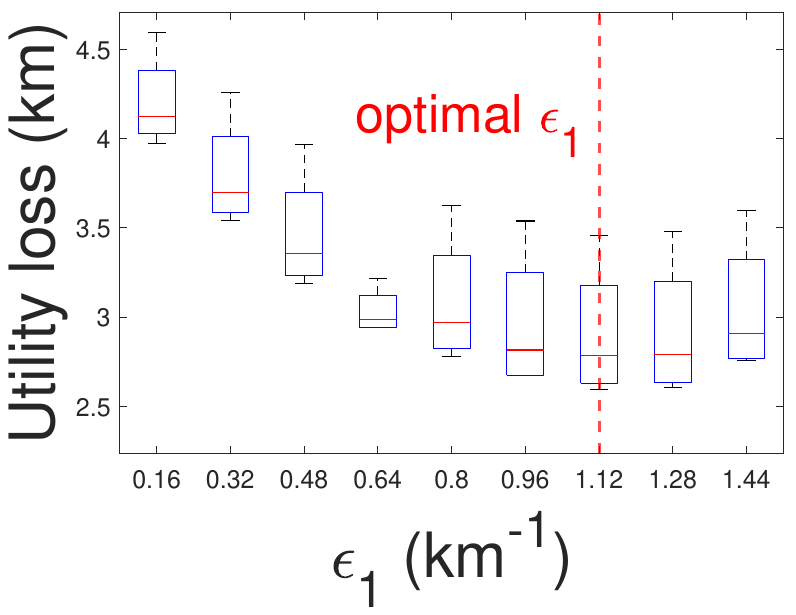}}
\end{minipage}
\caption{Utility loss vs. privacy budget assigned to dimension 1 (Rome).}
\label{fig:ULbudgetRome}
\begin{minipage}{1.00\textwidth}
\centering
  \subfigure[$\epsilon = 0.2$]{
\includegraphics[width=0.115\textwidth, height = 0.08\textheight]{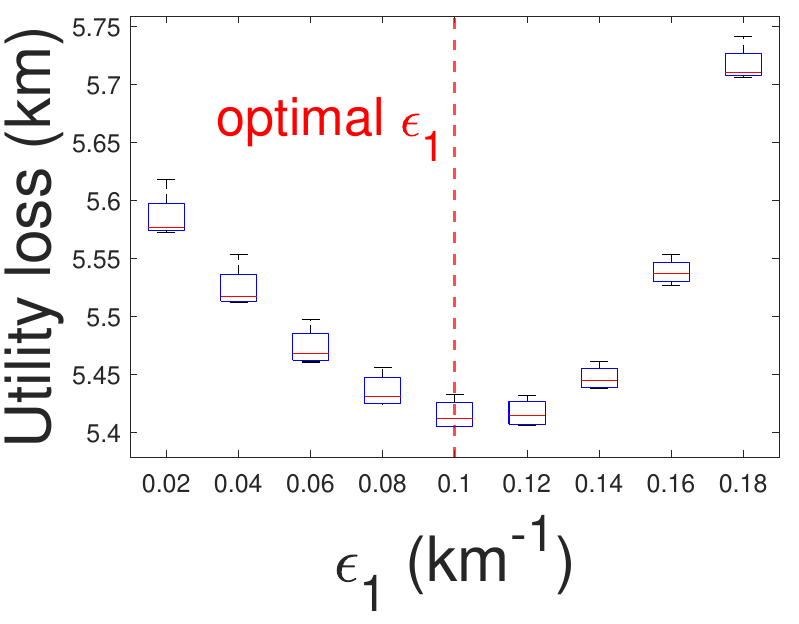}}
  \subfigure[$\epsilon = 0.4$]{
\includegraphics[width=0.115\textwidth, height = 0.08\textheight]{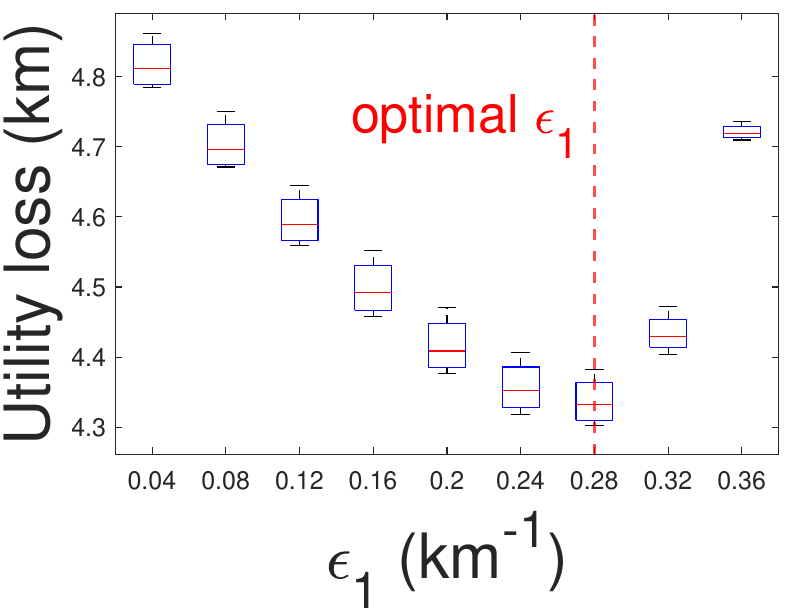}}
  \subfigure[$\epsilon = 0.6$]{
\includegraphics[width=0.115\textwidth, height = 0.08\textheight]{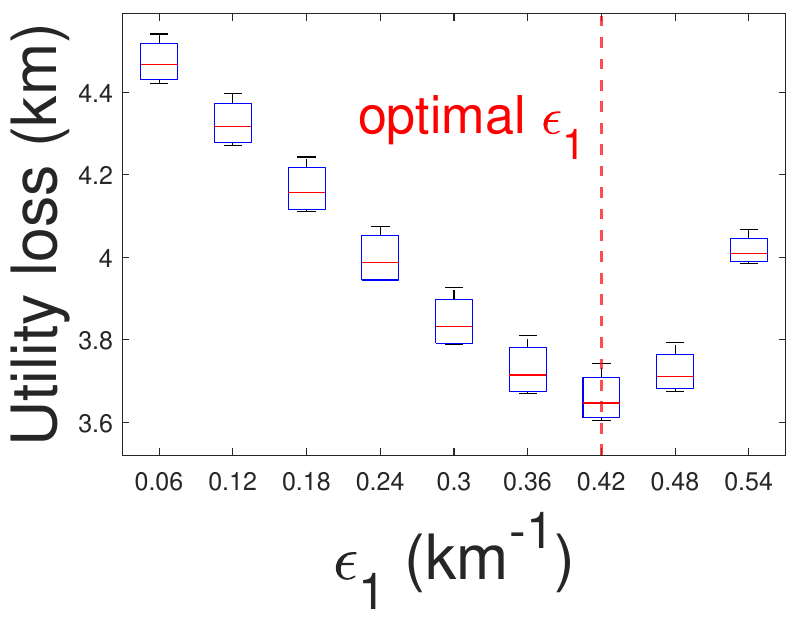}}
  \subfigure[$\epsilon = 0.8$]{
\includegraphics[width=0.115\textwidth, height = 0.08\textheight]{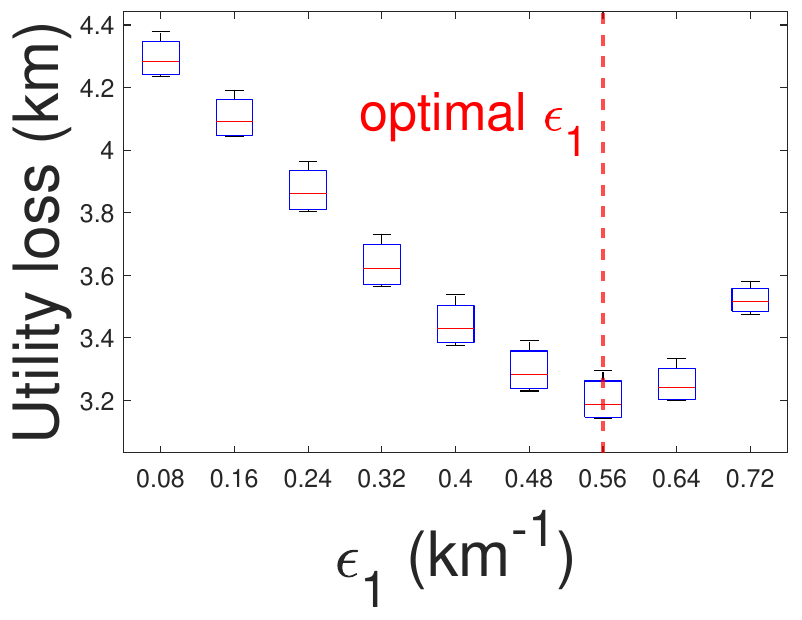}}
\subfigure[$\epsilon = 1.0$]{
\includegraphics[width=0.115\textwidth, height = 0.08\textheight]{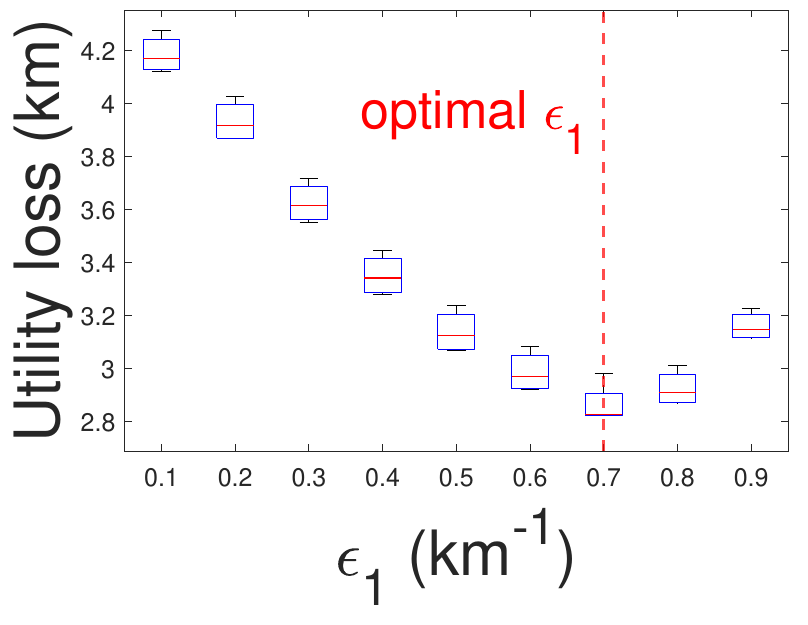}}
  \subfigure[$\epsilon = 1.2$]{
\includegraphics[width=0.115\textwidth, height = 0.08\textheight]{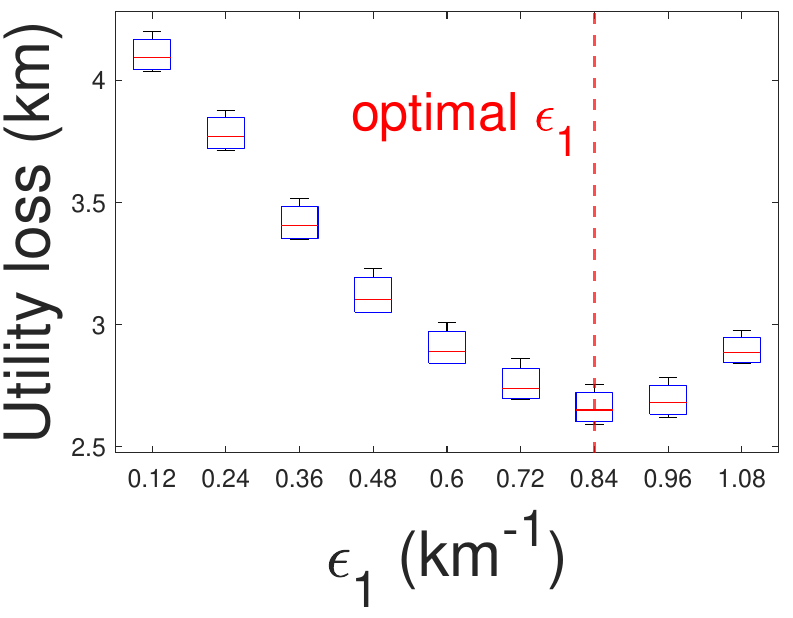}}
  \subfigure[$\epsilon = 1.4$]{
\includegraphics[width=0.115\textwidth, height = 0.08\textheight]{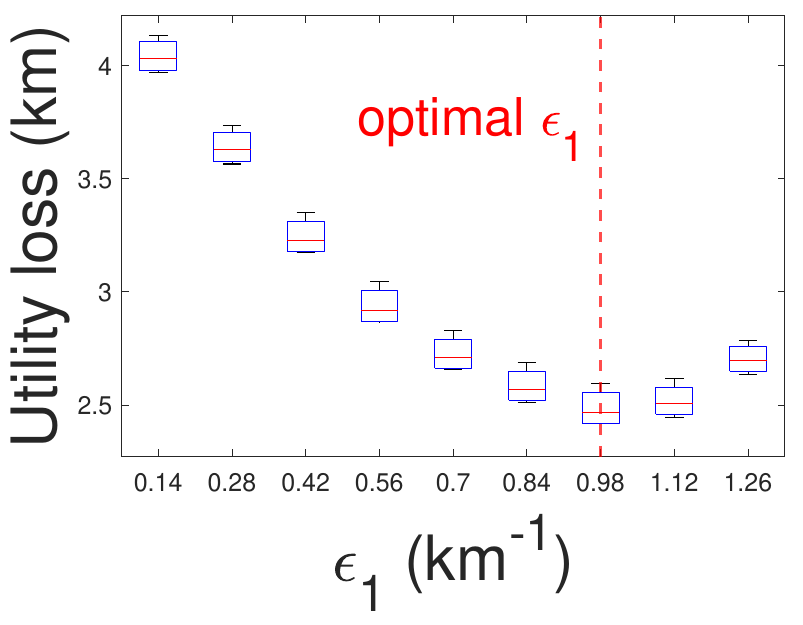}}
  \subfigure[$\epsilon = 1.6$]{
\includegraphics[width=0.115\textwidth, height = 0.08\textheight]{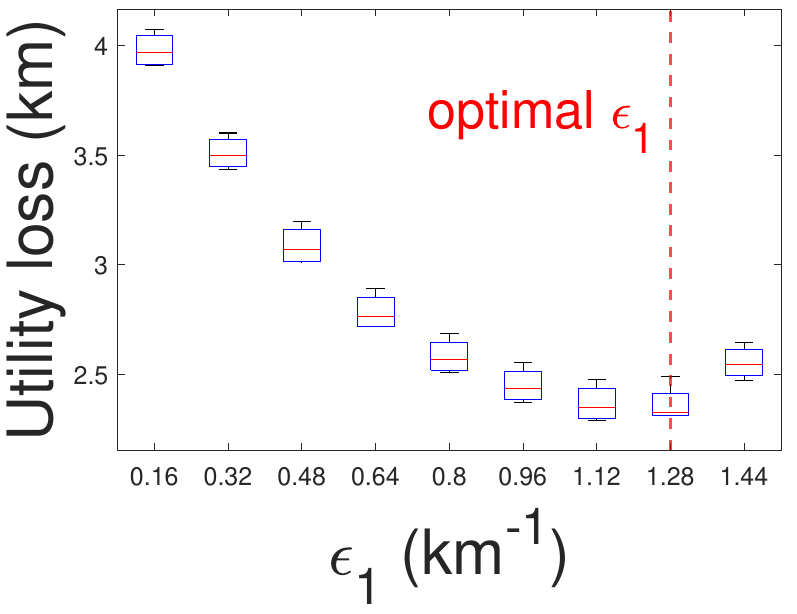}}
\end{minipage}
\caption{Utility loss vs. privacy budget assigned to dimension 1 (London).}
\label{fig:ULbudgetLondon}
\begin{minipage}{1.00\textwidth}
\centering
  \subfigure[$\epsilon = 0.2$]{
\includegraphics[width=0.115\textwidth, height = 0.08\textheight]{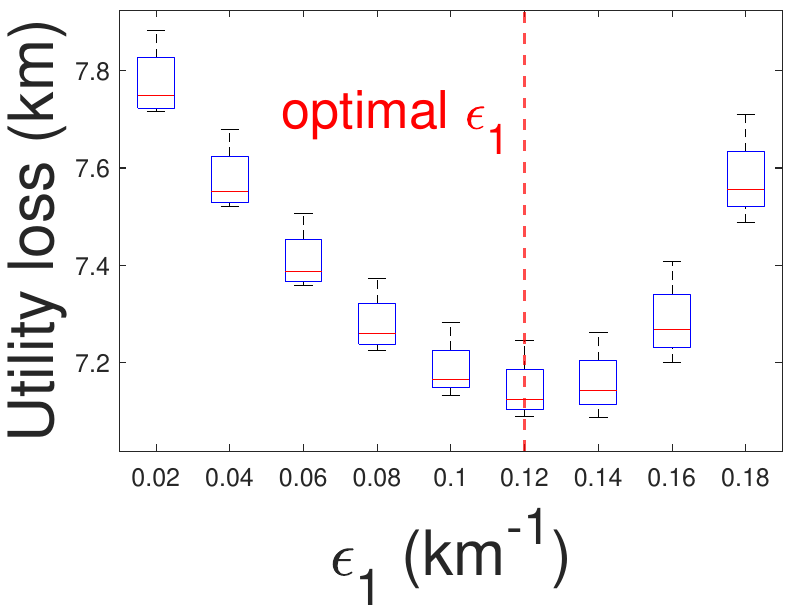}}
  \subfigure[$\epsilon = 0.4$]{
\includegraphics[width=0.115\textwidth, height = 0.08\textheight]{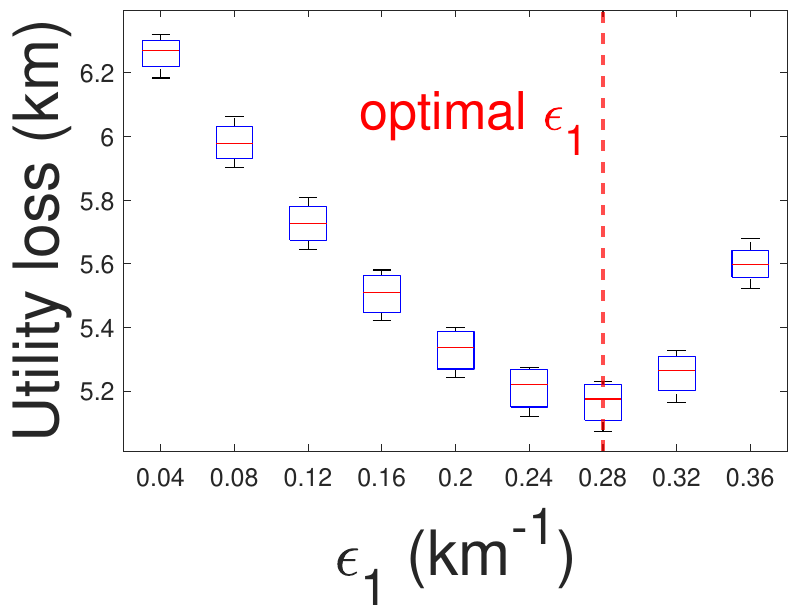}}
  \subfigure[$\epsilon = 0.6$]{
\includegraphics[width=0.115\textwidth, height = 0.08\textheight]{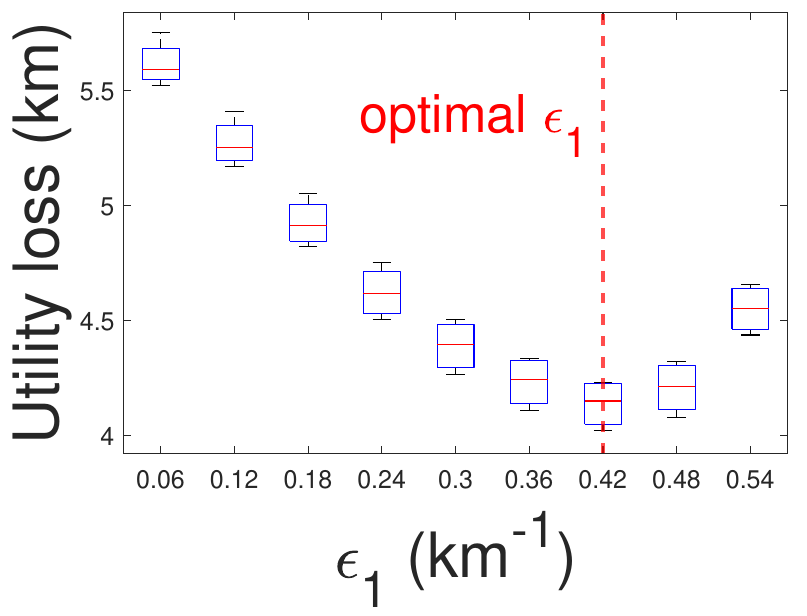}}
  \subfigure[$\epsilon = 0.8$]{
\includegraphics[width=0.115\textwidth, height = 0.08\textheight]{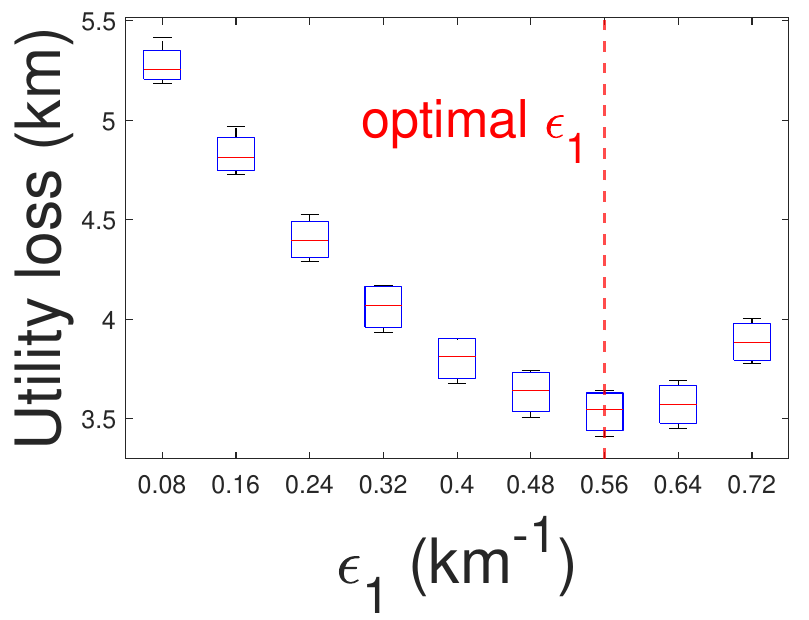}}
\subfigure[$\epsilon = 1.0$]{
\includegraphics[width=0.115\textwidth, height = 0.08\textheight]{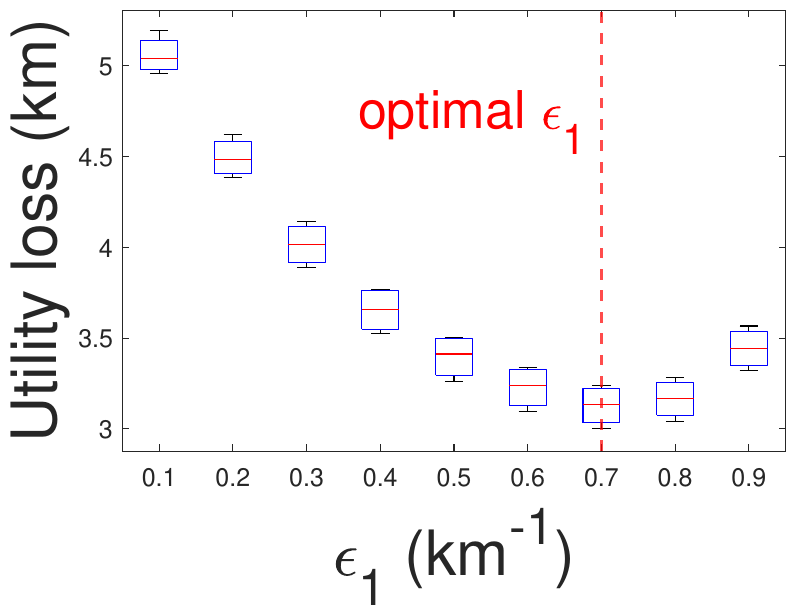}}
  \subfigure[$\epsilon = 1.2$]{
\includegraphics[width=0.115\textwidth, height = 0.08\textheight]{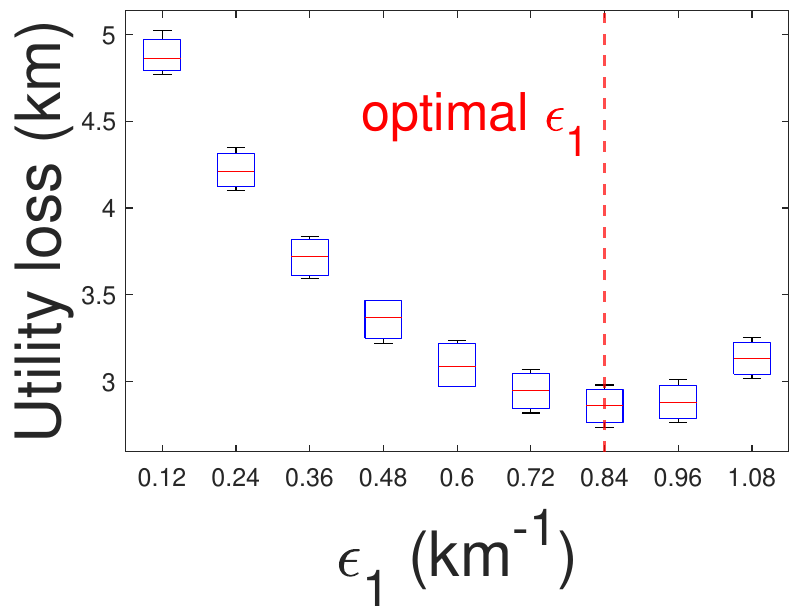}}
  \subfigure[$\epsilon = 1.4$]{
\includegraphics[width=0.115\textwidth, height = 0.08\textheight]{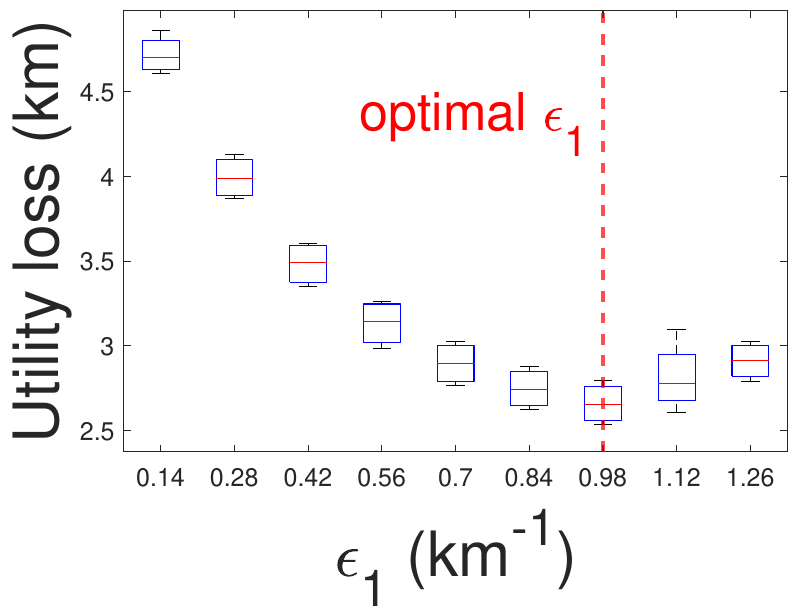}}
  \subfigure[$\epsilon = 1.6$]{
\includegraphics[width=0.115\textwidth, height = 0.08\textheight]{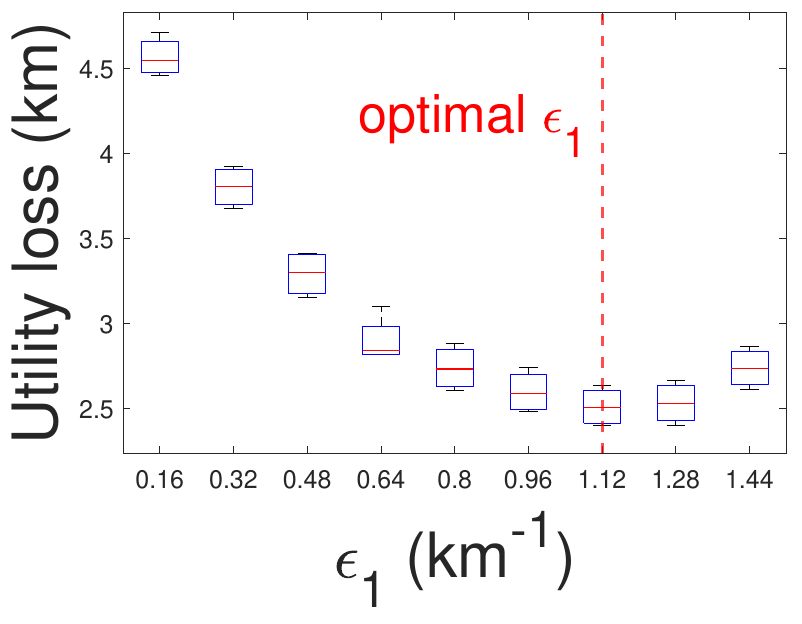}}
\end{minipage}
\caption{Utility loss vs. privacy budget assigned to dimension 1 (New York City).}
\label{fig:ULbudgetNYC}
\end{figure*}

Table~\ref{tab:privacybudget} compares \emph{AIPO} with its equal-budget counterpart \emph{AIPO-E}, which uniformly allocates the privacy budget across dimensions (e.g., $\epsilon_1=\epsilon_2=\epsilon/\sqrt{2}$ under $\ell_2$). Across Rome, London, and NYC, \emph{AIPO} consistently achieves lower utility loss than \emph{AIPO-E}, and the margin widens as the overall privacy level $\epsilon$ increases. This trend is expected: at small $\epsilon$, noise is necessarily large and budget reallocation has limited effect, whereas at larger $\epsilon$ the mechanism has more freedom to shape direction-dependent noise, and optimizing $(\epsilon_1,\epsilon_2)$ yields tangible gains. The improvement stems from \emph{AIPO}'s asymmetric allocation adapting to directional utility sensitivity and map anisotropy (e.g., corridor-like connectivity or dominant travel directions). Concretely, on the NYC dataset, \emph{AIPO} attains up to $7.8\%$ lower utility loss than \emph{AIPO-E}$\,$at $\epsilon=1.6$, highlighting the value of optimizing privacy allocation. Similar, though slightly smaller, gains are observed in Rome and London, consistent with their less anisotropic street layouts. These results validate that dimension-wise budget composition (Theorem~\ref{thm:composition}) is not merely theoretically sufficient for $(\epsilon,d_p)$-mDP but also practically important for enhancing utility.

\smallskip\noindent\textit{Budget-sweep methodology and insights.} To quantify how allocation affects utility, we fix the total budget $\epsilon$ and vary the first-dimension share $\epsilon_1$ along the feasible arc $\epsilon_1^p+\epsilon_2^p=\epsilon^p$ (with $p=2$ in our main experiments), discretizing $\epsilon_1$ over a uniform grid and setting $\epsilon_2=(\epsilon^p-\epsilon_1^p)^{1/p}$. For each candidate $(\epsilon_1,\epsilon_2)$ we solve \emph{AIPO} and record the utility loss; the best value for a given $\epsilon$ is the minimum over this sweep. Fig.~\ref{fig:ULbudgetRome}--\ref{fig:ULbudgetNYC} plot utility as a function of $\epsilon_1$: the curves are characteristically U-shaped with minima \emph{away} from the equal-split point, indicating that equal allocation is suboptimal in all three cities. Moreover, the optimal $\epsilon_1$ shifts with $\epsilon$, reflecting that the most effective directional protection depends jointly on the privacy level and the dataset’s geometric/traffic structure. Together with Table~\ref{tab:privacybudget}, these observations underscore that learning the per-dimension budgets is a key lever for improving utility while preserving $(\epsilon,d_p)$-mDP.

\subsection{Performance of the Interpolation-Based Method under Varying Grid Granularity}
\label{subsec:exp:granularity}


\begin{figure*}[t]
\centering
\begin{minipage}{1.00\textwidth}
  \subfigure[Rome]{
\includegraphics[width=0.31\textwidth, height = 0.28\textheight]{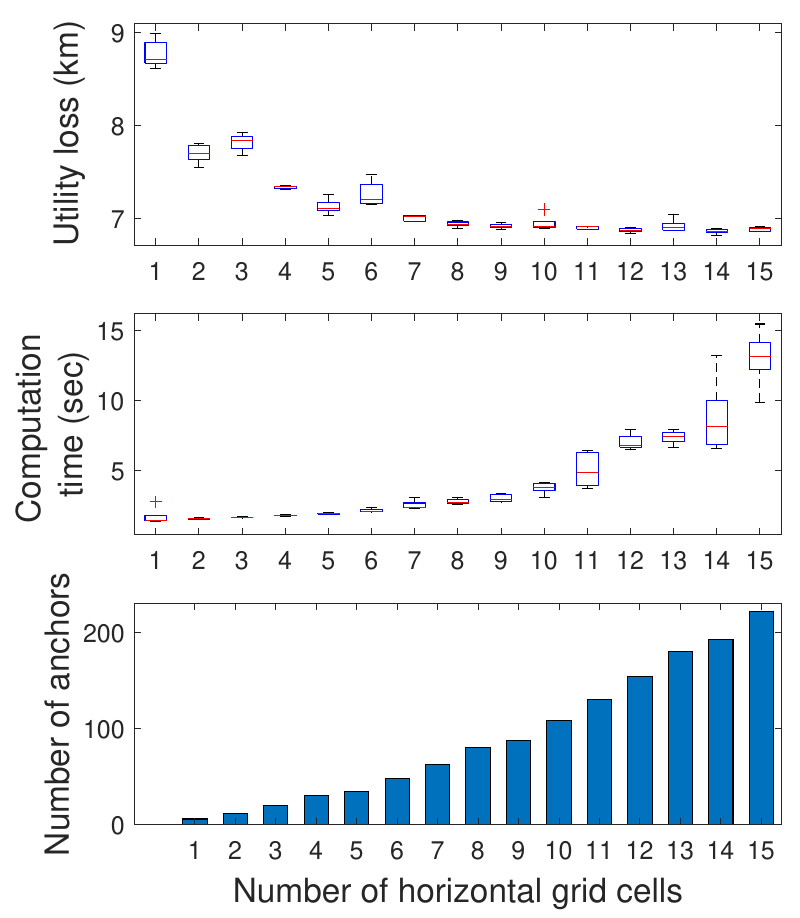}}
  \subfigure[London]{
\includegraphics[width=0.31\textwidth, height = 0.28\textheight]{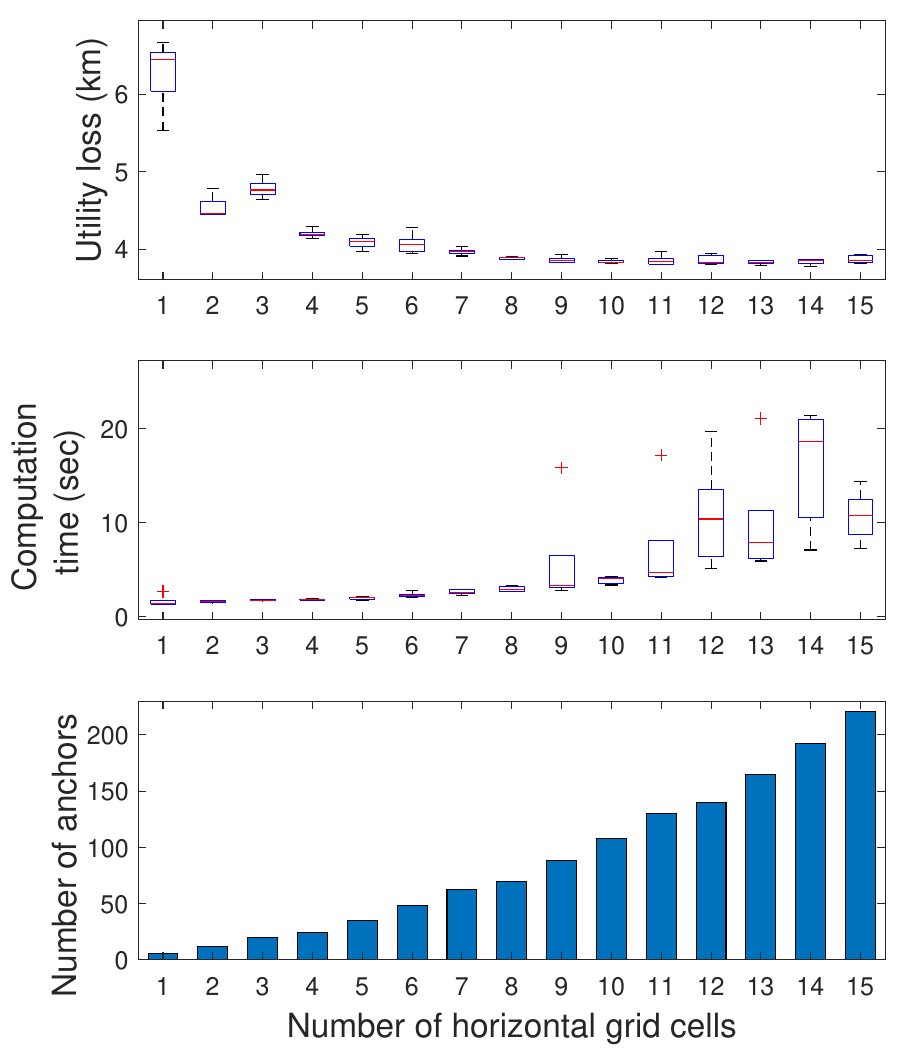}}
  \subfigure[New York City]{
\includegraphics[width=0.31\textwidth, height = 0.28\textheight]{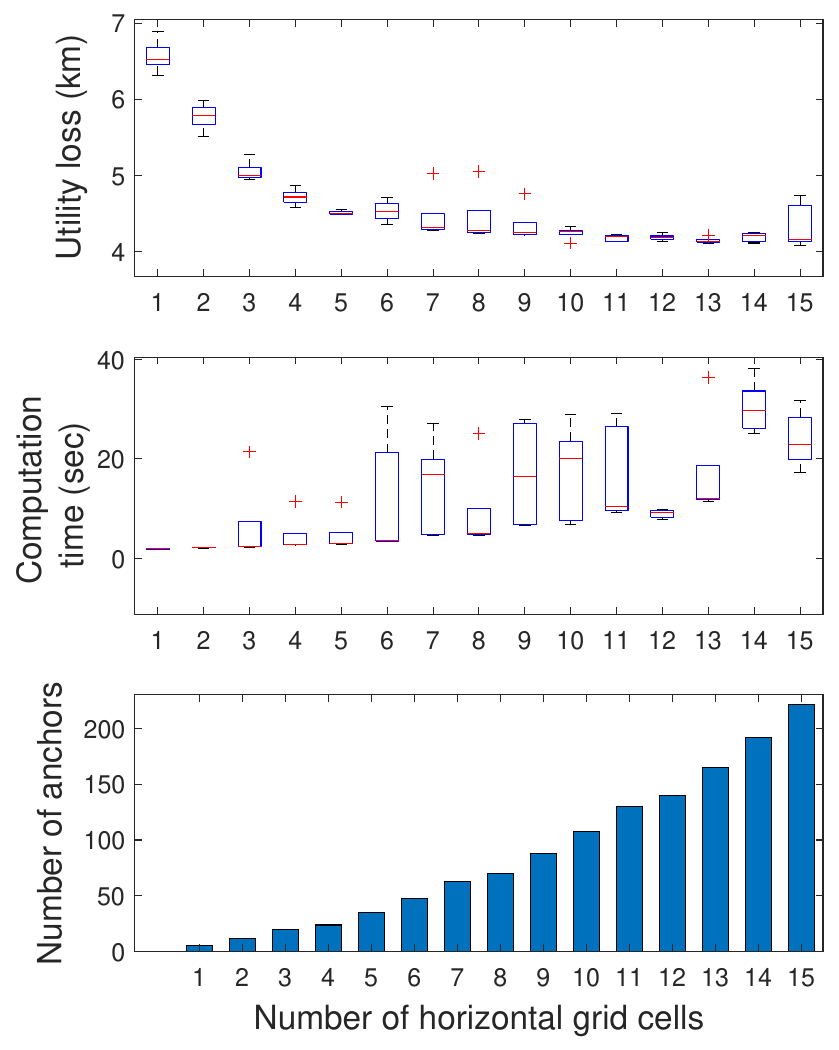}}
\end{minipage}
\caption{Performance of AIPO with varying Grid Granularity.}
\label{fig:gridgranularity}
\end{figure*}

Fig.~\ref{fig:gridgranularity}(a)–(c) evaluates AIPO across grid resolutions by varying the number of discretized cells. As resolution increases, utility loss decreases monotonically because denser anchor placement better captures local geometry (e.g., turn penalties and anisotropy along road segments), thereby reducing interpolation error. This gain, however, comes with higher computational cost: the number of decision variables grows as $\mathcal{O}(|\widehat{\mathcal{X}}|\cdot|\mathcal{Y}|)$ and the number of mDP constraints grows with anchor neighborhood pairs (super-linearly in practice due to barrier/simplex iterations), leading to longer runtimes and higher memory usage. We also note that privacy is \emph{insensitive} to grid granularity for AIPO: the PPR distributions remain tightly concentrated below $\epsilon$ at all resolutions (cf. Section~D.1), since mDP is enforced via dimension-wise composition on anchors and preserved by log-convex interpolation. Importantly, diminishing returns emerge beyond a dataset-specific threshold—utility gains saturate once horizontal cells exceed $\sim\!8$ for Rome and London and $\sim\!10$ for New York City—while runtime continues to rise super-linearly. In light of this trade-off, we default to $10$ horizontal cells for all three datasets, which is near the knee of the curve: it retains most of the achievable utility improvement while keeping the optimization tractable. Practically, coarser grids (e.g., $8$ horizontal cells) already offer near-optimal utility for Rome/London, whereas NYC’s denser and more heterogeneous street topology benefits from the finer $10$-cell configuration.

\subsection{Performance Evaluation when Distance is Measured by $\ell_1$-Norm}
\label{subsec:exp:1norm}

\begin{table*}[t]
\label{Tb:exp:speedlimits}
\centering
\footnotesize 

\caption{Computation time of different perturbation methods when distance is measured using $\ell_1$-norm metric (Mean$\pm$1.96$\times$standard deviation).}
\label{tab:timenorm1}
\end{table*}

Table~\ref{tab:ULnorm1} compares the expected utility loss across all mechanisms when proximity is measured by $d_1$. Consistent with the $\ell_2$ case (Table~\ref{tab:UL2norm}), our interpolation-based method (\emph{AIPO}) achieves uniformly low utility loss on Rome, London, and NYC, tracking the best-performing methods while preserving formal guarantees. As expected, \emph{LP} often attains the lowest loss because it directly optimizes over (discretized) input–output pairs; however, this advantage comes at the cost of \emph{not} ensuring $(\epsilon,d_1)$-mDP on the underlying continuous domain. pre-defined mechanisms (Laplace, EM, TEM) remain privacy-safe but exhibit larger losses due to heavier randomization, and hybrid approaches (e.g., \emph{COPT}) typically lie between \emph{LP} and the pre-defined baselines. Notably, the gap between \emph{AIPO} and \emph{LP} narrows as $\epsilon$ increases, reflecting that \emph{AIPO}'s log-convex interpolation and anchor optimization can better exploit higher privacy budgets even under the Manhattan geometry of road networks (e.g., axis-aligned anisotropy and corridor effects).

Table~\ref{tab:privacy1norm} reports the empirical violation ratios for $(\epsilon,d_1)$-mDP. \emph{AIPO} preserves perfect privacy with \emph{zero} violations across all datasets and privacy budgets, validating the dimension-wise composition and the correctness of the interpolation mechanism under $d_1$. In contrast, \emph{LP} and hybrid variants (e.g., \emph{COPT}) exhibit persistent violations, especially at smaller $\epsilon$ where discretization-induced distance overestimation most strongly relaxes the effective constraints. These findings mirror the PPR distributional patterns observed in Section~D.1: \emph{AIPO} concentrates well below $\epsilon$ with tight tails, while \emph{LP}/hybrids show broader spreads and nontrivial mass near or beyond the threshold.

Table~\ref{tab:timenorm1} compares runtimes under $d_1$. \emph{AIPO} demonstrates strong scalability, reducing computation time by approximately $30\%\!-\!70\%$ on average relative to \emph{LP} and \emph{COPT}. The efficiency gain stems from constraining only axis-neighbor anchor pairs along each dimension and then interpolating, which yields far fewer effective constraints than the (near) cubic growth in pairwise constraints characteristic of \emph{LP}/\emph{COPT}. Practically, this translates into lower memory use and faster solver convergence as grid resolution increases, while retaining strict $(\epsilon,d_1)$-mDP. Taken together, the $\ell_1$ results corroborate our main conclusions: \emph{AIPO} delivers rigorous privacy, competitive (often near-\emph{LP}) utility, and substantially improved efficiency under an alternative and widely used metric for spatial domains.

\clearpage 
\begin{table*}[t]
\centering
\footnotesize 

\caption{Utility loss (km) across different perturbation methods where distance is measured using $\ell_2$-norm metric (Mean$\pm$1.96$\times$standard deviation).}
\label{tab:add:UL2norm}
\end{table*}

\clearpage 
\section{Discussions}
\label{sec:discussions}

\subsection{Why Using $d_p$-Distance in Interpolation Can Violate mDP}
\label{subsec:discussion:d_p}

The interpolation mechanism defined in \textbf{Definition~\ref{def:PPI}} and analyzed in \textbf{Theorem~\ref{thm:AIPOfeasible}} relies on a weighted geometric combination of anchor perturbation probabilities. Intuitively, it aims to ensure that for any non-anchor record $\mathbf{x}_a$, the interpolated perturbation distribution $z(\mathbf{y}_k \mid \mathbf{x}_a)$ remains close to the perturbations of nearby anchors, especially those with smaller $d_p$ distances. This design is rooted in the fact that tighter $d_p$ distances induce stricter mDP constraints.

A natural question is whether we can enforce $(\epsilon, d_p)$-mDP for a non-anchor record $\mathbf{x}_a$ by simply ensuring that each anchor in $\hat{\mathcal{X}}_m$ satisfies $(\epsilon, d_p)$-mDP with a reference point $\mathbf{x}_b$ (where $\mathbf{x}_b$ can be either anchor or non-anchor). Specifically, if
\begin{equation}
\left| \ln z(\mathbf{y}_k \mid \hat{\mathbf{x}}) - \ln z(\mathbf{y}_k\mid \mathbf{x}_b) \right| \leq \epsilon \cdot d_p(\hat{\mathbf{x}}, \mathbf{x}_b),
\end{equation}
for each anchor $\hat{\mathbf{x}} \in \hat{\mathcal{X}}_m$, does it follow that the interpolated point $\mathbf{x}_a$ also satisfies
\begin{equation}
\left| \ln z(\mathbf{y}_k\mid \mathbf{x}_a) - \ln z(\mathbf{y}_k\mid \mathbf{x}_b) \right| \leq \epsilon \cdot d_p(\mathbf{x}_a, \mathbf{x}_b)?
\end{equation}
Unfortunately, this implication does not hold if the interpolated value $z(\mathbf{y}_k\mid \mathbf{x}_a)$ is computed as a log-convex combination of the anchor values:
\begin{equation}
\textstyle 
\ln z(\mathbf{y}_k\mid \mathbf{x}_a) = \sum_{\hat{\mathbf{x}} \in \hat{\mathcal{X}}_m} \lambda_{\hat{\mathbf{x}}, \mathbf{x}_a} \ln z(\mathbf{y}_k \mid \hat{\mathbf{x}}),
\end{equation}
where $\lambda_{\hat{\mathbf{x}}, \mathbf{x}_a}$ is a convex coefficient associated with anchor $\hat{\mathbf{x}}$. Substituting into the mDP inequality, we obtain:
\begin{eqnarray}
&& \left| \ln z(\mathbf{y}_k\mid \mathbf{x}_a) - \ln z(\mathbf{y}_k\mid \mathbf{x}_b) \right|
\\ 
&=& \left| \sum_{\hat{\mathbf{x}} \in \hat{\mathcal{X}}_m} \lambda_{\hat{\mathbf{x}}, \mathbf{x}_a} \ln z(\mathbf{y}_k \mid \hat{\mathbf{x}}) - \ln z(\mathbf{y}_k \mid \mathbf{x}_b) \right| \\
&=& \left| \sum_{\hat{\mathbf{x}} \in \hat{\mathcal{X}}_m} \lambda_{\hat{\mathbf{x}}, \mathbf{x}_a} \left( \ln z(\mathbf{y}_k \mid \hat{\mathbf{x}}) - \ln z(\mathbf{y}_k \mid \mathbf{x}_b) \right) \right| \\
&\leq& \sum_{\hat{\mathbf{x}} \in \hat{\mathcal{X}}_m} \lambda_{\hat{\mathbf{x}}, \mathbf{x}_a} \left| \left( \ln z(\mathbf{y}_k \mid \hat{\mathbf{x}}) - \ln z(\mathbf{y}_k \mid \mathbf{x}_b) \right) \right|\\
&\leq& \epsilon \sum_{\hat{\mathbf{x}} \in \hat{\mathcal{X}}_m} \lambda_{\hat{\mathbf{x}}, \mathbf{x}_a} d_p(\hat{\mathbf{x}}, \mathbf{x}_b).
\end{eqnarray}
However, this upper bound is not guaranteed to be less than $\epsilon \cdot d_p(\mathbf{x}_a, \mathbf{x}_b)$. In fact, by \emph{Jensen’s inequality}~\cite{boyd2004convex}, the weighted average of the anchor-to-$\mathbf{x}_b$ distances is generally \emph{greater than or equal to} the direct $d_p$ distance between $\mathbf{x}_a$ and $\mathbf{x}_b$:
\begin{align}
\textstyle \sum_{\hat{\mathbf{x}} \in \hat{\mathcal{X}}_m} \lambda_{\hat{\mathbf{x}}, \mathbf{x}_a} d_p(\hat{\mathbf{x}}, \mathbf{x}_b)
&\geq \textstyle d_p\left( \sum_{\hat{\mathbf{x}}} \lambda_{\hat{\mathbf{x}}, \mathbf{x}_a} \hat{\mathbf{x}}, \mathbf{x}_b \right) \\
&= d_p(\mathbf{x}_a, \mathbf{x}_b).
\end{align}
This inequality holds because the function $f(\mathbf{x}) = d_p(\mathbf{x}, \mathbf{x}_b) = \|\mathbf{x} - \mathbf{x}_b\|_p$ is convex for all $p \geq 1$, and convexity is preserved under expectation (convex combinations). As a result, even though each anchor satisfies mDP with respect to $\mathbf{x}_b$, the interpolated record $\mathbf{x}_a$ may \emph{violate} the $(\epsilon, d_p)$-mDP guarantee. This is also demonstrated in emperical results in Table \ref{tab:privacy2norm}.


\DEL{
\begin{figure}[h]
\centering
\includegraphics[width=0.45\textwidth]{figures/interpolation_mdp_violation.png}
\caption{Illustration of the issue with interpolating $z(\mathbf{x}_a)$ under $d_p$-based anchor constraints. The weighted anchor-to-$\mathbf{x}_j$ distances (used in the interpolation upper bound) may exceed the direct $d_p$ distance from $\mathbf{x}_a$ to $\mathbf{x}_j$.}
\label{fig:interpolation_mdp_violation}
\end{figure}}

\DEL{
\subsection{Interpolation}
Interpolation in optimization is a technique used to estimate and optimize a function based on known values at certain points, which are often referred to as "anchor points." The primary goal is to create a continuous model or function that approximates the behavior of the real system or the objective function you want to optimize. Here's a basic breakdown of how interpolation is typically used in optimization:

Anchor Points Selection: First, you select a finite set of anchor points. These points are where the function values are known either through direct calculation, observations, or simulations. The selection of these points can significantly influence the accuracy and efficiency of the interpolation.

Interpolation Method: You apply an interpolation method to estimate the function’s behavior at points where it isn’t explicitly known. Common interpolation methods include linear interpolation, polynomial interpolation, spline interpolation, and others. The choice of method depends on the problem's requirements, such as smoothness, complexity, and the dimensionality of the data.

Function Approximation: The interpolation method generates a continuous approximation of the underlying function. This approximated function is easier and often faster to evaluate than the original, which is especially valuable if the original function is complex, costly to evaluate, or only partially known.

Optimization: With the interpolated function, you can now apply optimization techniques to find minima, maxima, or other characteristics of interest. Since the interpolated function is smoother and defined continuously, it can be more amenable to techniques that require derivatives, such as gradient descent, or to more complex multi-dimensional optimization methods.

Refinement: Based on the results of the optimization, you might choose to refine the model by adjusting the anchor points, adding new ones, or changing the interpolation method to improve accuracy and performance.

Interpolation in optimization allows for more efficient computations and can be particularly useful in scenarios where direct evaluation of the objective function is impractical due to high computational costs or incomplete data. It bridges the gap between known data points to enable effective optimization over the entire domain of interest.
}
\subsection{Optimal Privacy Budget Allocation}
\label{sec:discussion:budgetalloc}
In our current experimental setup, the secret domain lies in two dimensions ($N=2$). Under this setting, the global $\ell_2$ privacy budget constraint, $\left( \epsilon_1^2 + \epsilon_2^2 \right)^{1/2} \leq \epsilon$,
can be reparameterized as $\epsilon_2 = \sqrt{\epsilon^2 - \epsilon_1^2}$. This allows us to discretize the feasible range of $\epsilon_1 \in [0, \epsilon]$ and perform a one-dimensional grid or linear search over candidate allocations. For each candidate, we evaluate the corresponding utility loss by solving the lower-level perturbation problem. This simple procedure allows us to empirically determine the privacy budget allocation that minimizes utility loss while satisfying the global constraint. However, such brute-force grid search becomes computationally expensive in higher dimensions, as the search space grows exponentially with $N$.

\begin{table*}[t]
\centering
\footnotesize
\caption{Summary of mDP Works by Domain, Mechanism, and Approach Type}
\begin{tabular}{l|l|l|l|l}
\toprule
\textbf{Reference} & \textbf{Target Domain} & \textbf{Mechanism} & \textbf{Approach Type} & \textbf{Domain Feature} \\
\hline
Andrés et al., 2013~\cite{Andres-CCS2013}        & Location             Noise & Laplace (Polar)              & Pre-defined Noise   & Grid Map \\
Bordenabe et al., 2014~\cite{Bordenabe-CCS2014}  & Location             & Optimal Geo-Ind.             & Optimization-based  & Grid Map \\
Chatzikokolakis et al., 2015~\cite{Chatzikokolakis-PPET2015} & Location   & Exponential                  & Pre-defined Noise   & Grid Map \\
Xiao and Xiong, 2015~\cite{Xiao-CCS2015}         & Trajectory           & Sequential Perturbation      & Hybrid              & Trajectory \\
Yu et al., 2017~\cite{Yu-NDSS2017}               & Location             & Exponential                  & Pre-defined noise              & Grid Map \\
Oya et al., 2017~\cite{oyaGeoIndLooking}         & Location             & DP Location Obfuscation      & Pre-defined noise              & Grid Map \\
Chatzikokolakis et al., 2017~\cite{Chatzikokolakis-PETS2017} & Location & Bayesian Remapping           & Hybrid              & Grid Map \\
Fernandes et al., 2018~\cite{Fernandes2018AuthorOU} & Text              Noise & Laplace                      & Pre-defined Noise   & Embeddings \\
Feyisetan et al., 2019~\cite{Feyisetan-ICDM2019} & Text                 & Nearest Neighbor + Laplace   & Pre-defined noise              & Embeddings \\
Han et al., 2020~\cite{Han-ICME2020}             & Voice                & Angular Distance + Laplace   & Pre-defined Noise   & Embeddings \\
Carvalho et al., 2021~\cite{Carvalho2021TEMHU}   & Text                 & Truncated Exponential        & Pre-defined noise              & Embeddings \\
Chen et al., 2021~\cite{Chen-CVPR2021}           & Image                & PI-Net (Laplace-based)       & Pre-defined Noise   & Pixel Space \\
Yang et al., 2021~\cite{yang2021blockchain}      & Blockchain           & Truncated Geo-Ind.           & Pre-defined noise              & Grid Map \\

Imola et al., 2022~\cite{ImolaUAI2022}           & Location             & Linear Program               & Hybrid  & Grid Map \\
Ma et al., 2022~\cite{Ma-TITS2022}               & Location             & Personalized Noise           & Pre-defined Noise   & Grid Map \\
Zhang et al., 2022~\cite{Zhang-TBD2023}          & Location             & Group-based Noise            & Pre-defined noise              & Grid Map \\
Min et al., 2023~\cite{Min-TDSC2024}             & Location             & Reinforcement Learning       & Optimization-based  & Continuous Space \\
Yu et al., 2023~\cite{Yu-TSP2023}                & Location             & Bilevel Optimization         & Optimization-based  & Continuous Space \\
Galli et al., 2023~\cite{galli2022group}         & Federated Learning   Noise & Laplace                      & Pre-defined Noise   & Grid Map \\
Qiu et al., 2024~\cite{Qiu-IJCAI2024}            & Location             & Decomposed LP                & Optimization-based  & Continuous Space \\
Qiu et al., 2025~\cite{Qiu-PETS2025}             & Location             & Benders Decomposition        & Optimization-based  & Continuous Space \\
\bottomrule
\end{tabular}
\label{tab:mdp_summary}
\end{table*}

\noindent \textbf{Bi-Level Formulation and Challenges.} More generally, the joint privacy-utility optimization can be formulated as a \emph{bi-level program}. In this formulation, the upper-level problem determines the allocation of per-coordinate privacy budgets $\{\epsilon_\ell\}_{\ell=1}^N$ under a global constraint, while the lower-level problem computes the optimal perturbation mechanism that satisfies $(\epsilon_\ell, d_1)$-mDP constraints and minimizes utility loss. The formal bi-level structure is:
\begin{equation}
\begin{aligned}
\min_{\{\epsilon_\ell\} \in \mathbb{R}_+^N} \quad & \mathbb{E}_{\mathbf{x} \sim \pi} \left[ \mathbb{E}_{\mathbf{y} \sim z^*(\cdot \mid \mathbf{x})} \left[ \mathcal{L}(\mathbf{x}, \mathbf{y}) \right] \right] \\
\text{s.t.} \quad & \left( \sum_{\ell=1}^N \epsilon_\ell^p \right)^{1/p} \leq \epsilon, \\
& z^*(\cdot \mid \mathbf{x}) \in \arg\min_{z} \; \mathbb{E}_{\mathbf{x} \sim \pi} \left[ \mathbb{E}_{\mathbf{y} \sim z(\cdot \mid \mathbf{x})} \left[ \mathcal{L}(\mathbf{x}, \mathbf{y}) \right] \right] \\
& \text{s.t.} \quad \left| \ln z(\mathbf{y}_k \mid \mathbf{x}_a) - \ln z(\mathbf{y}_k \mid \mathbf{x}_b) \right| \\
& \leq \epsilon_\ell \cdot |x_{a,\ell} - x_{b,\ell}|, \quad \forall \ell, \mathbf{y}_k, \mathbf{x}_a, \mathbf{x}_b.
\end{aligned}
\end{equation}
While this bi-level structure provides a principled framework for separating privacy budget allocation from perturbation mechanism design, solving it directly is intractable in our setting. This is  due to the non-convexity of the lower-level objective function, $\mathbb{E}_{\mathbf{x}, \mathbf{y}} [ \mathcal{L}(\mathbf{x}, \mathbf{y}) ]$, which prevents the application of standard bilevel optimization techniques such as KKT-based reformulations or Benders decomposition.

\noindent \textbf{Alternative: Surrogate-Based Reformulation.} As a practical alternative, we can treat the lower-level optimal objective as an implicit function of the upper-level privacy budgets. Let $U(\{\epsilon_\ell\})$ denote the optimal utility loss achieved by the best mechanism satisfying $(\epsilon_\ell, d_1)$-mDP. Then the original problem reduces to a single-level formulation:
\begin{equation}
\textstyle \min_{\{\epsilon_\ell\}} \quad U(\{\epsilon_\ell\}) \quad \text{s.t.} \quad \left( \sum_{\ell=1}^N \epsilon_\ell^p \right)^{1/p} \leq \epsilon.
\end{equation}
This reformulation enables the use of black-box or surrogate-based optimization methods. Specifically, we can sample multiple candidate allocations $\{\epsilon_\ell\}$, evaluate the corresponding subproblem to obtain utility loss values, and fit a surrogate model (e.g., regression, response surface) to approximate $U(\cdot)$. This approximation can then be used for efficient optimization over the budget space.

\subsection{Limitations of the Interpolation Method for High-Dimensional Data}
\label{subsec:discuss_limitations}

While the interpolation-based method is most effective in low-dimensional continuous domains, its scalability is challenged in high dimensions.

First, the number of anchor records required for interpolation grows exponentially with dimensionality. Specifically, each $N$-dimensional cube contains $2^N$ corner anchors. This rapid growth affects both optimization and memory. In particular, the complexity of Anchor Perturbation Optimization (APO) scales quadratically with the number of anchors, i.e., $O(|\hat{\mathcal{X}}|^2)$, and memory usage becomes increasingly expensive. While these costs are manageable in low-dimensional domains, such as mobility data, time-series, and spatial analysis, applying the method to high-dimensional data (e.g., word embeddings) becomes challenging.

Second, the interpolation mechanism relies on computing a log-convex combination over $2^N$ anchors within the enclosing cube of each input record. For each record, this entails evaluating $\sum_{\boldsymbol{\gamma} \in \{0,1\}^N} \prod_{\ell=1}^N \lambda_{i_m+\gamma_\ell}$ for every output class, leading to $2^N$ terms. For example, with $N = 20$, over one million terms are required per record. Although computationally intensive in high dimensions, the method remains efficient and highly scalable in low-dimensional domains, where interpolation is both fast and accurate.

Third, under dimension-wise composition for $\ell_p$-norm mDP, the privacy budget per coordinate diminishes with increasing $N$ as $\epsilon_\ell = \epsilon / N^{(p-1)/p}$ for $p > 1$. For example, when $p = 2$ and $N = 100$, each coordinate receives only one-tenth of the total privacy budget. This budget fragmentation reflects a common trade-off in high-dimensional privacy systems, and motivates future work on joint budget allocation strategies that can more effectively scale to high-dimensional domains. \looseness = -1

Finally, the overall runtime and memory requirements scale as $2^N \times N \times |\mathcal{Y}|$ and $2^N \times |\mathcal{Y}| \times 8$ bytes, respectively, where $|\mathcal{Y}|$ is the output space size. While these demands grow rapidly with $N$, they remain practical for a wide range of real-world applications where the data dimensionality is naturally low. In cases where scalability is a concern, potential remedies include anchor pruning, sparse interpolation, dimensionality reduction, and hierarchical grid partitioning.

While our method is best suited for low-dimensional continuous domains, this setting is highly relevant to many applications in mobility, sensing, geolocation, and healthcare. Moreover, the framework establishes a principled foundation upon which scalable extensions for high-dimensional privacy-preserving data release can be developed in future work.

\section{Summary of mDP Works}

Table~\ref{tab:mdp_summary} shows a summary of mDP works by domain, mechanism, and approach type.

\end{document}